
\documentclass{article}

\usepackage{microtype}
\usepackage{subfigure}
\usepackage{graphicx}

\usepackage{booktabs} 
\usepackage{multirow}

\usepackage{hyperref}
\usepackage{url}

\usepackage{amsmath}
\usepackage{amsfonts}
\usepackage{amssymb}
\usepackage{amsthm}
\usepackage{mathrsfs}
\usepackage{bm}
\usepackage{stmaryrd}
\usepackage{enumitem}
\usepackage{mathtools}
\allowdisplaybreaks

\usepackage[compact]{titlesec}
\usepackage{titletoc}


\usepackage{placeins}
\usepackage{xcolor}
\usepackage{tabularx} 
\usepackage{makecell,comment} 


\usepackage[accepted]{icml2025}


\usepackage[capitalize,noabbrev]{cleveref}

\theoremstyle{plain}
\newtheorem{theorem}{Theorem}[section]
\newtheorem{proposition}[theorem]{Proposition}
\newtheorem{lemma}[theorem]{Lemma}
\newtheorem*{lemma*}{Lemma}
\newtheorem{corollary}[theorem]{Corollary}
\theoremstyle{definition}

\theoremstyle{remark}

\newcommand\cnorm[1]{\left\lVert #1 \right\rVert}
\newcommand{\defeq}{\coloneqq}
\newcommand{\E}{\mathbb{E}}
\usepackage[textsize=tiny]{todonotes}

\icmltitlerunning{Unlocking the Power of Rehearsal in Continual Learning: A Theoretical Perspective}

\begin{document}

\twocolumn[
\icmltitle{Unlocking the Power of Rehearsal in Continual Learning:\\ A Theoretical Perspective}

\icmlsetsymbol{equal}{*}

\begin{icmlauthorlist}
\icmlauthor{Junze Deng}{osuece}
\icmlauthor{Qinhang Wu}{osucse}
\icmlauthor{Peizhong Ju}{uky}
\icmlauthor{Sen Lin}{uh}
\icmlauthor{Yingbin Liang}{osuece}
\icmlauthor{Ness Shroff}{osuece,osucse}
\end{icmlauthorlist}

\icmlaffiliation{osuece}{Department of ECE, Ohio State University, Columbus, USA}
\icmlaffiliation{osucse}{Department of CSE, Ohio State University, Columbus, USA}
\icmlaffiliation{uky}{Department of CS, University of Kentucky, Lexington, USA}
\icmlaffiliation{uh}{Department of CS, University of Houston, Houston, USA}

\icmlcorrespondingauthor{Junze Deng}{deng.942@osu.edu}

\icmlkeywords{Continual Learning, rehearsal-based methods, catastrophic forgetting}

\vskip 0.3in
]



\printAffiliationsAndNotice{}  

\begin{abstract}

Rehearsal-based methods have shown superior performance in addressing catastrophic forgetting in continual learning (CL) by storing and training on a subset of past data alongside new data in current task. While such a concurrent rehearsal strategy is widely used, it remains unclear if this approach is always optimal. Inspired by human learning, where sequentially revisiting tasks helps mitigate forgetting, we explore whether sequential rehearsal can offer greater benefits for CL compared to standard concurrent rehearsal. To address this question, we conduct a theoretical analysis of rehearsal-based CL in overparameterized linear models, comparing two strategies: 1) Concurrent Rehearsal, where past and new data are trained together, and 2) Sequential Rehearsal, where new data is trained first, followed by revisiting past data sequentially. By explicitly characterizing forgetting and generalization error, we show that sequential rehearsal performs better when tasks are less similar. These insights further motivate a novel Hybrid Rehearsal method, which trains similar tasks concurrently and revisits dissimilar tasks sequentially. We characterize its forgetting and generalization performance, and our experiments with deep neural networks further confirm that the hybrid approach outperforms standard concurrent rehearsal. This work provides the first comprehensive theoretical analysis of rehearsal-based CL.

\end{abstract}

\section{Introduction}
Continual learning (CL) \citep{parisi2019continual} seeks to build an agent that can learn a sequence of tasks continuously without access to old task data, resembling human's  capability of lifelong learning. One of the major challenges therein is the so-called \textit{catastrophic forgetting} \citep{kirkpatrick2017overcoming}, i.e., the agent can easily forget the knowledge of old tasks when learning new tasks. A large amount of studies have been proposed to address this issue, among which \textit{rehearsal-based} approaches \citep{rolnick2019experience} have demonstrated the state-of-the-art performance. The main idea behind is to store a subset of old task data in the memory and revisit them for new task learning, where a widely adopted strategy for training is \textbf{concurrent rehearsal} \citep{evron2024joint}, i.e., train the model {\em concurrently} on new  data and past data. 

While the concurrent rehearsal strategy seems very natural and has shown successful performance to address catastrophic forgetting, it is indeed questionable whether this strategy is always the right way for rehearsal in CL as we consider the following aspects. 1) \emph{From the perspective of human learning.} 
In daily life, a common strategy to prevent forgetting is to review old knowledge. For example, suppose a student needs to learn a series of topics over a semester before taking an exam, and each topic corresponds to one task in CL. Intuitively, if these topics are highly related to each other, incorporating the knowledge of old topics into learning a new topic can be an effective strategy to strengthen the new learning and simultaneously reduce the forgetting of old knowledge, which is analogous to {\em concurrent rehearsal}. However, if the topics are very different from each other, a common practice  is to learn new topics first and then go over old topics to mitigate  forgetting. Here, such a {\em sequential rehearsal} may lead to better outcome in the exam.
2) \emph{From the perspective of multi-task learning.} Learning multiple tasks all at once may lead to poor learning performance due to the potential interference among gradients of different tasks \cite{yu2020gradient}, whereas standard CL without regularization and rehearsal may even achieve less forgetting for more dissimilar tasks \cite{lin2023theory}. Thus motivated, an interesting and open question to ask is:

\textit{Question: Whether sequential rehearsal will serve as an appealing rehearsal strategy to complement the standard concurrent rehearsal, and when will it be advantageous over concurrent rehearsal for CL?}

To answer this question from a theoretical perspective, we study rehearsal-based CL through the lens of overparameterized linear models to gain useful insights, by following a recent series of theoretical studies in CL \citep{lin2023theory,evron2022catastrophic,ding2024understanding,li2024theory}. However, none of those previous studies analyzed the rehearsal-based methods. The only theoretical work that studied the rehearsal-based methods is the recent concurrent work \citep{banayeeanzade2024theoretical}. But this work considered only the standard {\em concurrent} rehearsal method, not from the new perspective of {\em sequential} rehearsal. 

To capture the idea  of sequential rehearsal, we propose a novel rehearsal strategy, in which the agent \textit{sequentially} revisits each old task and trains the model with the corresponding past data after the current task is well learned.
We summarize our main contributions as follows.

(1) First of all, we provide the \emph{first} explicit closed-form expressions for the expected value of forgetting and generalization error for both concurrent rehearsal strategy and sequential rehearsal strategy under an overparameterized linear regression setting. Note that the blending of samples from old tasks in concurrent rehearsal introduces significant intricacies related to task correlation in theoretical analysis. To address this challenge, we partition training data into blocks based on different tasks, which enables us to further calculate the task interference using the properties of block matrix.  In particular, our theoretical results demonstrate how the  performance of rehearsal-based CL is affected by various factors, including task similarity and memory size. 

 (2) Secondly, we propose a novel rehearsal strategy, i.e., {\em sequential rehearsal}, to sequentially revisit old tasks after the current task is fully learned. By characterizing the explicit closed-form expressions for the expected forgetting and generalization error for sequential rehearsal and comparing with the concurrent rehearsal, we give an affirmative answer to the open question above. More importantly, we rigorously characterize the conditions when sequential rehearsal can benefit CL more than concurrent rehearsal, in terms of both forgetting and generalization error, which is also consistent with our motivations above: Sequential rehearsal outperforms concurrent rehearsal if tasks in CL are dissimilar, and the performance improvement is larger when the tasks are more dissimilar. Numerical simulations on linear models further corroborate our theoretical results.
    
 (3) Last but not least, our theoretical insights can indeed go beyond the linear models and guide the practical algorithm design for rehearsal-based CL with deep neural networks (DNNs). More specifically, we merge the idea of sequential rehearsal into standard rehearsal-based CL with concurrent rehearsal, leading to a novel {\bf hybrid rehearsal} approach where 1) old tasks dissimilar to the current task will be revisited by using sequential rehearsal (guided by our theory that suggests more benefit if dissimilar tasks are revisited sequentially) and 2) the past data for the remaining old tasks (that are sufficiently similar to the current task) will still be used concurrently with current task data. Our experiments on real datasets with DNNs verify that our hybrid approach can perform better than concurrent rehearsal and the advantage is more apparent when tasks are more dissimilar.

\section{Related Work}
\textbf{Empirical studies in CL.}
CL has drawn significant attention in recent years, with numerous empirical approaches developed to mitigate the issue of catastrophic forgetting.
Architecture-based approaches combat catastrophic forgetting by dynamically adjusting network parameters \citep{rusu2016progressive} or introducing architectural adaptations such as an ensemble of experts \citep{rypescdivide}.
Regularization-based methods constrain model parameter updates to preserve the knowledge of previous tasks  \citep{kirkpatrick2017overcoming,magistrielastic}.
Memory-based methods address forgetting by storing information of old tasks in the memory and leveraging the information during current task learning, which can be further divided into orthogonal projection based methods and rehearsal-based methods. The former stores gradient information of old tasks to modify the optimization space for the current task \citep{saha2021gradient,lin2022trgp}, while the latter stores and reuses a tiny subset of representative data, known as exemplars. Exemplar sampling methods involve reservoir sampling \citep{cbrs2020} and an information-theoretic evaluation of exemplar candidates \citep{infors2022}. 
Other work such as \citet{shin2017continual} retains past knowledge by replaying "pseudo-rehearsal" constructed from input data instead of storing raw input.
Rehearsal methods mostly use a concurrent  scheme that trains the model using a mix of input data and sampled exemplars \citep{chaudhry2018efficient,chaudhry2019continual,rebuffi2017icarl,gargtic2024}.
Other exemplar utilization methods include \citet{lopez2017gradient} and \citet{chaudhry2018efficient}, which use exemplar to impose constraints in the gradient space.

\textbf{Theoretical studies in CL. } Compared to the vast amount of empirical studies in CL, the theoretical understanding of CL is very limited but has started to attract much attention very recently.
\citet{bennani2020generalisation,doan2021theoretical} investigated CL performance for the orthogonal gradient descent approach in NTK models theoretically. \citet{yin2020optimization} focused on regularization-based methods and proposed a framework, which requires second-order information to approximate loss function. \citet{cao2022provable,li2022provable} characterized the benefits of continual representation learning from a theoretical perspective. \citet{evron2023continual} connected regularization-based methods with Projection Onto Convex Sets. 
Recently, a series of theoretical studies proposed to leverage the tools of overparameterized linear models to facilitate better understanding of CL. \citet{evron2022catastrophic} studied the performance of forgetting under such a setup. After that, \citet{lin2023theory}  characterized the performance of CL, where they discuss the impact of task similarities and the task order. \citet{ding2024understanding} further characterized the impact of finite gradient descent steps on forgetting of CL. 
\citet{goldfarb2023analysis} illustrated the joint effect of task similarity and overparameterization.
\citet{zhao2024statistical} provided a statistical analysis of regularization-based methods.
More recently, \citet{li2024theory}  theoretically investigated the impact of mixture-of-experts on the performance of CL in linear models.

Different from all these studies, we seek to fill up the theoretical understanding for rehearsal-based CL. Note that one concurrent study \citep{banayeeanzade2024theoretical} also investigates rehearsal-based CL in  linear models with concurrent rehearsal. However, one key difference here is that we propose a novel rehearsal strategy, i.e., the sequential rehearsal, and theoretically show its benefit over concurrent rehearsal for dissimilar tasks. Our theoretical results further motivate a new algorithm design for CL in practice, which demonstrates promising performance on DNNs.

\section{Problem setting}

We consider a common CL setup consisting of $T$ tasks where each task arrives sequentially in time $t\in[T]$. Here $[T]\defeq \{1,2,...,T\}$ for any positive integer $T$. Let $\bm{I}_p$ denote the $p\times p$ identity matrix and let $\cnorm{\cdot}$ denote the $\ell_2$-norm.

\textbf{Data Model.} We adopt the setting of linear ground truth which is commonly used in recent theoretical analysis of CL, e.g., \cite{lin2023theory,li2024theory,banayeeanzade2024theoretical}. Specifically, for each task $t\in[T]$, a sample $(\bm{x}_t, y_t)$ is generated by a linear ground truth model:
\begin{equation}\label{eq:gound truth expanded}
    y_t = \bm{x}_t^\top \bm{w}_t^* + z_t,
\end{equation}
where $\bm{x}_t \in \mathbb{R}^p$ denotes features, $y_t\in\mathbb{R}$ denotes the output, $\bm{w}_t^* \in \mathbb{R}^p$ denotes the ground truth parameters, and $z_t\in\mathbb{R}$ denotes the noise. 

\textbf{Dataset.} For each task $t\in [T]$, there are $n_t$ training samples $(\bm{x}_{t,i}, y_{t,i})_{i\in [n_t]}$. We stack those samples into matrices/vectors to obtain the dataset $\mathcal{D}_t = \{(\bm{X}_t,\bm{Y}_t)\in \mathbb{R}^{p\times n_t} \times \mathbb{R}^{n_t}\}$.  By \cref{eq:gound truth expanded}, we have
\begin{equation}
    \bm{Y}_t = \bm{X}_t^\top \bm{w}_t^*+\bm{z}_t,\label{eq.temp_100101}
\end{equation}
where $\bm{X}_t \defeq [\bm{x}_{t,1}\ \bm{x}_{t,2}\ \cdots\ \bm{x}_{t,n_t}]$, $\bm{Y}_t \defeq [y_{t,1}\ y_{t,2}\ \cdots\ y_{t,n_t}]^\top$, and $\bm{z}_t \defeq [z_{t,1}\ z_{t,2}\ \cdots\ z_{t,n_t}]^\top$.
We consider \emph{i.i.d.} Gaussian features and noise, i.e., each element of $\bm{X}_t$ follows \emph{i.i.d.}\ standard Gaussian distribution, and $\bm{z}_t\sim \mathcal{N}(0,\sigma_t^2\bm{I}_{n_t})$ where $\sigma_t\ge 0$ denotes the noise level.
To make our result easier to interpret, we let $\sigma_t = \sigma$ and $n_t = n$ for all $t\in[T]$.

\noindent\textbf{Memory.} For any task $t\geq 2$, besides $\mathcal{D}_t$, the agent has an overall memory dataset $\mathcal{M}_t$ that contains separate memory datasets $\mathcal{M}_{t,h}$ for each of the previous tasks $h\in [t-1]$, i.e., $\mathcal{M}_t = \bigcup_{h = 1}^{t-1}\mathcal{M}_{t,h}$ where
$\mathcal{M}_{t,h} = (\widetilde{\bm{X}}_{t,h},\widetilde{\bm{Y}}_{t,h})\in \mathbb{R}^{p\times M_{t,h}} \times \mathbb{R}^{M_{t,h}}$ denotes the samples from previous task $h$ and we define $M_{t,h}$ as the number of samples in $\mathcal{M}_{t,h}$. In most CL applications, the memory space is fully utilized and the memory size does not change over time. We denote this memory size by $M$ that does not change with $t$. In this case, we have $\sum_{h=1}^{t-1} M_{t,h} = M$ for any $t\geq 2$. 
To simplify our theoretical analysis, we focus on the situation in which the memory data are all fresh and have not been used in previous training. We equally allocate the memory to all previous tasks at each time $t$, i.e., $M_{t,h} = \frac{M}{t-1}$ for $h\in[t-1]$. For simplicity, we assume $\frac{M}{t-1}$ is an integer\footnote{We note that without the assumption of $\frac{M}{t-1}\in \mathbb{Z}$, memory can still be allocated as equally as possible, resulting in only a minor error. Our theoretical results remain of referential significance.} for any $t\in\{2,3,\cdots,T\}$.

\noindent\textbf{Performance metrics.} We first introduce the model error of parameter $\bm{w}$ over task $i$'s ground truth.
\begin{equation}\label{eq:def of model error}
    \mathcal{L}_i(\bm{w}) = \cnorm{\bm{w} - \bm{w}^*_i}^2.
\end{equation}
We note that this formulation is widely adopted in recent theoretical studies\cite{evron2022catastrophic,lin2023theory}. The performance of CL is measured by two key metrics, which are forgetting and generalization error. Let $\bm{w}_t$ be the parameters of the training result at task $t$.

1. \textit{Forgetting:} It measures the average forgetting  of old tasks after learning the new task. In our setup, forgetting at task $T$ w.r.t. previous tasks $[T-1]$ is defined as follows.
\begin{equation}\label{eq: def of F}
    F_T = \frac{1}{T-1}\sum_{i = 1}^{T-1} \E [\mathcal{L}_i(\bm{w}_T) - \mathcal{L}_i(\bm{w}_i)].
\end{equation}

2. \textit{Generalization error:} It measures the overall model generalization after the final task is learned. In our setup, generalization error is defined as follows.
\begin{equation}\label{eq: def of G}
    G_T = \frac{1}{T}\sum_{i = 1}^{T} \E [\mathcal{L}_i(\bm{w}_T)].
\end{equation}
The definitions are consistent with the standard CL performance measures in experimental studies  \citep{saha2021gradient}.

\section{A Novel Sequential Rehearsal vs. Popular Concurrent Rehearsal} 

In this section, we first introduce the popular {\em concurrent rehearsal} strategy that is widely used in current CL applications to mitigate catastrophic forgetting. We will then propose a novel {\em sequential rehearsal} strategy, which has appealing advantage compared to concurrent rehearsal.

To describe these rehearsal strategies, note that the training result $\bm{w}_t$ at task $t$ will be used as the initial point for the next task $t+1$. The initial model parameter of task $1$ is set to be $\bm{0}$, i.e., $\bm{w}_0 = \bm{0}$. The training loss for task $t$ is defined by mean-squared-error (MSE).
We focus on the over-parameterized case, i.e., $p>n+M$. As shown in \cite{zhang2022understanding,gunasekar2018characterizing}, the convergence point of stochastic gradient descent (SGD) for MSE is the feasible point closest to the initial point with respect to the $\ell_2$-norm, i.e., the minimum-norm solution.

{\bf Concurrent rehearsal.} We first introduce the popular concurrent rehearsal strategy as follows. At each task $t\geq 2$, we apply SGD on the current dataset and the memory dataset jointly to update the model parameter. Specifically, as illustrated in \Cref{fig:twoschemes}, at time $t$, we minimize the MSE loss via SGD on the combined dataset $\mathcal{D}_t\bigcup\mathcal{M}_t$ with the initial point $\bm{w}_{t-1}$ and obtain the convergent point  $\bm{w}_t$ as
    \begin{align*}
        \bm{w}_t &= \arg\min_{\bm{w}} \cnorm{\bm{w} - \bm{w}_{t-1}}^2 \nonumber\\&s.t. \ \ \bm{X}_t^\top \bm{w} = \bm{Y}_t, \ \ \widetilde{\bm{X}}_{t,h}^\top \bm{w} = \widetilde{\bm{Y}}_{t,h}, \forall h\in[t-1]. 
    \end{align*}

{\bf Novel sequential rehearsal.} In scenarios where previous tasks are very different from the current task, concurrent rehearsal may result in contradicting gradient update directions, and can hurt the knowledge transfer among tasks. Consequently, concurrent rehearsal may not always perform well. This motivates a novel rehearsal strategy that sequentially revisits history tasks one by one after training the current task, analogously to the way how a student reviews previously learned topics to avoid forgetting before exams.

To formally describe the training (see \Cref{fig:twoschemes} for an illustration), at each task $t\geq 2$, we first train on the current dataset $\mathcal{D}_t$ to learn the new task until the convergence to the initial stopping point $\bm{w}_t^{(0)}$. Then, for $h = 1,2,...,t-1$, we start from the previous stopping point $\bm{w}_t^{(h-1)}$ and train on the memory dataset $\mathcal{M}_{t,h}$ to converge to the next stopping point. Eventually, $\bm{w}_t$ is obtained after revisiting all memory sets, i.e., $\bm{w}_t = \bm{w}_t^{(t-1)}$. To simplify, we define $\widetilde{\bm{X}}_{t,0}\defeq \bm{X}_t$, $\widetilde{\bm{Y}}_{t,0}\defeq \bm{Y}_t$ and $\bm{w}_{t}^{(-1)} \defeq \bm{w}_{t-1}$. Then, the training process is equivalent to solve the following optimization problems recursively for $h=0,1,...,t-1$:
\begin{equation*}
        \bm{w}_t^{(h)} = \arg\min_{\bm{w}} \cnorm{\bm{w} - \bm{w}_{t}^{(h-1)}}^2,\ \ s.t. \ \widetilde{\bm{X}}_{t,h}^\top \bm{w} = \widetilde{\bm{Y}}_{t,h}.
    \end{equation*}

\begin{figure}
\begin{center}
\includegraphics[width=0.49\textwidth]{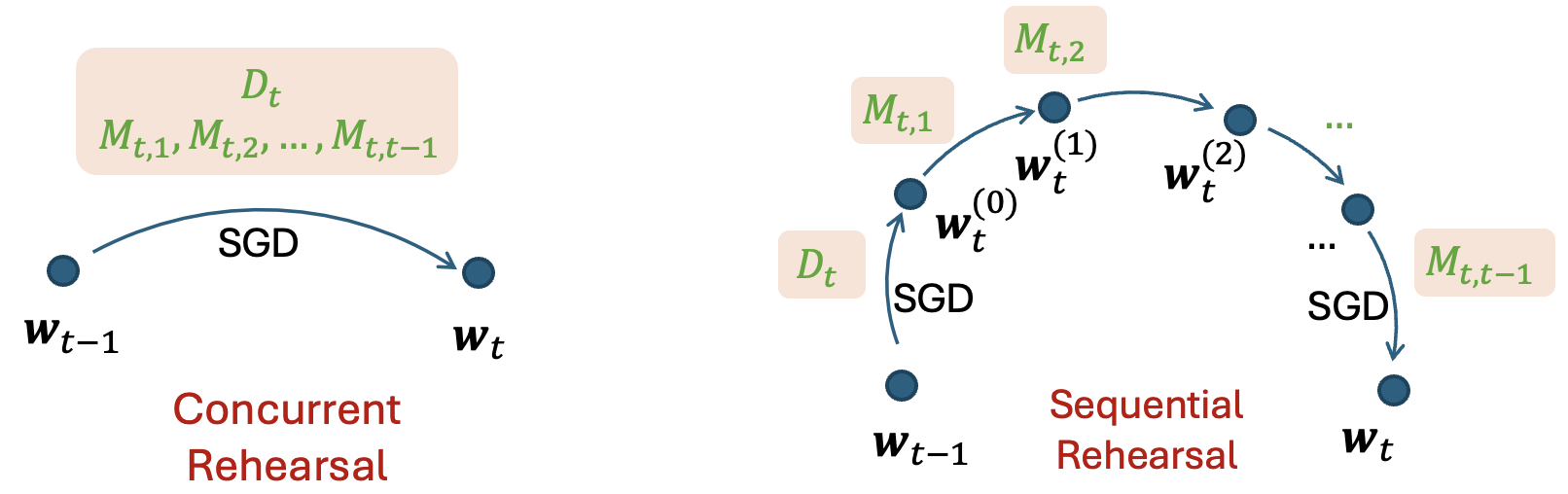}
\end{center}
\vspace{-3pt}
   \caption{\small{An illustration of concurrent and sequential rehearsal.}}\label{fig:twoschemes}
   \vspace{-3pt}
\end{figure}

\section{Main Results}
    The main theoretical results consist of two parts. First, we derive closed forms of forgetting and generalization error for both concurrent and sequential rehearsal methods. Second, we compare the performance of these two rehearsal-based schemes, concluding that sequential rehearsal outperforms concurrent rehearsal when tasks are dissimilar.
    
\subsection{Characterization of Forgetting and Generalization Error}\label{sec:charaterization of FG}
In rehearsal-based CL methods, the interference among tasks throughout the entire training process is intricate, due to the presence of the memory dataset. This introduces an unavoidable challenge in understanding the impact of memory on the performance of rehearsal-based methods. 

Following from the definitions of forgetting and generalization in \Cref{eq: def of F,eq: def of G}, the key to evaluating their performance lies in calculating the expected value of model errors over any previous task $i^{\text{th}}$ ground truth after learning the final task (i.e., $\E[\mathcal{L}_i(\bm{w}_T)]$). Indeed, for a generic $t\le T$, the explicit expressions of $\E[\mathcal{L}_i(\bm{w}_t)]$ and $\E [\mathcal{L}_i(\bm{w}_t) - \mathcal{L}_i(\bm{w}_i)]$ share the same structure for both rehearsal methods.
Thus, further following from 
the definitions of forgetting and generalization error, we present a common performance structure shared by both concurrent rehearsal and sequential rehearsal methods in the following theorem. 
\begin{theorem}\label{thm: general form of FG}
    Under the problem setups  in this work, the forgetting and the generalization error at time $T\ge 2$ in both rehearsal-based methods take the following forms.
    \begin{align}
         F_T =&  \frac{1}{T-1} \left[ \sum_{i=1}^{T-1}c_{i}\cnorm{\bm{w}_i^*}^2 + \sum_{i=1}^{T-1}\sum_{j,k=1}^{T-1} c_{ijk} \cnorm{\bm{w}^*_j - \bm{w}^*_k}^2 \right.\nonumber\\
         &\left. \hspace{2.3cm}+ \sum_{i=1}^{T-1} \left(\text{noise}_T(\sigma) - \text{noise}_i(\sigma) \right)\right],       \nonumber  \\
        \label{eq: FG general form}G_T = &\frac{1}{T} \left[ d_{0T}\sum_{i=1}^T\cnorm{\bm{w}_i^*}^2 + \sum_{i=1}^T\sum_{j,k =1}^{T} d_{ijkT} \cnorm{\bm{w}^*_j - \bm{w}^*_k}^2\right] \nonumber\\
        & \hspace{4.5cm}+ \text{noise}_T(\sigma),
    \end{align}
where coefficients and the noise term depend on $p,n,M$, as provided in \Cref{sec:proof of model error} for both rehearsal-based methods.
\end{theorem}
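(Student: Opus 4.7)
The plan is to reduce the expected-loss computation to a quadratic form in the ground truths plus a data-only noise contribution. Recall that the minimum-norm solution of $A^\top \bm{w} = \bm{b}$ starting from $\bm{w}_0$ is
\begin{equation*}
    \bm{w} = (\bm{I}_p - A(A^\top A)^{-1}A^\top)\bm{w}_0 + A(A^\top A)^{-1}\bm{b},
\end{equation*}
where $A(A^\top A)^{-1}A^\top$ is the projector onto the column space of $A$. For each rehearsal scheme, unrolling this update from $\bm{w}_0 = \bm{0}$ along the entire task sequence yields
\begin{equation*}
    \bm{w}_T = \sum_{i=1}^{T}\bm{B}_i\bm{w}_i^* + \bm{n}_T,
\end{equation*}
where every $\bm{B}_i$ is a sum of products of the two complementary projectors built only from the feature matrices $\bm{X}_t$ and $\widetilde{\bm{X}}_{t,h}$, and $\bm{n}_T$ is a linear combination of the noises $\bm{z}_t, \widetilde{\bm{z}}_{t,h}$ with data-dependent coefficients. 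In the concurrent case the relevant projectors are built from the stacked block matrix $[\bm{X}_t,\widetilde{\bm{X}}_{t,1},\dots,\widetilde{\bm{X}}_{t,t-1}]$, so the $\bm{B}_i$'s inherit a block structure; in the sequential case the projectors are applied one task at a time and the $\bm{B}_i$'s telescope.

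Substituting into $\mathcal{L}_i(\bm{w}_T) = \cnorm{\bm{w}_T-\bm{w}_i^*}^2$ and setting $\bm{C}_j^{(i)} \defeq \bm{B}_j - \delta_{ij}\bm{I}_p$, the expansion splits into three groups: pure-signal quadratic forms $\bm{w}_j^{*\top}\bm{C}_j^{(i)\top}\bm{C}_k^{(i)}\bm{w}_k^*$, signal-noise cross terms linear in $\bm{n}_T$, and $\cnorm{\bm{n}_T}^2$. Taking expectation with respect to the Gaussian noise conditionally on all features kills every cross term since $\bm{n}_T$ is zero-mean given the data. The pure-noise term reduces to $\sigma^2$ times a trace of data-dependent coefficient matrices, whose expectation over the design does not depend on $i$ and is collected into $\mathrm{noise}_T(\sigma)$; an identical calculation applied to $\mathcal{L}_i(\bm{w}_i)$ yields $\mathrm{noise}_i(\sigma)$.

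For the signal part, I would invoke rotational invariance of the i.i.d.\ Gaussian design: for any orthogonal $\bm{Q}$, the joint law of $\{\bm{X}_t,\widetilde{\bm{X}}_{t,h}\}$ equals that of $\{\bm{Q}\bm{X}_t,\bm{Q}\widetilde{\bm{X}}_{t,h}\}$, which transforms every elementary projector $A(A^\top A)^{-1}A^\top$ into $\bm{Q}A(A^\top A)^{-1}A^\top\bm{Q}^\top$ and therefore every $\bm{B}_i$ into $\bm{Q}\bm{B}_i\bm{Q}^\top$. Hence $\E[\bm{C}_j^{(i)\top}\bm{C}_k^{(i)}]$ commutes with every orthogonal matrix and must equal some scalar $\alpha^{(i)}_{jk}\bm{I}_p$, giving
\begin{equation*}
    \E[\mathcal{L}_i(\bm{w}_T)] = \sum_{j,k}\alpha^{(i)}_{jk}\bm{w}_j^{*\top}\bm{w}_k^* + \mathrm{noise}_T(\sigma).
\end{equation*}
Applying the polarization identity $2\bm{w}_j^{*\top}\bm{w}_k^* = \cnorm{\bm{w}_j^*}^2 + \cnorm{\bm{w}_k^*}^2 - \cnorm{\bm{w}_j^*-\bm{w}_k^*}^2$ to every off-diagonal inner product converts the signal quadratic form into the required linear combination of $\{\cnorm{\bm{w}_i^*}^2\}$ and $\{\cnorm{\bm{w}_j^*-\bm{w}_k^*}^2\}$. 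Summing over $i$ and plugging into \Cref{eq: def of F,eq: def of G} delivers the stated forms, with $c_i, c_{ijk}, d_{0T}, d_{ijkT}$ arising as specific linear combinations of the $\alpha^{(i)}_{jk}$.

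The main obstacle will be obtaining the scalars $\alpha^{(i)}_{jk}$ (equivalently the announced coefficients) in closed form, since for the concurrent scheme they are expectations of traces of products of projectors onto the row space of an $(n+M)\times p$ Gaussian matrix whose blocks overlap across tasks through the shared memory. This is the intricacy the authors flag, and it requires block-matrix inversion identities (Schur complements and Woodbury) together with careful accounting of which memory blocks reappear at which task; the sequential case is comparatively lighter because the projectors across different tasks are built from independent data blocks, so the expectations factorize and a telescoping argument suffices. Those explicit evaluations are what the deferred computation in \Cref{sec:proof of model error} must carry out.
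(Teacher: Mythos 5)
Your route is genuinely different from the paper's. The paper never unrolls $\bm{w}_T$ globally and never invokes rotational invariance; it works recursively, establishing $\E[\mathcal{L}_i(\bm{w}_t)] = g_t(\E[\mathcal{L}_i(\bm{w}_{t-1})]) + \mathrm{term}_2 + \mathrm{term}_{\mathrm{noise}}$ and rolling the linear recursion out to $t=1$. The crucial device is that it re-centers around the target at every step \emph{before} taking norms: since $\bm{w}_i^* = (\bm{I}-P_{\bm{V}_t})\bm{w}_i^* + \bm{V}_t^\dagger\bm{V}_t^\top\bm{w}_i^*$, one gets $\bm{w}_t-\bm{w}_i^* = (\bm{I}-P_{\bm{V}_t})(\bm{w}_{t-1}-\bm{w}_i^*) + \bm{V}_t^\dagger\bigl[\cdots \bm{X}_{t,j}^\top(\bm{w}_j^*-\bm{w}_i^*)\cdots\bigr] + \bm{V}_t^\dagger\vec{z}_t$, so the signal enters only through the differences $\bm{w}_j^*-\bm{w}_i^*$, and the single $\cnorm{\bm{w}_i^*}^2$ term is inherited from the base case $\E\cnorm{(\bm{I}-P_{\bm{X}_1})\bm{w}_i^*}^2$. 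The expectations are then evaluated term by term with $\E\cnorm{P_{\bm{X}}\bm{v}}^2=\frac{m}{p}\cnorm{\bm{v}}^2$, the Schur-complement/block lemmas for $\E\cnorm{\bm{V}^\dagger[\bm{X}_1^\top;\bm{0}]\bm{v}}^2$ and the cross terms, and the inverse-Wishart trace for the noise. Your symmetry argument is attractive because it needs only joint rotation invariance of the design rather than the step-wise independence the recursion exploits, and your treatment of the noise (cross terms vanish conditionally, pure-noise term independent of $i$) is correct.

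However, there is a concrete shortfall. Invariance plus polarization only gives $\E[\mathcal{L}_i(\bm{w}_T)] = \sum_{j,k}\alpha^{(i)}_{jk}\langle\bm{w}_j^*,\bm{w}_k^*\rangle + \mathrm{noise}_T(\sigma)$, which after polarization is a linear combination of \emph{all} $\cnorm{\bm{w}_j^*}^2$, $j\in[T]$, with coefficients depending on both $i$ and $j$. The theorem (via \Cref{lemma:expression form of EL in main body}) asserts the strictly stronger fact that only $\cnorm{\bm{w}_i^*}^2$ survives, with the \emph{same} coefficient $d_{0T}$ for every $i$ — which is what produces the single $d_{0T}\sum_i\cnorm{\bm{w}_i^*}^2$ in $G_T$ and the absence of $\cnorm{\bm{w}_T^*}^2$ from $F_T$. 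Because $\{\cnorm{\bm{w}_j^*}^2\}_j\cup\{\cnorm{\bm{w}_j^*-\bm{w}_k^*}^2\}_{j<k}$ is a basis of the rotation-invariant quadratic forms, this representation is unique, so you cannot hide the extra terms: you must additionally prove $\alpha^{(i)}_{jj}+\sum_{k\neq j}\alpha^{(i)}_{jk}=0$ for every $j\neq i$. That cancellation is true, but it does not follow from your argument as written; the cleanest repair is to adopt the paper's re-centering so it never needs to be checked. Finally, since the theorem's content is really the closed-form coefficients deferred to \Cref{sec:proof of model error}, a complete proof still requires the explicit block-matrix computations you correctly identify but do not carry out.
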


\Cref{thm: general form of FG} indicates that both rehearsal-based methods share the same high-level performance dependence on the system parameters. It can be seen that both of their forgetting and generalization error consist of the following three components. The first component exhibits the form of $C\|\bm{w}_i^*\|^2$ for some constant $C$, which arises from the overparameterized linear regression error. The second component captures the impact of task dissimilarities, representing the interference among different tasks during the training process. Extracting central information from this component is particularly useful for understanding how task dissimilarity affects the comparison between the two rehearsal-based methods, which is the focus of \Cref{sec:comparison in main body}. The third part captures the impact of the noise level.

Here, we first provide some basic conclusions for the coefficients in \Cref{thm: general form of FG}. (i) By letting $M=0$, both training methods yield the same result, which is consistent with the memoryless case shown by \citet{lin2023theory}. (ii) We can also observe that low task similarity negatively impacts model generalization, as $d_{ijkT}$ (defined in \Cref{prop:expression of coefficients}) are non-negative. When $p \rightarrow \infty$, we observe that the value of coefficients $d_{ijkT}$ approaches to $0$ for both rehearsal methods, which implies the negative influence of task dissimilarities will be alleviated if the model has enough capacity (i.e., when $p$ is sufficiently large).
(iii) We observe that the forgetting approaches to $0$ when $p \rightarrow \infty$. 
This implies that a model with substantial capacity  will facilitate effective learning for each task without forgetting, which can also alleviate the negative impact of task dissimilarity.

{\bf Outline of Proof of \Cref{thm: general form of FG}.} We provide a brief outline of the proof of \Cref{thm: general form of FG} here, and the detailed proof is given in \Cref{sec:proof of model error}. By the definition of forgetting and generalization error, it is sufficient to analyze $\E[\mathcal{L}_i(\bm{w}_t)]$ and $\E [\mathcal{L}_i(\bm{w}_t) - \mathcal{L}_i(\bm{w}_i)]$. We next explain how to obtain the explicit expression for $\E[\mathcal{L}_i(\bm{w}_t)]$, and $\E [ \mathcal{L}_i(\bm{w}_i)]$ can be calculated in a similar way by substituting $t$ with $i$.

The derivation of $\E[\mathcal{L}_i(\bm{w}_t)]$ is carried out through an iterative procedure as follows. We first split $\E[\mathcal{L}_i(\bm{w}_t)]$ into three terms as follows: 
\begin{equation*}
    \E[\mathcal{L}_i(\bm{w}_t)] = g_t(\E[\mathcal{L}_i(\bm{w}_{t-1})]) + \text{term}_2 + \text{term}_{\text{noise}},
\end{equation*}
where $g_t$, $\text{term}_2$ and $\text{term}_{\text{noise}}$ are given in \Cref{eq:iteration outline} in \Cref{sec:proof of model error}. We then analyze each of those three terms. 

The first term $g_t(\E[\mathcal{L}_i(\bm{w}_{t-1})])$ can be evaluated by iteratively rolling out to the initial term, which can then be derived explicitly. Note that the function $g(\cdot)$ takes a linear form, which simplifies the iteration. The second term captures the interference among different tasks during training process. For concurrent rehearsal method, we further derive it by partitioning the data from different tasks and leveraging the properties of block matrices. For sequential rehearsal, we follow the same idea as the memoryless case \citep{lin2023theory} since different tasks in the memory dataset as well as the current task are learned one by one. The third term captures the noise, which can be analyzed by applying ``trace trick" and the properties of Inverse-Wishart distribution.

\subsection{Comparison Between Concurrent rehearsal and Sequential rehearsal}\label{sec:comparison in main body}

The main challenge to compare the performance between the two rehearsal-based methods lies in the complexity of the second term in forgetting and generalization in \Cref{thm: general form of FG}, which captures how the task similarity as well as memory data affect the performance. Here the task similarity is characterized by the distance between the true parameters for two tasks. In this section, we will first study a simple case with two tasks, i.e., when $T=2$, to build our intuition, and then extend to the case with general $T$ based on the central insight obtained in the simple case.

{\bf Two-task Case ($T=2$):} Given the noise level $\sigma$, we denote $\text{noise}_{t} = \text{noise}_{t}(\sigma)$ for simplification. Following from \Cref{thm: general form of FG}, the performance of both rehearsal methods shares the common form:
\begin{align*}
    F_2 &= \hat{c}_1\cnorm{\bm{w}_1^*}^2 + \hat{c}_{2}\cnorm{\bm{w}_1^*-\bm{w}_2^*}^2 + \mathrm{noise}_{2} - \mathrm{noise}_{1}, \\
    G_2 &= \hat{d}_1(\cnorm{\bm{w}_1^*}^2 + \cnorm{\bm{w}_2^*}^2) + \hat{d}_2\cnorm{\bm{w}_1^*-\bm{w}_2^*}^2 + \text{noise}_{2},
\end{align*}
where the specific expressions of constants $\hat{c}_1,\hat{c}_2,\hat{d}_1,\hat{d}_2$ for both rehearsal methods are provided in \Cref{sec:comparison T=2}. 
To compare between the two rehearsal-based methods, the following lemma captures how their coefficients compare with each other.
\begin{lemma}\label{lemma:coefficients of F2G2}
    Under the problem setups of the two-task case, we have
    \begin{align*}
        \hat{c}_1^{\text{(concurrent)}} &< \hat{c}_1^{\text{(sequential)}},\hspace{.5cm}
        \hat{c}_2^{\text{(concurrent)}} > \hat{c}_2^{\text{(sequential)}},\\
        \hat{d}_1^{\text{(concurrent)}} &< \hat{d}_1^{\text{(sequential)}},\hspace{.5cm}
        \hat{d}_2^{\text{(concurrent)}} > \hat{d}_2^{\text{(sequential)}}.
    \end{align*}
\end{lemma}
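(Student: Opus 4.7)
The plan is to specialize Theorem~\ref{thm: general form of FG} to $T=2$ and compare the four coefficients one at a time. In the two-task case the memory is just $\mathcal{M}_{2,1}$ with $M$ fresh samples from task~$1$, so both updates have a clean geometric form: writing $\bm{A}\defeq[\bm{X}_2,\widetilde{\bm{X}}_{2,1}]$ and letting $\bm{P}_{\bm{V}}$ denote the orthogonal projection onto the column span of any matrix $\bm{V}$, concurrent rehearsal applies a single projection of rank $n+M$ (namely $\bm{P}_{\bm{A}}$), while sequential rehearsal applies the two independent projections $\bm{P}_{\bm{X}_2}$ and $\bm{P}_{\widetilde{\bm{X}}_{2,1}}$ in succession. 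My first step is to decompose $\bm{w}_2-\bm{w}_i^*$ into three uncorrelated pieces: a propagated initial error $\Pi(\bm{w}_1-\bm{w}_i^*)$, where $\Pi=\bm{I}-\bm{P}_{\bm{A}}$ in the concurrent case and $\Pi=(\bm{I}-\bm{P}_{\widetilde{\bm{X}}_{2,1}})(\bm{I}-\bm{P}_{\bm{X}_2})$ in the sequential case, a deterministic-direction term proportional to $\bm{u}\defeq\bm{w}_2^*-\bm{w}_1^*$, and a feature-noise term that only contributes to the $\mathrm{noise}_T(\sigma)$ channel. Because $(\bm{I}-\bm{P}_{\bm{A}})\bm{A}=\bm{0}$ (concurrent) and $(\bm{I}-\bm{P}_{\widetilde{\bm{X}}_{2,1}})\bm{P}_{\widetilde{\bm{X}}_{2,1}}=\bm{0}$ (sequential), the cross terms between the first two pieces vanish in expectation, so the $\cnorm{\bm{w}_i^*}^2$ channel and the $\cnorm{\bm{u}}^2$ channel can be analyzed separately.

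For $\hat{c}_1$ and $\hat{d}_1$, rotational invariance of the i.i.d.\ Gaussian features forces $\E[\Pi]$ to be a scalar multiple of $\bm{I}_p$. A trace computation gives $\E[\Pi]=\tfrac{p-n-M}{p}\bm{I}_p$ for concurrent rehearsal and, using independence of $\bm{X}_2$ and $\widetilde{\bm{X}}_{2,1}$, $\E[\Pi]=\tfrac{(p-M)(p-n)}{p^2}\bm{I}_p$ for sequential rehearsal. Combined with the single-task identity $\E[\cnorm{\bm{w}_1-\bm{w}_1^*}^2]=\tfrac{p-n}{p}\cnorm{\bm{w}_1^*}^2+\mathrm{noise}_1(\sigma)$, these produce the relevant coefficients in closed form. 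Their difference is controlled by $\tfrac{(p-M)(p-n)}{p^2}-\tfrac{p-n-M}{p}=\tfrac{nM}{p^2}>0$, and after multiplying by $\tfrac{p-n}{p}$ and subtracting the common initial-error term, I obtain $\hat{c}_1^{(\text{seq})}-\hat{c}_1^{(\text{conc})}=\tfrac{nM(p-n)}{p^3}>0$ and $\hat{d}_1^{(\text{seq})}-\hat{d}_1^{(\text{conc})}=\tfrac{nM(p-n)}{2p^3}>0$ under the overparameterization $p>n+M$, giving the first and third inequalities of the lemma.

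For $\hat{c}_2$ and $\hat{d}_2$, the sequential case is direct: the $\bm{u}$-contribution to $\bm{w}_2-\bm{w}_1^*$ is $(\bm{I}-\bm{P}_{\widetilde{\bm{X}}_{2,1}})\bm{P}_{\bm{X}_2}\bm{u}$, and taking expectation through the two independent projections gives the coefficient $\tfrac{n(p-M)}{p^2}$ on $\cnorm{\bm{u}}^2$. The concurrent case is the main obstacle, because $\bm{X}_2$ and $\widetilde{\bm{X}}_{2,1}$ now enter jointly through $\bm{A}^\top\bm{A}$ and the comparison cannot be reduced to a product of independent projections. I plan to evaluate the $\cnorm{\bm{u}}^2$-coefficient $\tfrac{1}{p}\E[\mathrm{tr}(\bm{X}_2^\top\bm{X}_2\,[(\bm{A}^\top\bm{A})^{-1}]_{1:n,\,1:n})]$ by invoking the Schur-complement identity $[(\bm{A}^\top\bm{A})^{-1}]_{1:n,\,1:n}=(\bm{X}_2^\top(\bm{I}-\bm{P}_{\widetilde{\bm{X}}_{2,1}})\bm{X}_2)^{-1}$ and splitting $\bm{X}_2$ into its projection onto $\mathrm{col}(\widetilde{\bm{X}}_{2,1})$ and its orthogonal complement, which are conditionally independent Gaussian blocks of effective widths $M$ and $p-M$; an inverse-Wishart trace identity then yields $\hat{c}_2^{(\text{conc})}=\tfrac{n(p-n-1)}{p(p-M-n-1)}$. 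The inequality $\hat{c}_2^{(\text{conc})}>\hat{c}_2^{(\text{seq})}$ reduces to $M(2p-M-n-1)>0$, which is immediate from $p>n+M$. The comparison for $\hat{d}_2$ follows from the same recipe after also handling the symmetric contribution to $\E[\mathcal{L}_2(\bm{w}_2)]$ from the memory-side ``wrong target'' $\widetilde{\bm{X}}_{2,1}^\top\bm{w}_1^*$; swapping the roles of $(n,\bm{X}_2)$ and $(M,\widetilde{\bm{X}}_{2,1})$ gives two nonnegative contributions whose sum is $\hat{d}_2^{(\text{conc})}-\hat{d}_2^{(\text{seq})}>0$. The Schur-complement-plus-inverse-Wishart bookkeeping for the concurrent $\hat{c}_2$ and $\hat{d}_2$ is the technical heart of the argument, and the clean final factor $M(2p-M-n-1)$ is what pins down the sign.
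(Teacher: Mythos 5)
Your proposal is correct and follows essentially the same route as the paper: the paper also splits $\bm{w}_2-\bm{w}_i^*$ into the propagated initial error, the task-gap term, and the noise term, evaluates the concurrent task-gap coefficient via the block (Schur-complement) inverse of $\bm{A}^\top\bm{A}$ together with conditional independence and an inverse-Wishart trace (its Lemmas B.6--B.9), and then compares the resulting closed-form coefficients term by term; your values $\hat{c}_1^{(\text{seq})}-\hat{c}_1^{(\text{conc})}=\tfrac{nM(p-n)}{p^3}$, $\hat{c}_2^{(\text{conc})}=\tfrac{n(p-n-1)}{p(p-n-M-1)}$, and the sign factor $M(2p-n-M-1)$ all agree with the paper's expressions in Equations (26)--(29). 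The only difference is organizational: the paper obtains the $T=2$ formulas by specializing its general-$T$ derivation, whereas you compute them directly, which changes nothing of substance.
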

Intuitively, when the task dissimilarities are sufficiently large (i.e., $\cnorm{\bm{w}_1^*-\bm{w}_2^*}$ is large), then $\hat{c}_2$ and $\hat{d}_2$ will dominant forgetting and generalization error respectively. Then \Cref{lemma:coefficients of F2G2} suggests that sequential rehearsal will have less forgetting and generalization error than concurrent rehearsal. Alternatively, if the tasks are very similar and the noise is small, then $\hat{c}_1$ and $\hat{d}_1$ will dominate the performance, and concurrent rehearsal will yield better performance. The following theorem formally establishes the above observations.
\begin{theorem}\label{sec:thm_twotask}
Under the problem setups considered in the work, we have 
\begin{align*}
    F_2^{\text{(concurrent)}} &> F_2^{\text{(sequential)}} \ \ \ \  \text{iff} \ \ \ \ \textstyle \frac{\xi_1\cnorm{\bm{w}_1^*-\bm{w}_2^*}^2 + \xi_2 \sigma^2}{\cnorm{\bm{w}_1^*}^2} > 1,\\
    G_2^{\text{(concurrent)}} &> G_2^{\text{(sequential)}} \ \ \ \  \text{iff} \ \ \ \ \textstyle \frac{\mu_1\cnorm{\bm{w}_1^*-\bm{w}_2^*}^2 + \mu_2 \sigma^2}{\cnorm{\bm{w}_1^*}^2} > 1,
\end{align*}
where $\xi_1,\xi_2,\mu_1,\mu_2$ are positive constants with detailed expressions given in \Cref{sec:comparison T=2}.
\end{theorem}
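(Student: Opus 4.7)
\textbf{Proof plan for Theorem~\ref{sec:thm_twotask}.} The plan is to form the differences $F_2^{\text{(concurrent)}} - F_2^{\text{(sequential)}}$ and $G_2^{\text{(concurrent)}} - G_2^{\text{(sequential)}}$ directly from the two-task expressions displayed just above the theorem, pin down the sign of each coefficient via Lemma~\ref{lemma:coefficients of F2G2}, and then rearrange the resulting strict inequality into the claimed ratio form. Concretely, specializing Theorem~\ref{thm: general form of FG} to $T = 2$ with $M_{2,1} = M$ yields $F_2^{\text{(concurrent)}} - F_2^{\text{(sequential)}} = (\hat c_1^{\text{(concurrent)}} - \hat c_1^{\text{(sequential)}})\|\bm w_1^*\|^2 + (\hat c_2^{\text{(concurrent)}} - \hat c_2^{\text{(sequential)}})\|\bm w_1^* - \bm w_2^*\|^2 + \delta^F_{\text{noise}}$, and an analogous identity with $\hat d_i$ (and an extra $\|\bm w_2^*\|^2$ term) for $G_2$. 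Lemma~\ref{lemma:coefficients of F2G2} already provides the two sign facts we need: the coefficient of $\|\bm w_1^*\|^2$ is strictly negative, while the coefficient of $\|\bm w_1^* - \bm w_2^*\|^2$ is strictly positive.

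I would then extract the explicit values of $\hat c_1, \hat c_2, \hat d_1, \hat d_2$ from the coefficient expressions behind Theorem~\ref{thm: general form of FG} at $T = 2$, and set $\alpha := \hat c_1^{\text{(sequential)}} - \hat c_1^{\text{(concurrent)}} > 0$ and $\beta := \hat c_2^{\text{(concurrent)}} - \hat c_2^{\text{(sequential)}} > 0$. Since task~$1$ is trained identically under both schemes, the noise gap $\delta^F_{\text{noise}}$ collapses to $\text{noise}_2^{\text{(concurrent)}} - \text{noise}_2^{\text{(sequential)}}$, which is a function of $\sigma^2, p, n, M$ alone. Hence $F_2^{\text{(concurrent)}} > F_2^{\text{(sequential)}}$ is equivalent to $\alpha \|\bm w_1^*\|^2 < \beta \|\bm w_1^* - \bm w_2^*\|^2 + \delta^F_{\text{noise}}$, and dividing both sides by the positive quantity $\alpha \|\bm w_1^*\|^2$ delivers the stated iff condition with $\xi_1 = \beta/\alpha$ and $\xi_2 = \delta^F_{\text{noise}}/(\alpha \sigma^2)$. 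The $G_2$ case follows by the same template with $\hat d_i$ replacing $\hat c_i$; the extra $\|\bm w_2^*\|^2$ appearing in $G_2$ is absorbed by grouping it with the other norm-squared contributions to recover the denominator in the theorem statement.

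The main obstacle is verifying the sign of the noise gap, since this is precisely what makes $\xi_2$ and $\mu_2$ positive. In concurrent training the noise is passed through the pseudoinverse of the joint $(n + M) \times p$ design matrix, whereas in sequential training it traverses two separate pseudoinverses on the $n \times p$ current design and the $M \times p$ memory design, connected by a projection from the first stage; neither form is term-by-term larger than the other, so positivity is not transparent from inspection. I would compute each expectation with the trace trick and the inverse-Wishart moment identity already used in the proof of Theorem~\ref{thm: general form of FG}, obtaining closed forms as rational functions of $p, n, M$, and then verify the sign in the overparameterized regime $p > n + M$. Once this is settled, the two iff statements follow immediately from the sign-preserving rearrangement above, and positivity of $\xi_1, \xi_2, \mu_1, \mu_2$ is an immediate consequence of Lemma~\ref{lemma:coefficients of F2G2} combined with the sign of the noise gap.
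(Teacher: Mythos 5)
Your plan mirrors the paper's own proof: the paper likewise specializes the closed-form expressions to $T=2$, invokes the coefficient comparisons of Lemma~\ref{lemma:coefficients of F2G2} together with a direct term-by-term comparison of the noise terms $\hat{\mathrm{noise}}_F$ and $\hat{\mathrm{noise}}_G$ (whose positivity it establishes by exactly the inverse-Wishart/trace computations you propose, already baked into the closed forms), and then divides through by the positive coefficient of $\cnorm{\bm{w}_1^*}^2$ to obtain $\xi_1=\beta/\alpha$ and $\xi_2=\delta^F_{\text{noise}}/(\alpha\sigma^2)$ exactly as you describe. The only point to be careful about is the $G_2$ case, where the $\cnorm{\bm{w}_2^*}^2$ term does not simply disappear — the appendix's own special cases show the natural threshold involves $\cnorm{\bm{w}_1^*}^2+\cnorm{\bm{w}_2^*}^2$ — but this is a presentational issue shared with the paper rather than a gap in your argument.
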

\Cref{sec:thm_twotask} provably establishes an intriguing fact that the widely used concurrent rehearsal may not always perform better, and sequential rehearsal can perform better when tasks are more different from each other. We further elaborate our comparison between the two methods for the case with $T=2$ in \Cref{sec:comparison T=2} (where the impact of noise is also considered) and with $T=3$ in \Cref{sec:comparison T=3}. 
The insights obtained from \Cref{sec:thm_twotask} can also be extended to the general case as follows. 

{\bf General Case ($T \ge 2$):} 
Comparing the performance in two rehearsal methods provided in \Cref{thm: general form of FG} under general $T$ is significantly more challenging, because the mathematical expression of the coefficients become highly complex. However, our insights obtained from the two-task case can still be useful, i.e., sequential rehearsal tends to performance better when tasks are very different. 
To formalize such an observation, the following lemma compares the coefficients in forgetting and generalization error between the two rehearsal methods.
\begin{lemma}\label{lemma:coefficients of FTGT}
    Under the problem setups considered in the work, the value of coefficients $c_i,c_{ijk},d_{0T},d_{ijkT}$ in \Cref{thm: general form of FG}, as derived from different rehearsal methods, satisfy the following relationship.
    \begin{align*}
        c_{i}^{\text{(concurrent)}} &< c_{i}^{\text{(sequential)}}, \hspace{.5cm} c_{ijk}^{\text{(concurrent)}} \ge c_{ijk}^{\text{(sequential)}},\\
        d_{0T}^{\text{(concurrent)}} &< d_{0T}^{\text{(sequential)}}, \hspace{.5cm} d_{ijkT}^{\text{(concurrent)}} \ge d_{ijkT}^{\text{(sequential)}}.
    \end{align*}
\end{lemma}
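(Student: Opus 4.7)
The plan is to compare the closed-form expressions of the coefficients $c_{i}, c_{ijk}, d_{0T}, d_{ijkT}$ derived during the proof of \Cref{thm: general form of FG}, exploiting the fact that both rehearsal methods share the iterative decomposition $\E[\mathcal{L}_i(\bm{w}_t)] = g_t(\E[\mathcal{L}_i(\bm{w}_{t-1})]) + \mathrm{term}_2 + \mathrm{term}_{\mathrm{noise}}$ outlined in \Cref{sec:charaterization of FG} and differ only in how memory is folded into each task's update. I would organize the argument around the two kinds of coefficients: the ``diagonal'' coefficients $c_i, d_{0T}$, which come from iterating the linear map $g_t$ through the initial error $\cnorm{\bm{w}_i^*}^2$, and the ``interference'' coefficients $c_{ijk}, d_{ijkT}$, which come from $\mathrm{term}_2$. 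This directly mirrors the two-task argument behind \Cref{lemma:coefficients of F2G2} and extends it to general $T$.

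For the diagonal coefficients I would rely on the Gaussian moment identity $\E[\bm{A}(\bm{A}^\top\bm{A})^{-1}\bm{A}^\top] = (k/p)\bm{I}_p$ for $\bm{A} \in \mathbb{R}^{p\times k}$ with i.i.d.\ standard Gaussian entries, so that the expected per-task residual factor equals $1 - k/p$. In concurrent rehearsal one has $k = n+M$ at every task $t \ge 2$, giving $\alpha^{(\mathrm{con})} = 1 - (n+M)/p$; in sequential rehearsal task $t$ is trained in $t$ sub-steps of sizes $n, M/(t-1), \ldots, M/(t-1)$, giving $\alpha_t^{(\mathrm{seq})} = (1-n/p)\prod_{h=1}^{t-1}\bigl(1 - M/((t-1)p)\bigr)$. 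The elementary bound
\[
\prod_{k}(1-x_k) \;>\; 1 - \sum_k x_k \quad \text{when at least two } x_k \in (0,1),
\]
then yields $\alpha_t^{(\mathrm{seq})} > \alpha^{(\mathrm{con})}$ for every $t \ge 2$, and propagating this strict inequality through the telescoping products that define $c_i$ and $d_{0T}$ gives $c_{i}^{(\mathrm{concurrent})} < c_{i}^{(\mathrm{sequential})}$ and $d_{0T}^{(\mathrm{concurrent})} < d_{0T}^{(\mathrm{sequential})}$.

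For the interference coefficients I would treat concurrent rehearsal by partitioning the concatenated design $[\bm{X}_t, \widetilde{\bm{X}}_{t,1}, \ldots, \widetilde{\bm{X}}_{t,t-1}]$ into task-indexed blocks and using the block-inverse representation of $\bm{A}(\bm{A}^\top\bm{A})^{-1}\bm{A}^\top$ together with the Inverse-Wishart moment tools referenced in \Cref{sec:charaterization of FG} to extract the coefficient of each $\cnorm{\bm{w}_j^* - \bm{w}_k^*}^2$. For sequential rehearsal, $\mathrm{term}_2$ decomposes naturally across the $t$ sub-steps because each sub-step sees only one task's data, so the contribution reduces to a sum of single-task Gaussian projections of $\bm{w}_j^* - \bm{w}_k^*$. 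An induction on $T$, with base case $T = 2$ supplied by \Cref{lemma:coefficients of F2G2}, reduces the general inequality to comparing the block interference at task $T$ against the per-sub-step interference, while the diagonal monotonicity from the previous paragraph controls the inductive carry-over from tasks $1, \ldots, T-1$.

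The principal obstacle will be this induction for the interference coefficients. The block-inverse expansion produces cross-terms of indefinite sign, and the map from task-index pairs in the block decomposition into the coefficient triples $(i,j,k)$ is non-trivial, so term-by-term comparison requires careful symmetry arguments in $(j,k)$ combined with the strict diagonal monotonicity established above. Once every cross-term in the concurrent block expansion is shown to dominate (or match) its sequential counterpart after aggregation, the inequalities $c_{ijk}^{(\mathrm{concurrent})} \ge c_{ijk}^{(\mathrm{sequential})}$ and $d_{ijkT}^{(\mathrm{concurrent})} \ge d_{ijkT}^{(\mathrm{sequential})}$ follow, completing the proof.
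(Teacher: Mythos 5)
Your treatment of $d_{0T}$ is essentially the paper's: the bound $\prod_k(1-x_k) > 1-\sum_k x_k$ is exactly \Cref{lemma:lower bound of prod}, and that single inequality does settle $d_{0T}^{\text{(concurrent)}} < d_{0T}^{\text{(sequential)}}$. The rest of the proposal, however, has two genuine gaps. First, the claim that $c_i^{\text{(concurrent)}} < c_i^{\text{(sequential)}}$ follows by ``propagating this strict inequality through the telescoping products'' does not work, because $c_i = d_{0T} - d_{0i}$ is a \emph{difference} of two quantities that both increase when passing from concurrent to sequential: knowing $d_{0T}^{\text{(con)}} < d_{0T}^{\text{(seq)}}$ and $d_{0i}^{\text{(con)}} < d_{0i}^{\text{(seq)}}$ gives no information about the sign of $(d_{0T}^{\text{(con)}}-d_{0i}^{\text{(con)}})-(d_{0T}^{\text{(seq)}}-d_{0i}^{\text{(seq)}})$. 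The paper needs \Cref{lemma:prod lower bound in forgetting}, a second-order binomial expansion showing that the sequential difference of products dominates the concurrent difference $\bigl(1-\tfrac{n+M}{p}\bigr)^{t-1}-\bigl(1-\tfrac{n+M}{p}\bigr)^{i-1}$ up to error terms of order $p^{-3}$, and this only holds under an explicit condition such as $p>2T^3(n+M)^2$.

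Second, for the interference coefficients you propose an induction on $T$ whose central difficulty (cross-terms of indefinite sign in the block-inverse expansion) you name but do not resolve; that difficulty is precisely where the content of the lemma lives. The paper does not induct at all: it compares the explicit closed forms from \Cref{prop:expression of coefficients} term by term, and the resolution of the sign problem is quantitative, not structural --- the unfavorable contributions (e.g., the $\beta_1^-,\beta_2^-$ pieces in \cref{eq:beta 1 into beta 1+ and beta 1-,eq:beta 2 into beta 2+ and beta 2-}) are shown to be $O(T^2(n+M)M^2/p^3)$ while the favorable gap is $\Omega(M/(T^2p^2))$, so the inequality holds only once $p$ exceeds an explicit polynomial in $T,n,M$ (e.g., $p>3T^4(n+M)nM$). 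Your proposal never states any such largeness condition on $p$, without which the claimed inequalities are not established. You also miss the structural fact that makes several cases trivial: for $j<k$ with $j,k\neq i$ the sequential coefficient $d_{ijkT}^{\text{(sequential)}}$ is identically zero (each sequential sub-step sees only one task, so no cross-memory interference arises), whereas $d_{ijkT}^{\text{(concurrent)}}\ge 0$; recognizing this collapses a large part of the case analysis.
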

The above lemma suggests that if the tasks are all very different from each other, then sequential rehearsal will have smaller forgetting and generalization error than concurrent rehearsal because $c_{ijk}^{\text{(concurrent)}} > c_{ijk}^{\text{(sequential)}}$ and $d_{ijkT}^{\text{(concurrent)}} > d_{ijkT}^{\text{(sequential)}}$ will dominate the comparison. While it is challenging to provide an exact closed-form characterization of the conditions under which sequential rehearsal outperforms concurrent rehearsal, the following theorem presents an example  where sequential rehearsal outperforms concurrent rehearsal, based on the understanding outlined above.

\begin{theorem}\label{th:example}
    Under the problem setups in this work, suppose the ground truth $\bm{w}_i^*$ is orthonormal to each other for $i\in [T]$, $M\ge 2$, and $p=\mathcal{O}(T^4n^2M^2)$. Then we have:
    \begin{align*}
        F_T^{\text{(concurrent)}} > F_T^{\text{(sequential)}} \hspace{.3cm} \text{and} \hspace{.3cm} G_T^{\text{(concurrent)}} > G_T^{\text{(sequential)}}.
    \end{align*}
\end{theorem}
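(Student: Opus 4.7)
The plan is to substitute the orthonormality assumption into the master formulas of \Cref{thm: general form of FG} and reduce \Cref{th:example} to a scalar comparison whose signs are supplied by \Cref{lemma:coefficients of FTGT}. Orthonormality yields $\cnorm{\bm{w}_i^*}^2 = 1$ for every $i \in [T]$ and $\cnorm{\bm{w}_j^* - \bm{w}_k^*}^2 = 2$ when $j \neq k$ (and $0$ otherwise), so
\begin{align*}
F_T &= \frac{1}{T-1}\Bigl[\sum_{i=1}^{T-1} c_i + 2\sum_{\substack{i,j,k \in [T-1]\\ j\neq k}} c_{ijk}\Bigr] + \mathcal{N}_F(\sigma),\\
G_T &= d_{0T} + \frac{2}{T}\sum_{\substack{i,j,k \in [T]\\ j\neq k}} d_{ijkT} + \mathcal{N}_G(\sigma),
\end{align*}
where $\mathcal{N}_F$ and $\mathcal{N}_G$ collect the noise contributions from \Cref{thm: general form of FG}. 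Setting $\Delta F \defeq F_T^{\text{(concurrent)}} - F_T^{\text{(sequential)}}$ and $\Delta G \defeq G_T^{\text{(concurrent)}} - G_T^{\text{(sequential)}}$ reduces the theorem to showing that both quantities are strictly positive.

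By \Cref{lemma:coefficients of FTGT}, the single-index sum $\sum_i c_i$ and the term $d_{0T}$ make a \emph{negative} contribution to $\Delta F$ and $\Delta G$, while the off-diagonal sums over $c_{ijk}$ and $d_{ijkT}$ make a \emph{non-negative} contribution. The crux is to show the positive ``dissimilarity'' sum strictly dominates the negative ``regression-bias'' sum plus the noise discrepancy. My plan is to use the explicit coefficient formulas of \Cref{prop:expression of coefficients} to upgrade the sign information of \Cref{lemma:coefficients of FTGT} into quantitative matching bounds: an upper bound $c_i^{\text{(sequential)}} - c_i^{\text{(concurrent)}} \le A(p,n,M)$ (so the negative part of $\Delta F$ contributes at most $O(T\,A)$) together with a lower bound $c_{ijk}^{\text{(concurrent)}} - c_{ijk}^{\text{(sequential)}} \ge B(p,n,M) > 0$ on a $\Theta(T^3)$-sized subset of index triples $(i,j,k)$. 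The comparison then reduces to a purely algebraic inequality between $A$, $B$, and a noise gap derivable from the trace/Inverse-Wishart identities already used in the proof of \Cref{thm: general form of FG}; the hypothesis $p = \mathcal{O}(T^4 n^2 M^2)$ pins down exactly the overparameterization regime in which the combinatorial gain $T^3 B$ strictly beats $T A$ plus the noise contribution. The analysis of $\Delta G$ is parallel. The assumption $M \ge 2$ enters to guarantee that the per-task allocation $M/(t-1)$ is well-defined from $t=3$ onward and that the lower bound $B$ does not degenerate.

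The main obstacle is producing the quantitative lower bound $B$. The concurrent coefficient $c_{ijk}^{\text{(concurrent)}}$ is defined through nested inverses of block Gram matrices that entangle the current-task features with the memory features of all prior tasks, whereas its sequential counterpart arises from a telescoping product of rank-$M/(t-1)$ projection operators; obtaining a sharp (not merely signed) lower bound on their difference for a constant fraction of index triples requires a careful Sherman--Morrison or Woodbury-type expansion of the concurrent block inverse together with Gaussian moment calculations, and it is this step that ultimately dictates the precise polynomial scaling $p = \mathcal{O}(T^4 n^2 M^2)$ appearing in the hypothesis. Everything else---substituting orthonormality, invoking \Cref{lemma:coefficients of FTGT}, and assembling the final inequality---is essentially bookkeeping.
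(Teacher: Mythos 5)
Your setup (substituting $\cnorm{\bm{w}_i^*}^2=1$, $\cnorm{\bm{w}_j^*-\bm{w}_k^*}^2=2$ and reducing to a coefficient comparison that must go beyond the signs in \Cref{lemma:coefficients of FTGT}) matches the paper's starting point, and you correctly identify that the whole difficulty is quantitative. But the proposal stops at a plan precisely where the theorem's content lives: you never produce the bounds $A$ and $B$, and the mechanism you propose for closing the gap --- ``the combinatorial gain $T^3B$ strictly beats $TA$'' --- is not how the comparison actually resolves. If $B$ is a uniform lower bound over the $\Theta(T^3)$ triples, the relevant cross terms scale like $\tfrac{nM}{(T-l-1)p^2}$, so the aggregate positive contribution per fixed $i$ is $\approx \tfrac{2(T-i)nM}{p^2}$, while the negative contribution from the first-component coefficients is $\approx \tfrac{(T-i)(n+M)M}{p^2}$: these are the \emph{same order} in both $T$ and $p$. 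The paper's proof decides the comparison at leading order by the constant inequality $2n \ge n+M+1$, i.e.\ the implicit assumption $n\ge M+1$ (invoked at step $(i)$ of \cref{eq:lower bound of beta 2+ 2} and \cref{eq:proof of General eq1 sub term}), leaving a razor-thin margin of order $\tfrac{(T-i)M}{p^2}$; the condition on $p$ serves only to suppress the $O(1/p^3)$ and $O(1/p^4)$ remainders (via \Cref{lemma:upper bound of (++) power tighter,lemma:prod upper bound in forgetting tighter}), not to arbitrate a $T^3$-versus-$T$ count. An argument built on counting index triples with a uniform $B>0$ cannot succeed, because without the precise constants the positive and negative aggregates are indistinguishable in order of magnitude.

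A secondary point: you anticipate needing fresh Sherman--Morrison/Woodbury expansions and Gaussian moment calculations to lower-bound $c_{ijk}^{\text{(concurrent)}}-c_{ijk}^{\text{(sequential)}}$, but all of that randomness has already been integrated out in \Cref{prop:expression of coefficients}; what remains (and what the paper actually does, working per fixed $i$ with $\E[\mathcal{L}_i(\bm{w}_T)-\mathcal{L}_i(\bm{w}_i)]$ rather than with the aggregated $\Delta F$) is a chain of deterministic binomial and AM--GM estimates on explicit rational functions of $p,n,M,T$. So the missing step is not a random-matrix computation but the explicit leading-order bookkeeping, including the $n\ge M+1$ comparison, which your proposal neither performs nor identifies as necessary.
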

In \Cref{th:example}, orthonormal $w_i^*$ is an extreme case to have very different tasks. Typically, since forgetting and generalization error are continuous functions of the task dissimilarity, we expect that in the regime that the tasks are highly different, sequential rehearsal will still be advantageous to enjoy less forgetting and smaller generalization error, and such an advantage should be more apparent as tasks become more dissimilar.
To explain this, we consider the generalization error as an example. Assuming that the norm of ground truth is fixed, a higher level of task dissimilarities exacerbates the generalization error since each coefficient $d_{ijkT}$ (defined in \Cref{prop:expression of coefficients}) is positive for both training methods. However, a weaker dependence on task similarities indicates that the generalization error of sequential rehearsal grows slower than concurrent rehearsal as tasks become more dissimilar, resulting advantage for sequential rehearsal to enjoy smaller generalization error.  
A similar reason is applicable to the forgetting performance, although it is important to note that $c_{ijk}$ is not always positive.

\textbf{Simulation Experiments:} To validate our theoretical investigation, we conduct simulation experiments on CL with overparameterized linear models. Set $T=5$, $p=500$, $n=24$, $\sigma=0$ and $M=24$. The construction of ground truth features follows two principles: 1) each feature is drawn from the unit sphere, and 2) the task gap (i.e.,  $\cnorm{\bm{w}^*_j - \bm{w}^*_i}$) between any two features is identical.
Under different setups of the task gap, the comparisons between theoretical results and simulation results are shown in \Cref{fig:F G vs task gap linear model simulation} in terms of both forgetting and generalization error. Here, the theoretical results are calculated according to \Cref{thm: general form of FG}, and the simulation results are obtained by taking the empirical expectation over $10^3$ iterations.

\begin{figure}[ht] 
\subfigure[\scriptsize Forgetting vs Task Gap]{
    \centering
    \includegraphics[width=0.47\columnwidth]{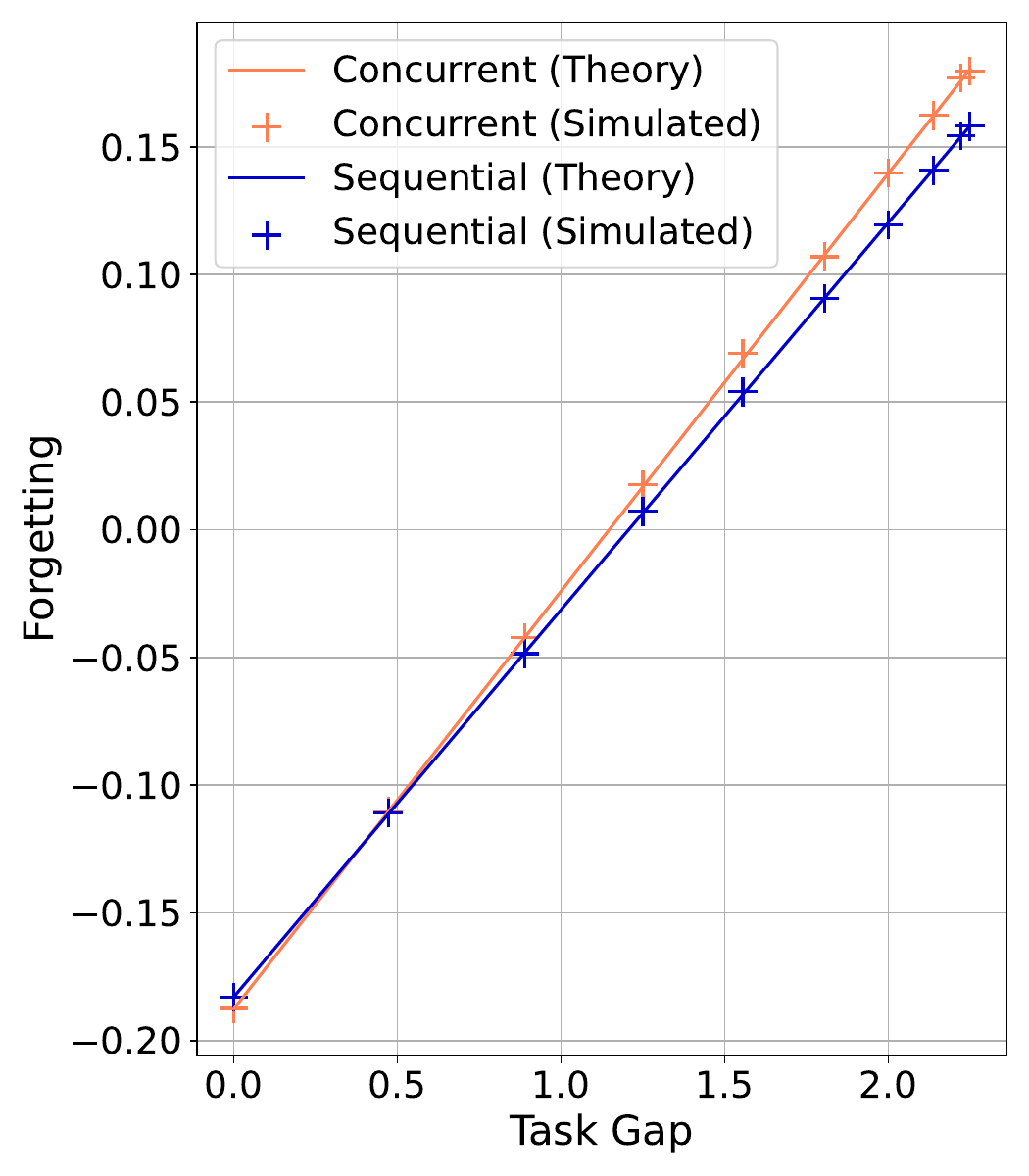}
}
\subfigure[\scriptsize Generalization Error vs Task Gap]{
    \centering
    \includegraphics[width=0.47\columnwidth]{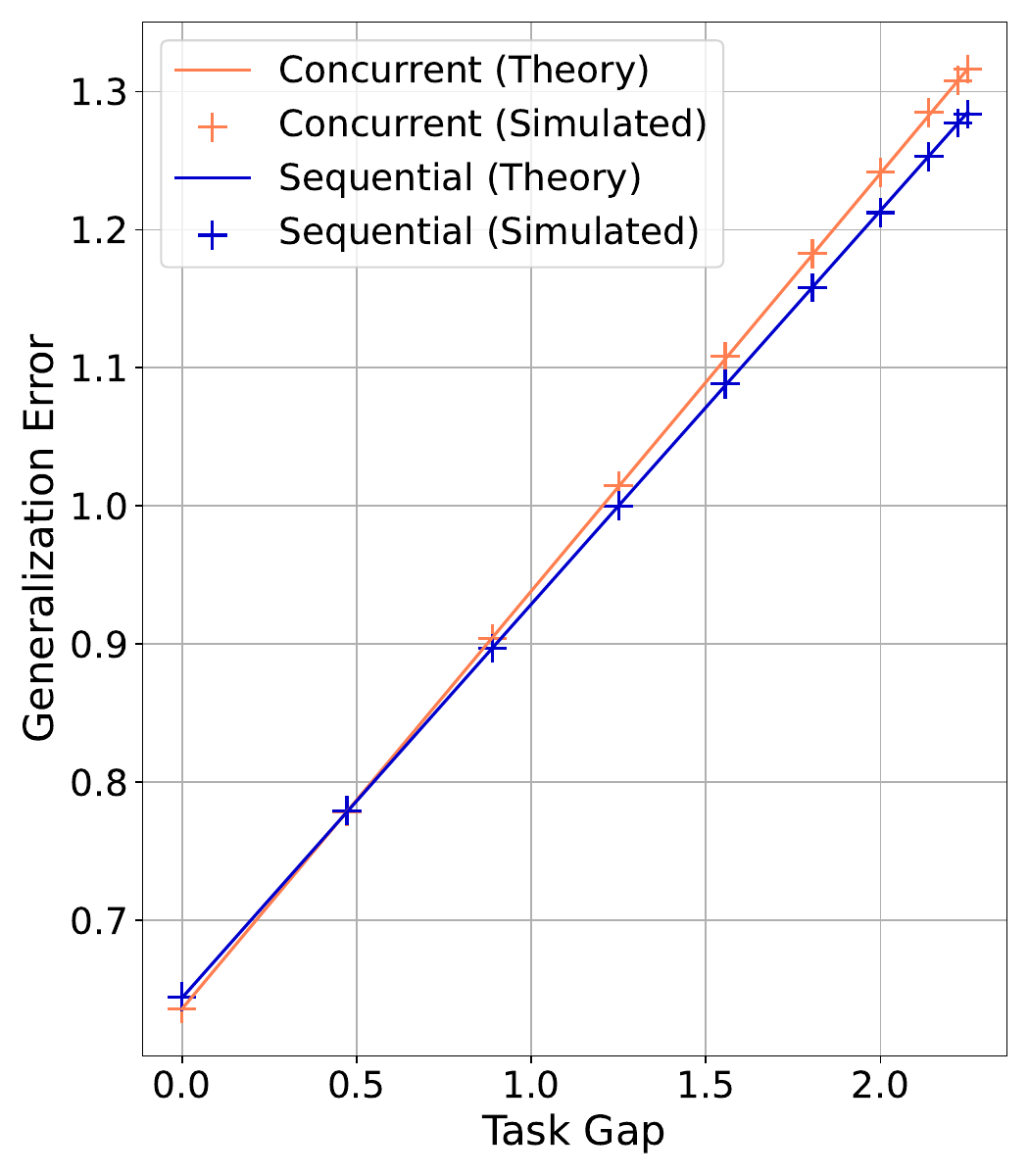}
}

\caption{Forgetting and Generalization Error vs Task Gap}
\label{fig:F G vs task gap linear model simulation}
\end{figure}

Several important insights can be immediately obtained from \Cref{fig:F G vs task gap linear model simulation}: 1) Our theoretical results exactly match with our simulation results, which can clearly corroborate the correctness of our theory. 2) When tasks are similar, i.e., the task gap $||\bm{w}^*_j - \bm{w}^*_i||^2$ is small than some threshold, concurrent replay is better than sequential replay. However, when tasks become dissimilar, sequential replay starts to outperform concurrent replay in terms of both forgetting and generalization error. And the advantage of sequential replay becomes more significant as the task gap increases, which also aligns with our theoretical results.

\textbf{Remark. } It is clear that the order in which old tasks are revisited after current task learning is very important under the framework of sequential rehearsal, 
which affects both forgetting and generalization errors. Needless to say, the sequential order considered in this work, where tasks are reviewed from the oldest to the newest, is not necessarily the optimal strategy for sequential rehearsal, where however has already demonstrated exciting advantages over concurrent rehearsal. 
How to design an effective rehearsal order to achieve better performance is a very interesting yet challenging future direction.

\section{Implications on Practical CL}
\subsection{Hybrid Algorithm Framework}
As our theory suggests, sequential rehearsal can benefit CL more than concurrent rehearsal when tasks are dissimilar. Hence, an interesting idea and a potential way to improve the performance is to merge sequential rehearsal into rehearsal-based CL with concurrent rehearsal. Thus inspired, we propose a novel hybrid rehearsal framework, which adapts between concurrent rehearsal and sequential rehearsal for each task based on its similarity with old tasks in the memory. 
The details are presented in \Cref{alg:exp_hybrid}. 

\begin{algorithm}[h]
\caption{\textit{Hybrid Rehearsal} Training Framework}
\label{alg:exp_hybrid}
\begin{algorithmic}

\STATE \textbf{Initialization.} Model parameters $\theta$. 
    \FOR{task $t = 1,2,\ldots,T$}
    \STATE Retrieve current data $\mathcal{D}_t$ and memory data $\mathcal{M}_t$
        \IF{$\mathcal{M}_t \neq \emptyset$ }
            \STATE $\mathcal{M}^{\text{sim}}_t,\mathcal{M}^{\text{dis}}_t \leftarrow \textsc{DivideBuffer}(\mathcal{M}_t)$
        \ENDIF
        \STATE $\theta \leftarrow \textsc{ConcurrentTrain}(\mathcal{D}_t \cup  \mathcal{M}^{\text{sim}}_t)$
        \FOR{ $h$ : $\mathcal{M}_{t,h} \in \mathcal{M}^{\text{dis}}_t$ }
            \STATE $\theta \leftarrow \textsc{SequentialTrain}(\mathcal{M}_{t,h})$ 
        \ENDFOR
        \STATE $\mathcal{M}_{t+1}\leftarrow \textsc{UpdateMemory}(\mathcal{D}_t \cup \mathcal{M}_{t})$ 
    \ENDFOR
\end{algorithmic}
\end{algorithm}

In \Cref{alg:exp_hybrid}, prior to training on task $t$, the overall memory dataset $\mathcal{M}_t$ is first divided into $\mathcal{M}^{\text{sim}}_t$ and $\mathcal{M}^{\text{dis}}_t$, depending on whether previous tasks are similar to the current task or not. Specifically, there are two steps in function \textsc{DivideBuffer}$(\mathcal{M}_t)$: we characterize task similarities between the current task $\mathcal{D}_t$ and each previous task $h$ in the memory $\mathcal{M}_{t,h} \in \mathcal{M}_{t}$, based on their cosine similarity of gradients with respect to the model parameters, 
i.e., $S_c(\nabla_{\theta} \mathcal{L}(\mathcal{D}_t,\theta_{t-1}),\nabla_{\theta} \mathcal{L}(\mathcal{M}_{t,i}),\theta_{t-1})$, where $\mathcal{L}$ denotes training loss and $S_c$ is the cosine similarity function; 
 2) any previous task for which the similarity score is below a threshold $\tau$ will be regarded as a dissimilar task and its data will be put into $\mathcal{M}_t^{\text{dis}}$.
It is important to note that this framework does not rely on very accurate characterizations of the task similarity. Instead, a heuristic-based estimation should be sufficient, by following the gradient-based similarity characterization as in previous studies \citep{lopez2017gradient,lin2022beyond,lin2022trgp}. 
{\color{black} To learn task $t$, we first train the model concurrently on the combined dataset consisting of the current task data $\mathcal{D}_t$ and the memory samples from similar tasks $\mathcal{M}^{\text{sim}}_t$. Subsequently, we perform sequential rehearsal by finetuning the learned model on the memory data from each dissimilar task in $\mathcal{M}^{\text{dis}}_t$.} 
The function \textsc{UpdateMemory} represents a general exemplar sampling strategy, such as Reservoir Sampling \citep{rolnick2019experience}.

Under the problem setups considered in the work, the {\bf theoretical forgetting and generalization error} of hybrid rehearsal strategy under linear models follow the same general expression as described in \Cref{thm: general form of FG}. The explicit expressions for coefficients are provided in \Cref{sec:hybrid rehearsal}. In what follows, we validate the advantage of hybrid rehearsal strategy through experiments conducted on real-world datasets and DNNs.

\subsection{Hybrid Rehearsal for CL in Practice}
\label{sec:exp_dnn}

\begin{table*}[ht]
\footnotesize
\centering
\caption{\color{black} Averaged Final Accuracy $Acc$ and Averaged Final Forgetting $Fgt$ of different rehearsal methods (concurrent rehearsal vs. hybrid rehearsal) on three datasets and their corrupted variations. ``Improvement" shows the $Acc$ or $Fgt$ overhead that hybrid rehearsal achieves over concurrent rehearsal under the same setup. For all datasets, we construct a sequence of $5$ tasks for training. The reported results are presented with ± standard deviation. All results are averaged over $10$ independent runs. 
} \label{table:exp_dnn_main}
\begin{tabularx}{\textwidth}{lXXXXXX}
\specialrule{.2em}{.1em}{.25em} 
    Metric          & $Acc\ (\uparrow)$ & $Fgt\ (\downarrow)$  & $Acc\ (\uparrow)$ & $Fgt\ (\downarrow)$ & $Acc\ (\uparrow)$ & $Fgt\ (\downarrow)$  \\
\midrule
    \textbf{Dataset} & \multicolumn{2}{c}{Split-CIFAR-10}  & \multicolumn{2}{c}{Split-CIFAR-100} & \multicolumn{2}{c}{Split-TinyImagenet200}   \\
\midrule 
    \textit{Concurrent rehearsal}
        & $86.20 {\text{\scriptsize{} ± 0.68}}$ & $9.73 {\text{\scriptsize{} ± 0.76}}$
        & $68.43 {\text{\scriptsize{} ± 0.93}}$ & $14.56 {\text{\scriptsize{} ± 1.15}}$
        & $61.10 {\text{\scriptsize{} ± 0.28}}$ & $13.28 {\text{\scriptsize{} ± 0.30}}$
\\ \midrule
    \textit{Hybrid rehearsal}
        & $\boldsymbol{87.27} {\text{\scriptsize{} ± 0.59}}$ & $\boldsymbol{8.25} {\text{\scriptsize{} ± 0.50}}$
        & $\boldsymbol{69.92} {\text{\scriptsize{} ± 0.52}}$ & $\boldsymbol{11.29} {\text{\scriptsize{} ± 0.79}}$
        & $\boldsymbol{63.29} {\text{\scriptsize{} ± 0.47}}$ & $\boldsymbol{9.62} {\text{\scriptsize{} ± 0.45}}$
\\ \midrule
    Improvement 
        & $+1.07$ & $-1.48$
        & $+1.49$ & $-3.27$
        & $+2.19$ & $-3.66$
\\ \midrule \midrule
    \textbf{Dataset}  & \multicolumn{2}{c}{Corrupted Split-CIFAR-10}  & \multicolumn{2}{c}{Corrupted Split-CIFAR-100} & \multicolumn{2}{c}{Corrupted Split-TinyImagenet200}
\\ \midrule
    \textit{Concurrent rehearsal}
        & $82.01 {\text{\scriptsize{} ± 0.90}}$ & $11.14 {\text{\scriptsize{} ± 0.98}}$
        & $62.33 {\text{\scriptsize{} ± 0.57}}$ & $17.35 {\text{\scriptsize{} ± 0.75}}$
        & $52.69 {\text{\scriptsize{} ± 0.44}}$ & $15.23 {\text{\scriptsize{} ± 0.49}}$
\\ \midrule
    \textit{Hybrid rehearsal}
        & $\boldsymbol{83.45} {\text{\scriptsize{} ± 0.22}}$ & $\boldsymbol{6.90} {\text{\scriptsize{} ± 0.56}}$
        & $\boldsymbol{64.34} {\text{\scriptsize{} ± 0.36}}$ & $\boldsymbol{9.49} {\text{\scriptsize{} ± 1.67}}$
        & $\boldsymbol{55.07} {\text{\scriptsize{} ± 0.33}}$ & $\boldsymbol{1.91} {\text{\scriptsize{} ± 0.69}}$
\\ \midrule
    Improvement 
        & $+1.44$ & $-4.24$
        & $+2.01$ & $-7.86$
        & $+2.38$ & $-13.32$
\\
\specialrule{.2em}{.1em}{.1em} 
\end{tabularx}
\end{table*}

\begin{figure}[h]
\vspace{-0.5em}
\centering
\includegraphics[width=0.47\textwidth]{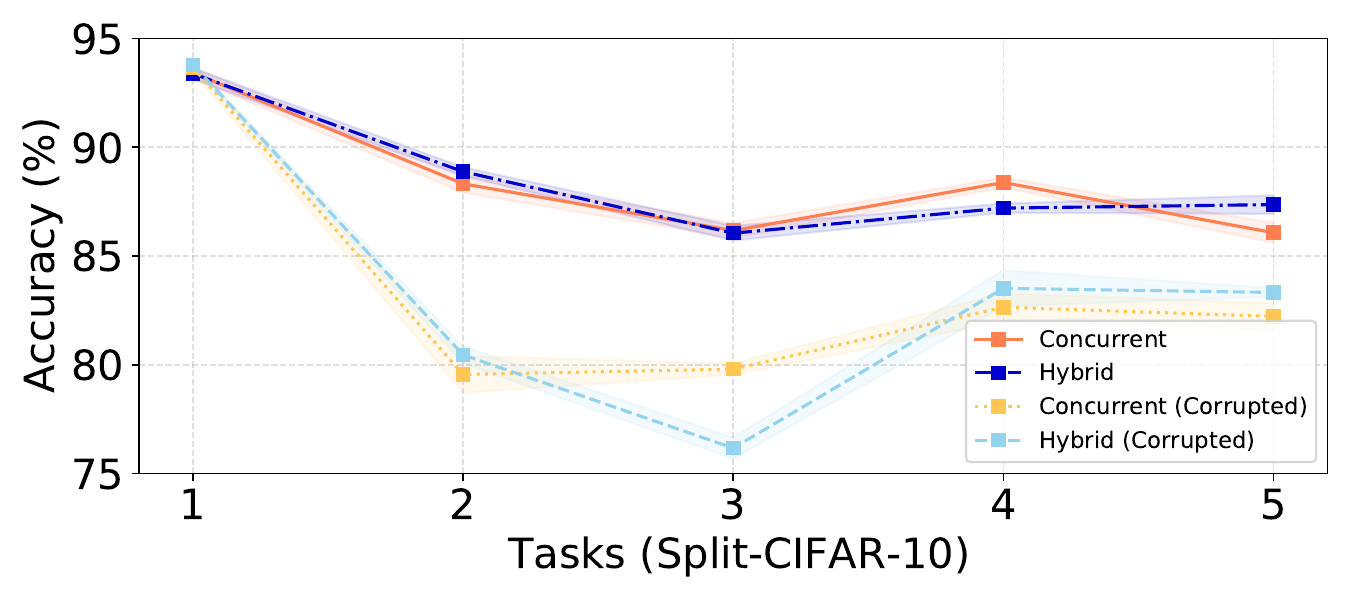}
\includegraphics[width=0.47\textwidth]{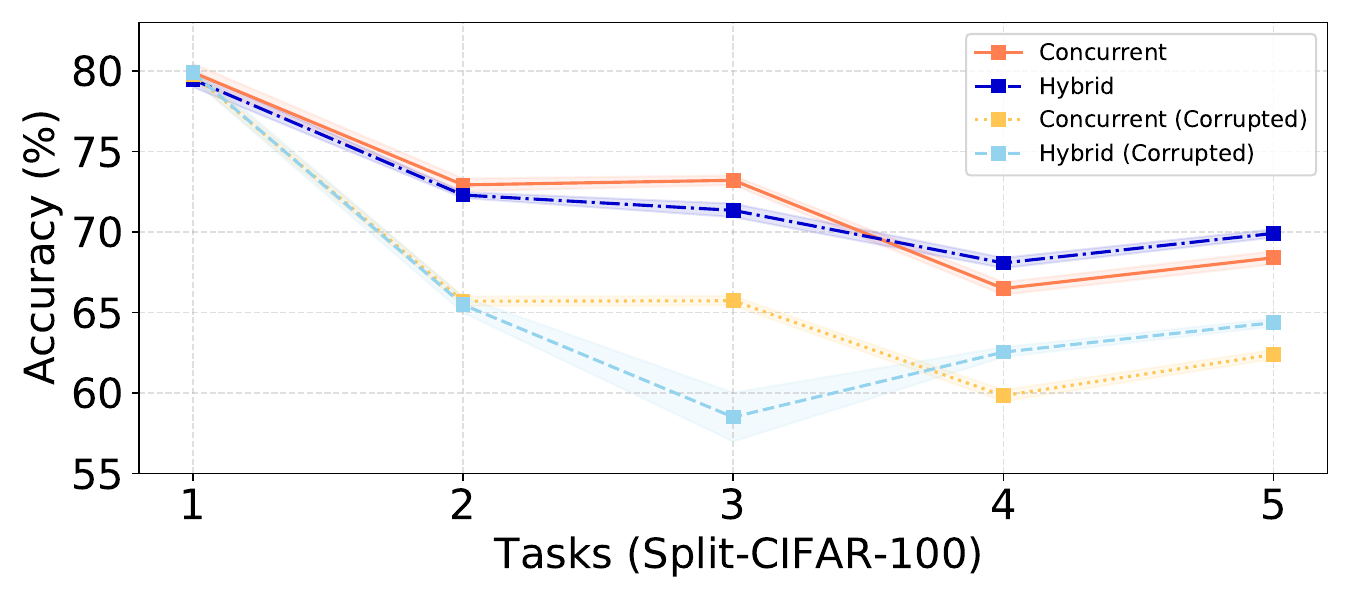}
\includegraphics[width=0.47\textwidth]{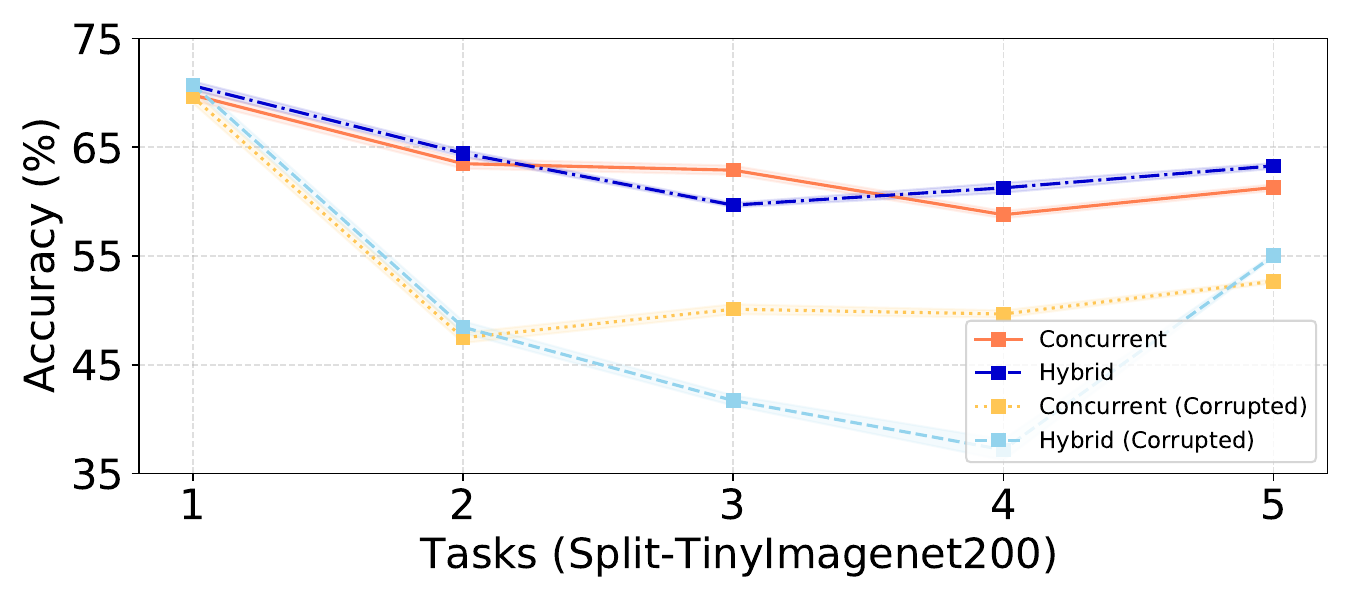}
\vspace{-1em}
\caption{\color{black}Evolution of Accuracy $Acc$ for Concurrent rehearsal and Hybrid rehearsal across three datasets and their corrupted variants, corresponding to the results reported in \cref{table:exp_dnn_main}. } 
\label{fig:exp_dnn_accSplit}
\end{figure}

As an extension of our theoretical results, we verify the performance of the proposed hybrid framework in \cref{alg:exp_hybrid} and compare it with the widely-used concurrent rehearsal. We consider three real-world datasets under a task-incremental CL setup: 
{\color{black} CIFAR-10, CIFAR-100 \citep{krizhevsky2009learning}, and TinyImagenet200 \citep{le2015tiny}. Following recent work \citep{van2022three}, we randomly partition the classes from each dataset into multiple tasks, yielding Split-CIFAR-10, Split-CIFAR-100, and Split-TinyImagenet200. For example, in Split-CIFAR-10, the sequence of tasks $\{\mathcal{T}_1, \dots, \mathcal{T}_5\}$ is constructed such that each task contains two distinct classes. The objective for each task $\mathcal{T}_t$ is to classify between $\{\mathcal{Y}_{t,1},\mathcal{Y}_{t,2}\}$ with the task label explicitly provided during training and testing. We employ a non-pretrained ResNet18 as the backbone model for all three datasets.}
To evaluate the performance, we denote $a_{k,t}$ as the testing accuracy on task $t$ after training task $k$ and consider the following two metrics: 1) Final Average Accuracy ($Acc$) across all seen tasks after training defined as $Acc=\frac{1}{T} \sum_{t=1}^{T}a_{T,t}$, which can be regarded as generalization error, and 2) Final Average Forgetting ($Fgt$), defined as $Fgt=\frac{1}{T-1} \sum_{t=1}^{T-1} (a_{t,t}-a_{T,t})$, which evaluates the average accuracy drop of old tasks after learning the final task.

Here, to simplify our comparison between hybrid rehearsal and concurrent rehearsal while avoiding extraneous complexities such as rehearsal task ordering (which is beyond the scope of this paper), we adopt a straightforward relaxation: {\color{black}at most} one task with the lowest similarity characterization in memory is designated as the “dissimilar task". As shown in \cref{table:exp_dnn_main}, hybrid rehearsal consistently outperforms concurrent rehearsal across all three datasets in terms of both average accuracy and forgetting. {\color{black}For the original datasets, the most significant improvement is observed on Split-TinyImagenet200, where hybrid rehearsal achieves a \textbf{2.19\%} higher accuracy than concurrent rehearsal while also reducing forgetting by \textbf{3.66}. We further report the performance evolution during CL for all settings from \cref{table:exp_dnn_main} in \cref{fig:exp_dnn_accSplit}, where each point shows the average performance of the model after learning task $t$ on all seen tasks so far. While the performance of hybrid rehearsal may degrade during intermediate tasks,
it consistently surpasses concurrent rehearsal in terms of final performance, highlighting the effectiveness of sequentially fine-tuning the model on dissimilar task data for better knowledge consolidation.}

{\color{black}To demonstrate the potential of hybrid rehearsal, we 
consider a challenging setup on Split-CIFAR-100 and Split-TinyImagenet200 with a long task sequence, i.e., under a $20$-task setting. As shown in \cref{table:exp_dnn_largeNTask},  hybrid  rehearsal outperforms concurrent rehearsal by a more pronounced gap, demonstrating the scalability and effectiveness of the hybrid rehearsal strategy in scenarios with a large number of tasks.}

Furthermore, following our theoretical insight, the hybrid rehearsal strategy exhibits greater advantages when the task dissimilarity is higher. To validate this, we increase task dissimilarity by applying image corruption to one specific task in each of the three datasets, {\color{black}generating Corrupted Split-CIFAR-10, Corrupted Split-CIFAR-100, and Corrupted Split-TinyImagenet200.} Details of the employed image corruption schemes are provided in \cref{sec:appendix_dnn_distortion}. As shown in the second part of \Cref{table:exp_dnn_main}, hybrid rehearsal outperforms concurrent rehearsal in both accuracy and forgetting by a larger margin on all three corrupted dataset variations, aligning with our theoretical insight.

\begin{table}
\footnotesize
\centering
\caption{\color{black} $Acc$ and $Fgt$ of different rehearsal methods on Split-CIFAR-100 and Split-TinyImagenet200. For all datasets, we construct a sequence of $20$ tasks for training. All results are averaged over $5$ independent runs. } \label{table:exp_dnn_largeNTask}
\begin{tabular}{lcccc}
\specialrule{.2em}{.1em}{.2em} 
    \textbf{Dataset} & Split-CIFAR-100 & Split-TinyImagenet200 \\
\midrule
    \textit{Concurrent} 
        & $67.99 {\text{\scriptsize{} ± 0.21}}$ 
        & $62.94 {\text{\scriptsize{} ± 0.60}}$ %
        
    \\ \midrule
    \textit{Hybrid}     
        & $72.31 {\text{\scriptsize{} ± 0.38}}$ 
        & $66.46 {\text{\scriptsize{} ± 0.13}}$ %
    \\ \midrule 
    Improvement 
        & $+4.32$
        & $+3.52$
    \\ 
\specialrule{.2em}{.1em}{.1em} 
\end{tabular}
\textit{Table 2.1} $Acc\ (\uparrow)$

\begin{tabular}{lcccc}
\specialrule{.2em}{.1em}{.2em} 
    \textbf{Dataset} & Split-CIFAR-100 & Split-TinyImagenet200 \\
\midrule
    \textit{Concurrent} 
        & $13.91 {\text{\scriptsize{} ± 0.11}}$ 
        & $15.05 {\text{\scriptsize{} ± 0.81}}$ %
    \\ \midrule
    \textit{Hybrid}     
        & $7.43 {\text{\scriptsize{} ± 0.18}}$ 
        & $13.03 {\text{\scriptsize{} ± 0.79}}$ %
    \\ \midrule 
    Improvement 
        & $-6.48$
        & $-2.02$
    \\ 
\specialrule{.2em}{.1em}{.1em} 
\end{tabular}
\textit{Table 2.2} $Fgt\ (\downarrow)$
\end{table}

{\color{black}To characterize the impact of task similarity on hybrid rehearsal, we next control the similarity by applying various proportions of label corruption in the task sequence on Split-TinyImagenet200. For each class within a corrupted task, $p_{cor}\%$ of training samples are randomly selected and their labels are uniformly reassigned to other class labels within the same task. Intuitively, the tasks are more dissimilar when a larger proportion of labels are corrupted.} In particular, we consider three different settings: {\color{black}$p_{cor}=5\%,10\%,20\%$.
 It can be seen from \Cref{table:exp_dnn_corruptLabel} that hybrid rehearsal consistently outperforms concurrent rehearsal, and more importantly, the performance improvement becomes more significant as tasks are more dissimilar. These results further justify the correctness and usefulness of our theoretical results. Note that the performance of hybrid rehearsal has not been optimized in terms of the rehearsal order and selection of similar tasks, which may further improve the effectiveness of sequential rehearsal. This encouraging result highlights the great potential of exploiting sequential rehearsal in improving the performance of rehearsal-based CL.}

\begin{table}
\footnotesize
\centering
\caption{\color{black} $Acc$ and $Fgt$ of different rehearsal methods on Split-Tiny-Imagenet200, where a subset of training data labels are corrupted in the first two tasks.  All results are averaged over $5$ independent runs. } \label{table:exp_dnn_corruptLabel}
\vspace{0.3em}
\begin{tabular}{lcccc}
\specialrule{.2em}{.1em}{.2em} 
    \textbf{Setting} & $p_{cor}=5\%$ & $p_{cor}=10\%$ & $p_{cor}=20\%$ \\
\midrule
    \textit{Concurrent} 
        & $57.36 {\text{\scriptsize{} ± 0.06}}$ 
        & $53.92 {\text{\scriptsize{} ± 0.78}}$
        & $49.07 {\text{\scriptsize{} ± 0.59}}$
    \\ \midrule
    \textit{Hybrid}     
        & $60.02 {\text{\scriptsize{} ± 0.36}}$
        & $56.88 {\text{\scriptsize{} ± 0.12}}$
        & $53.73 {\text{\scriptsize{} ± 0.60}}$
    \\ \midrule 
    Improvement
        & $+2.66$
        & $+2.96$
        & $+4.66$
    \\ 
\specialrule{.2em}{.1em}{.1em} 
\end{tabular}
\textit{Table 3.1} $Acc\ (\uparrow)$

\begin{tabular}{lcccc}
\specialrule{.2em}{.1em}{.2em} 
    \textbf{Setting} & $p_{cor}=5\%$ & $p_{cor}=10\%$ & $p_{cor}=20\%$ \\
\midrule
    \textit{Concurrent} 
        & $14.85 {\text{\scriptsize{} ± 0.28}}$
        & $17.40 {\text{\scriptsize{} ± 0.82}}$
        & $17.35 {\text{\scriptsize{} ± 1.15}}$
    \\ \midrule
    \textit{Hybrid}     
        & $11.26 {\text{\scriptsize{} ± 0.10}}$
        & $12.73 {\text{\scriptsize{} ± 1.00}}$
        & $12.48 {\text{\scriptsize{} ± 0.78}}$
    \\ \midrule 
    Improvement 
        & $-3.59$
        & $-4.67$
        & $-4.87$
    \\ 
\specialrule{.2em}{.1em}{.1em} 
\end{tabular}

\textit{Table 3.2} $Fgt\ (\downarrow)$
\end{table}

\section{Conclusion}
In this work, we took a closer look at the rehearsal strategy in rehearsal-based CL and questioned the effectiveness of the widely used training technique, i.e., concurrent rehearsal. In particular, we proposed a novel rehearsal strategy, namely sequential rehearsal, which revisits old tasks in the memory sequentially after current task learning. By leveraging overparameterized linear models with equal memory allocation, we provided the first explicit expressions of both forgetting and generalization errors under two rehearsal methods: concurrent rehearsal and sequential rehearsal. Comparisons between their theoretical performance led to the insight that sequential rehearsal outperforms concurrent rehearsal in terms of forgetting and generalization error when the tasks are less similar, which is consistent with our motivations from human learning and multitask learning. Our simulation results on linear models further corroborated the correctness of our theoretical results. More importantly, based on our theory, we proposed a novel hybrid rehearsal framework for practical CL and experiments on real-world data with DNNs verified the superior performance of this framework over traditional concurrent rehearsal. To the best of our knowledge, our work provides the first comprehensive theoretical study on rehearsal for rehearsal-based CL, which could motivate more principled designs for better rehearsal-based CL.

\FloatBarrier

\section*{Acknowledgment}
This work has been supported in part by the U.S. National Science Foundation under the grants: NSF AI Institute (AI-EDGE) 2112471, CNS-2312836, RINGS-2148253, CNS-2112471, 2324052, and ECCS-2413528, Office of Naval Research Grant N000142412729, and was sponsored by the Army Research Laboratory under Cooperative Agreement Number W911NF-23-2-0225. The views and conclusions contained in this document are those of the authors and should not be interpreted as representing the official policies, either expressed or implied, of the Army Research Laboratory or the U.S. Government. The U.S. Government is authorized to reproduce and distribute reprints for Government purposes notwithstanding any copyright notation herein.

\section*{Impact Statement}
This paper presents work whose goal is to advance the field of Machine Learning. There are many potential societal consequences of our work, none which we feel must be specifically highlighted here.

\bibliography{icml2025}
\bibliographystyle{icml2025}

\newpage
\appendix
\onecolumn

\newpage
\appendix

{\LARGE{ Supplementary Materials}}

\startcontents[sections]
\printcontents[sections]{l}{1}{\setcounter{tocdepth}{2}}


\section{DNN Experiments Details}
\label{sec:appendiox_exp_details}

\subsection{Hardware Details}

We outline the hardware configuration used to conduct the experiments on DNN:
\vspace{-0.2em}
\begin{itemize}[left=10pt]
    \item Operating system: Red Hat Enterprise Linux Server 7.9 (Maipo)
    \item Type of CPU: 2.4 GHz 14-Core Intel Xeon E5-2680 v4 (Broadwell)
    \item Type of GPU: NVIDIA P100 "Pascal" GPU with 16GB memory
\end{itemize}

{\color{black}
\subsection{Implementation Details}
\textbf{Dataset.}
We evaluate our hybrid rehearsal on variations of standard image classification datasets: Split-MNIST, Split-CIFAR-10, Split-CIFAR-100, and Split-TinyImagenet200 \citep{van2022three, guo2022online, infors2022}. 
\vspace{-0.2em}
\begin{itemize}[left=10pt]
    \item MNIST \citep{lecun1989handwritten} contains $60,000$ $28\times28$ grayscale train images and $10,000$ test images of $10$ unique classes. For the experiments reported in \cref{table:appendix_mnist}, we randomly split $10$ classes into $5$ tasks, each containing $2$ non-overlapping classes.
    \item CIFAR-10 \citep{krizhevsky2009learning} contains $50,000$ $32\times32$ color train images and $10,000$ test images of $10$ unique classes. For the experiments reported in \cref{table:exp_dnn_main}, we randomly split $10$ classes into $5$ tasks, each containing $2$ non-overlapping classes.
    \item CIFAR-100 \citep{krizhevsky2009learning} contains $50,000$ $32\times32$ color train images and $10,000$ test images of $100$ unique classes. For the experiments reported in \cref{table:exp_dnn_main}, we randomly split $25$ classes into $5$ tasks, each containing $5$ non-overlapping classes. For the experiments reported in \cref{table:exp_dnn_largeNTask}, we randomly split $100$ classes into $20$ tasks, each containing $5$ non-overlapping classes. For the experiments reported in \cref{table:appendix_multilevel}, we randomly split $100$ classes into $10$ tasks, each containing $10$ non-overlapping classes.
    \item TinyImagenet200 \citep{le2015tiny} contains $100,000$ $64\times64$ color train images and $20,000$ test images of $200$ unique classes. For the experiments reported in \cref{table:exp_dnn_main} and \cref{table:exp_dnn_corruptLabel}, we randomly split $50$ classes into $5$ tasks, each containing $10$ non-overlapping classes. For the experiments reported in \cref{table:exp_dnn_largeNTask}, we randomly split $200$ classes into $20$ tasks, each containing $10$ non-overlapping classes.
\end{itemize}}

\textbf{DNN Architecture and Training Details.}
For training on Split-MNIST, we employ a three-layer MLP with two fully connected hidden layers of $400$ ReLU units following \cite{van2022three}. For training on Split-CIFAR-10, Split-CIFAR-100, and Split-TinyImagenet200, we employ a non-pretrained ResNet-18 as our DNN backbone. Following \cite{van2022three}, we adopt a multi-headed output layer such that each task is assigned its own output layer, consistent with the typical Task Incremental CL setup. During supervised training, we explicitly provide the task identifier (ranging from $1$ to $5$ for Split-CIFAR-10) alongside the image-label pairs as additional input to the model. For simplicity, we use a reservoir sampling strategy to construct the memory buffer. Our buffer size is $200,300,30,30$ per class for Split-MNIST, Split-CIFAR-10, Split-CIFAR-100, and Split-TinyImagenet200, correspondingly. For experiments not involving image corruption, we didn't apply any data augmentation before training.

For all experiments on \textit{concurrent rehearsal} and \textit{sequential rehearsal}, we use the SGD optimizer with a StepLR learning rate scheduler, which decays the learning rate by a fixed factor at predefined intervals. The detailed parameters for each dataset are listed in \cref{table:appendix-hyperparameter}.

\begin{table}
\footnotesize
\caption{Training parameters for all four datasets.}
\label{table:appendix-hyperparameter}
\centering
\begin{tabular}{llcccc}
    \toprule 
    Method & Parameter & Split-MNIST & Split-CIFAR-10 & Split-CIFAR-100 & Split-TinyImagenet200 \\
    \midrule 
    \multirow{7}{*}{\textit{Concurrent rehearsal}} 
        & Epoch & 15 & 30 & 30 & 40 \\
        & Minibatch Size & 256 & 128 & 128 & 128 \\
        & Momentum & 0.9 & 0.9 & 0.9 & 0.9 \\
        & Weight Decay & $1e^{-5}$ & $1e^{-4}$ & $1e^{-4}$ & $2e^{-4}$ \\
        & Initial LR & 0.05 & 0.05 & 0.05 & 0.05 \\
        & LR Decay Factor & 0.1 & 0.1 & 0.1 & 0.1 \\
        & LR Step & 10 & 20 & 20 & 25 \\
    \midrule
    \multirow{7}{*}{\textit{Sequential rehearsal}} 
        & Epoch & 15 & 30 & 30 & 40 \\
        & Minibatch Size & 128 & 64 & 64 & 64 \\
        & Momentum & 0.9 & 0.9 & 0.9 & 0.9 \\
        & Weight Decay & $1e^{-5}$ & $1e^{-3}$ & $1e^{-3}$ & $2e^{-3}$ \\
        & Initial LR & 0.0007 & 0.001 & 0.001 & 0.002 \\
        & LR Decay Factor & 0.8 & 0.5 & 0.5 & 0.5 \\
        & LR Step & 10 & 12 & 12 & 16 \\
    \bottomrule
\end{tabular}
\end{table}

{\color{black}\textbf{Similarity Threshold}.
We report the employed similarity threshold parameter $\tau$ corresponding to different setting in \cref{table:exp_dnn_main}. For Split-CIFAR-100, and Corrupted Split-CIFAR-100, $\tau=-0.1$. For Split-CIFAR-10, Corrupted Split-CIFAR-10, Split-TinyImagenet200, and Corrupted Split-TinyImagenet200, $\tau=0$.}

\subsection{Task Corruption}
\label{sec:appendix_dnn_distortion}
For experiments described in \Cref{sec:exp_dnn}, we control the similarity level of the dataset by applying data corruption to a selected task. We provide a list of sample images under different image corruption schemes in \cref{fig:appendix_corruption_all}. 

In \cref{table:exp_dnn_main}, for Corrupted Split-CIFAR-10, we apply Glass Corruption on all $\mathcal{T}_1$ training data. For Corrupted Split-CIFAR-100, we apply Color-swapping, Gaussian Blur, and Rotating Corruption on all $\mathcal{T}_2$ training data. For Corrupted Split-TinyImagenet200, we apply Color-swapping, Gaussian Blur, and Rotating Corruption on all $\mathcal{T}_1$ training data.

In \cref{table:appendix_mnist}, for Corrupted Split-MNIST, we apply Rotation on all $\mathcal{T}_2$ training data.

In \cref{table:appendix_multilevel}, for the scenario "Original Dataset", we don't apply any image corruption. For the scenario "1 Corruption", we apply the Glass corruption on $\mathcal{T}_1$. For the scenario "2 Corruption", we apply Glass corruption on $\mathcal{T}_1$, and rotational color swapping on $\mathcal{T}_2$. For the scenario "3 Corruption", we apply Glass corruption on $\mathcal{T}_1$, rotational color swapping on $\mathcal{T}_3$, and elastic pixelation on $\mathcal{T}_5$.

\subsection{Additional Results}

{\color{black}We report the performance of hybrid rehearsal and concurrent rehearsal on Split-MNIST and its corrupted variation in \cref{table:appendix_mnist}.

To further demonstrate the effect of task similarity on hybrid rehearsal, we seek to control the similarity by using the number of corrupted tasks (i.e., task with corrupted images) in the task sequence on Split-CIFAR-100. In particular, we consider three different scenarios, `1 Corruption' with 1 corrupted task, `2 Corruption' with 2 corrupted tasks, and `3 Corruption' with 3 corrupted tasks. 
Intuitively, the tasks are more dissimilar when more tasks are corrupted. It can be seen from \Cref{table:appendix_multilevel} that hybrid rehearsal consistently outperforms concurrent rehearsal, and more importantly, the performance improvement becomes more significant as tasks are more dissimilar.}

\begin{table}[ht]
\footnotesize
\centering
\caption{\color{black} $Acc$ and $Fgt$ of different rehearsal methods (concurrent rehearsal vs. hybrid rehearsal) on Split-MNIST and its corrupted variations. All results are averaged over $5$ independent runs.} \label{table:appendix_mnist}
\begin{tabularx}{\textwidth}{lXXXX}
\specialrule{.2em}{.1em}{.25em} 
    Metric          & $Acc\ (\uparrow)$ & $Fgt\ (\downarrow)$  & $Acc\ (\uparrow)$ & $Fgt\ (\downarrow)$ \\
\midrule
    \textbf{Dataset} & \multicolumn{2}{c}{Split-MNIST}  & \multicolumn{2}{c}{Corrupted Split-MNIST} \\
\midrule 
    \textit{Concurrent rehearsal}
        & $95.24 {\text{\scriptsize{} ± 0.25}}$ & $5.33 {\text{\scriptsize{} ± 0.30}}$
        & $94.29 {\text{\scriptsize{} ± 0.15}}$ & $6.48 {\text{\scriptsize{} ± 0.20}}$
\\ \midrule
    \textit{Hybrid rehearsal}
        & $\boldsymbol{95.39} {\text{\scriptsize{} ± 0.16}}$ & $\boldsymbol{1.65} {\text{\scriptsize{} ± 1.12}}$
        & $\boldsymbol{94.81} {\text{\scriptsize{} ± 0.08}}$ & $\boldsymbol{2.57} {\text{\scriptsize{} ± 0.73}}$
\\ \midrule
    Improvement 
        & $+0.15$ & $-3.68$
        & $+0.52$ & $-3.91$ 
\\
\specialrule{.2em}{.1em}{.1em} 
\end{tabularx}
\end{table}

\begin{table}
\footnotesize
\centering
\caption{\color{black} $Acc$ and $Fgt$ of different rehearsal methods on Split-CIFAR-100 with varying numbers of corrupted tasks. We construct a sequence of $10$ tasks for training. ``Corruption Number = $s$" indicates that data corruption was applied to $s$ out of 10 tasks, making it more dissimilar than others. All results are averaged over $4$ independent runs.} \label{table:appendix_multilevel}
\begin{tabular}{lcccc}

\specialrule{.2em}{.1em}{.2em} 
    \textbf{Corruption Number} & 0 & 1 & 2 & 3 
\\ \midrule
    \textit{Concurrent rehearsal} 
        & $68.27 {\text{\scriptsize{} ± 0.33}}$ 
        & $64.24 {\text{\scriptsize{} ± 0.23}}$ 
        & $60.67 {\text{\scriptsize{} ± 0.23}}$ 
        & $58.93 {\text{\scriptsize{} ± 0.11}}$  
\\ \midrule
    \textit{Hybrid rehearsal}     
        & $\boldsymbol{68.79} {\text{\scriptsize{} ± 0.43}}$ 
        & $\boldsymbol{64.81} {\text{\scriptsize{} ± 0.16}}$  
        & $\boldsymbol{61.30} {\text{\scriptsize{} ± 0.17}}$ 
        & $\boldsymbol{59.72} {\text{\scriptsize{} ± 0.15}}$  
\\ \midrule 
    Improvement 
        & $+0.52$
        & $+0.57$ 
        & $+0.63$ 
        & $+0.79$
\\ \specialrule{.2em}{.1em}{.1em} 
\end{tabular}

\textit{Table 4.1} $Acc\ (\uparrow)$

\begin{tabular}{lcccc}
\specialrule{.2em}{.1em}{.2em} 
    \textbf{Corruption Number} & 0 & 1 & 2 & 3 
\\ \midrule
    \textit{Concurrent rehearsal} 
        & $6.24  {\text{\scriptsize{} ± 0.11}}$ 
        & $7.76  {\text{\scriptsize{} ± 0.09}}$ 
        & $9.28  {\text{\scriptsize{} ± 0.16}}$ 
        & $8.57  {\text{\scriptsize{} ± 0.08}}$ 
\\ \midrule
    \textit{Hybrid rehearsal}     
        & $\boldsymbol{6.10} {\text{\scriptsize{} ± 0.22}}$ 
        & $\boldsymbol{7.23} {\text{\scriptsize{} ± 0.10}}$ 
        & $\boldsymbol{8.75} {\text{\scriptsize{} ± 0.09}}$ 
        & $\boldsymbol{8.35} {\text{\scriptsize{} ± 0.10}}$ 
\\ \midrule 
    Improvement
        & $-0.14$
        & $-0.53$
        & $-0.53$
        & $-0.22$
\\ \specialrule{.2em}{.1em}{.1em} 
\end{tabular}

\textit{Table 4.2} $Fgt\ (\downarrow)$
\end{table}

\begin{figure}[ht]
\centering
\subfigure[Sample images without corruption.]{
    \centering
    \includegraphics[width=0.95\linewidth]{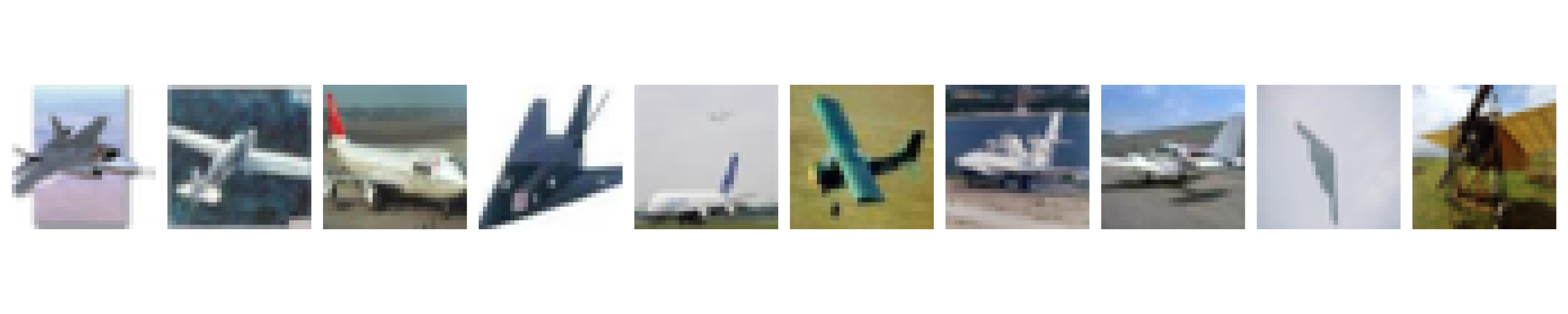}
    }
\subfigure[Glass Corruption: the images are transformed to simulate the effect of viewing through frosted glass, inducing localized blurring and pixel displacement.]{
    \centering
    \includegraphics[width=0.95\linewidth]{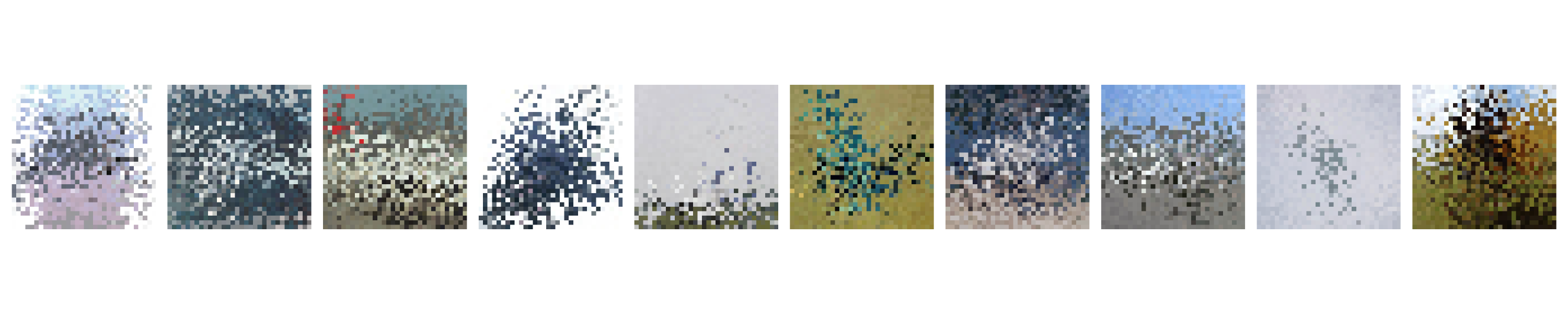}
    }
    
\subfigure[Color-swapping and Rotation Corruption: the images are randomly rotated by arbitrary angles, and a subset of pixels undergoes random permutation of RGB channels.]{
    \centering
    \includegraphics[width=0.95\linewidth]{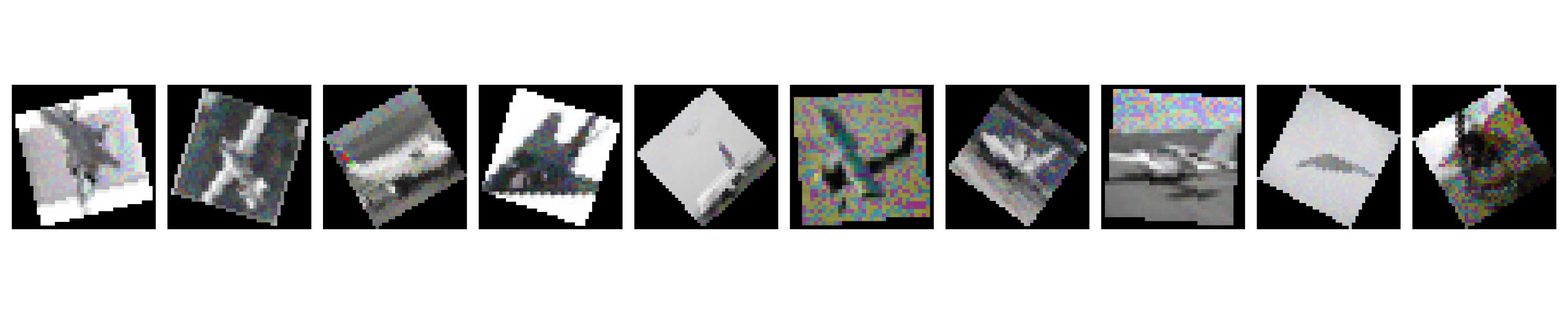}
    }

\subfigure[Elastic and Pixelate Image Corruption: the images are subjected to smooth, non-linear spatial deformations followed by pixelation, resulting in a low-resolution appearance.]{
    \centering
    \includegraphics[width=0.95\linewidth]{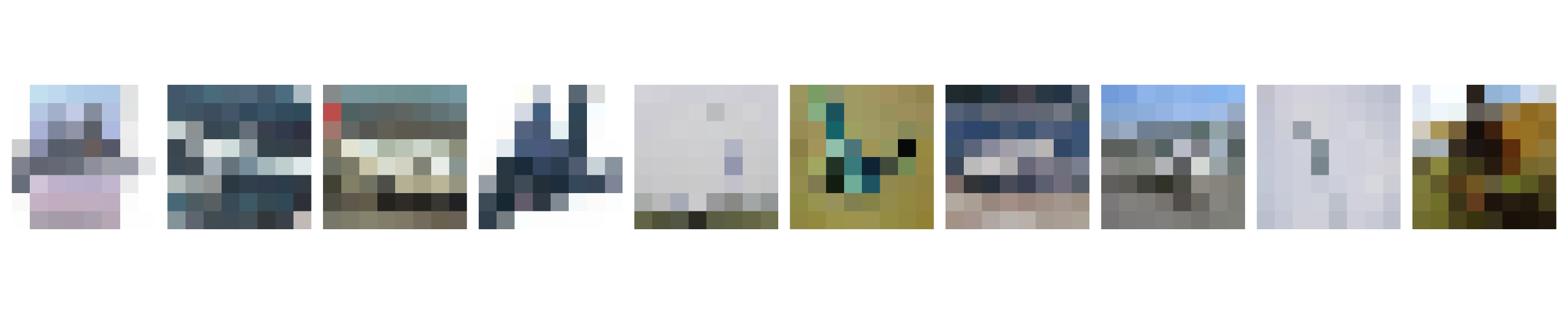}
    }
\caption{Sample images for demonstrating the employed corruption schemes.}
\label{fig:appendix_corruption_all}
\end{figure}

\section{Supporting Lemmas}
Define $P_{\bm{X}} = \bm{X}(\bm{X}^\top \bm{X})^{-1}\bm{X}^\top$ and $\bm{X}^\dagger = \bm{X}(\bm{X}^\top \bm{X})^{-1}$. We first provide some useful lemmas for the  derivation of forgetting and generalization error. In the following lemma, we provide the expression of the SGD convergence point when training on a single task.
\begin{lemma}\label{lemma:solution of optimization problem}
    Suppose $\bm{X}\in\mathbb{R}^{p\times m}$ and $\bm{Y}\in\mathbb{R}^{m}$, where $\bm{Y} = \bm{X}^\top \bm{w}^* + \bm{z}$. Consider the optimization problem:
    \begin{align*}
        \bm{w}_{\text{o}} =& \arg\min_{\bm{w}} \cnorm{\bm{w} - \bm{w}_{\text{s}}}_2^2 \\
        & s.t. \ \ \bm{X}^\top \bm{w} = \bm{Y}.
    \end{align*}
    The solution of the above problem can be written as:
    \begin{equation*}
        \bm{w}_{\text{o}} = \bm{w}_{\text{s}} + \bm{X}^\dagger(\bm{Y} - \bm{X}^\top\bm{w}_{\text{in}}),
    \end{equation*}
    or equivalently,
    \begin{equation*}
        \bm{w}_{\text{o}} = (\bm{I} - P_{\bm{X}})\bm{w}_{\text{s}} +  P_{\bm{X}}\bm{w}^* + \bm{X}^\dagger\bm{z}.
    \end{equation*}
\end{lemma}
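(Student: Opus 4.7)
My plan is to reduce the constrained quadratic program to a canonical minimum-norm linear-system problem and then verify the two closed forms by direct algebra. The overparameterized setup of the paper ($p > n + M \ge m$) together with the i.i.d.\ Gaussian feature assumption guarantees that $\bm{X}$ has full column rank almost surely, so $\bm{X}^\top \bm{X}$ is invertible and $P_{\bm{X}}$, $\bm{X}^\dagger$ are well defined; I would state this at the outset.

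First I would introduce $\bm{v} \defeq \bm{w} - \bm{w}_{\text{s}}$, turning the program into minimizing $\cnorm{\bm{v}}^2$ subject to the linear constraint $\bm{X}^\top \bm{v} = \bm{Y} - \bm{X}^\top \bm{w}_{\text{s}}$. I would then form the Lagrangian
\begin{equation*}
\mathcal{L}(\bm{v},\bm{\lambda}) = \cnorm{\bm{v}}^2 + \bm{\lambda}^\top\bigl(\bm{X}^\top \bm{v} - \bm{Y} + \bm{X}^\top \bm{w}_{\text{s}}\bigr),
\end{equation*}
set $\nabla_{\bm{v}}\mathcal{L}=\bm{0}$ to get $\bm{v} = -\tfrac{1}{2}\bm{X}\bm{\lambda}$, and plug this stationarity condition back into the constraint to solve $\bm{\lambda} = -2(\bm{X}^\top\bm{X})^{-1}(\bm{Y}-\bm{X}^\top\bm{w}_{\text{s}})$. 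Substituting yields $\bm{v} = \bm{X}(\bm{X}^\top\bm{X})^{-1}(\bm{Y}-\bm{X}^\top\bm{w}_{\text{s}}) = \bm{X}^\dagger(\bm{Y}-\bm{X}^\top\bm{w}_{\text{s}})$, and translating back gives the first claimed form $\bm{w}_{\text{o}} = \bm{w}_{\text{s}} + \bm{X}^\dagger(\bm{Y} - \bm{X}^\top \bm{w}_{\text{s}})$.

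For the equivalent form, I would substitute the ground-truth data model $\bm{Y} = \bm{X}^\top \bm{w}^* + \bm{z}$ into the first expression, regroup, and use the identity $\bm{X}^\dagger \bm{X}^\top = P_{\bm{X}}$ to collapse the $\bm{w}_{\text{s}}$ and $\bm{w}^*$ terms into $(\bm{I}-P_{\bm{X}})\bm{w}_{\text{s}} + P_{\bm{X}}\bm{w}^* + \bm{X}^\dagger \bm{z}$. This is purely algebraic and a one-line computation.

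I do not expect a real obstacle since this is a textbook minimum-norm argument; the only subtlety is the invertibility of $\bm{X}^\top\bm{X}$, which is handled by the problem setup. As an optional sanity check, I would include a brief orthogonality argument: any other feasible $\bm{w}_{\text{o}} + \bm{u}$ satisfies $\bm{X}^\top \bm{u} = \bm{0}$, hence $\bm{u}$ is orthogonal to the column space of $\bm{X}$ and in particular to $\bm{X}^\dagger(\bm{Y}-\bm{X}^\top\bm{w}_{\text{s}})$, so $\cnorm{\bm{w}_{\text{o}}+\bm{u}-\bm{w}_{\text{s}}}^2 = \cnorm{\bm{w}_{\text{o}}-\bm{w}_{\text{s}}}^2 + \cnorm{\bm{u}}^2$, confirming that $\bm{w}_{\text{o}}$ is the unique minimizer.
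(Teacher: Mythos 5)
Your proof is correct and complete. The paper does not actually prove this lemma in-house; it simply defers to Lemma B.1 of Lin et al.\ (2023), so your Lagrangian derivation (stationarity gives $\bm{v}=-\tfrac{1}{2}\bm{X}\bm{\lambda}$, the constraint pins down $\bm{\lambda}$, and $\bm{X}^\dagger\bm{X}^\top=P_{\bm{X}}$ collapses the second form) together with the orthogonality check is exactly the standard argument that reference contains, and your remark on the almost-sure invertibility of $\bm{X}^\top\bm{X}$ covers the only real hypothesis. One minor point worth flagging: the lemma statement writes $\bm{w}_{\text{in}}$ in the first closed form where it should read $\bm{w}_{\text{s}}$; your proof implicitly corrects this typo.
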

\begin{proof}
    The detailed proof refers to Lemma B.1 in \citet{lin2023theory}. 
\end{proof} 

\begin{lemma}\label{lemma:expectation of projection}
    Suppose each element of the random matrix $\bm{X}\in\mathbb{R}^{p\times m}$ follows i.i.d. standard distribution and $v\in\mathbb{R}^{p}$, then we have:
    \begin{equation*}
        \mathbb{E}\cnorm{P_{\bm{X}} v}^2 = \frac{m}{p}\cnorm{v}^2.
    \end{equation*}
\end{lemma}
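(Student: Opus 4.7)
The plan is to exploit the rotational invariance of the i.i.d.\ standard Gaussian distribution on $\bm{X}$ to pin down $\mathbb{E}[P_{\bm{X}}]$, and then to reduce the quantity $\mathbb{E}\cnorm{P_{\bm{X}}v}^2$ to a linear functional of $\mathbb{E}[P_{\bm{X}}]$ via the idempotence and symmetry of the projection. Since the overparameterized regime of the paper implies $p>m$, the columns of $\bm{X}$ are linearly independent almost surely, so $P_{\bm{X}}=\bm{X}(\bm{X}^\top\bm{X})^{-1}\bm{X}^\top$ is the well-defined orthogonal projector onto the $m$-dimensional column space of $\bm{X}$.

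First I would note the key observation: for any orthogonal matrix $Q\in\mathbb{R}^{p\times p}$, the matrix $Q\bm{X}$ has the same distribution as $\bm{X}$ because i.i.d.\ standard Gaussian vectors are rotationally invariant. Combined with the identity $P_{Q\bm{X}} = Q P_{\bm{X}} Q^\top$, this gives $Q \,\mathbb{E}[P_{\bm{X}}]\, Q^\top = \mathbb{E}[P_{\bm{X}}]$ for every orthogonal $Q$. A matrix that commutes with all orthogonal rotations must be a scalar multiple of the identity, so $\mathbb{E}[P_{\bm{X}}] = c\,\bm{I}_p$ for some constant $c$.

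Next I would determine $c$ by taking a trace. Since $P_{\bm{X}}$ is almost surely a rank-$m$ orthogonal projector, $\mathrm{tr}(P_{\bm{X}}) = m$ almost surely, and therefore $\mathrm{tr}(\mathbb{E}[P_{\bm{X}}]) = cp = m$, giving $c = m/p$. Finally, using $P_{\bm{X}}^\top P_{\bm{X}} = P_{\bm{X}}$, I would write
\begin{equation*}
\mathbb{E}\cnorm{P_{\bm{X}}v}^2 = \mathbb{E}\bigl[v^\top P_{\bm{X}} v\bigr] = v^\top \mathbb{E}[P_{\bm{X}}] v = \tfrac{m}{p}\cnorm{v}^2.
\end{equation*}

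There is no real obstacle here; the only subtlety is ensuring that $p>m$ so $P_{\bm{X}}$ is almost surely a rank-$m$ projector (which holds in the overparameterized setting $p>n+M$ considered throughout the paper). An alternative, slightly more computational route would be to write $v = \sum_j v_j e_j$ in the standard basis and exploit permutation symmetry of the coordinates to reduce to the case $v=e_1$, but the rotational-invariance argument above is cleaner and handles general $v$ in one stroke.
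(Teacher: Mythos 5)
Your proof is correct. Note that the paper does not actually supply its own argument for this lemma; it simply defers to Proposition 3 of \citet{ju2023theoretical}, so your write-up is a self-contained replacement rather than a parallel derivation. The rotational-invariance route you take is the standard one and every step checks out: $Q\bm{X}\stackrel{d}{=}\bm{X}$ for orthogonal $Q$ by Gaussianity, $P_{Q\bm{X}}=QP_{\bm{X}}Q^\top$ by direct computation, so $\mathbb{E}[P_{\bm{X}}]$ commutes with the full orthogonal group and must equal $c\bm{I}_p$; the trace identity $\mathrm{tr}(P_{\bm{X}})=m$ a.s.\ (valid since $p>m$ guarantees full column rank almost surely) pins down $c=m/p$; and the reduction $\cnorm{P_{\bm{X}}v}^2=v^\top P_{\bm{X}}v$ via symmetry and idempotence finishes the job. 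You are also right to flag that the statement as written says ``standard distribution'' but the argument genuinely uses rotational invariance, i.e.\ the Gaussian assumption from the paper's data model — for a general i.i.d.\ distribution one would instead fall back on the permutation/sign-flip symmetry you mention, which only yields $\mathbb{E}[P_{\bm{X}}]$ diagonal with equal entries when the coordinate distribution is symmetric, so the Gaussian hypothesis is doing real work here.
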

 \begin{proof}
     The detailed proof refers to Proposition 3 in \citet{ju2023theoretical}.
 \end{proof}

\begin{lemma}\label{lemma:expecation of noise}
      Suppose each element of the random matrix $\bm{X}\in\mathbb{R}^{p\times m}$ follows i.i.d. standard distribution. Also, the random vector $z\in\mathbb{R}^{m}$ follows $\mathcal{N}(0,\sigma^2\bm{I}_m)$ independently. Then, we have:
      \begin{equation*}
          \mathbb{E}\cnorm{\bm{X}^{\dagger} \bm{z}}^2 = \frac{m\sigma^2}{p-m-1}.
      \end{equation*}
\end{lemma}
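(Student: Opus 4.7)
The plan is to reduce $\mathbb{E}\cnorm{\bm{X}^\dagger \bm{z}}^2$ to a quantity depending only on $\bm{X}^\top \bm{X}$ via a standard quadratic-form manipulation, then use the trace trick with respect to $\bm{z}$, and finally invoke the mean formula of the inverse-Wishart distribution to handle the expectation over $\bm{X}$.

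First, I would rewrite the squared norm as a quadratic form in $\bm{z}$. Using the definition $\bm{X}^\dagger = \bm{X}(\bm{X}^\top \bm{X})^{-1}$, a direct computation gives $(\bm{X}^\dagger)^\top \bm{X}^\dagger = (\bm{X}^\top \bm{X})^{-1}\bm{X}^\top \bm{X}(\bm{X}^\top \bm{X})^{-1} = (\bm{X}^\top \bm{X})^{-1}$, so
\begin{equation*}
\cnorm{\bm{X}^\dagger \bm{z}}^2 = \bm{z}^\top (\bm{X}^\top \bm{X})^{-1} \bm{z}.
\end{equation*}
This is the main structural simplification and makes it clear why the Gram matrix alone determines the expectation.

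Next, I would condition on $\bm{X}$ and apply the trace trick. Since $\bm{z}$ is independent of $\bm{X}$ with $\mathbb{E}[\bm{z}\bm{z}^\top] = \sigma^2 \bm{I}_m$, we get
\begin{equation*}
\mathbb{E}\!\left[\bm{z}^\top (\bm{X}^\top \bm{X})^{-1} \bm{z} \,\big|\, \bm{X}\right] = \sigma^2\, \mathrm{tr}\!\left((\bm{X}^\top \bm{X})^{-1}\right).
\end{equation*}
Taking the outer expectation and using linearity of trace reduces the problem to computing $\mathbb{E}[\mathrm{tr}((\bm{X}^\top \bm{X})^{-1})]$.

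Finally, I would invoke the distributional properties of $\bm{X}^\top \bm{X}$. Because $\bm{X} \in \mathbb{R}^{p \times m}$ has i.i.d.\ standard Gaussian entries, $\bm{X}^\top \bm{X}$ follows the Wishart distribution $W_m(p, \bm{I}_m)$, and for $p > m+1$ its inverse has mean $\mathbb{E}[(\bm{X}^\top \bm{X})^{-1}] = \frac{1}{p-m-1}\bm{I}_m$. Therefore $\mathbb{E}[\mathrm{tr}((\bm{X}^\top \bm{X})^{-1})] = \frac{m}{p-m-1}$, which upon combining with the previous display yields the claimed $\frac{m\sigma^2}{p-m-1}$. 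The only subtle point is the requirement $p > m+1$ for the inverse-Wishart mean to exist, but this is consistent with the overparameterized regime $p > n + M$ assumed throughout the paper, so no obstacle arises there.
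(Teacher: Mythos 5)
Your proposal is correct and follows essentially the same route as the paper's proof: rewrite the squared norm as the quadratic form $\bm{z}^\top(\bm{X}^\top\bm{X})^{-1}\bm{z}$, apply the trace trick together with independence of $\bm{X}$ and $\bm{z}$, and conclude via the mean of the inverse-Wishart distribution. The explicit conditioning on $\bm{X}$ and the remark on the moment condition $p>m+1$ are minor presentational differences, not a different argument.
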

\begin{proof} The proof is completed by applying the "trace trick" as follows.
\begin{align*}
    \mathbb{E}\cnorm{\bm{X}^{\dagger} \bm{z}}^2 &= \mathbb{E} \left[\bm{z} ^\top \left(\bm{X}^\top \bm{X}  \right)^{-1} \bm{z} \right]\\
    &= \mathbb{E} \left[\mathrm{tr}\left[ \left(\bm{X}^\top \bm{X}  \right)^{-1} \bm{z} \bm{z} ^\top\right]\right]\\
    &\stackrel{(i)}{=}\mathrm{tr} \left[\mathbb{E} \left[ \left(\bm{X}^\top \bm{X}  \right)^{-1}\right]\E \left[\bm{z} \bm{z} ^\top\right]\right]\\
    &\stackrel{(ii)}{=} \sigma^2 \mathrm{tr} \left[\mathbb{E} \left[ \left(\bm{X}^\top \bm{X}  \right)^{-1}\right]\right]\\
    &\stackrel{(iii)}{=} \frac{m\sigma^2}{p-m-1},
\end{align*}
where $(i)$ follows from the independence between $\bm{X}$ and $\bm{z}$, $(ii)$ follows from the fact that $\E \left[\bm{z} \bm{z} ^\top\right] = \sigma^2 \bm{I}_m$ and $(iii)$ follows from the fact that $\left(\bm{X}^\top \bm{X}  \right)^{-1}$ follows the inverse-Wishart distribution with identity scale matrix $\bm{I}_m$ and $p$ degrees-of-freedom.
\end{proof}

\begin{lemma}\label{lemma:orthogonal of I-P and pinvP}
    Suppose $\bm{X}\in\mathbb{R}^{p \times m}$ and $\bm{v}_1,\bm{v}_2\in\mathbb{R}^p$, then we have:
    \begin{align*}
        \langle(\bm{I}-P_{\bm{X}})\bm{v}_1, \bm{X}^\dagger \bm{v}_2\rangle &= 0,\\
        \langle(\bm{I}-P_{\bm{X}})\bm{v}_1, P_{\bm{X}} \bm{v}_2\rangle &= 0.
    \end{align*}
\end{lemma}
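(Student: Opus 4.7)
The lemma is a standard algebraic identity about the projector $P_{\bm{X}} = \bm{X}(\bm{X}^\top\bm{X})^{-1}\bm{X}^\top$ and the pseudo-inverse-like quantity $\bm{X}^\dagger = \bm{X}(\bm{X}^\top\bm{X})^{-1}$, so the plan is to reduce both inner products to matrix products that vanish by direct cancellation. My plan is to rewrite each inner product as $\bm{v}_1^\top A \bm{v}_2$ for an appropriate matrix $A$, then show that $A = 0$ as an operator identity, independently of $\bm{v}_1,\bm{v}_2$. Since the conclusion must hold for all vectors, establishing $A = 0$ is both sufficient and the cleanest route.

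For the first identity, I would use the symmetry of $P_{\bm{X}}$ (immediate from its definition) to write $\langle (\bm{I}-P_{\bm{X}})\bm{v}_1, \bm{X}^\dagger \bm{v}_2\rangle = \bm{v}_1^\top (\bm{I}-P_{\bm{X}})\bm{X}^\dagger \bm{v}_2$, and then expand
\begin{equation*}
(\bm{I}-P_{\bm{X}})\bm{X}^\dagger = \bm{X}(\bm{X}^\top\bm{X})^{-1} - \bm{X}(\bm{X}^\top\bm{X})^{-1}\bm{X}^\top\bm{X}(\bm{X}^\top\bm{X})^{-1},
\end{equation*}
where the second term collapses to $\bm{X}(\bm{X}^\top\bm{X})^{-1}$ because $(\bm{X}^\top\bm{X})^{-1}\bm{X}^\top\bm{X} = \bm{I}$, giving $0$. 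Conceptually, this is just the observation that $\mathrm{col}(\bm{X}^\dagger) \subseteq \mathrm{col}(\bm{X})$ and that $\bm{I}-P_{\bm{X}}$ annihilates $\mathrm{col}(\bm{X})$.

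For the second identity, the same reduction yields $\bm{v}_1^\top (\bm{I}-P_{\bm{X}})P_{\bm{X}} \bm{v}_2$, and I would verify the idempotence $P_{\bm{X}}^2 = P_{\bm{X}}$ directly from the definition (the middle $\bm{X}^\top \bm{X}$ cancels one of the $(\bm{X}^\top \bm{X})^{-1}$ factors), so $(\bm{I}-P_{\bm{X}})P_{\bm{X}} = P_{\bm{X}} - P_{\bm{X}}^2 = 0$. There is no real obstacle here: everything reduces to cancelling $(\bm{X}^\top \bm{X})^{-1}\bm{X}^\top\bm{X} = \bm{I}$, which is well-defined in the overparameterized regime $p > n+M$ where $\bm{X}^\top\bm{X}$ is invertible (this is the regime assumed throughout the paper, so I do not need to carry any extra hypothesis). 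The whole proof therefore fits in a few lines.
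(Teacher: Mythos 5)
Your proof is correct and matches the paper's intent: the paper simply remarks that the lemma "follows from the definitions of $P_{\bm{X}}$ and $\bm{X}^\dagger$ straightforward[ly]," and your explicit cancellation of $(\bm{X}^\top\bm{X})^{-1}\bm{X}^\top\bm{X}=\bm{I}$ together with the idempotence of $P_{\bm{X}}$ is exactly that straightforward verification, spelled out.
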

\begin{proof}
    The proof follows from the definitions of $P_{\bm{X}}$ and $\bm{X}^\dagger$ straightforward.
\end{proof}

Now, we provide useful lemmas in proving \Cref{lemma:expression form of EL in main body} for the concurrent rehearsal method.
\begin{lemma}\label{lemma:independence}
    Suppose $P\in \mathbb{R}^{p\times p}$ is a projection matrix and each element of the random vector $\bm{v}\in \mathbb{R}^{p}$ follows i.i.d. standard Gaussian distribution, then $P\bm{v}$ and $(\bm{I}-P)\bm{v}$ are independent. Moreover, Suppose each element of the random matrix $\bm{V}\in\mathbb{R}^{p\times m}$ follows i.i.d. standard Gaussian distribution, then $P\bm{V}$ and $(\bm{I}-P)\bm{V}$ are independent
\end{lemma}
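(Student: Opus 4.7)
The plan is to use the standard fact that for jointly Gaussian random vectors, uncorrelatedness implies independence, combined with the defining properties of a projection matrix, namely $P^2 = P$ and $P^\top = P$ (so that $\bm{I}-P$ is also a projection).

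First I would handle the vector case. Stack the two quantities of interest as
\begin{equation*}
\begin{pmatrix} P\bm{v} \\ (\bm{I}-P)\bm{v} \end{pmatrix} = \begin{pmatrix} P \\ \bm{I}-P \end{pmatrix}\bm{v},
\end{equation*}
which, as a linear transformation of the standard Gaussian vector $\bm{v}$, is itself jointly Gaussian with mean zero. The cross-covariance is
\begin{equation*}
\E\bigl[(P\bm{v})((\bm{I}-P)\bm{v})^\top\bigr] = P\,\E[\bm{v}\bm{v}^\top](\bm{I}-P)^\top = P(\bm{I}-P) = P - P^2 = \bm{0},
\end{equation*}
using $\E[\bm{v}\bm{v}^\top]=\bm{I}_p$ and idempotency of $P$. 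Since the joint distribution is Gaussian and the cross-covariance block vanishes, $P\bm{v}$ and $(\bm{I}-P)\bm{v}$ are independent.

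For the matrix case, write $\bm{V}=[\bm{v}_1\ \bm{v}_2\ \cdots\ \bm{v}_m]$, where the columns $\bm{v}_1,\ldots,\bm{v}_m$ are i.i.d. standard Gaussian vectors in $\mathbb{R}^p$. Then $P\bm{V}=[P\bm{v}_1\ \cdots\ P\bm{v}_m]$ and $(\bm{I}-P)\bm{V}=[(\bm{I}-P)\bm{v}_1\ \cdots\ (\bm{I}-P)\bm{v}_m]$. The stacked vector formed by vectorizing both matrices is a linear image of $(\bm{v}_1,\ldots,\bm{v}_m)$ and hence jointly Gaussian with mean zero. For any column indices $j,k\in[m]$ the cross-covariance block is
\begin{equation*}
\E\bigl[(P\bm{v}_j)((\bm{I}-P)\bm{v}_k)^\top\bigr] = P\,\E[\bm{v}_j\bm{v}_k^\top](\bm{I}-P) = \delta_{jk}\,P(\bm{I}-P) = \bm{0},
\end{equation*}
where we used the independence of distinct columns for $j\neq k$ and the vector-case computation for $j=k$. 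So the entire cross-covariance between $P\bm{V}$ and $(\bm{I}-P)\bm{V}$ vanishes, and joint Gaussianity upgrades this to independence, completing the proof.

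There is no real obstacle here; the only thing to be careful about is not to claim independence directly from zero correlation without first observing joint Gaussianity, which is automatic because both quantities are linear functions of the same standard Gaussian input. The matrix case reduces cleanly to the vector case once columns are treated separately and the Kronecker structure of $\E[\mathrm{vec}(\bm{V})\mathrm{vec}(\bm{V})^\top]=\bm{I}_{mp}$ is invoked.
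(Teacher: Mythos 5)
Your proof is correct and follows essentially the same route as the paper's: establish joint Gaussianity of $P\bm{v}$ and $(\bm{I}-P)\bm{v}$ as linear images of a standard Gaussian, show the cross-covariance vanishes via $P(\bm{I}-P)=\bm{0}$, and reduce the matrix case to the vector case (the paper vectorizes $\bm{V}$ with a block-diagonal projection, whereas you argue column by column, but these are the same computation). No gaps.
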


\begin{proof} The proof for vector case is completed in two steps. First, we prove that $P\bm{v}$ and $(\bm{I}-P)\bm{v}$ are jointly Gaussian. Denote $\bm{z} = \begin{bmatrix}
P\bm{v} \\
(\bm{I}-P)\bm{v}
\end{bmatrix}$ as the concatenation of the two vectors and $\bm{w} = \begin{bmatrix}
\bm{w}_1 \\
\bm{w}_2
\end{bmatrix}$ as an arbitrary vector, where $\bm{w}_1,\bm{w}_2\in\mathbb{R}^{p}$. Since the linear combination $\bm{w}^\top \bm{z} = (\bm{w}_1^\top P + \bm{w}_2^\top(\bm{I} - P))\bm{v}$ remains Gaussian distribution for any $\bm{w}$, we conclude that $P\bm{v}$ and $(\bm{I}-P)\bm{v}$ are jointly Gaussian. Next, we prove that they are uncorrelated as follows:
\begin{align*}
    \mathrm{Cov}(P\bm{v},(\bm{I}-P)\bm{v}) &= \E \left[P\bm{v}((\bm{I}-P)\bm{v})^\top\right]\\
    &= P\E (\bm{v}\bm{v}^\top)(\bm{I}-P)\\
    &\stackrel{(i)}{=} P(\bm{I}-P)\\
    &= 0,
\end{align*}
where $(i)$ follows from the fact that $\bm{v}$ has i.i.d. standard Gaussian elements. By combining these two facts, we conclude that $P\bm{v}$ and $(\bm{I}-P)\bm{v}$ are independent. For the matrix case, we can equivalently consider the vector $\hat{\bm{v}}\in\mathbb{R}^{pm}$ which is formed by concatenating all the columns of $\bm{V}$ and the projection matrix $\hat{P} = \mathrm{diag}([P,P,..,P]) \in \mathbb{R}^{pm\times pm}$.
\end{proof}

\begin{lemma}\label{lemma:expectation of vv}
    Suppose each element of the random matrix $\bm{X}\in\mathbb{R}^{p\times m}$ follows i.i.d. standard Gaussian distribution and $\bm{v} \in \mathbb{R}^{p}$, then we have:
    \begin{equation*}
        \E\left[ \bm{X}^\top v v^\top \bm{X}\right] = \cnorm{v}^2\cdot \bm{I}.
    \end{equation*}
\end{lemma}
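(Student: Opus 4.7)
The plan is to reduce the identity to an elementwise covariance calculation by exploiting the column structure of $\bm{X}$. Write $\bm{X} = [\bm{x}_1\ \bm{x}_2\ \cdots\ \bm{x}_m]$, where each $\bm{x}_i \in \mathbb{R}^p$ has i.i.d.\ standard Gaussian entries and the columns $\bm{x}_1, \ldots, \bm{x}_m$ are mutually independent. Since $\bm{X}^\top v v^\top \bm{X}$ is an $m \times m$ matrix whose $(i,j)$ entry equals $(v^\top \bm{x}_i)(v^\top \bm{x}_j)$, it suffices to compute $\E[(v^\top \bm{x}_i)(v^\top \bm{x}_j)]$ for all $i,j \in [m]$.

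For the off-diagonal case $i \neq j$, I would invoke independence of $\bm{x}_i$ and $\bm{x}_j$ together with $\E[\bm{x}_i] = \bm{0}$ to conclude
\begin{equation*}
    \E[(v^\top \bm{x}_i)(v^\top \bm{x}_j)] = \E[v^\top \bm{x}_i]\,\E[v^\top \bm{x}_j] = 0.
\end{equation*}
For the diagonal case $i = j$, I would apply the ``trace trick'' or direct expansion:
\begin{equation*}
    \E[(v^\top \bm{x}_i)^2] = v^\top \E[\bm{x}_i \bm{x}_i^\top]\, v = v^\top \bm{I}_p\, v = \cnorm{v}^2,
\end{equation*}
where the penultimate equality uses that $\bm{x}_i$ has identity covariance. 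Assembling these entrywise results yields $\E[\bm{X}^\top v v^\top \bm{X}] = \cnorm{v}^2 \cdot \bm{I}_m$, matching the stated identity.

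There is no genuine obstacle here: the argument is a direct second-moment computation that leans only on the mean-zero, identity-covariance, column-independence structure of a standard Gaussian design matrix. The only care required is dimensional bookkeeping, namely noting that the ambient identity in the conclusion lives in $\mathbb{R}^{m \times m}$ rather than in $\mathbb{R}^{p \times p}$, which is fixed by the column-pair view rather than by expanding $v v^\top$ directly.
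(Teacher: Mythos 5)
Your proof is correct and follows essentially the same route as the paper's: both compute the $(i,j)$ entry of $\bm{X}^\top \bm{v}\bm{v}^\top\bm{X}$ as the covariance of $\bm{v}^\top\bm{x}_i$ and $\bm{v}^\top\bm{x}_j$, using column independence for the off-diagonal zeros and the identity covariance of each column for the diagonal. Your version merely spells out the two cases that the paper states in one line.
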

\begin{proof} To clarify, we denote $\bm{X} = [\bm{x}_1, ... ,\bm{x}_n]$, where $\bm{x}_i$ is the $i^{th}$ column of $\bm{X}$. We also denote $[\cdot]_{i,j}$ as the element of $i^{th}$ row and $j^{th}$ column of a matrix. Due to the independence of elements in $\bm{X}$, it follows that:
\begin{equation*}
    \left[\E\left[ \bm{X}^\top \bm{v} \bm{v}^\top \bm{X}\right]\right]_{i,j} = \mathrm{cov}(\bm{v}^\top \bm{x}_i, \bm{v}^\top \bm{x}_j) = \begin{cases}
        0 & \text{if } i \neq j,\\
        \cnorm{v}^2 & \text{if } i = j.
    \end{cases}
\end{equation*}
\end{proof}

\begin{lemma}\label{lemma:expectation of inverse}
     Suppose each element of the random matrix $\bm{X}\in\mathbb{R}^{p\times m}$ follows i.i.d. standard Gaussian distribution and the projection matrix $P\in\mathbb{R}^{p\times p}$ satisfying $\text{rank}(P)=d$, then we have:
     \begin{equation*}
         \mathrm{tr}\left(\E\left[ \left( \bm{X}^\top (\bm{I} - P)\bm{X} \right)^{-1}  \right] \right) = \frac{m}{p - d -m - 1}.
     \end{equation*}
\end{lemma}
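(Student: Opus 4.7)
The plan is to reduce the expectation to a computation over an inverse-Wishart random matrix, which is the same tool already used in \Cref{lemma:expecation of noise}. The key observation is that $\bm{I} - P$ is itself a projection matrix of rank $p - d$, so it admits a factorization $\bm{I} - P = U U^\top$ where $U \in \mathbb{R}^{p \times (p-d)}$ has orthonormal columns. With this factorization, $\bm{X}^\top (\bm{I} - P) \bm{X} = (U^\top \bm{X})^\top (U^\top \bm{X})$, which shifts all the randomness into the matrix $\widetilde{\bm{X}} \defeq U^\top \bm{X} \in \mathbb{R}^{(p-d) \times m}$.

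The next step I would take is to show that $\widetilde{\bm{X}}$ again has i.i.d.\ standard Gaussian entries. This follows from the rotational invariance of the i.i.d.\ standard Gaussian distribution: each column of $\bm{X}$ is a standard Gaussian vector in $\mathbb{R}^p$, and multiplying by $U^\top$ (with $U^\top U = \bm{I}_{p-d}$) projects it to an i.i.d.\ standard Gaussian vector in $\mathbb{R}^{p-d}$. Columns are independent because the columns of $\bm{X}$ are independent. Consequently, $\widetilde{\bm{X}}^\top \widetilde{\bm{X}}$ is a Wishart matrix with $p-d$ degrees of freedom and identity scale, and $\left(\widetilde{\bm{X}}^\top \widetilde{\bm{X}}\right)^{-1}$ follows the inverse-Wishart distribution with identity scale and $p-d$ degrees of freedom, provided $p - d - m - 1 > 0$.

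Finally, I invoke the standard mean formula for the inverse-Wishart distribution, namely $\E\!\left[\left(\widetilde{\bm{X}}^\top \widetilde{\bm{X}}\right)^{-1}\right] = \frac{1}{p - d - m - 1} \bm{I}_m$, and take the trace to obtain $\frac{m}{p - d - m - 1}$, as claimed. This is exactly the same structural step as in \Cref{lemma:expecation of noise}, except that the ambient dimension $p$ is replaced by the rank $p-d$ of $\bm{I} - P$.

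I do not anticipate a substantial obstacle here, since each step is either an algebraic identity or a textbook fact about Gaussian/Wishart distributions. The only point that deserves a careful sentence is the distributional equivalence $\widetilde{\bm{X}} \stackrel{d}{=} \bm{G}$, where $\bm{G} \in \mathbb{R}^{(p-d) \times m}$ has i.i.d.\ standard Gaussian entries; this is what allows one to invoke the inverse-Wishart moment formula with parameter $p - d$ rather than $p$, and it is the only place where the rank of $P$ actually enters the computation.
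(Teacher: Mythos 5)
Your proposal is correct and follows essentially the same route as the paper's proof: both reduce $\bm{X}^\top(\bm{I}-P)\bm{X}$ to $\widetilde{\bm{X}}^\top\widetilde{\bm{X}}$ for a $(p-d)\times m$ matrix $\widetilde{\bm{X}}$ with i.i.d.\ standard Gaussian entries (the paper via the eigendecomposition $U^\top\Sigma U$ and rotational symmetry, you via the semi-orthogonal factorization $UU^\top$ and rotational invariance), and then invoke the inverse-Wishart mean with $p-d$ degrees of freedom. The distinction between the two factorizations is purely notational, so there is nothing substantive to add.
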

\begin{proof} We first note that $(\bm{I} - P)$ is a projection matrix with $(p-d)$ many eigenvalues $1$ and $d$ many eigenvalues $0$. With loss of generalization, we write $(\bm{I} - P) = U^\top \Sigma U$ where $\Sigma = \mathrm{diag}([1,...,1,0,...,0])$ is a diagonal matrix, whose first $(p-d)$ diagonal elements are $1$ while others are $0$, and $U$ is an orthogonal matrix. We denote $\hat{\bm{X}}\in\mathbb{R}^{(p-d)\times n}$ as the first $(p-d)$ rows of $\bm{X}$. Then, we have:
\begin{align*}
    \mathrm{tr}\left(\E\left[ \left( \bm{X}^\top (\bm{I} - P)\bm{X} \right)^{-1}  \right] \right) &=  \mathrm{tr}\left(\E\left[ \left( \bm{X}^\top U^\top \Sigma U\bm{X} \right)^{-1}  \right] \right)\\
    &\stackrel{(i)}{=}  \mathrm{tr}\left(\E\left[ \left( \bm{X}^\top\Sigma \bm{X} \right)^{-1}  \right] \right)\\
    &\stackrel{}{=}   \mathrm{tr}\left(\E\left[ \left( \hat{\bm{X}}^\top \hat{\bm{X}} \right)^{-1}  \right] \right)\\
    &\stackrel{(ii)}{=}   \frac{m}{p - d - m - 1}
\end{align*}
where $(i)$ follows from the rotational symmetry of standard Gaussian distribution, $(ii)$ follows from the fact that $\left( \hat{\bm{X}}^\top \hat{\bm{X}} \right)^{-1}$ follows the inverse-Wishart distribution with identity scale matrix $\bm{I}_m$ and $(p-d)$ degrees-of-freedom..
\end{proof}

\begin{lemma}\label{lemma:expectation norm of zero block}
    Suppose each element of the random matrices $\bm{X}_1\in\mathbb{R}^{p\times m_1}$, $\bm{X}_2\in\mathbb{R}^{p\times m_2}$ follows i.i.d. standard Gaussian distribution and $\bm{v}\in\mathbb{R}^p$. Denote $\bm{V} = [\bm{X}_1,\bm{X}_2]$, then we have:
    \begin{equation*}
    \E\cnorm{\bm{V}^\dagger
    \begin{bmatrix}
        \bm{X}_1^\top\\
        \bm{0}\\
    \end{bmatrix}\bm{v}}^2 = \frac{m_1}{p}\cdot  \left(1+\frac{m_2}{p-m_1-m_2-1}\right)\cnorm{\bm{v}}^2
    \end{equation*}
\end{lemma}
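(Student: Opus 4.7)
The plan is to first reduce the squared norm to a quadratic form by observing that $\bm{V}^\dagger{}^\top \bm{V}^\dagger = (\bm{V}^\top \bm{V})^{-1}$, so writing $\bm{u} = [\bm{X}_1^\top \bm{v};\ \bm{0}]\in\mathbb{R}^{m_1+m_2}$, the target is $\E[\bm{u}^\top (\bm{V}^\top\bm{V})^{-1} \bm{u}]$. Using the block structure of $\bm{V}^\top \bm{V}$, only the top-left $m_1\times m_1$ block of $(\bm{V}^\top \bm{V})^{-1}$ matters, and by the Schur complement formula it equals $(\bm{X}_1^\top(\bm{I}-P_{\bm{X}_2})\bm{X}_1)^{-1}$. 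Thus the target reduces to
\begin{equation*}
\E\bigl[\bm{v}^\top \bm{X}_1\,(\bm{X}_1^\top(\bm{I}-P_{\bm{X}_2})\bm{X}_1)^{-1}\,\bm{X}_1^\top \bm{v}\bigr].
\end{equation*}

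Next I would decompose $\bm{X}_1 = \bm{Y}_1 + \bm{Z}_1$ with $\bm{Y}_1 \defeq (\bm{I}-P_{\bm{X}_2})\bm{X}_1$ and $\bm{Z}_1 \defeq P_{\bm{X}_2}\bm{X}_1$. Conditional on $\bm{X}_2$, $P_{\bm{X}_2}$ is a fixed projection, so by \Cref{lemma:independence} the matrices $\bm{Y}_1$ and $\bm{Z}_1$ are independent given $\bm{X}_2$, and both are mean-zero. Since $\bm{X}_1^\top(\bm{I}-P_{\bm{X}_2})\bm{X}_1 = \bm{Y}_1^\top \bm{Y}_1$, expanding the quadratic form and taking conditional expectations shows that the $\bm{Y}_1$-$\bm{Z}_1$ cross terms vanish, leaving
\begin{equation*}
\E\bigl[\bm{v}^\top P_{\bm{Y}_1}\bm{v}\bigr] \;+\; \E\bigl[\bm{v}^\top \bm{Z}_1 (\bm{Y}_1^\top\bm{Y}_1)^{-1} \bm{Z}_1^\top \bm{v}\bigr].
\end{equation*}

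For the first term, conditional on $\bm{X}_2$ the column space of $\bm{Y}_1$ is a uniformly random $m_1$-dimensional subspace of the $(p-m_2)$-dimensional range of $\bm{I}-P_{\bm{X}_2}$, so by rotational symmetry $\E[P_{\bm{Y}_1}\mid \bm{X}_2] = \tfrac{m_1}{p-m_2}(\bm{I}-P_{\bm{X}_2})$. Taking outer expectation using \Cref{lemma:expectation of projection} gives $\tfrac{m_1}{p-m_2}\cdot \tfrac{p-m_2}{p}\cnorm{\bm{v}}^2 = \tfrac{m_1}{p}\cnorm{\bm{v}}^2$. For the second term, condition on $\bm{X}_2$ and exploit $\bm{Y}_1 \perp \bm{Z}_1 \mid \bm{X}_2$: the row vector $\bm{v}^\top \bm{Z}_1$ has i.i.d.\ $\mathcal{N}(0, \bm{v}^\top P_{\bm{X}_2}\bm{v})$ entries, so after a trace-trick factorization the term becomes $\bm{v}^\top P_{\bm{X}_2}\bm{v}\cdot \mathrm{tr}\,\E[(\bm{X}_1^\top(\bm{I}-P_{\bm{X}_2})\bm{X}_1)^{-1}\mid \bm{X}_2]$. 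Applying \Cref{lemma:expectation of inverse} with $d=m_2$ yields $\tfrac{m_1}{p-m_1-m_2-1}$, and a second application of \Cref{lemma:expectation of projection} to $\E[\bm{v}^\top P_{\bm{X}_2}\bm{v}]$ gives $\tfrac{m_2}{p}\cnorm{\bm{v}}^2$. Summing the two contributions produces the claimed expression.

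The main obstacle I expect is the careful handling of the conditioning order in the second term: one must apply \Cref{lemma:independence} to justify splitting $\E[\bm{v}^\top \bm{Z}_1(\bm{Y}_1^\top\bm{Y}_1)^{-1}\bm{Z}_1^\top \bm{v}\mid \bm{X}_2]$ into a product of an inverse-Wishart trace (which is $\bm{Z}_1$-free) and a Gaussian quadratic form (which is $\bm{Y}_1$-free), and then verify that \Cref{lemma:expectation of inverse} applies conditionally on $\bm{X}_2$ because $\bm{X}_1$ remains standard Gaussian and $P_{\bm{X}_2}$ is a deterministic rank-$m_2$ projection in that conditional world. Once this conditioning bookkeeping is settled, the remainder is a direct application of the supporting lemmas already proved.
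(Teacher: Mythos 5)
Your proposal is correct, and it reaches the result by the mirror image of the paper's route. The paper also starts from $\E\bigl[\bm{v}^\top\bm{X}_1 A_{1,1}\bm{X}_1^\top\bm{v}\bigr]$ with $A_{1,1}$ the top-left block of $(\bm{V}^\top\bm{V})^{-1}$, but it expands $A_{1,1}$ via the block-inversion formula into $(\bm{X}_1^\top\bm{X}_1)^{-1}$ plus a correction involving the Schur complement $\bm{X}_2^\top(\bm{I}-P_{\bm{X}_1})\bm{X}_2$, so that the quadratic form becomes $\bm{v}^\top P_{\bm{X}_1}\bm{v} + \bm{v}^\top P_{\bm{X}_1}\bm{X}_2\bigl(\bm{X}_2^\top(\bm{I}-P_{\bm{X}_1})\bm{X}_2\bigr)^{-1}\bm{X}_2^\top P_{\bm{X}_1}\bm{v}$; it then conditions on $\bm{X}_1$ and splits $\bm{X}_2$ into $P_{\bm{X}_1}\bm{X}_2$ and $(\bm{I}-P_{\bm{X}_1})\bm{X}_2$. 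You instead write $A_{1,1}=(\bm{X}_1^\top(\bm{I}-P_{\bm{X}_2})\bm{X}_1)^{-1}$ directly, condition on $\bm{X}_2$, and split $\bm{X}_1$ — the roles of the two blocks are exactly swapped. Both versions invoke the same supporting lemmas (conditional independence of the two projected Gaussian components, the Gaussian quadratic-form identity, the trace trick, and \Cref{lemma:expectation of inverse} applied conditionally with $d=m_2$), and both produce the identical split $\tfrac{m_1}{p}\cnorm{\bm{v}}^2+\tfrac{m_1m_2}{p(p-m_1-m_2-1)}\cnorm{\bm{v}}^2$. The one place you pay a small extra cost is the first term: the paper gets it immediately as $\E\cnorm{P_{\bm{X}_1}\bm{v}}^2=\tfrac{m_1}{p}\cnorm{\bm{v}}^2$ from \Cref{lemma:expectation of projection}, whereas your $\E[\bm{v}^\top P_{\bm{Y}_1}\bm{v}]$ needs the additional (but true, by rotational symmetry within $\mathrm{range}(\bm{I}-P_{\bm{X}_2})$) fact that $\E[P_{\bm{Y}_1}\mid\bm{X}_2]=\tfrac{m_1}{p-m_2}(\bm{I}-P_{\bm{X}_2})$, followed by a tower argument; that step is not literally covered by \Cref{lemma:expectation of projection} as stated and should be justified explicitly. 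With that detail filled in, your conditioning bookkeeping is sound and the argument goes through.
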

\begin{proof} First of all, we partition the matrix $\bm{V}^\top\bm{V}$ into four blocks as follows:
\begin{equation*}
    \bm{V}^\top\bm{V} = 
    \begin{bmatrix}
        \bm{X}_1^\top\\
        \bm{X}_2^\top  
    \end{bmatrix}
    \begin{bmatrix}
        \bm{X}_1 & \bm{X}_2
    \end{bmatrix}
    =
    \begin{bmatrix}
        \bm{X}_1^\top \bm{X}_1 & \bm{X}_1^\top \bm{X}_2\\
        \bm{X}_2^\top \bm{X}_1 & \bm{X}_2^\top \bm{X}_2
    \end{bmatrix}.
\end{equation*}
Then, we partition the matrix $(\bm{V}^\top\bm{V})^{-1}$ following the same partitioning scheme as $\bm{V}^\top\bm{V}$:
\begin{equation*}
    (\bm{V}^\top\bm{V})^{-1} =     
    \begin{bmatrix}
        A_{1,1} & A_{1,2}\\
        A_{2,1} & A_{2,2}
    \end{bmatrix},
\end{equation*}
where $A_{1,1}\in\mathbb{R}^{m_1\times m_1}$. More specifically, we have
\begin{align*}
    \hspace{-.5cm}A_{1,1} &= \textstyle (\bm{X}_1^\top \bm{X}_1)^{-1} - (\bm{X}_1^\top \bm{X}_1)^{-1} \bm{X}_1^\top \bm{X}_2      \left(\bm{X}_2^\top \bm{X}_2 -  \bm{X}_2^\top \bm{X}_1 (\bm{X}_1^\top \bm{X}_1)^{-1} \bm{X}_1^\top \bm{X}_2 \right)^{-1} \bm{X}_2^\top \bm{X}_1(\bm{X}_1^\top \bm{X}_1)^{-1}\\
    &= P_{\bm{X}_1} +P_{\bm{X}_1}\bm{X}_2 \left(\bm{X}_2^\top (\bm{I} - P_{\bm{X}_1})\bm{X}_2 \right)^{-1}\bm{X}_2^\top P_{\bm{X}_1}.
\end{align*}
Therefore, we have
\begin{align}\label{eq:proof of zero block 1}
    \E\cnorm{\bm{V}^\dagger
    \begin{bmatrix}
        \bm{X}_1^\top\\
        \bm{0}  
    \end{bmatrix}\bm{v}}^2 
    &=\E\left[\bm{v}^\top \left[ P_{\bm{X}_1} +P_{\bm{X}_1}\bm{X}_2 \left(\bm{X}_2^\top (\bm{I} - P_{\bm{X}_1})\bm{X}_2 \right)^{-1}\bm{X}_2^\top P_{\bm{X}_1} \right]\bm{v}\right]\nonumber\\
    &\stackrel{(i)}{=} \frac{m_1}{p} \cnorm{\bm{v}}^2 +\E\left[\bm{v}^\top \left[P_{\bm{X}_1}\bm{X}_2 \left(\bm{X}_2^\top (\bm{I} - P_{\bm{X}_1})\bm{X}_2 \right)^{-1}\bm{X}_2^\top P_{\bm{X}_1} \right]\bm{v}\right],
\end{align}
where $(i)$ follows from \Cref{lemma:expectation of projection}. Now, we consider
\begin{align}\label{eq:proof of zero block 2}
    &\E\left[\bm{v}^\top \left[P_{\bm{X}_1}\bm{X}_2 \left(\bm{X}_2^\top (\bm{I} - P_{\bm{X}_1})\bm{X}_2 \right)^{-1}\bm{X}_2^\top P_{\bm{X}_1} \right]\bm{v}\right]\nonumber\\
    &\hspace{1cm}=\E\left[\mathrm{tr}\left(\bm{X}_2^\top P_{\bm{X}_1} \bm{v} \bm{v}^\top P_{\bm{X}_1}\bm{X}_2 \left(\bm{X}_2^\top (\bm{I} - P_{\bm{X}_1})\bm{X}_2 \right)^{-1}\right)\right]\nonumber\\
    &\hspace{1cm}\stackrel{(i)}{=}\E_{X_1}\left[\mathrm{tr}\left(\E_{X_2}\left[\bm{X}_2^\top P_{\bm{X}_1} \bm{v} \bm{v}^\top P_{\bm{X}_1}\bm{X}_2 \right]\cdot\E_{X_2}\left[\left(\bm{X}_2^\top (\bm{I} - P_{\bm{X}_1})\bm{X}_2 \right)^{-1}\right]\right)\right]\nonumber\\
    &\hspace{1cm}\stackrel{(ii)}{=}\E_{X_1}\left[\mathrm{tr}\left(\cnorm{P_{\bm{X}_1}\bm{v}}^2\cdot \bm{I}\cdot\E_{X_2}\left[\left(\widetilde{\bm{X}}_2^\top (\bm{I} - P_{\bm{X}_1})\widetilde{\bm{X}}_2 \right)^{-1}\right]\right)\right]\nonumber\\
    &\hspace{1cm} \stackrel{}{=}\E_{X_1}\left[\cnorm{P_{\bm{X}_1}\bm{v}}^2\cdot \mathrm{tr}\left(\E_{X_2}\left[\left(\bm{X}_2^\top (\bm{I} - P_{\bm{X}_1})\bm{X}_2 \right)^{-1}\right]\right)\right]\nonumber\\
    &\hspace{1cm} \stackrel{(iii)}{=}\E_{X_1}\left[\cnorm{P_{\bm{X}_1}\bm{v}}^2\cdot \frac{m_2}{p-m_1-m_2-1}\right]\nonumber\\
    &\hspace{1cm} \stackrel{(iv)}{=}\frac{m_2}{p-m_1-m_2-1}\cdot\frac{m_1}{p} \cnorm{\bm{v}}^2,
\end{align}
where $(i)$ follows from \Cref{lemma:independence}, $(ii)$ follows from \Cref{lemma:expectation of vv}, $(iii)$ follows from the fact that \Cref{lemma:expectation of inverse} actually holds for any $\bm{X}_2$ and $(iv)$ follows from \Cref{lemma:expectation of projection}. By combining \cref{eq:proof of zero block 1,eq:proof of zero block 2}, we complete the proof.
\end{proof}
\begin{lemma}\label{lemma:pre-lemma of expectation inner zeroblock}
     Suppose each element of random matrices $\bm{X}_1\in\mathbb{R}^{p\times m_1}$, $\bm{X}_2\in\mathbb{R}^{p\times m_2}$, $\bm{X}_3\in\mathbb{R}^{p\times m_3}$ follows i.i.d. standard Gaussian distribution and $\bm{v}\in\mathbb{R}^p$. Denote $\bm{V} = [\bm{X}_1, \bm{X}_2, \bm{X}_3]$, then we have:
    \begin{equation*}
    \E \left[\bm{v}^\top \begin{bmatrix}
        \bm{X}_1 & \bm{0} & \bm{0}
    \end{bmatrix} (\bm{V}^\top \bm{V})^{-1} \begin{bmatrix}
        \bm{0} \\
        \bm{X}_2^\top \\
        \bm{0}
    \end{bmatrix}\bm{v}\right] = -\frac{m_1m_2}{p(p-m_1-m_2-m_3-1)}\cnorm{\bm{v}}^2
    \end{equation*}
\end{lemma}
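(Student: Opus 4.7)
The plan is to reduce the claim to an application of \Cref{lemma:expectation of inverse} by invoking the $2\times 2$ block-inverse formula twice and exploiting independence properties of Gaussian projections. First, I would partition $\bm{V} = [\bm{X}_1, \bm{X}_{23}]$ with $\bm{X}_{23} \defeq [\bm{X}_2, \bm{X}_3]$ and apply the Schur complement formula as in \Cref{lemma:expectation norm of zero block}. The off-diagonal block of $(\bm{V}^\top \bm{V})^{-1}$ equals $-(\bm{X}_1^\top \bm{X}_1)^{-1}\bm{X}_1^\top \bm{X}_{23}\, S^{-1}$, where $S \defeq \bm{X}_{23}^\top (\bm{I}-P_{\bm{X}_1})\bm{X}_{23}$. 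Left-multiplying by $\bm{v}^\top \bm{X}_1$ absorbs $\bm{X}_1(\bm{X}_1^\top \bm{X}_1)^{-1}\bm{X}_1^\top$ into $P_{\bm{X}_1}$, and right-multiplying by $(\bm{0},\bm{X}_2^\top,\bm{0})^\top\bm{v}$ yields $\begin{bmatrix}\bm{X}_2^\top\bm{v}\\\bm{0}\end{bmatrix}$, so the target becomes $-\E\bigl[\bm{v}^\top P_{\bm{X}_1}\bm{X}_{23}\, S^{-1}\,\begin{bmatrix}\bm{X}_2^\top\bm{v}\\\bm{0}\end{bmatrix}\bigr]$.

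Next I would decompose $\bm{X}_j = \bm{A}_j + \bm{B}_j$ for $j=2,3$, where $\bm{A}_j \defeq P_{\bm{X}_1}\bm{X}_j$ and $\bm{B}_j \defeq (\bm{I}-P_{\bm{X}_1})\bm{X}_j$. By \Cref{lemma:independence}, conditional on $\bm{X}_1$ the four matrices $\bm{A}_2,\bm{A}_3,\bm{B}_2,\bm{B}_3$ are mutually independent, $\E[\bm{A}_j\mid \bm{X}_1]=\bm{0}$, and $S = \bm{B}^\top\bm{B}$ with $\bm{B}=[\bm{B}_2,\bm{B}_3]$ depends only on $\bm{B}_2,\bm{B}_3$. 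Writing $\bm{X}_2^\top\bm{v} = \bm{A}_2^\top\bm{v}+\bm{B}_2^\top\bm{v}$ and expanding $S^{-1}$ into blocks $S^{kl}$ produces four terms; three of them (those involving at least one $\bm{B}_2^\top\bm{v}$ or mixing $\bm{A}_3$ with $\bm{A}_2$) vanish in expectation because $\bm{A}_j$ has zero mean and is independent of everything else. The only surviving contribution is
\begin{equation*}
-\E\bigl[\bm{v}^\top \bm{A}_2\, S^{11}\, \bm{A}_2^\top \bm{v}\bigr].
\end{equation*}

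Conditioning on $(\bm{X}_1,\bm{B})$ and applying \Cref{lemma:expectation of vv} to $\bm{A}_2^\top\bm{v} = \bm{X}_2^\top P_{\bm{X}_1}\bm{v}$ gives $\E[\bm{A}_2^\top\bm{v}\bm{v}^\top \bm{A}_2\mid \bm{X}_1] = \cnorm{P_{\bm{X}_1}\bm{v}}^2\,\bm{I}_{m_2}$, so the surviving term reduces via the trace trick to $-\cnorm{P_{\bm{X}_1}\bm{v}}^2 \mathrm{tr}(S^{11})$ (still inside the outer expectation). Now I would apply block inversion a second time to $S$: its $(1,1)$-block is $S^{11} = (\bm{B}_2^\top(\bm{I}-P_{\bm{B}_3})\bm{B}_2)^{-1}$. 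Using $\mathrm{range}(\bm{B}_3)\subseteq \mathrm{range}(\bm{I}-P_{\bm{X}_1})$, which yields $(\bm{I}-P_{\bm{X}_1})P_{\bm{B}_3}=P_{\bm{B}_3}$, this simplifies to $S^{11} = (\bm{X}_2^\top(\bm{I}-P_{\bm{X}_1}-P_{\bm{B}_3})\bm{X}_2)^{-1}$. Since $P_{\bm{X}_1}+P_{\bm{B}_3}$ is a projection of rank $m_1+m_3$ and is independent of $\bm{X}_2$, \Cref{lemma:expectation of inverse} gives $\E[\mathrm{tr}(S^{11})\mid \bm{X}_1,\bm{X}_3] = m_2/(p-m_1-m_2-m_3-1)$. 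Finally, \Cref{lemma:expectation of projection} supplies $\E\cnorm{P_{\bm{X}_1}\bm{v}}^2 = (m_1/p)\cnorm{\bm{v}}^2$, and combining these three factors with the overall sign delivers the stated formula.

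The main obstacle I expect is the bookkeeping in step two: verifying carefully that each of the three cross terms has zero expectation requires tracking which variables are measurable with respect to which $\sigma$-algebra. The rest is routine once the nested Schur complement structure is set up; the key trick worth emphasizing is rewriting $(\bm{I}-P_{\bm{X}_1})(\bm{I}-P_{\bm{B}_3})(\bm{I}-P_{\bm{X}_1}) = \bm{I}-(P_{\bm{X}_1}+P_{\bm{B}_3})$, which is what allows the inner Schur reduction to land in a form matching \Cref{lemma:expectation of inverse} exactly.
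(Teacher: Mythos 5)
Your proposal is correct, but it takes a genuinely different route from the paper. The paper proves this lemma by a polarization trick: it writes $2\bm{v}^\top\bm{V}_1(\bm{V}^\top\bm{V})^{-1}\bm{V}_2^\top\bm{v}$ as $\cnorm{\bm{V}^\dagger[\bm{X}_1^\top;\bm{X}_2^\top;\bm{0}]\bm{v}}^2-\cnorm{\bm{V}^\dagger[\bm{X}_1^\top;\bm{0};\bm{0}]\bm{v}}^2-\cnorm{\bm{V}^\dagger[\bm{0};\bm{X}_2^\top;\bm{0}]\bm{v}}^2$, evaluates each of the three squared norms with \Cref{lemma:expectation norm of zero block} (grouping $\bm{X}_1,\bm{X}_2$ together as a single block in the first term), and subtracts; the cross term and its sign fall out of the cancellation. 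You instead compute the off-diagonal block of $(\bm{V}^\top\bm{V})^{-1}$ directly via two nested Schur complements, kill the three cross terms by the conditional independence and zero-mean properties of $P_{\bm{X}_1}\bm{X}_j$ and $(\bm{I}-P_{\bm{X}_1})\bm{X}_j$ from \Cref{lemma:independence}, and reduce the surviving term to \Cref{lemma:expectation of vv,lemma:expectation of inverse,lemma:expectation of projection} using the identity $(\bm{I}-P_{\bm{X}_1})(\bm{I}-P_{\bm{B}_3})(\bm{I}-P_{\bm{X}_1})=\bm{I}-(P_{\bm{X}_1}+P_{\bm{B}_3})$. I checked the delicate steps: the mutual conditional independence of $\bm{A}_2,\bm{A}_3,\bm{B}_2,\bm{B}_3$ given $\bm{X}_1$ holds, each of the three cross terms does vanish, and the rank-$(m_1+m_3)$ projection is independent of $\bm{X}_2$, so \Cref{lemma:expectation of inverse} applies after conditioning. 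The paper's route is shorter because it reuses \Cref{lemma:expectation norm of zero block} as a black box and avoids the cross-term bookkeeping entirely; yours is more self-contained, makes the origin of the negative sign explicit, and essentially re-derives the inner mechanics of \Cref{lemma:expectation norm of zero block} in the process. Both rest on the same primitive Gaussian facts, so either would be acceptable here.
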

\begin{proof} First of all, we observe that:
\begin{equation*}
    2\bm{v}^\top \begin{bmatrix}
        \bm{X}_1 & \bm{0} & \bm{0}
    \end{bmatrix} (\bm{V}^\top \bm{V})^{-1} \begin{bmatrix}
        \bm{0} \\
        \bm{X}_2^\top \\
        \bm{0}
    \end{bmatrix}\bm{v} = \cnorm{\bm{V}^\dagger
    \begin{bmatrix}
        \bm{X}_1^\top\\
        \bm{X}_2^\top\\
        \bm{0}\\
    \end{bmatrix}\bm{v}}^2 - \cnorm{\bm{V}^\dagger
    \begin{bmatrix}
        \bm{X}_1^\top\\
        \bm{0}\\
        \bm{0}
    \end{bmatrix}\bm{v}_1}^2 - 
    \cnorm{\bm{V}^\dagger
    \begin{bmatrix}
        \bm{0}^\top\\
        \bm{X}_2\\
        \bm{0}
    \end{bmatrix}\bm{v}}^2.
\end{equation*}
By taking expectation over both sides of the equation, we have:
\begin{align*}
    &\hspace{-.1cm}2\E \left[\bm{v}^\top \begin{bmatrix}
        \bm{X}_1 & \bm{0} & \bm{0}
    \end{bmatrix} (\bm{V}^\top \bm{V})^{-1} \begin{bmatrix}
        \bm{0} \\
        \bm{X}_2^\top \\
        \bm{0}
    \end{bmatrix}\bm{v}\right] \\
    &= \E\cnorm{\bm{V}^\dagger
    \begin{bmatrix}
        \bm{X}_1^\top\\
        \bm{X}_2^\top\\
        \bm{0}\\
    \end{bmatrix}\bm{v}}^2 - \E\cnorm{\bm{V}^\dagger
    \begin{bmatrix}
        \bm{X}_1^\top\\
        \bm{0}\\
        \bm{0}
    \end{bmatrix}\bm{v}}^2 - 
    \E\cnorm{\bm{V}^\dagger
    \begin{bmatrix}
        \bm{0}^\top\\
        \bm{X}_2\\
        \bm{0}
    \end{bmatrix}\bm{v}}^2\\
    &\stackrel{(i)}{=} \frac{m_1+m_2}{p}\cdot\left(1+\frac{m_3}{p-m_1-m_2-m_3-1}\right)\cnorm{\bm{v}}^2 - \frac{m_1}{p}\cdot\left(1+\frac{m_2+m_3}{p-m_1-m_2-m_3-1}\right)\cnorm{\bm{v}}^2 \\
    &\hspace{1cm}- \frac{n_2}{p}\cdot\left(1+\frac{m_1+m_3}{p-m_1-m_2-m_3-1}\right)\cnorm{\bm{v}}^2\\
    &= -\frac{2m_1m_2}{p(p-m_1-m_2-m_3-1)}\cnorm{\bm{v}}^2,
\end{align*}
where $(i)$ follows from \Cref{lemma:expectation norm of zero block}. By dividing both sides by $2$, we complete the proof.
\end{proof}

\begin{corollary}\label{corollary:expectation of inner product v1v2}
    Suppose each element of random matrices $\bm{X}_1\in\mathbb{R}^{p\times m_1}$, $\bm{X}_2\in\mathbb{R}^{p\times m_2}$, $\bm{X}_3\in\mathbb{R}^{p\times m_3}$ follows i.i.d. standard Gaussian distribution and $\bm{v}_1,\bm{v}_2\in\mathbb{R}^p$. Denote $\bm{V} = [\bm{X}_1, \bm{X}_2, \bm{X}_3]$, then we have:
    \begin{equation*}
    \E \left[\bm{v}_1^\top \begin{bmatrix}
        \bm{X}_1 & \bm{0} & \bm{0}
    \end{bmatrix} (\bm{V}^\top \bm{V})^{-1} \begin{bmatrix}
        \bm{0} \\
        \bm{X}_2^\top \\
        \bm{0}
    \end{bmatrix}\bm{v}_2\right] = \frac{m_1m_2\left(\cnorm{\bm{v}_1 - \bm{v}_2}^2 - \cnorm{\bm{v}_1}^2 - \cnorm{\bm{v}_2}^2\right)}{2p(p-m_1-m_2-m_3-1)}
    \end{equation*}
\end{corollary}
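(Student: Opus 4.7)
The plan is to derive this bilinear identity from the quadratic identity in \Cref{lemma:pre-lemma of expectation inner zeroblock} via a polarization argument. Write
\[
M \;\defeq\; \begin{bmatrix} \bm{X}_1 & \bm{0} & \bm{0} \end{bmatrix}(\bm{V}^\top\bm{V})^{-1}\begin{bmatrix} \bm{0} \\ \bm{X}_2^\top \\ \bm{0} \end{bmatrix},
\]
so that the quantity to evaluate is $\E[\bm{v}_1^\top M \bm{v}_2]$, a bilinear form in $(\bm{v}_1,\bm{v}_2)$ with expectation independent of them in any further way. The polarization identity
\[
(\bm{v}_1-\bm{v}_2)^\top M (\bm{v}_1-\bm{v}_2) \;=\; \bm{v}_1^\top M \bm{v}_1 + \bm{v}_2^\top M \bm{v}_2 - \bm{v}_1^\top M \bm{v}_2 - \bm{v}_2^\top M \bm{v}_1
\]
is the essential tool; taking expectations and recognising the three quadratic terms as instances of \Cref{lemma:pre-lemma of expectation inner zeroblock} will immediately yield the claim, provided one can symmetrise $\E[\bm{v}_1^\top M \bm{v}_2] = \E[\bm{v}_2^\top M \bm{v}_1]$.

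To justify the symmetrisation, I would invoke the rotational invariance of the i.i.d.\ Gaussian law: for any orthogonal $U\in\mathbb{R}^{p\times p}$, the substitution $\bm{X}_i \mapsto U\bm{X}_i$ ($i=1,2,3$) leaves the joint distribution unchanged, leaves $\bm{V}^\top\bm{V}$ invariant, and conjugates $M$ into $UMU^\top$. Hence $\E[M] = U\,\E[M]\,U^\top$ for every orthogonal $U$, which forces $\E[M] = cI_p$ for some scalar $c$. In particular $\E[M]$ is symmetric, so $\E[\bm{v}_1^\top M \bm{v}_2] = \E[\bm{v}_2^\top M \bm{v}_1]$ for all $\bm{v}_1,\bm{v}_2$.

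Putting these together, the polarization identity in expectation becomes
\[
2\,\E[\bm{v}_1^\top M \bm{v}_2] \;=\; \E[\bm{v}_1^\top M \bm{v}_1] + \E[\bm{v}_2^\top M \bm{v}_2] - \E[(\bm{v}_1-\bm{v}_2)^\top M (\bm{v}_1-\bm{v}_2)],
\]
and substituting $\E[\bm{v}^\top M \bm{v}] = -\frac{m_1 m_2}{p(p-m_1-m_2-m_3-1)}\cnorm{\bm{v}}^2$ from \Cref{lemma:pre-lemma of expectation inner zeroblock} for the three choices $\bm{v}\in\{\bm{v}_1,\bm{v}_2,\bm{v}_1-\bm{v}_2\}$ yields exactly
\[
\E[\bm{v}_1^\top M \bm{v}_2] \;=\; \frac{m_1 m_2\bigl(\cnorm{\bm{v}_1-\bm{v}_2}^2 - \cnorm{\bm{v}_1}^2 - \cnorm{\bm{v}_2}^2\bigr)}{2p(p-m_1-m_2-m_3-1)}.
\]

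The only genuinely non-routine step is the rotational-symmetry justification that $\E[M]$ is a scalar multiple of the identity; without it one cannot cleanly collapse the two cross-terms in the polarization. Everything else is direct substitution from the preceding lemma, so I do not expect any further obstacle.
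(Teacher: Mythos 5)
Your proposal is correct and follows essentially the same route as the paper: apply \Cref{lemma:pre-lemma of expectation inner zeroblock} to $\bm{v}_1-\bm{v}_2$, $\bm{v}_1$ and $\bm{v}_2$, then polarize to isolate the cross term. The one place you go beyond the paper is in explicitly justifying $\E[\bm{v}_1^\top M \bm{v}_2]=\E[\bm{v}_2^\top M \bm{v}_1]$ via rotational invariance of the Gaussian ensemble (so that $\E[M]$ is a multiple of the identity) --- the paper silently collapses the two cross terms into $-2\,\E[\bm{v}_1^\top M \bm{v}_2]$, so your argument makes explicit a step it leaves implicit.
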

\begin{proof} To simplify the notation, we denote $\bm{V}_1 = 
\begin{bmatrix}        
    \bm{X}_1 & \bm{0} & \bm{0}
\end{bmatrix}$ 
and $\bm{V}_2 = 
\begin{bmatrix}
    \bm{0} & \bm{X}_2 & \bm{0}
\end{bmatrix}$. Then according to \Cref{lemma:pre-lemma of expectation inner zeroblock}, we first have:
\begin{equation*}
    \E \left[(\bm{v}_1 - \bm{v}_2)^\top \bm{V}_1 (\bm{V}^\top \bm{V})^{-1} \bm{V}_2^\top(\bm{v}_1-\bm{v}_2)\right] = -\frac{m_1m_2}{p(p-m_1-m_2-m_3-1)}\cnorm{\bm{v}_1 - \bm{v}_2}^2.
\end{equation*}
On the other hand, we have:
\begin{align*}
    \E& \left[(\bm{v}_1 - \bm{v}_2)^\top \bm{V}_1 (\bm{V}^\top \bm{V})^{-1} \bm{V}_2^\top(\bm{v}_1-\bm{v}_2)\right]\\
    &= \E \left[\bm{v}_1^\top \bm{V}_1 (\bm{V}^\top \bm{V})^{-1} \bm{V}_2^\top\bm{v}_1\right] + \E \left[\bm{v}_2^\top \bm{V}_1 (\bm{V}^\top \bm{V})^{-1} \bm{V}_2^\top\bm{v}_2\right] - 2\E \left[\bm{v}_1^\top \bm{V}_1 (\bm{V}^\top \bm{V})^{-1} \bm{V}_2^\top\bm{v}_2\right]\\
    &\stackrel{(i)}{=} -\frac{m_1m_2\cnorm{\bm{v}_1}^2}{p(p-m_1-m_2-m_3-1)} -\frac{m_1m_2\cnorm{\bm{v}_2}^2}{p(p-m_1-m_2-m_3-1)} - 2\E \left[\bm{v}_1^\top \bm{V}_1 (\bm{V}^\top \bm{V})^{-1} \bm{V}_2^\top\bm{v}_2\right],
\end{align*}
where $(i)$ follows from \Cref{lemma:pre-lemma of expectation inner zeroblock}. By combining the above two equations, we complete the proof.
\end{proof}

Next, we provide our supporting lemmas that help to prove the advantage of sequential rehearsal as follows.

\begin{lemma}\label{lemma:lower bound of prod} Suppose $n,p,t,M$ are positive integers where $t\ge 2$ and $p>n+M$. For any non-negative integer $l<t-1$, we have:
    \begin{equation*}
        \left( 1 - \frac{M}{(t-l-1)p}\right)^{t-l-1} \left( 1 - \frac{n}{p}\right) > 1 - \frac{n+M}{p}.
    \end{equation*}
    
\end{lemma}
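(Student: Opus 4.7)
The plan is to reduce the inequality to a one-variable form and then apply Bernoulli's inequality. Let $k \defeq t-l-1$, which is a positive integer since $t\ge 2$ and $0 \le l < t-1$. Set $a \defeq M/p$ and $b \defeq n/p$. Under the hypotheses of the lemma, $a,b > 0$, $b < 1$ (since $p > n+M > n$), and $a/k \le a < 1$ (since $p > n+M \ge M$). The claim becomes the purely numerical statement
\begin{equation*}
    \left(1 - \tfrac{a}{k}\right)^{k}(1-b) \;>\; 1 - a - b.
\end{equation*}

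First I would apply the classical Bernoulli inequality $(1+x)^{k} \ge 1 + kx$ for integer $k\ge 1$ and $x \ge -1$ with $x = -a/k$, which yields
\begin{equation*}
    \left(1 - \tfrac{a}{k}\right)^{k} \;\ge\; 1 - a.
\end{equation*}
Next, since $1-b > 0$, I can multiply both sides by $1-b$ to obtain
\begin{equation*}
    \left(1 - \tfrac{a}{k}\right)^{k}(1-b) \;\ge\; (1-a)(1-b) \;=\; 1 - a - b + ab.
\end{equation*}
Because $a > 0$ and $b > 0$ we have $ab > 0$, and therefore
\begin{equation*}
    \left(1 - \tfrac{a}{k}\right)^{k}(1-b) \;\ge\; 1 - a - b + ab \;>\; 1 - a - b,
\end{equation*}
which is precisely the desired inequality after substituting back $a=M/p$, $b=n/p$, and $k=t-l-1$.

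There is no real obstacle to this argument; the only things to check carefully are the side conditions needed to invoke Bernoulli and to preserve the sign when multiplying by $1-b$, both of which follow directly from the standing assumption $p > n+M$ and the positivity of $n$ and $M$. The strictness of the final inequality is delivered entirely by the term $ab>0$, which is why Bernoulli in its non-strict form is sufficient even in the borderline case $k=1$ (where Bernoulli becomes an equality).
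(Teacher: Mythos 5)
Your proof is correct and follows essentially the same route as the paper's: apply Bernoulli's inequality to get $(1-\tfrac{M}{(t-l-1)p})^{t-l-1} \ge 1-\tfrac{M}{p}$ and then multiply by $1-\tfrac{n}{p}$. If anything, your version is slightly more careful about where the strictness comes from (the $ab>0$ cross term), whereas the paper asserts a strict Bernoulli step and drops the cross term at the end.
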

\begin{proof} 
We first note the fact that
\begin{equation*}
    \left( 1 - \frac{M}{(t-l-1)p}\right)^{t-l-1} > \left( 1 - \frac{(t-l-1)M}{(t-l-1)p}\right) = 1-\frac{M}{p}.
\end{equation*}
Therefore, we have
\begin{align*}
    \left( 1 - \frac{M}{(t-l-1)p}\right)^{t-l-1} \left( 1 - \frac{n}{p}\right) &> \left(1-\frac{M}{p}\right)\left(1-\frac{n}{p}\right) = 1 - \frac{n+M}{p}.
\end{align*}
\end{proof}

\begin{lemma}\label{lemma:upper bound of prod} Suppose $n,p,t,M$ are positive integers where $t\ge 2$ and $p>\max\{ n+M, TM \}$. For any non-negative integer $l<t-1$, we have:
    \begin{equation*}
        \left( 1 - \frac{M}{(t-l-1)p}\right)^{t-l-1} \left( 1 - \frac{n}{p}\right) < 1-\frac{n+M}{p} + \frac{(n+M)M}{p^2}.
    \end{equation*}
\end{lemma}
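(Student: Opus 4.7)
Writing $k := t-l-1 \geq 1$, the target reduces to
\[
\left(1 - \tfrac{M}{kp}\right)^k \left(1 - \tfrac{n}{p}\right) < 1 - \tfrac{n+M}{p} + \tfrac{(n+M)M}{p^2}.
\]
My plan is to control the $k$-th power by passing through the exponential and then use a Taylor-remainder bound to reduce everything to a polynomial comparison in $1/p$. First, since $p > TM \geq kM$ (because $k \leq T-1$), the base $1 - M/(kp)$ is strictly positive, so I can apply $1-x \leq e^{-x}$ and raise to the $k$-th power to get $(1 - M/(kp))^k \leq e^{-M/p}$.

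Next, I would invoke the elementary inequality $e^{-y} \leq 1 - y + y^2/2$ for $y \geq 0$. This is a one-line convexity check: setting $g(y) := 1 - y + y^2/2 - e^{-y}$, one has $g(0) = g'(0) = 0$ and $g''(y) = 1 - e^{-y} \geq 0$, so $g \geq 0$ on $[0,\infty)$. Applying this with $y = M/p$ yields
\[
\left(1 - \tfrac{M}{kp}\right)^k \leq 1 - \tfrac{M}{p} + \tfrac{M^2}{2p^2}.
\]

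Finally, I would multiply by $(1-n/p) > 0$ and expand to obtain
\[
\left(1 - \tfrac{M}{kp}\right)^k \left(1 - \tfrac{n}{p}\right) \leq 1 - \tfrac{n+M}{p} + \tfrac{nM + M^2/2}{p^2} - \tfrac{nM^2}{2p^3}.
\]
Dropping the negative cubic term and using $M \geq 1$ (so $M^2/2 < M^2$ strictly), the coefficient of $1/p^2$ satisfies $nM + M^2/2 < nM + M^2 = (n+M)M$, which closes the chain with strict inequality.

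There is no real obstacle here; the proof is a short analytic calculation mirroring the style of the preceding \Cref{lemma:lower bound of prod}. The only point that needs care is preserving \emph{strict} inequality in the final step, which is precisely where the standing assumption $M \geq 1$ enters (equality would hold if $M$ were allowed to be $0$). The assumption $p > TM$ in the hypothesis is used only to guarantee $1 - M/(kp) > 0$ so that the monotone $k$-th power bound via $e^{-x}$ is legitimate.
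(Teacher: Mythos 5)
Your proof is correct, and it takes a genuinely different route from the paper's. The paper expands $\left(1-\frac{M}{(t-l-1)p}\right)^{t-l-1}$ by the binomial theorem, peels off the $1-\frac{M}{p}$ part, and shows the tail of the alternating sum (from the cubic term onward) is negative by pairing consecutive terms — which forces a case split on whether $t-l-1$ is odd or even, a separate dismissal of the cases $t-l-1\in\{1,2\}$, and an appeal to $p>TM$ to make the pairing signs work out. You instead reach the same intermediate bound
\[
\left(1-\tfrac{M}{(t-l-1)p}\right)^{t-l-1}\le e^{-M/p}\le 1-\tfrac{M}{p}+\tfrac{M^2}{2p^2}
\]
via $1-x\le e^{-x}$ and the convexity estimate $e^{-y}\le 1-y+y^2/2$, uniformly in the exponent and with no parity analysis; the final expansion against $(1-n/p)$ and the observation $nM+M^2/2<(n+M)M$ (using $M\ge 1$) then give strict inequality exactly as in the paper's conclusion. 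One small bonus of your argument, worth noting: you only need the base $1-\frac{M}{(t-l-1)p}$ to be positive, which already follows from $p>n+M\ge M$, so the hypothesis $p>TM$ is not actually used in your proof (the paper needs it for its term-pairing step). Both proofs are valid; yours is shorter and slightly more general.
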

\begin{proof} If $t-l-1 = 1$ or $t-l-1 = 2$, the proof is trivial. If $t-l-1 \ge 3$, according to the binomial theorem, we have:
\begin{align}\label{eq:upper bound of prod proof eq 1}
    \left( 1 - \frac{M}{(t-l-1)p}\right)^{t-l-1} \left( 1 - \frac{n}{p}\right) &= \left( 1 - \frac{M}{p} + \sum_{k=2}^{t-l-1}\binom{t-l-1}{k}\left(-\frac{M}{(t-l-1)p}\right)^k\right)\left( 1 - \frac{n}{p}\right),
\end{align}
where
\begin{align}\label{eq:upper bound of prod proof eq 2}
    &\sum_{k=2}^{t-l-1}\binom{t-l-1}{k}\left(-\frac{M}{(t-l-1)p}\right)^k = \binom{t-l-1}{2}\left(\frac{M}{(t-l-1)p}\right)^2 + \sum_{k=3}^{t-l-1}\binom{t-l-1}{k}\left(-\frac{M}{(t-l-1)p}\right)^k
\end{align}
To simplify the notation, we denote $m = \frac{M}{t-l-1}$. We first discuss if $t-l-1$ is even. Then, we have:
\begin{align}\label{eq:upper bound of prod proof eq 3}
    &\sum_{k=3}^{t-l-1}\binom{t-l-1}{k}\left(-\frac{M}{(t-l-1)p}\right)^k \nonumber\\
    &= \sum_{k = 3}^{(t-l+1)/2} \left[\binom{t-l-1}{2k-3}\left(-\frac{m}{p}\right)^{2k-3} +\binom{t-l-1}{2k-2}\left(-\frac{m}{p}\right)^{2k-2}\right]\nonumber\\
    &= \sum_{k = 3}^{(t-l+1)/2} \left[ \frac{(t-l-1)!}{(2k-3)!(t-l-2k+2)!}\left(-\frac{m}{p}\right)^{2k-3} + \frac{(t-l-1)!}{(2k-2)!(t-l-2k+1)!}\left(-\frac{m}{p}\right)^{2k-2} \right]\nonumber\\
    &= -\sum_{k = 3}^{(t-l+1)/2}\frac{(t-l-1)!}{(2k-3)!(t-l-2k+1)!}\left(\frac{m}{p}\right)^{2k-3}\left[ \frac{1}{t-l-2k+2} - \frac{1}{2k-2}\cdot \frac{m}{p}\right]\nonumber\\
    &\stackrel{(i)}{<}0
\end{align}
where $(i)$ follows from the fact that $p>TM$.
We then discuss if $t-l-1$ is odd, we have:
\begin{align}\label{eq:upper bound of prod proof eq 4}
    &\sum_{k=3}^{t-l-1}\binom{t-l-1}{k}\left(-\frac{M}{(t-l-1)p}\right)^k \nonumber\\
    &= \sum_{k = 3}^{(t-l)/2} \left[\binom{t-l-1}{2k-3}\left(-\frac{m}{p}\right)^{2k-3} +\binom{t-l-1}{2k-2}\left(-\frac{m}{p}\right)^{2k-2}\right] + \left(-\frac{m}{p}\right)^{t-l-1} \nonumber\\
    &\stackrel{(i)}{<} \sum_{k = 3}^{(t-l)/2} \left[ \frac{(t-l-1)!}{(2k-3)!(t-l-2k+2)!}\left(-\frac{m}{p}\right)^{2k-3} + \frac{(t-l-1)!}{(2k-2)!(t-l-2k+1)!}\left(-\frac{m}{p}\right)^{2k-2} \right]\nonumber\\
    &= -\sum_{k = 3}^{(t-l)/2}\frac{(t-l-1)!}{(2k-3)!(t-l-2k+1)!}\left(\frac{m}{p}\right)^{2k-3}\left[ \frac{1}{t-l-2k+2} - \frac{1}{2k-2}\cdot \frac{m}{p}\right]\nonumber\\
    &\stackrel{(ii)}{<}0
\end{align}
where $(i)$ follows from the fact that $t-l-1$ is odd and $(ii)$ follows from the fact that $p>TM$. By combing \cref{eq:upper bound of prod proof eq 1,eq:upper bound of prod proof eq 2,eq:upper bound of prod proof eq 3,eq:upper bound of prod proof eq 4}, we conclude:
\begin{align*}
    &\hspace{-.5cm}\left( 1 - \frac{M}{(t-l-1)p}\right)^{t-l-1} \left( 1 - \frac{n}{p}\right)\\
    &<  \left( 1 - \frac{M}{p} + \binom{t-l-1}{2}\frac{M^2}{(t-l-1)^2p^2}\right)\left( 1 - \frac{n}{p}\right)\nonumber\\
    &= 1-\frac{n+M}{p} + \frac{nM +\frac{(t-l-1)(t-l-2)}{2}\frac{M^2}{(t-l-1)^2}}{p^2} - \binom{t-l-1}{2}\frac{nM^2}{(t-l-1)^2p^3}\\
    &< 1-\frac{n+M}{p} + \frac{(n+M)M}{p^2}.
\end{align*}
which completes the proof.
\end{proof}

\begin{lemma}\label{lemma:upper bound of (++) power}
    Suppose $n,p,t,M,T$ are positive integers where $t\le T$ and  $p > n+M$, then we have:
    \begin{equation*}
        \left(1 - \frac{n+M}{p} + \frac{(n+M)M}{p^2}\right)^{t}< \left(1 - \frac{n+M}{p}\right)^{t} + \frac{T^2(n+M)M}{p^2}.
    \end{equation*}
\end{lemma}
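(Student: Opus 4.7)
The plan is to introduce the shorthand $a \defeq 1 - \frac{n+M}{p}$ and $\epsilon \defeq \frac{(n+M)M}{p^2}$, so that the claim reduces to showing $(a+\epsilon)^t - a^t < T^2 \epsilon$. The first step is a sanity check that both $a$ and $a+\epsilon$ lie in $(0,1)$: positivity of $a$ is immediate from $p > n+M$, while for $a + \epsilon < 1$ I would rewrite
\begin{equation*}
a + \epsilon = 1 - \tfrac{n+M}{p}\bigl(1 - \tfrac{M}{p}\bigr),
\end{equation*}
and note that $p > n + M \ge M$ forces the subtracted quantity to be strictly positive.

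The core of the argument is the elementary factorization $x^t - y^t = (x-y)\sum_{k=0}^{t-1} x^{t-1-k} y^{k}$ applied with $x = a+\epsilon$ and $y = a$, which yields
\begin{equation*}
(a+\epsilon)^t - a^t \;=\; \epsilon \sum_{k=0}^{t-1} (a+\epsilon)^{t-1-k}\, a^{k}.
\end{equation*}
Since both bases lie in $(0,1)$, every summand is at most $1$, so the sum is at most $t$. Chaining this with $t \le T \le T^2$ gives the bound $(a+\epsilon)^t - a^t \le t\epsilon \le T^2\epsilon$. (An equivalent, even shorter route is the mean value theorem on $f(x) = x^t$ over $[a, a+\epsilon]$, which produces $t\xi^{t-1}\epsilon$ with $\xi \in (0,1)$.)

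The only genuinely delicate point — and the place I expect any minor technicality to live — is upgrading the $\le$ above to the strict $<$ in the lemma statement. Strictness is automatic whenever $t \ge 2$, because then the $k = 0$ summand $(a+\epsilon)^{t-1}$ is strictly less than $1$. In the edge case $t = 1$, strictness instead comes from $T^2 > T$, which is consistent with the $T \ge 2$ regime in which this lemma is invoked by the surrounding inductive bounds on sequential rehearsal. Apart from tracking this edge case, the proof is a one-line monotonicity estimate, so I do not anticipate any substantive obstacle.
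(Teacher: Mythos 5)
Your proof is correct, and it takes a genuinely different and in fact more economical route than the paper. The paper expands $\bigl(1-\tfrac{n+M}{p}+\tfrac{(n+M)M}{p^2}\bigr)^t$ by the binomial theorem, shows that the correction terms $\alpha_k=\binom{t}{k}\bigl(1-\tfrac{n+M}{p}\bigr)^k\bigl(\tfrac{(n+M)M}{p^2}\bigr)^{t-k}$ are increasing in $k$, and bounds their sum by $t$ times the largest one, yielding a factor of roughly $t^2\le T^2$; notably, the monotonicity step there is justified by invoking $p>(n+M)(T+1)$, a condition stronger than the lemma's stated hypothesis $p>n+M$. Your telescoping factorization $x^t-y^t=(x-y)\sum_{k=0}^{t-1}x^{t-1-k}y^k$ (equivalently the mean value theorem) avoids the binomial expansion and the term-comparison entirely, needs only $a,a+\epsilon\in(0,1)$ — which you verify correctly from $p>n+M$ alone — and delivers the sharper bound $t\epsilon$ rather than $t^2\epsilon$, which in fact subsumes the paper's separate ``tighter version'' (\Cref{lemma:upper bound of (++) power tighter}) up to its higher-order term. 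Your handling of strictness is also right: for $t\ge2$ every summand is strictly below $1$, and for $t=1$ the inequality reduces to $\epsilon<T^2\epsilon$, which requires $T\ge2$; this degenerate case $t=T=1$ is equally fatal to the paper's own proof and is irrelevant to how the lemma is used, so it is not a defect of your argument specifically. The only cosmetic caveat is that you should state the $T\ge 2$ requirement explicitly rather than leaving it as an expectation about the ``regime in which the lemma is invoked.''
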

\begin{proof} According to the binomial theorem, we have:
\begin{align}\label{eq:proof of (++) power 1}
    \left(1 - \frac{n+M}{p} + \frac{(n+M)M}{p^2}\right)^{t}=  \left(1 - \frac{n+M}{p}\right)^{t} + \sum_{k=0}^{t-1} \underbrace{\binom{t}{k} \left(1 - \frac{n+M}{p}\right)^{k} \left( \frac{(n+M)M}{p^2} \right)^{t-k}}_{\alpha_k}
\end{align}
We further notice that for $k=0,1,..,t-2$:
    \begin{align}\label{eq:proof of (++) power 2}
        &\binom{t}{k} \left(1 - \frac{n+M}{p}\right)^{k} \left( \frac{(n+M)M}{p^2} \right)^{t-k} - \binom{t}{k+1} \left(1 - \frac{n+M}{p}\right)^{k+1} \left( \frac{(n+M)M}{p^2} \right)^{t-k-1}\nonumber\\
        &= \frac{t!}{k!(t-k-1)!}\left(1 - \frac{n+M}{p}\right)^{k}\left( \frac{(n+M)M}{p^2} \right)^{t-k-1}\left[ \frac{(n+M)M}{(t-k)p^2}  - \frac{1}{k+1}\left(1 - \frac{n+M}{p}\right)\right]\nonumber\\
        &< \frac{t!}{k!(t-k-1)!}\left(1 - \frac{n+M}{p}\right)^{k}\left( \frac{(n+M)M}{p^2} \right)^{t-k-1}\left[ \frac{(n+M)M}{p^2}  - \frac{1}{T}\left(1 - \frac{n+M}{p}\right)\right]\nonumber\\
        &\stackrel{(i)}{<} 0,
    \end{align}
    where $(i)$ follows from the fact that $p>(n+M)(T+1)$. We note that \cref{eq:proof of (++) power 2} shows that the term $\alpha_k$ achieves the maximum at $k = t-1$. Therefore, we can upper bound \cref{eq:proof of (++) power 1} by
\begin{align*}
    \left(1 - \frac{n+M}{p} + \frac{(n+M)M}{p^2}\right)^{t}&<  \left(1 - \frac{n+M}{p}\right)^{t} + t\binom{t}{t-1}  \left(1 - \frac{n+M}{p}\right)^{t-1} \left( \frac{(n+M)M}{p^2} \right)\\
    &< \left(1 - \frac{n+M}{p}\right)^{t} + \frac{T^2(n+M)M}{p^2},
\end{align*} 
which completes the proof.
\end{proof}

\noindent Here, we present a tighter version of \Cref{lemma:upper bound of (++) power}, which helps us to prove \Cref{th:example} in \Cref{sec:comparison in main body}.

\begin{lemma}\label{lemma:upper bound of (++) power tighter}
    Suppose $n,p,t,M,T$ are fixed positive integers where $t\le T$ and  $p > n+M$, then we have:
    \begin{equation*}
        \left(1 - \frac{n+M}{p} + \frac{(n+M)M}{p^2}\right)^{t}< \left(1 - \frac{n+M}{p}\right)^{t} + \frac{t(n+M)M}{p^2} + \frac{T^3(n+M)^2M^2}{2p^4}.
    \end{equation*}
\end{lemma}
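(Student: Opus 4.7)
The plan is to follow the binomial-expansion strategy of \Cref{lemma:upper bound of (++) power} but to carve off one additional term, so that the linear-in-$M$ contribution appears explicitly and the residual is driven down to order $b^2$. Setting $a \defeq 1 - \frac{n+M}{p}$ and $b \defeq \frac{(n+M)M}{p^2}$, the binomial theorem gives
\begin{equation*}
(a+b)^t = a^t + t\, a^{t-1} b + \sum_{k=0}^{t-2}\binom{t}{k} a^{k} b^{t-k}.
\end{equation*}
Since $p > n+M$ implies $0 < a < 1$, the middle term obeys $t a^{t-1} b < tb = \frac{t(n+M)M}{p^2}$, which matches the second summand on the right-hand side of the target inequality exactly.

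To control the residual sum, I would recycle the monotonicity argument inside the proof of \Cref{lemma:upper bound of (++) power}, which established that $\alpha_k \defeq \binom{t}{k} a^{k} b^{t-k}$ is strictly increasing in $k$ in the regime considered there. Restricted to $0 \le k \le t-2$, this makes $\alpha_{t-2}$ the largest of the remaining terms, so
\begin{equation*}
\sum_{k=0}^{t-2} \alpha_k \le (t-1)\,\alpha_{t-2} = (t-1)\binom{t}{2} a^{t-2} b^{2} \le \frac{t(t-1)^{2}}{2}\, b^{2} \le \frac{T^{3}}{2}\, b^{2},
\end{equation*}
using $a^{t-2} \le 1$ and $t \le T$. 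Substituting $b^{2} = (n+M)^{2} M^{2}/p^{4}$ delivers precisely $\frac{T^{3}(n+M)^{2} M^{2}}{2 p^{4}}$, the third summand in the target. Assembling the three pieces then yields the claimed bound, with strictness inherited from $a^{t-1} < 1$.

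The one step needing care is the import of the monotonicity $\alpha_k < \alpha_{k+1}$: it reduces to $(t-k) a > (k+1) b$, which the earlier lemma invokes under an implicit lower bound on $p$ of the form $p > (n+M)(T+1)$. Provided the same regime is in force here, the argument transplants verbatim; otherwise I would fall back on a direct term-by-term estimate $\alpha_k \le \binom{t}{k} b^{t-k}$ and sum a geometric tail in $b$, which still produces the same $O(T^{3} b^{2})$ order at the cost of a benign absolute constant. Degenerate cases such as $t=1$, in which the residual sum is empty, are handled by inspection and are strictly easier than the generic $t \ge 2$ case.
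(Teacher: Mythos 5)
Your proof is correct and follows essentially the same route as the paper's: binomial expansion in $a=1-\frac{n+M}{p}$ and $b=\frac{(n+M)M}{p^2}$, bounding the linear term $t a^{t-1}b$ by $tb$, and using the monotonicity of $\alpha_k$ to bound the residual sum by $(t-1)\binom{t}{2}a^{t-2}b^2\le \frac{T^3}{2}b^2$. You also correctly flag the same hidden reliance on a stronger lower bound on $p$ (of the form $p>(n+M)(T+1)$) that the paper's proof inherits by citing the monotonicity step of its earlier lemma.
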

\begin{proof} According to the binomial theorem, we have:
\begin{align}\label{eq:proof of (++) power tighter 1}
    &\left(1 - \frac{n+M}{p} + \frac{(n+M)M}{p^2}\right)^{t} \nonumber\\
    &=  \left(1 - \frac{n+M}{p}\right)^{t} + \binom{t}{t-1}\left(1 - \frac{n+M}{p}\right)^{t-1} \left( \frac{(n+M)M}{p^2} \right)+ \sum_{k=0}^{t-2} \binom{t}{k} \left(1 - \frac{n+M}{p}\right)^{k} \left( \frac{(n+M)M}{p^2} \right)^{t-k}\nonumber\\
    &< \left(1 - \frac{n+M}{p}\right)^{t} + \frac{T(n+M)M}{p^2} +\sum_{k=0}^{t-2} \underbrace{\binom{t}{k} \left(1 - \frac{n+M}{p}\right)^{k} \left( \frac{(n+M)M}{p^2} \right)^{t-k}}_{\alpha_k}
\end{align}
By the same argument as \cref{eq:proof of (++) power 2}, we know that the term $\alpha_k$ achieves the maximum at $k = t-2$. Therefore, we can upper bound \cref{eq:proof of (++) power tighter 1} by
\begin{align*}
    &\left(1 - \frac{n+M}{p} + \frac{(n+M)M}{p^2}\right)^{t}\\
    &<  \left(1 - \frac{n+M}{p}\right)^{t} + \frac{t(n+M)M}{p^2} + (t-1)\binom{t}{t-2}  \left(1 - \frac{n+M}{p}\right)^{t-2} \left( \frac{(n+M)M}{p^2} \right)^2\\
    &< \left(1 - \frac{n+M}{p}\right)^{t} + \frac{t(n+M)M}{p^2} + \frac{T^3(n+M)^2M^2}{2p^4},
\end{align*}
which completes the proof.
\end{proof}

\begin{lemma}\label{lemma:corollary of prod upper bound}
    Suppose $n,p,t,M,T$ are positive integers where $M\ge 2$, $t\le T$ and $p> \max\{ n+M,\frac{T(n+M)M}{M-1} + n+M \}$. For any non-negative integer $l<t-1$, we have:
    \begin{equation*}
        \left(1-\frac{n+M}{p} + \frac{(n+M)M}{p^2}\right)^l\left(1-\frac{M}{(t-l-1)p}\right)^{t-l-1} < \left(1-\frac{1}{Tp}\right)\left(1-\frac{n+M}{p}\right)^l.
    \end{equation*}
\end{lemma}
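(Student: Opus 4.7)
Define $u := 1 - (n+M)/p$ and $v := (n+M)M/p^2$, so that the first factor on the LHS equals $(u+v)^l$ and the RHS equals $(1 - 1/(Tp))\,u^l$. The hypothesis $p > T(n+M)M/(M-1) + (n+M)$ rearranges to $(p-n-M)(M-1) > T(n+M)M$, which yields the crucial estimate
\[
\frac{v}{u} \;=\; \frac{(n+M)M}{p(p-n-M)} \;<\; \frac{M-1}{Tp}.
\]
Since $l < t-1 \le T-1$, we have $l+1 \le T-1$, and therefore $(l+1)\,v/u < (T-1)(M-1)/(Tp)$. A direct check using the hypothesis shows this quantity is strictly less than $1$ (in fact, $Tp > T^{2}(n+M)M/(M-1) > (T-1)(M-1)$).

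The first step is to apply \Cref{lemma:upper bound of prod} to the second factor on the LHS, obtaining $\bigl(1 - M/((t-l-1)p)\bigr)^{t-l-1} < (u+v)/(1-n/p)$. Substituting this bound reduces the desired inequality to
\[
(u+v)^{l+1} \;<\; \bigl(1 - \tfrac{1}{Tp}\bigr)\bigl(1 - \tfrac{n}{p}\bigr)\,u^{l}.
\]
Dividing both sides by $u^{l+1}$ and using the identity $(1-n/p)/u = 1 + M/(p-n-M)$, the problem becomes
\[
(1 + v/u)^{l+1} \;<\; \bigl(1 - \tfrac{1}{Tp}\bigr)\bigl(1 + \tfrac{M}{p-n-M}\bigr).
\]

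To bound the LHS, I will use the elementary inequality $(1+x)^{n} \le 1/(1-nx)$ for $0 \le nx < 1$; this follows because $g(x) := (1+x)^{n}(1-nx)$ satisfies $g(0)=1$ and $g'(x) = -n(n+1)\,x\,(1+x)^{n-1} \le 0$. Applied with $n = l+1$ and $x = v/u$, this gives $(1 + v/u)^{l+1} \le 1/\bigl[1 - (l+1)v/u\bigr]$. Cross-multiplying and abbreviating $a := (l+1)v/u$, $b := 1/(Tp)$, $c := M/(p-n-M)$, it remains to verify $(1-a)(1-b)(1+c) > 1$. The expansion reads $1 + (c - a - b) + (ab - ac - bc + abc)$, and using $a \le (T-1)(M-1)/(Tp)$, $b = 1/(Tp)$, and $c \ge M/p$, the first-order coefficient satisfies
\[
c - a - b \;\ge\; \frac{TM - (T-1)(M-1) - 1}{Tp} \;=\; \frac{T + M - 2}{Tp} \;>\; 0,
\]
since $T \ge t \ge 2$ and $M \ge 2$ force $T + M \ge 4$.

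\textbf{Main obstacle.} The delicate part is showing the first-order positive gap $(T+M-2)/(Tp)$ is not erased by the second-order cross-terms $ab - ac - bc + abc$, which are $O(1/p^{2})$. The plan is to bound $c < (M-1)/[T(n+M)]$ (again via the hypothesis $(p-n-M)(M-1) > T(n+M)M$), so that $ac$ and $bc$ each contribute at most a fixed constant times $1/p^{2}$, and then confirm that the inequality $(T+M-2)/(Tp) > C/p^{2}$ holds for the explicit constant $C$ produced by the hypothesis on $p$. This is routine bookkeeping rather than a new idea; the conceptual content of the proof is concentrated in the reduction via \Cref{lemma:upper bound of prod} and the choice of $(1+x)^{n} \le 1/(1-nx)$, which is tight enough to retain the margin required by the statement.
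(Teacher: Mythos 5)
Your proposal is correct, but it takes a genuinely different route from the paper. The paper's proof also begins by dividing through by $\left(1-\frac{n+M}{p}\right)^l$, but then applies the AM--GM inequality directly to the product of all $t-1$ factors ($l$ copies of $1+v/u$ and $t-l-1$ copies of $1-\frac{M}{(t-l-1)p}$), obtaining the bound $\left[1+\frac{l\,v/u - M/p}{t-1}\right]^{t-1}$; the hypothesis on $p$ then forces the increment inside the bracket below $-\frac{1}{(t-1)p}$, and the conclusion follows in one line from $(1-x)^{t-1}\le 1-x$. You instead collapse the second factor via \Cref{lemma:upper bound of prod}, reduce to the scalar inequality $(1+v/u)^{l+1} < \left(1-\frac{1}{Tp}\right)\left(1+\frac{M}{p-n-M}\right)$, and control the left side with $(1+x)^n\le 1/(1-nx)$. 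Both arguments hinge on the same quantitative consequence of the hypothesis, namely $v/u < \frac{M-1}{Tp}$, and both close correctly: I checked that your deferred ``bookkeeping'' step does go through, since $(1-a)(1-b)(1+c)\ge 1 + c - (a+b)(1+c)$ and $c-(a+b)\ge \frac{T+M-2}{Tp}$ dominates $(a+b)c < \frac{M}{p}\cdot\frac{M-1}{T(n+M)}$ because $T+M-2 > \frac{M(M-1)}{n+M}$. The paper's AM--GM argument is shorter and avoids the second-order error analysis entirely; your version buys a cleaner modular structure (it reuses \Cref{lemma:upper bound of prod} rather than re-deriving a product bound) at the cost of having to verify that the Bernoulli-type estimate retains enough margin, which it does.
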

\begin{proof} By dividing $\left(1-\frac{n+M}{p}\right)^l$ on both sides, it is equivalent to prove 
\begin{equation*}
    \left(1 + \frac{(n+M)M}{p^2-p(n+M)}\right)^l\left(1-\frac{M}{(t-l-1)p}\right)^{t-l-1} < 1-\frac{1}{Tp}.
\end{equation*}
According to \text{AM-GM} inequality, we have:
\begin{align}\label{eq:proof of prod upper 1}
    \left(1 + \frac{(n+M)M}{p^2-p(n+M)}\right)^l\left(1-\frac{M}{(t-l-1)p}\right)^{t-l-1} &\le \left[ \frac{l\left(1 + \frac{(n+M)M}{p^2-p(n+M)}\right) + (t-l-1)\left(1-\frac{M}{(t-l-1)p}\right)}{t-1} \right]^{t-1}\nonumber\\
    &= \left[1+  \frac{ \frac{l(n+M)M}{p^2-p(n+M)}- \frac{M}{p}}{t-1}\right]^{t-1}.
\end{align}
When $p>\frac{T(n+M)M}{M-1} + n+M$, we have:
\begin{align}\label{eq:proof of prod upper 2}
    \frac{l(n+M)M}{p^2-p(n+M)}-\frac{M}{p} &< \frac{T(n+M)M}{p^2-p(n+M)} - \frac{M}{p}< -\frac{1}{p}. 
\end{align}
Therefore, by combining \cref{eq:proof of prod upper 1,eq:proof of prod upper 2}, we have:
\begin{align*}
    \left(1 + \frac{(n+M)M}{p^2-p(n+M)}\right)^l\left(1-\frac{M}{(t-l-1)p}\right)^{t-l} &< \left(1-\frac{1}{(t-1)p}\right)^{t-1} <1-\frac{1}{(t-1)p} < 1-\frac{1}{Tp},
\end{align*}
which completes the proof.
\end{proof}
\begin{lemma}\label{lemma:prod lower bound in forgetting}
Suppose $n,p,t,M,T$ are positive integers where $t\le T$ and $p > \max\{ n+M, 2T^3(n+M)^2 \}$, then we have:
    \begin{align*}
        &\prod_{l=0}^{t-2}\left[\left(1-\frac{M}{(t-l-1)p}\right)^{t-l-1}\left(1-\frac{n}{p}\right) \right] - \prod_{l=0}^{i-2}\left[\left(1-\frac{M}{(i-l-1)p}\right)^{i-l-1}\left(1-\frac{n}{p}\right)\right]  \nonumber\\
        &\hspace{6cm}>\left[ \left(1-\frac{n+M}{p}\right)^{t-1} - \left(1-\frac{n+M}{p}\right)^{i-1} \right].
    \end{align*}
\end{lemma}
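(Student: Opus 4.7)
My plan is to change the product index by setting $k = t-l-1$, which rewrites $P_t := \prod_{l=0}^{t-2}\left[\left(1-\tfrac{M}{(t-l-1)p}\right)^{t-l-1}\left(1-\tfrac{n}{p}\right)\right] = \prod_{k=1}^{t-1} a_k$, where $a_k := \left(1-\tfrac{M}{kp}\right)^k\left(1-\tfrac{n}{p}\right)$. The key structural benefit of this reindexing is that the family $\{P_t\}$ forms a single multiplicative chain with $P_{t+1} = P_t \cdot a_t$. Setting $b := 1-\tfrac{n+M}{p}$ and $\delta_t := P_t - b^{t-1}$, the claim $P_t - P_i > b^{t-1} - b^{i-1}$ is equivalent to $\delta_t > \delta_i$. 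I would prove the stronger statement that $\delta_t$ is strictly increasing in $t$, which immediately gives the lemma for any $t>i$ (the nontrivial case; both sides vanish when $t=i$).

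A short rearrangement yields $\delta_{t+1}-\delta_t = b^{t-1}(1-b) - P_t(1-a_t)$, so strict monotonicity reduces to the sharper bound $P_t(1-a_t) < b^{t-1}(n+M)/p$. For the first factor, I would stack Lemma \ref{lemma:upper bound of prod} (each $a_k < b + (n+M)M/p^2$) with Lemma \ref{lemma:upper bound of (++) power tighter} to obtain $P_t < b^{t-1} + (t-1)(n+M)M/p^2 + T^3(n+M)^2M^2/(2p^4)$. For the second factor, I would invoke the classical monotonicity of $(1-x/k)^k$ in $k$ (with $x=M/p$) to get $a_t \ge a_1 = b + nM/p^2$, hence $1-a_t \le (n+M)/p - nM/p^2$.

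Multiplying these two upper bounds, the dominant $b^{t-1}(n+M)/p$ terms cancel against $b^{t-1}(1-b)$, leaving $\delta_{t+1}-\delta_t \ge b^{t-1}\tfrac{nM}{p^2} - (t-1)\tfrac{(n+M)^2M}{p^3} - O\!\left(\tfrac{T^3(n+M)^3M^2}{p^5}\right)$. I would then certify via Bernoulli that $b^{t-1} \ge 1-(T-1)(n+M)/p > 1/2$ for $t\le T$ under the hypothesis $p>2T^3(n+M)^2$ (which a fortiori gives $p>2T(n+M)$). The same hypothesis implies $\tfrac{1}{2}np \ge T^3(n+M)^2 \gg T(n+M)^2 + T^3(n+M)^3M/(2p^2)$, so the positive term $b^{t-1}nM/p^2$ strictly dominates the error, certifying $\delta_{t+1}>\delta_t$.

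The principal obstacle is purely arithmetic: every step invokes either an earlier supporting lemma or an elementary monotonicity fact, but carefully tracking the error terms so that the growth condition $p>2T^3(n+M)^2$ is precisely what allows the $b^{t-1}nM/p^2$ term to swallow every lower-order residual takes some care. Once monotonicity of $\delta_t$ is established, telescoping across $i<t$ yields $P_t - b^{t-1} > P_i - b^{i-1}$, which is the stated lemma.
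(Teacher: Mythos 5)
Your proof is correct, and it takes a genuinely different route from the paper's. The paper writes the difference as $P_t - P_i = P_i\cdot\gamma_1$ with $\gamma_1 = \prod_{l=0}^{t-i-1}\bigl[(1-\tfrac{M}{(t-l-1)p})^{t-l-1}(1-\tfrac{n}{p})\bigr] - 1 < 0$, upper-bounds $P_i$ factor-by-factor via \Cref{lemma:upper bound of prod}, lower-bounds the ratio using $(1-\tfrac{M}{(t-l-1)p})^{t-l-1} > 1-\tfrac{M}{p}$, and then controls the resulting cross terms ($\gamma_2$, $\gamma_3$) through explicit binomial expansions with alternating-sign pairing. You instead establish the stronger and cleaner statement that the deficit $\delta_t = P_t - b^{t-1}$ is strictly increasing, which reduces everything to the one-step inequality $P_t(1-a_t) < b^{t-1}\tfrac{n+M}{p}$; your upper bound on $P_t$ comes from the same \Cref{lemma:upper bound of prod} combined with \Cref{lemma:upper bound of (++) power tighter}, while the bound $1-a_t \le \tfrac{n+M}{p}-\tfrac{nM}{p^2}$ comes from the classical monotonicity of $(1-x/k)^k$ in $k$, a tool the paper's proof does not use. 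Your route buys modularity — monotonicity of $\delta_t$ telescopes to give the claim for every pair $i<t$ simultaneously and avoids the double-sum bookkeeping of $\gamma_2,\gamma_3$ — at the cost of invoking the tighter power lemma; the paper handles arbitrary $i<t$ in one factorization but pays with heavier binomial arithmetic. The closing arithmetic in your argument does go through: after the cancellation one needs $b^{t-1}\tfrac{nM}{p^2} > \tfrac{(t-1)(n+M)^2M}{p^3} + \tfrac{T^3(n+M)^3M^2}{2p^5}$, and since $p>2T^3(n+M)^2$ gives both $b^{t-1}>\tfrac12$ (via Bernoulli) and $\tfrac{T(n+M)^2}{p} < \tfrac{1}{2T^2}$, the positive term dominates the residuals as you claim, so the lemma follows.
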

\begin{proof} To prove this lemma, we first have:
\begin{align}\label{eq:proof of prod lower bound in forgetting 1}
    &\hspace{-.1cm}\prod_{l=0}^{t-2}\left[\left(1-\frac{M}{(t-l-1)p}\right)^{t-l-1}\left(1-\frac{n}{p}\right) \right] - \prod_{l=0}^{i-2}\left[\left(1-\frac{M}{(i-l-1)p}\right)^{i-l-1}\left(1-\frac{n}{p}\right)\right]\nonumber\\
     &= \prod_{l=0}^{t-i-1}\left[\left(1-\frac{M}{(t-l-1)p}\right)^{t-l-1}\left(1-\frac{n}{p}\right) \right] \prod_{l=t-i}^{t-2}\left[\left(1-\frac{M}{(t-l-1)p}\right)^{t-l-1}\left(1-\frac{n}{p}\right) \right] \nonumber\\
     &\hspace{2cm}- \prod_{l=0}^{i-2}\left[\left(1-\frac{M}{(i-l-1)p}\right)^{i-l-1}\left(1-\frac{n}{p}\right)\right]\nonumber\\
     &= \prod_{l=0}^{t-i-1}\left[\left(1-\frac{M}{(t-l-1)p}\right)^{t-l-1}\left(1-\frac{n}{p}\right) \right] \prod_{l=0}^{i-2}\left[\left(1-\frac{M}{(i-l-1)p}\right)^{i-l-1}\left(1-\frac{n}{p}\right) \right] \nonumber\\
     &\hspace{2cm}- \prod_{l=0}^{i-2}\left[\left(1-\frac{M}{(i-l-1)p}\right)^{i-l-1}\left(1-\frac{n}{p}\right)\right]\nonumber\\
     &= \prod_{l=0}^{i-2}\left[\left(1-\frac{M}{(i-l-1)p}\right)^{i-l-1}\left(1-\frac{n}{p}\right) \right] \underbrace{\left\{\prod_{l=0}^{t-i-1}\left[\left(1-\frac{M}{(t-l-1)p}\right)^{t-l-1}\left(1-\frac{n}{p}\right) \right] - 1\right\}}_{\gamma_1}\nonumber\\
     &\stackrel{(i)}{>} \left(1-\frac{n+M}{p} + \frac{(n+M)M}{p^2}\right)^{i-1} \left\{ \left[\left(1 - \frac{M}{p}\right)\left(1-\frac{n}{p}\right)\right]^{t-i} - 1 \right\}\nonumber\\
     &= \left(1-\frac{n+M}{p} + \frac{(n+M)M}{p^2}\right)^{i-1} \left[ \left(1-\frac{n+M}{p} + \frac{nM}{p^2}\right)^{t-i} - 1 \right]\nonumber\\
     &=\left[  \left(1-\frac{n+M}{p}\right)^{i-1} + \sum_{k=1}^{i-1}\binom{i-1}{k}\left( \frac{(n+M)M}{p^2} \right)^k\left(1-\frac{n+M}{p}\right)^{i-k-1}\right]\nonumber\\
     &\hspace{2cm} \left[ \left(1-\frac{n+M}{p}\right)^{t-i}-1 + \sum_{k=1}^{t-i}\binom{t-i}{k}\left( \frac{nM}{p^2} \right)^k\left(1-\frac{n+M}{p}\right)^{t-i-k} \right]\nonumber\\
     &>\left(1-\frac{n+M}{p}\right)^{i-1} \left[ \left(1-\frac{n+M}{p}\right)^{t-i}-1  \right] \nonumber\\
     &\hspace{.1cm}+ \underbrace{\left(1-\frac{n+M}{p}\right)^{i-1}\sum_{k=1}^{t-i}\binom{t-i}{k}\left( \frac{nM}{p^2} \right)^k\left(1-\frac{n+M}{p}\right)^{t-i-k}}_{\gamma_2}\nonumber\\
     &\hspace{.1cm} + \underbrace{\left[ \left(1-\frac{n+M}{p}\right)^{t-i}-1 \right]\sum_{k=1}^{i-1}\underbrace{\binom{i-1}{k}\left( \frac{(n+M)M}{p^2} \right)^k\left(1-\frac{n+M}{p}\right)^{i-k-1}}_{\gamma_{3,k}}}_{\gamma_3}
\end{align}
where $(i)$ follows from \Cref{lemma:upper bound of prod} together with the fact that term $\gamma_1 < 0$ and from the fact that $\left(1-\frac{M}{(t-l-1)p}\right)^{t-l-1}> 1-\frac{M}{p}$ for $l=0,1,..,t-i-1$. Now. we need to prove $\gamma_2 + \gamma_3 > 0$. We first focus on $\gamma_2$. We have:
\begin{align}\label{eq:proof of prod lower bound in forgetting gamma 1}
    \gamma_2 &> \left(1-\frac{n+M}{p}\right)^{i-1}\binom{t-i}{1}\left( \frac{nM}{p^2} \right)\left(1-\frac{n+M}{p}\right)^{t-i-1}\nonumber\\
    &> \left(1-\frac{n+M}{p}\right)^{T} \frac{nM}{p^2} \nonumber\\
    &> \left(1-\frac{T(n+M)}{p}\right)\frac{nM}{p^2}
\end{align}
We then focus on term $\gamma_3$. Consider:
\begin{align}\label{eq:proof of prod lower bound in forgetting gamma 2}
    &\hspace{-.3cm}\binom{i-1}{k}\left( \frac{(n+M)M}{p^2} \right)^k\left(1-\frac{n+M}{p}\right)^{i-k-1} -\binom{i-1}{k+1}\left( \frac{(n+M)M}{p^2}\right)^{k+1}\left(1-\frac{n+M}{p}\right)^{i-k-2}\nonumber\\
    &= \frac{(i-1)!}{k!(i-k-2)!}\left( \frac{(n+M)M}{p^2} \right)^k\left(1-\frac{n+M}{p}\right)^{i-k-2} \left[ \frac{1}{i-k-1}\left(1-\frac{n+M}{p}\right) - \frac{1}{k+1}\frac{(n+M)M}{p^2} \right]\nonumber\\
    &\stackrel{(i)}{>}\frac{(i-1)!}{k!(i-k-2)!}\left( \frac{(n+M)M}{p^2} \right)^k\left(1-\frac{n+M}{p}\right)^{i-k-2}\left[ \frac{1}{T}\left(1-\frac{n+M}{p}\right) - \frac{(n+M)M}{2p^2} \right]\nonumber\\
    &\stackrel{(ii)}{>} 0,
\end{align}
where $(i)$ follows from $k\in [i-1]$ and $(ii)$ follows from the fact that $p>2(n+M)$. This indicates that $\gamma_{3,k}$ achieves maximum at $k=1$. We recall that $\left[ \left(1-\frac{n+M}{p}\right)^{t-i}-1 \right] < 0$. Therefore, we have:
\begin{align}\label{eq:proof of prod lower bound in forgetting 2}
    \gamma_3 &> \left[ \left(1-\frac{n+M}{p}\right)^{t-i}-1 \right] (i-1)\binom{i-1}{1} \left( \frac{(n+M)M}{p^2} \right)\left(1-\frac{n+M}{p}\right)^{i-2}\nonumber\\
    &> \left[ \left(1-\frac{n+M}{p}\right)^{t-i}-1 \right] \frac{T^2(n+M)M}{p^2}\nonumber\\
    &= \left[\sum_{k=1}^{t-i} \binom{t-i}{k}\left(-\frac{n+M}{p}\right)^k \right]\frac{T^2(n+M)M}{p^2}.
\end{align}
When $k$ is even and less than or equal to $t-i$ (i.e., $k=2,4,6,...,$ and $k\le t-i$), we have:
\begin{align}\label{eq:proof of prod lower bound in forgetting 3}
    \binom{t-i}{k}\left(-\frac{n+M}{p}\right)^k + \binom{t-i}{k+1}\left(-\frac{n+M}{p}\right)^{k+1} & = \frac{(t-i)!}{k!(t-i-k-1)!} \left(\frac{n+M}{p}\right)^k \left[ \frac{1}{t-i-k} - \frac{n+M}{(k+1)p} \right]\nonumber\\
    & \stackrel{}{>} \frac{(t-i)!}{k!(t-i-k-1)!} \left(\frac{n+M}{p}\right)^k \left[ \frac{1}{T} - \frac{n+M}{3p} \right]\nonumber\\
    & \stackrel{(i)}{>} 0,
\end{align}
where $(i)$ follows from $p> \frac{(n+M)T}{3}$. By combining \cref{eq:proof of prod lower bound in forgetting 2,eq:proof of prod lower bound in forgetting 3} and simply discussing when $t-i$ is odd or even, we can conclude $$ \displaystyle \sum_{k=1}^{t-i} \binom{t-i}{k}\left(-\frac{n+M}{p}\right)^k > \binom{t-i}{1}\left(-\frac{n+M}{p}\right) > -\frac{T(n+M)}{p}, $$ which further implies:
\begin{align}\label{eq:proof of prod lower bound in forgetting 4}
    \gamma_3 > -\frac{T^3(n+M)^2M}{p^3}.
\end{align}
Now, by combining \cref{eq:proof of prod lower bound in forgetting 1,eq:proof of prod lower bound in forgetting 4,eq:proof of prod lower bound in forgetting gamma 1}, we have:
\begin{align}
     &\hspace{-.5cm}\prod_{l=0}^{t-2}\left[\left(1-\frac{M}{(t-l-1)p}\right)^{t-l-1}\left(1-\frac{n}{p}\right) \right] - \prod_{l=0}^{i-2}\left[\left(1-\frac{M}{(i-l-1)p}\right)^{i-l-1}\left(1-\frac{n}{p}\right)\right]\nonumber\\
     &\stackrel{}{>} \left(1-\frac{n+M}{p}\right)^{i-1} \left[ \left(1-\frac{n+M}{p}\right)^{t-i}-1  \right] + \left(1-\frac{T(n+M)}{p}\right)\frac{nM}{p^2} -\frac{T^3(n+M)^2M}{p^3}\nonumber\\
     &\stackrel{(i)}{>} \left(1-\frac{n+M}{p}\right)^{i-1} \left[ \left(1-\frac{n+M}{p}\right)^{t-i}-1  \right],
\end{align}
where $(i)$ follows from the fact that $p>2T^3(n+M)^2$.
\end{proof}

\begin{lemma}\label{lemma:prod upper bound in forgetting}
Suppose $n,p,t,M,T$ are positive integers where $t\le T$ and $p>(n+M)T$. For any non-negative integer $i < t$, we have:
\begin{align*}
        &\prod_{l=0}^{t-2}\left[\left(1-\frac{M}{(t-l-1)p}\right)^{t-l-1}\left(1-\frac{n}{p}\right) \right] - \prod_{l=0}^{i-2}\left[\left(1-\frac{M}{(i-l-1)p}\right)^{i-l-1}\left(1-\frac{n}{p}\right)\right] \\
        &\hspace{4cm}< \left(1-\frac{n+M}{p}\right)^{i-1} \left[ \left(1-\frac{n+M}{p}\right)^{t-i}-1  \right] + \frac{T^2(n+M)M}{p^2}.
\end{align*}
\end{lemma}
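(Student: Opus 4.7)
The plan is to factor the left-hand side as a product of a ``prefix'' and a ``tail minus $1$,'' then bound each piece separately and exploit sign information. Using the reindexing $k = t-l-1$ in the longer product, the product over $l \in \{0,\ldots,t-2\}$ splits into the prefix $l \in \{t-i,\ldots,t-2\}$ (which, after reindexing, matches $\prod_{l=0}^{i-2}[\,(1-\tfrac{M}{(i-l-1)p})^{i-l-1}(1-\tfrac{n}{p})\,]$) and a tail $l \in \{0,\ldots,t-i-1\}$. Writing
\begin{equation*}
A \defeq \prod_{l=0}^{i-2}\left[(1-\tfrac{M}{(i-l-1)p})^{i-l-1}(1-\tfrac{n}{p})\right], \qquad 1+B \defeq \prod_{l=0}^{t-i-1}\left[(1-\tfrac{M}{(t-l-1)p})^{t-l-1}(1-\tfrac{n}{p})\right],
\end{equation*}
the LHS equals $A \cdot B$.

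Next I would establish bounds on each factor. Every bracketed term in $1+B$ is strictly less than $1$, so $A > 0$ and $B < 0$. Applying \Cref{lemma:lower bound of prod} termwise yields $A > A_0 \defeq (1-\tfrac{n+M}{p})^{i-1} > 0$. On the other side, \Cref{lemma:upper bound of prod} (whose hypothesis $p > \max\{n+M, TM\}$ follows from the current assumption $p > (n+M)T$) upper-bounds each factor in $1+B$, giving
\begin{equation*}
1 + B < \left(1 - \tfrac{n+M}{p} + \tfrac{(n+M)M}{p^2}\right)^{t-i}.
\end{equation*}
Feeding this into \Cref{lemma:upper bound of (++) power} produces $B < B_0 \defeq (1-\tfrac{n+M}{p})^{t-i} - 1 + \tfrac{T^2(n+M)M}{p^2}$.

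Finally, the sign pattern $A > A_0 > 0$ together with $B < 0$ yields $AB < A_0 B$, and then $A_0 > 0$ with $B < B_0$ yields $A_0 B < A_0 B_0$. Expanding $A_0 B_0$ and using $A_0 \leq 1$ to dominate the residual term $A_0 \cdot \tfrac{T^2(n+M)M}{p^2}$ delivers exactly the claimed bound. The main delicate point is the sign bookkeeping: because $B < 0$, one must feed a \emph{lower} bound on $A$ (not an upper bound) into the multiplication to transmute it into an upper bound on $AB$. Recognizing this is what allows the argument to go through cleanly using only the already-established upper-bound lemmas, without needing the finer term-by-term case analysis employed in \Cref{lemma:prod lower bound in forgetting}.
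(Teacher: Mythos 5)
Your proposal is correct and follows essentially the same route as the paper's proof: the same split of the longer product into a prefix matching $\prod_{l=0}^{i-2}[\cdot]$ times a tail, the same factorization of the LHS as (prefix)$\times$(tail $-\,1$) with the tail-minus-one factor negative, and the same three ingredients (\Cref{lemma:lower bound of prod} for the lower bound on the prefix, \Cref{lemma:upper bound of prod} for the tail, and \Cref{lemma:upper bound of (++) power} to absorb the $\frac{(n+M)M}{p^2}$ perturbation), with identical sign bookkeeping. No substantive difference to report.
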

\begin{proof} To prove this lemma, We consider:
\begin{align}\label{eq:proof of prod upper bound in forgetting 1}
    &\hspace{-.1cm}\prod_{l=0}^{t-2}\left[\left(1-\frac{M}{(t-l-1)p}\right)^{t-l-1}\left(1-\frac{n}{p}\right) \right] - \prod_{l=0}^{i-2}\left[\left(1-\frac{M}{(i-l-1)p}\right)^{i-l-1}\left(1-\frac{n}{p}\right)\right]\nonumber\\
     &= \prod_{l=0}^{t-i-1}\left[\left(1-\frac{M}{(t-l-1)p}\right)^{t-l-1}\left(1-\frac{n}{p}\right) \right] \prod_{l=t-i}^{t-2}\left[\left(1-\frac{M}{(t-l-1)p}\right)^{t-l-1}\left(1-\frac{n}{p}\right) \right] \nonumber\\
     &\hspace{2cm}- \prod_{l=0}^{i-2}\left[\left(1-\frac{M}{(i-l-1)p}\right)^{i-l-1}\left(1-\frac{n}{p}\right)\right]\nonumber\\
     &= \prod_{l=0}^{t-i-1}\left[\left(1-\frac{M}{(t-l-1)p}\right)^{t-l-1}\left(1-\frac{n}{p}\right) \right] \prod_{l=0}^{i-2}\left[\left(1-\frac{M}{(i-l-1)p}\right)^{i-l-1}\left(1-\frac{n}{p}\right) \right] \nonumber\\
     &\hspace{2cm}- \prod_{l=0}^{i-2}\left[\left(1-\frac{M}{(i-l-1)p}\right)^{i-l-1}\left(1-\frac{n}{p}\right)\right]\nonumber\\
     &= \prod_{l=0}^{i-2}\left[\left(1-\frac{M}{(i-l-1)p}\right)^{i-l-1}\left(1-\frac{n}{p}\right) \right] \underbrace{\left\{\prod_{l=0}^{t-i-1}\left[\left(1-\frac{M}{(t-l-1)p}\right)^{t-l-1}\left(1-\frac{n}{p}\right) \right] - 1\right\}}_{\gamma_1}\nonumber\\
     &\stackrel{(i)}{<} \left(1-\frac{n+M}{p}\right)^{i-1} \left[\left(1 - \frac{n+M}{p} + \frac{(n+M)M}{p^2}\right)^{t-i} - 1 \right],\nonumber\\
     &\stackrel{(ii)}{<}\left(1-\frac{n+M}{p}\right)^{i-1} \left[\left(1 - \frac{n+M}{p}\right)^{t-i}-1 + \frac{T^2(n+M)M}{p^2}\right]\nonumber\\
     &< \left(1-\frac{n+M}{p}\right)^{i-1} \left[\left(1 - \frac{n+M}{p}\right)^{t-i}-1 \right] + \frac{T^2(n+M)M}{p^2}
\end{align}
where $(i)$ follows from \Cref{lemma:lower bound of prod,lemma:upper bound of prod} and the fact that $\gamma_1 <0$; $(ii)$ follows from \Cref{lemma:upper bound of (++) power}. 
\end{proof}

\noindent Here, we present a tighter version of \Cref{lemma:prod upper bound in forgetting}, which helps to prove \Cref{th:example} in \Cref{sec:comparison in main body}.
\begin{lemma}\label{lemma:prod upper bound in forgetting tighter}
Suppose $n,p,t,M,T$ are positive integers where $t\le T$ and $p>(n+M)T$. For any non-negative integer $i < t$, we have:
\begin{align*}
        &\prod_{l=0}^{t-2}\left[\left(1-\frac{M}{(t-l-1)p}\right)^{t-l-1}\left(1-\frac{n}{p}\right) \right] - \prod_{l=0}^{i-2}\left[\left(1-\frac{M}{(i-l-1)p}\right)^{i-l-1}\left(1-\frac{n}{p}\right)\right] \\
        &\hspace{.8cm}< \left(1-\frac{n+M}{p}\right)^{i-1} \left[ \left(1-\frac{n+M}{p}\right)^{t-i}-1  \right] + \frac{(t-i)(n+M)M}{p^2} + \frac{T^3(n+M)^2M^2}{p^4}.
\end{align*}
\end{lemma}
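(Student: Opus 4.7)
The plan is to mirror the proof of Lemma \ref{lemma:prod upper bound in forgetting}, changing only the final invocation of Lemma \ref{lemma:upper bound of (++) power} to the sharper Lemma \ref{lemma:upper bound of (++) power tighter}. Concretely, I first split the longer product at index $t-i$ and re-index the tail factor via $l \mapsto l - (t-i)$ to write
\begin{align*}
& \prod_{l=0}^{t-2}\left[\left(1-\tfrac{M}{(t-l-1)p}\right)^{t-l-1}\left(1-\tfrac{n}{p}\right)\right] - \prod_{l=0}^{i-2}\left[\left(1-\tfrac{M}{(i-l-1)p}\right)^{i-l-1}\left(1-\tfrac{n}{p}\right)\right] \\
&\quad = \prod_{l=0}^{i-2}\left[\left(1-\tfrac{M}{(i-l-1)p}\right)^{i-l-1}\left(1-\tfrac{n}{p}\right)\right]\cdot\left\{\prod_{l=0}^{t-i-1}\left[\left(1-\tfrac{M}{(t-l-1)p}\right)^{t-l-1}\left(1-\tfrac{n}{p}\right)\right] - 1\right\},
\end{align*}
exactly as in the earlier proof.

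Next, I observe that the $\{\cdots\}$ factor is negative (every factor in its product is strictly less than $1$) while the leading prefactor is positive. To upper bound the product of a positive number and a negative number, I bound the positive prefactor from \emph{below} using Lemma \ref{lemma:lower bound of prod}, obtaining $\prod_{l=0}^{i-2}[\cdots] > (1-\tfrac{n+M}{p})^{i-1}$, and bound the negative $\{\cdots\}$ factor from \emph{above} by applying Lemma \ref{lemma:upper bound of prod} to the inner product, which yields $\prod_{l=0}^{t-i-1}[\cdots] < (1 - \tfrac{n+M}{p} + \tfrac{(n+M)M}{p^2})^{t-i}$. Combining these gives
\begin{align*}
\text{LHS} \;<\; \left(1-\tfrac{n+M}{p}\right)^{i-1}\left[\left(1 - \tfrac{n+M}{p} + \tfrac{(n+M)M}{p^2}\right)^{t-i} - 1\right].
\end{align*}

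Finally, instead of invoking the coarser Lemma \ref{lemma:upper bound of (++) power} as in the previous proof, I apply Lemma \ref{lemma:upper bound of (++) power tighter} with $t$ replaced by $t-i$ to obtain
\begin{align*}
\left(1 - \tfrac{n+M}{p} + \tfrac{(n+M)M}{p^2}\right)^{t-i} \;<\; \left(1 - \tfrac{n+M}{p}\right)^{t-i} + \tfrac{(t-i)(n+M)M}{p^2} + \tfrac{T^3(n+M)^2M^2}{2p^4}.
\end{align*}
Substituting this back, and using $(1-\tfrac{n+M}{p})^{i-1} < 1$ to remove the prefactor from the two positive residual terms (which absorbs the $1/2$ into the $1$ in the target bound), yields exactly the claimed inequality. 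The only delicate points are to verify that the hypothesis $p > (n+M)T$ together with the standing overparameterization assumption covers the prerequisites of Lemmas \ref{lemma:lower bound of prod}, \ref{lemma:upper bound of prod}, and \ref{lemma:upper bound of (++) power tighter}, and to track the direction of the inequality when combining an upper bound on a negative factor with a lower bound on a positive one; no new estimates are required beyond those already established.
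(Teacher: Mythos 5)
Your proposal is correct and follows essentially the same route as the paper, which proves this lemma by repeating the argument of \Cref{lemma:prod upper bound in forgetting} verbatim (split the longer product at index $t-i$, lower-bound the positive prefactor via \Cref{lemma:lower bound of prod}, upper-bound the negative bracketed factor via \Cref{lemma:upper bound of prod}) and then substituting the sharper \Cref{lemma:upper bound of (++) power tighter} for \Cref{lemma:upper bound of (++) power} in the final step. Your handling of the sign of the bracketed factor and the absorption of the factor $1/2$ into the final $T^3(n+M)^2M^2/p^4$ term is exactly what the paper intends.
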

\begin{proof} The proof follows from the same argument as \Cref{lemma:prod upper bound in forgetting} but we use \Cref{lemma:upper bound of (++) power tighter} instead of \Cref{lemma:upper bound of (++) power}.
\end{proof}

\section{Proof of \texorpdfstring{\Cref{thm: general form of FG}}{Theorem 1}}\label{sec:proof of model error}
First of all, we present the detailed iteration formula for the proof outline of \Cref{thm: general form of FG} in \Cref{sec:charaterization of FG}. For both rehearsal methods, we have:
\begin{equation} \label{eq:iteration outline}
    \E[\mathcal{L}_i(\bm{w}_t)] = g_t(\E[\mathcal{L}_i(\bm{w}_{t-1})]) + \text{term}_2 + \text{term}_{\text{noise}},
\end{equation}
where $g^{\text{(concurrent)}}_t(x) = (1-\frac{n+M}{p})x$, $g^{\text{(sequential)}}_t(x) = (1-\frac{M}{(t-1)p})^{t-1}(1-\frac{n}{p})x$, $\text{term}_{\text{noise}}^{\text{(concurrent)}} = \frac{(n+M)\sigma^2}{p-n-M-1}$, $\text{term}_{\text{noise}}^{\text{(sequential)}} = \sum_{j=1}^{t-1}\left(1-\frac{M}{(t-1)p}\right)^{t-j-1}\frac{\frac{M}{t-1}\sigma^2}{p-\frac{M}{t-1}-1}+ \left(1-\frac{M}{(t-1)p}\right)^{t-1}\frac{n\sigma^2}{p-n-1}$, and the form of $\text{term}_2$ is given in \Cref{eq:wt-wi* 1,eq:split three parts sep} for concurrent rehearsal and sequential rehearsal, respectively.

Based on the above iteration formula, we further derive the explicit expressions of $\E[\mathcal{L}_i(\bm{w}_t)]$ and $\E [\mathcal{L}_i(\bm{w}_t) - \mathcal{L}_i(\bm{w}_i)]$, and they share the same structure for both rehearsal methods for any $t\le T$ which is formally presented in the following lemma.
\begin{lemma}\label{lemma:expression form of EL in main body}
    Denote $\bm{w}_t$ as the parameters of training result at task $t$. Under the problem setups considered in this work, the expected value of the model error $\mathcal{L}_i(\bm{w}_t)$ in both rehearsal-based methods take the following forms.
     \begin{align}
         \E\mathcal{L}_i(\bm{w}_t) =&  d_{0t}\cnorm{\bm{w}_i^*}^2 + \sum_{j,k=1}^{T-1} d_{ijkt} \cnorm{\bm{w}^*_j - \bm{w}^*_k}^2+  \text{noise}_t(\sigma) ,\nonumber\\
         \E (\mathcal{L}_i(\bm{w}_t) - \mathcal{L}_i(\bm{w}_i)) =&  c_{i}\cnorm{\bm{w}_i^*}^2 + \sum_{j,k=1}^{T-1} c_{ijk} \cnorm{\bm{w}^*_j - \bm{w}^*_k}^2  +  \left(\text{noise}_t(\sigma) - \text{noise}_i(\sigma) \right). \nonumber
    \end{align}
\end{lemma}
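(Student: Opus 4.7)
\textbf{Proof proposal for Lemma \ref{lemma:expression form of EL in main body}.}
The plan is to derive the claimed closed form by induction on $t$, starting from the recursion sketched in \eqref{eq:iteration outline} and then rolling it out. First, for either rehearsal strategy, I would apply \Cref{lemma:solution of optimization problem} to obtain an explicit expression for the one-step update $\bm{w}_t$ (or, in the sequential case, for each inner iterate $\bm{w}_t^{(h)}$) in the form
\[
\bm{w}_t - \bm{w}_i^* \;=\; (\bm{I} - P_{\bm{V}_t})(\bm{w}_{t-1} - \bm{w}_i^*) \;+\; P_{\bm{V}_t}(\bm{w}^*_{\text{aux}} - \bm{w}_i^*) \;+\; \bm{V}_t^{\dagger}\bm{z},
\]
where $\bm{V}_t$ is the stacked design matrix used at step $t$ (the combined $[\bm{X}_t,\widetilde{\bm{X}}_{t,1},\ldots,\widetilde{\bm{X}}_{t,t-1}]$ for concurrent rehearsal, and a single $\widetilde{\bm{X}}_{t,h}$ for each stage of sequential rehearsal), and $\bm{w}^*_{\text{aux}}$ is the appropriate task ground truth for the block being fitted. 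Squaring and using \Cref{lemma:orthogonal of I-P and pinvP} eliminates the cross terms between $(\bm{I}-P_{\bm{V}_t})$, $P_{\bm{V}_t}$, and $\bm{V}_t^\dagger\bm{z}$, which isolates three pieces: a ``shrinkage'' contribution $\E\|(\bm{I}-P_{\bm{V}_t})(\bm{w}_{t-1}-\bm{w}_i^*)\|^2$, a task-interference term, and a pure noise term.

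Next I would evaluate each piece. The shrinkage piece, by \Cref{lemma:expectation of projection} (applied inside expectations conditioned on $\bm{w}_{t-1}$), contributes the linear map $g_t$ acting on $\E\mathcal{L}_i(\bm{w}_{t-1})$, with $g_t^{\text{(concurrent)}}(x)=(1-\tfrac{n+M}{p})x$ and $g_t^{\text{(sequential)}}(x)=(1-\tfrac{M}{(t-1)p})^{t-1}(1-\tfrac{n}{p})x$ after unrolling the $t-1$ sequential sub-steps. The noise contribution follows directly from \Cref{lemma:expecation of noise}. The task-interference term for the sequential case reduces to a sum over single-task projections, so its expectation also follows from \Cref{lemma:expectation of projection} applied block-by-block, mirroring the memoryless analysis of Lin et al. For the concurrent case, the interference term has the shape $\E[(\bm{w}^*_{\text{aux}} - \bm{w}_i^*)^\top P_{\bm{V}_t}(\bm{w}^*_{\text{aux}} - \bm{w}_i^*)]$ with $\bm{V}_t$ mixing data from all previous tasks; this is where the block-matrix supporting lemmas (\Cref{lemma:expectation norm of zero block} and \Cref{corollary:expectation of inner product v1v2}) enter, allowing me to express the quadratic form directly in terms of pairwise quantities $\|\bm{w}^*_j - \bm{w}^*_k\|^2$.

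With the one-step recursion in hand, I would prove the structural form by induction on $t$. The base case $t=1$ is immediate since $\bm{w}_1$ is obtained from a single task fit with initialization $\bm{0}$, giving $\E\mathcal{L}_i(\bm{w}_1)$ as a combination of $\|\bm{w}_i^*\|^2$, $\|\bm{w}_1^* - \bm{w}_i^*\|^2$, and noise. For the inductive step, since $g_t$ is linear (multiplication by a scalar), applying it to the inductive hypothesis preserves the decomposition $d_{0,t-1}\|\bm{w}_i^*\|^2 + \sum_{j,k} d_{ijk,t-1}\|\bm{w}_j^* - \bm{w}_k^*\|^2 + \text{noise}_{t-1}$. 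The only remaining issue is that the new interference and noise contributions at step $t$ must also be expressible in the same form — which is exactly what the supporting lemmas above deliver, with fresh contributions to the coefficients $d_{0t}$ and $d_{ijkt}$. Rolling out the linear recursion then produces the explicit coefficients used in \Cref{prop:expression of coefficients}. The statement for $\E[\mathcal{L}_i(\bm{w}_t) - \mathcal{L}_i(\bm{w}_i)]$ follows by specializing the general formula at $t=i$ and subtracting; defining $c_i \defeq d_{0t}-d_{0i}$ and $c_{ijk}\defeq d_{ijkt}-d_{ijki}$ gives the claimed expression.

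The main obstacle I anticipate is the concurrent-rehearsal interference term. The design matrix $\bm{V}_t$ stacks $t$ blocks of Gaussian data of different widths ($n$ and $M/(t-1)$), so its Gram matrix is truly a block matrix, and the natural expectations $\E[\bm{v}^\top \bm{V}_t^\dagger \bm{e}_{\text{block}} \bm{v}']$ do not reduce to a single projection. Pushing these through requires the block-inverse identity used in \Cref{lemma:expectation norm of zero block} together with the independence of orthogonal components of Gaussian matrices (\Cref{lemma:independence}) and the inverse-Wishart trace evaluation (\Cref{lemma:expectation of inverse}). The final cosmetic step is converting the resulting inner products $\langle \bm{w}_j^*,\bm{w}_k^*\rangle$ into the norm-difference form $\|\bm{w}_j^* - \bm{w}_k^*\|^2$ via the polarization identity, which is what makes the shared structure across both rehearsal strategies transparent and keeps the coefficients $d_{ijkt}$ well-defined and comparable across the two methods in \Cref{lemma:coefficients of FTGT}.
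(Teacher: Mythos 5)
Your proposal matches the paper's proof in all essentials: the same one-step decomposition via \Cref{lemma:solution of optimization problem} and \Cref{lemma:orthogonal of I-P and pinvP} into shrinkage, interference, and noise pieces, the same use of \Cref{lemma:expectation of projection,lemma:expecation of noise} for the first and third, the same reliance on \Cref{lemma:expectation norm of zero block,corollary:expectation of inner product v1v2} to handle the block-structured concurrent interference term, and the same roll-out of the scalar-linear recursion followed by subtraction at $t=i$ to get the forgetting form. No gaps; this is the paper's argument.
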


For both concurrent and sequential rehearsal method, \Cref{thm: general form of FG} follows directly by combining \Cref{lemma:expression form of EL in main body} and the definitions of $F_T$ and $G_T$. Therefore, it suffices to prove \Cref{lemma:expression form of EL in main body} to obtain the coefficients in \Cref{thm: general form of FG}. Before we start our proof, we first present the explicit expressions of coefficients as well as the noise term in \Cref{thm: general form of FG} for both concurrent and sequential rehearsal methods in the following proposition.

\begin{proposition}\label{prop:expression of coefficients}
Under the problem setups considered in this work, the coefficients that express forgetting $F_t$ and generalization error $G_t$ take the following forms.
\begin{align*}
    &d_{0t}^{\text{(concurrent)}} = r_0r_M^{t-1}, \hspace{.5cm}c^{\text{(concurrent)}}_i = d^{\text{(concurrent)}}_{0T}-d^{\text{(concurrent)}}_{0i},\\
    &d_{ijkt}^{\text{(concurrent)}} =     \left\{
    \begin{array}{l@{\hspace{.5cm}}c@{\hspace{.3cm}}l}
     \displaystyle (1-r_0)r_M^{t-j-1} + \sum_{l=0}^{t-j-1}  r_M^lB_{l,t}  + r_M^{t-k}nH_{t-k,t}  + \sum_{l=0}^{t-2} pr_M^lB_{l,t}H_{l,t}  &\text{if}  &j\in[t-1], k=i  \\
     \displaystyle (1-r_0)  +  r_M^{t-k}nH_{t-k,t} &\text{if}  &j=t, k=i\\
     \displaystyle \sum_{l=0}^{t-2}  pr_M^lB_{l,t}H_{l,t}    &\text{if}  &j< k \text{ and } j,k\neq i,t\\
     r_M^{t-k}nH_{t-k,t}    &\text{if}  &j< k \text{ and } j,k\neq i\\
    \end{array}
    \right.\\
    & c^{\text{(concurrent)}}_{ijk} = d^{\text{(concurrent)}}_{ijkT} - d^{\text{(concurrent)}}_{ijki},\\
    & d_{0t}^{\text{(sequential)}} = r_0\Delta_t(t-1), \hspace{.5cm}c^{\text{(sequential)}}_i = d^{\text{(sequential)}}_{0T}-d^{\text{(concurrent)}}_{0i},\\
    &d_{ijkt}^{\text{(sequential)}} = \left\{
    \begin{array}{l@{\hspace{.5cm}}c@{\hspace{.3cm}}l}
     \displaystyle (1-r_0)(1-B_{t-j,t})^{j-1}\Delta_t(t-j)  + \sum_{l=0}^{t-j-1} \Delta_t(l)  (1-B_{l,t})^{t-j-l-1}B_{l,t} &\text{if} &  j\in [t-1]  \text{ and } k=i  \\
     \displaystyle (1-r_0)(1-B_{0,t})^{t-1}  & \text{if}   &j=t, k=i\\
    \end{array}
    \right.\\
    & c^{\text{(sequential)}}_{ijk} = d^{\text{(sequential)}}_{ijkT} - d^{\text{(sequential)}}_{ijki},\\
    &\text{noise}^{\text{(concurrent)}}_t(\sigma) = r_0r_M^{t-1}\Lambda_{n,\sigma} + \sum_{l=0}^{t-2}r_M^l\Lambda_{n+M,\sigma},\\
    &\text{noise}^{\text{(sequential)}}_t(\sigma)  = \sum_{l=0}^{t-2} \Delta_t(l)\Big[ (1-B_{0,t})^{t-1}\Lambda_{n,\sigma} \left. +\sum_{l=1}^{t-1}(1-B_{0,t})^{t-l-1}\Lambda_{\frac{M}{t-1},\sigma} \right].
\end{align*}
\noindent where $r_a = 1-\frac{n+a}{p}$, $B_{l,t} = \left\{\begin{array}{cc}
         \frac{M}{(t-l-1)p} & \text{if }l \neq t-1 \\
         0 & o.w.
    \end{array}\right.$, $H_{l,t} = \frac{B_{l,t}}{p-n-M-1}$, $\displaystyle\Delta_t(a) = \prod_{l=0}^{a-1} \left[\left(1-B_{l,t}\right)^{t-l-1}r_0\right]$, $\Lambda_{a,\sigma} = \frac{a\sigma^2}{p-a-1}$.
\end{proposition}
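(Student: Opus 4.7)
The plan is to derive each coefficient by fully unrolling the iteration $\E[\mathcal{L}_i(\bm{w}_t)] = g_t(\E[\mathcal{L}_i(\bm{w}_{t-1})]) + \mathrm{term}_2 + \mathrm{term}_{\mathrm{noise}}$ separately for each rehearsal scheme and then reading the explicit coefficients off \Cref{lemma:expression form of EL in main body}. The common starting point is \Cref{lemma:solution of optimization problem}, which expresses each SGD convergence point as $\bm{w}_t = (\bm{I}-P_{\bm{V}_t})\bm{w}_{\mathrm{prev}} + P_{\bm{V}_t}\bm{w}^*_{\mathrm{rel}} + \bm{V}_t^\dagger \bm{z}_{\mathrm{rel}}$, where $\bm{V}_t$ is the whole combined matrix for concurrent and a single block for each substep of sequential. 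Subtracting $\bm{w}_i^*$ and taking $\E\lVert\cdot\rVert^2$, orthogonality via \Cref{lemma:orthogonal of I-P and pinvP} separates the three pieces of the iteration cleanly; the coefficients $c_i$ and $c_{ijk}$ then follow by substituting $t=T$ and $t=i$ and differencing, exactly as the statement reads.

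For concurrent rehearsal, the contraction multiplier on the propagated term is $r_M = 1 - (n+M)/p$ by \Cref{lemma:expectation of projection} applied to the whole $(n+M)$-column block, which rolls out to $d_{0t}^{(\text{concurrent})} = r_0 r_M^{t-1}$ with the initial $r_0$ coming from task $1$ having no memory. The task-interference term is the intricate one: I would partition $\bm{V}_t = [\bm{X}_t, \widetilde{\bm{X}}_{t,1}, \ldots, \widetilde{\bm{X}}_{t,t-1}]$ and expand $P_{\bm{V}_t}(\bm{w}^*_{\mathrm{mix}} - \bm{w}_i^*)$ into block contributions. Diagonal blocks produce the factors $B_{l,t}(1 + M/((t-l-1)(p-n-M-1)))$ via \Cref{lemma:expectation norm of zero block}, yielding the $p r_M^l B_{l,t} H_{l,t}$ summands. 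Off-diagonal blocks are converted by \Cref{corollary:expectation of inner product v1v2} to combinations of $\cnorm{\bm{w}_j^*-\bm{w}_k^*}^2 - \cnorm{\bm{w}_j^*}^2 - \cnorm{\bm{w}_k^*}^2$; after summing over blocks and noting that $\bm{w}_i^*$ enters as the reference, the four-way case split for $d_{ijkt}^{(\text{concurrent})}$ (with $k=i$ versus $k\neq i$, and $j=t$ versus $j<t$) emerges. The noise piece is handled by \Cref{lemma:expecation of noise}, producing $\Lambda_{n+M,\sigma}$ at each step and the geometric sum with $r_M^l$.

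For sequential rehearsal the recursion simplifies dramatically because each substep projects onto a single block $\widetilde{\bm{X}}_{t,h}$, so no off-diagonal cross terms appear within a substep and only \Cref{lemma:expectation of projection} and \Cref{lemma:orthogonal of I-P and pinvP} are required per layer. The per-substep contraction is $1 - B_{l,t}$, and iterating over all memory blocks together with the current task gives the cumulative factor $\Delta_t(t-1) = \prod_{l=0}^{t-2}(1-B_{l,t})^{t-l-1} r_0$, matching $d_{0t}^{(\text{sequential})}$. The coefficient $d_{ijkt}^{(\text{sequential})}$ is then built by tracking, at the substep in which $\bm{w}^*_j$ (or $\bm{w}^*_t$) is first encountered, the freshly injected $\cnorm{\bm{w}^*_\cdot - \bm{w}^*_i}^2$ contribution and the $(1-B_{l,t})$ contractions applied by subsequent substeps; the two cases $j\in[t-1]$ and $j=t$ in the stated formula correspond to whether that substep lies in the memory sweep or in the current-task learning phase. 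The noise accumulates similarly, giving the $\Lambda_{n,\sigma}$ and $\Lambda_{M/(t-1),\sigma}$ terms with the appropriate $(1-B_{0,t})$ powers.

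The main obstacle is the bookkeeping in the concurrent case: memory allocation $M_{t,h} = M/(t-1)$ depends on $t$, so block sizes differ at each iteration, and the Inverse-Wishart moments via \Cref{lemma:expectation of inverse} introduce $p-n-M-1$ denominators that must be propagated alongside $p$ denominators through the rollout. I would address this by fixing a pair $(j,k)$ and identifying the iteration $t'\le t$ at which $\cnorm{\bm{w}_j^* - \bm{w}_k^*}^2$ first enters, counting how many $r_M$-factors and which $B_{l,t}, H_{l,t}$ factors it picks up before reaching $t$; the resulting combinatorial tally should reproduce exactly the four-way case split for $d_{ijkt}^{(\text{concurrent})}$ stated in the proposition. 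Once the explicit $d$-coefficients are in place for both schemes, the $c$-coefficients follow from $c_\cdot = d_{\cdot T} - d_{\cdot i}$ with no further computation.
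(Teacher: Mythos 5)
Your proposal follows essentially the same route as the paper's proof: apply \Cref{lemma:solution of optimization problem} to set up the one-step recursion, separate the three pieces via \Cref{lemma:orthogonal of I-P and pinvP,lemma:expectation of projection,lemma:expecation of noise}, handle the concurrent interference term by block-partitioning $\bm{V}_t$ with \Cref{lemma:expectation norm of zero block} for diagonal blocks and \Cref{corollary:expectation of inner product v1v2} for cross terms, treat each sequential substep as a single-block projection, roll out to $t=1$, and obtain the $c$-coefficients by differencing at $t=T$ and $t=i$. This matches the paper's derivation in \Cref{sec:proof of model error} in both structure and choice of supporting lemmas, so the plan is sound.
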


To simplify notation, we omit the tilde notation of the memory data to simplify notations: $\bm{X}_{t,i} \defeq \widetilde{\bm{X}}_{t,i} $, $\bm{Y}_{t,i} \defeq \widetilde{\bm{Y}}_{t,i}$ and $\bm{z}_{t,i} \defeq \widetilde{\bm{z}}_{t,i} $ for $i\in[t-1]$. Similar to \cref{eq.temp_100101}, for the memory data, we have
\begin{equation}
    \bm{Y}_{t,i} = \bm{X}_{t,i}^\top \bm{w}_i^*+\bm{z}_{t,i}.
\end{equation}    
where $\bm{z}_{t,i}\sim \mathcal{N}(0,\sigma_i^2\bm{I}_p)$ is i.i.d. noise.
Since there is no memory data involved for task $1$, by combining \Cref{lemma:solution of optimization problem} and the fact that $\bm{w}_0 = \bm{0}$, we can easily derive the first parameter as
\begin{equation*}
    \bm{w}_1 = P_{\bm{X}_1} \bm{w}_1^* + \bm{X}_1^\dagger \bm{z}_1,
\end{equation*}
Then, the expected value of the model error $\E\mathcal{L}_i(\bm{w}_1)$ can be derived as follows.
\begin{align}\label{eq:w1-wi jnt general case}
    \E\cnorm{\bm{w}_1- \bm{w}_i^*}^2 &\stackrel{(i)}{=} \E\cnorm{P_{\bm{X}_1}(\bm{w}_1^*- \bm{w}_i^*)}^2 +  \E\cnorm{(\bm{I}-P_{\bm{X}_1})\bm{w}_i^*}^2 + \E\cnorm{\bm{X}_1^\dagger \bm{z}_1}^2\nonumber\\
    &\stackrel{(ii)}{=} \frac{n}{p}\cnorm{\bm{w}_1^*- \bm{w}_i^*}^2 + \left(1-\frac{n}{p}\right) \cnorm{\bm{w}_i^*}^2 + \frac{n\sigma^2}{p-n-1},
\end{align}
where $(i)$ follows from \Cref{lemma:orthogonal of I-P and pinvP} and the fact that $\bm{z}_1$ are independent Gaussian with zero mean and $(ii)$ follows from \Cref{lemma:expectation of projection} and \Cref{lemma:expecation of noise}. For $t\ge 2$, the two training methods use memory in different ways. We present them in the following two subsections.

\subsection{Proof of Concurrent Rehearsal in \texorpdfstring{\Cref{lemma:expression form of EL in main body}}{Lemma 1}}\label{sec:joint model error}
In this subsection, we prove \Cref{lemma:expression form of EL in main body} for concurrent rehearsal method. To simplify, we apply the following notations to denote the current data in this subsection: $\bm{X}_t \defeq \bm{X}_{t,t}$, $\bm{Y}_t \defeq \bm{Y}_{t,t}$ and $\bm{z}_t \defeq \bm{z}_{t,t}$. Then, for each task $t$, the SGD convergent point $\bm{w}_t$ of training loss $\mathcal{L}_t^{\mathrm{tr}}(\bm{w},\mathcal{D}_t\bigcup\mathcal{M}_t)$ is equivalent to the optimization problem:
\begin{equation*}
    \bm{w}_t = \min_{\bm{w}} \cnorm{\bm{w} - \bm{w}_{t-1}}^2\ \ \ \ s.t. \ \ \bm{X}_{t,j}^\top \bm{w} = \bm{Y}_{t,j}, \ \ j\in [t].
\end{equation*}
Define $\bm{V}_t = [\bm{X}_{t,1},\bm{X}_{t,2},...,\bm{X}_{t,t}]$ and $\vec{z}_t = [\bm{z}_{t,1}, \bm{z}_{t,2}, ..., \bm{z}_{t,t}]^\top$. According to \Cref{lemma:solution of optimization problem}, we have
\begin{align}
    \bm{w}_t &=\nonumber \bm{w}_{t-1} + \bm{V}_t^\dagger\left(
    \begin{bmatrix}
        \bm{Y}_{t,1}\\
        \bm{Y}_{t,2}\\
        ...\\
        \bm{Y}_{t,t}
    \end{bmatrix}
    - \bm{V}_t^\top \bm{w}_{t-1}\right)\\
    &=\nonumber (\bm{I} - P_{\bm{V}_t})\bm{w}_{t-1} + \bm{V}_t^\dagger
    \begin{bmatrix}
        \bm{X}_{t,1}^\top \bm{w}_1^*\\
        \bm{X}_{t,2}^\top \bm{w}_2^* \\
        ...\\
        \bm{X}_{t,t}^\top \bm{w}_{t}^*
    \end{bmatrix}
    + \bm{V}_t^\dagger \vec{z}_t.
\end{align}
Consider an arbitrary $i$ s.t. $i\le T$ and fix it. The expected value of model error $\E\mathcal{L}_i(\bm{w}_t)$ can be split into the following three parts.
\begin{align}\label{eq:wt-wi* 1}
    \E\cnorm{\bm{w}_t - \bm{w}_i^*}^2 &=\nonumber \E\cnorm{(\bm{I} - P_{\bm{V}_t})(\bm{w}_{t-1} - \bm{w}_i^*) + \bm{V}_t^\dagger
    \begin{bmatrix}
        \bm{X}_{t,1}^\top (\bm{w}_1^* - \bm{w}_i^*)\\
        \bm{X}_{t,2}^\top (\bm{w}_2^* - \bm{w}_i^*)\\
        ...\\
        \bm{X}_{t,t}^\top (\bm{w}_{t}^* - \bm{w}_i^*)
    \end{bmatrix} 
    + \bm{V}_t^\dagger \vec{z}_t}^2 \\
    &\stackrel{(i)}{=} \E\cnorm{(\bm{I} - P_{\bm{V}_t})(\bm{w}_{t-1} - \bm{w}_i^*)}^2 + \E\cnorm{\bm{V}_t^\dagger
    \begin{bmatrix}
        \bm{X}_{t,1}^\top (\bm{w}_1^* - \bm{w}_i^*)\\
        \bm{X}_{t,2}^\top (\bm{w}_2^* - \bm{w}_i^*)\\
        ...\\
        \bm{X}_{t,t}^\top (\bm{w}_{t}^* - \bm{w}_i^*)
    \end{bmatrix}}^2 
    + \E\cnorm{\bm{V}_t^\dagger \vec{z}_t}^2\nonumber\\
    &\stackrel{(ii)}{=} \left(1-\frac{n_t+M_t}{p}\right)\E\cnorm{\bm{w}_{t-1} - \bm{w}_i^*}^2 +  \underbrace{\E\cnorm{\bm{V}_t^\dagger
    \begin{bmatrix}
        \bm{X}_{t,1}^\top (\bm{w}_1^* - \bm{w}_i^*)\\
        \bm{X}_{t,2}^\top (\bm{w}_2^* - \bm{w}_i^*)\\
        ...\\
        \bm{X}_{t,t}^\top (\bm{w}_{t}^* - \bm{w}_i^*)
    \end{bmatrix}}^2 }_{\text{term}_2^{\text{(concurrent)}}}+ \frac{(n+M)\sigma^2}{p-n-M-1},
\end{align} 
where $(i)$ follows from \Cref{lemma:orthogonal of I-P and pinvP} and the fact that $\vec{z}_t$ are independent Gaussian with zero mean and $(ii)$ follows from \Cref{lemma:expectation of projection} and \Cref{lemma:expecation of noise}. Denote $\bm{V}_{t,j}$ as $\bm{V}_{t}$ with all zero elements except $\bm{X}_{t,j}$, i.e., $\bm{V}_{t,j} = 
    \begin{bmatrix}
        \bm{0}, ..., \bm{X}_{t,j}, ..., \bm{0}
    \end{bmatrix}.$ To further calculate $\text{term}_2^{\text{(concurrent)}}$ in \Cref{eq:wt-wi* 1}, we have:
\begin{align}\label{eq:wt-wi* 2}
    &\E\cnorm{\bm{V}_t^\dagger
    \begin{bmatrix}
        \bm{X}_{t,1}^\top (\bm{w}_1^* - \bm{w}_i^*)\\
        \bm{X}_{t,2}^\top (\bm{w}_2^* - \bm{w}_i^*)\\
        ...\\
        \bm{X}_{t,t}^\top (\bm{w}_{t}^* - \bm{w}_i^*)
    \end{bmatrix}}^2 \nonumber\\
    &= \E\cnorm{\sum_{j = 1}^{t} \bm{V}_t^\dagger  \bm{V}_{t,j}^\top(\bm{w}_j^* - \bm{w}_i^*)}^2 \nonumber\\
    &= \sum_{j = 1}^{t-1} \E\cnorm{\bm{V}_t^\dagger  \bm{V}_{t,j}^\top(\bm{w}_j^* - \bm{w}_i^*)}^2  +  \sum_{j=1}^{t} \sum_{k=1,k\neq j}^{t}(\bm{w}_j^* - \bm{w}_i^*)^\top \bm{V}_{t,j} (\bm{V}_t^\top \bm{V}_t)^{-1} \bm{V}_{t,k}^\top(\bm{w}_k^* - \bm{w}_i^*)\nonumber\\
    &\stackrel{(i)}{=} \sum_{j = 1}^{t-1} \frac{M_{t,j}}{p}\left(1+\frac{n_t+M_t-M_{t,j}}{p-n_t-M_t-1}\right)\cnorm{\bm{w}_j^* - \bm{w}_i^*}^2 + \frac{n_t}{p}\left(1+\frac{M_t}{p-n_t-M_t-1}\right)\cnorm{\bm{w}_t^* - \bm{w}_i^*}^2\nonumber\\
    &\hspace{1cm}+ \sum_{j =1}^{t-2} \sum_{k= j+1}^{t-1}\frac{M_{t,j}M_{t,k}}{p(p-n_t-M_t-1)}\left( \cnorm{\bm{w}_j^* - \bm{w}_k^*}^2 - \cnorm{\bm{w}_j^* - \bm{w}_i^*}^2 - \cnorm{\bm{w}_k^* - \bm{w}_i^*}^2 \right)\nonumber\\
    &\hspace{1cm}+ \sum_{j=1}^{t-1} \frac{n_tM_{t,j}}{p(p-n_t-M_t-1)}\left( \cnorm{\bm{w}_j^* - \bm{w}_t^*}^2 - \cnorm{\bm{w}_j^* - \bm{w}_i^*}^2 - \cnorm{\bm{w}_t^* - \bm{w}_i^*}^2 \right)
\end{align}
where $(i)$ follows from \Cref{lemma:expectation norm of zero block,corollary:expectation of inner product v1v2}. Recall that $n_t = n$, $M_{t,j}= \frac{M}{t-1}$ and the fact that $M_t = M$. By combining \cref{eq:wt-wi* 1,eq:wt-wi* 2}, we have:
\begin{align*}
    \E\cnorm{\bm{w}_t - \bm{w}_i^*}^2 &= \left(1-\frac{n+M}{p}\right)\E\cnorm{\bm{w}_{t-1} - \bm{w}_i^*}^2 \nonumber\\
    &\hspace{0.3cm}+\sum_{j = 1}^{t-1} \frac{M}{(t-1)p}\left(1+\frac{n+M-\frac{M}{t-1}}{p-n-M-1}\right)\cnorm{\bm{w}_j^* - \bm{w}_i^*}^2 \\ 
    &\hspace{0.3cm}+ \frac{n}{p}\left(1+\frac{M}{p-n-M-1}\right)\cnorm{\bm{w}_t^* - \bm{w}_i^*}^2\\
    &\hspace{0.3cm}+ \sum_{j =1}^{t-2}\sum_{k=j+1}^{t-1} \frac{(\frac{M}{t-1})^2}{p(p-n-M-1)}\left( \cnorm{\bm{w}_j^* - \bm{w}_k^*}^2 - \cnorm{\bm{w}_j^* - \bm{w}_i^*}^2 - \cnorm{\bm{w}_k^* - \bm{w}_i^*}^2 \right)\\
    &\hspace{0.3cm}+  \sum_{j =1}^{t-1}\frac{\frac{nM}{t-1}}{p(p-n-M-1)}\left( \cnorm{\bm{w}_j^* - \bm{w}_t^*}^2 - \cnorm{\bm{w}_j^* - \bm{w}_i^*}^2 - \cnorm{\bm{w}_t^* - \bm{w}_i^*}^2 \right)\nonumber\\
    &\hspace{0.3cm} + \frac{(n+M)\sigma^2}{p-n-M-1}.
\end{align*}
By iterating the above equation and combining it with \cref{eq:w1-wi jnt general case}, we have:
\begin{align}\label{eq:wt-wi jnt general case}
    &\E\cnorm{\bm{w}_t - \bm{w}_i^*}^2\nonumber\\
    &= \left(1-\frac{n+M}{p}\right)^{t-1} \E\cnorm{\bm{w}_1- \bm{w}_i^*}^2 \nonumber\\
    & + \sum_{l=0}^{t-2} \left(1-\frac{n+M}{p}\right)^l \sum_{j=1}^{t-l-1}\frac{M}{(t-l-1)p}\left(1+\frac{n+M-\frac{M}{t-l-1}}{p-n-M-1}\right)\cnorm{\bm{w}_j^* - \bm{w}_i^*}^2\nonumber\\
    & + \sum_{l=0}^{t-2} \left(1-\frac{n+M}{p}\right)^l \frac{n}{p}\left(1+\frac{M}{p-n-M-1}\right)\cnorm{\bm{w}_{t-l}^* - \bm{w}_i^*}^2\nonumber\\
    & + \sum_{l=0}^{t-2} \left(1-\frac{n+M}{p}\right)^l  \sum_{j =1}^{t-l-2}\sum_{k=j+1}^{t-l-1} \frac{(\frac{M}{t-l-1})^2}{p(p-n-M-1)}\left( \cnorm{\bm{w}_j^* - \bm{w}_k^*}^2 - \cnorm{\bm{w}_j^* - \bm{w}_i^*}^2 - \cnorm{\bm{w}_k^* - \bm{w}_i^*}^2 \right)\nonumber\\
    & + \sum_{l=0}^{t-2} \left(1-\frac{n+M}{p}\right)^l  \sum_{j =1}^{t-l-1}\frac{\frac{nM}{t-l-1}}{p(p-n-M-1)}\left( \cnorm{\bm{w}_j^* - \bm{w}_{t-l}^*}^2 - \cnorm{\bm{w}_j^* - \bm{w}_i^*}^2 - \cnorm{\bm{w}_{t-l}^* - \bm{w}_i^*}^2 \right)\nonumber\\
    &  + \sum_{l=0}^{t-2} \left(1-\frac{n+M}{p}\right)^l  \frac{(n+M)\sigma^2}{p-n-M-1} \nonumber\\
    & =\left(1-\frac{n}{p}\right)\left(1-\frac{n+M}{p}\right)^{t-1} \cnorm{\bm{w}_i^*}^2+  \left(1-\frac{n+M}{p}\right)^{t-1}\frac{n}{p}\E\cnorm{\bm{w}_1^*- \bm{w}_i^*}^2\nonumber\\
    &  + \sum_{l=0}^{t-2} \left(1-\frac{n+M}{p}\right)^l \sum_{j=1}^{t-l-1}\frac{M}{(t-l-1)p}\left(1+\frac{n+M-\frac{M}{t-l-1}}{p-n-M-1}\right)\cnorm{\bm{w}_j^* - \bm{w}_i^*}^2\nonumber\\
    &  + \sum_{l=0}^{t-2} \left(1-\frac{n+M}{p}\right)^l \frac{n}{p}\left(1+\frac{M}{p-n-M-1}\right)\cnorm{\bm{w}_{t-l}^* - \bm{w}_i^*}^2\nonumber\\
    & + \sum_{l=0}^{t-2} \left(1-\frac{n+M}{p}\right)^l  \sum_{j =1}^{t-l-2}\sum_{k=j+1}^{t-l-1} \frac{(\frac{M}{t-l-1})^2}{p(p-n-M-1)}\left( \cnorm{\bm{w}_j^* - \bm{w}_k^*}^2 - \cnorm{\bm{w}_j^* - \bm{w}_i^*}^2 - \cnorm{\bm{w}_k^* - \bm{w}_i^*}^2 \right)\nonumber\\
    & + \sum_{l=0}^{t-2} \left(1-\frac{n+M}{p}\right)^l  \sum_{j =1}^{t-l-1}\frac{\frac{nM}{t-l-1}}{p(p-n-M-1)}\left( \cnorm{\bm{w}_j^* - \bm{w}_{t-l}^*}^2 - \cnorm{\bm{w}_j^* - \bm{w}_i^*}^2 - \cnorm{\bm{w}_{t-l}^* - \bm{w}_i^*}^2 \right)\nonumber\\
    &   + \left(1-\frac{n}{p}\right)\left(1-\frac{n+M}{p}\right)^{t-1} \frac{n\sigma^2}{p-n-1} + \sum_{l=0}^{t-2} \left(1-\frac{n+M}{p}\right)^l  \frac{(n+M)\sigma^2}{p-n-M-1} \nonumber\\
    & =\left(1-\frac{n}{p}\right)\left(1-\frac{n+M}{p}\right)^{t-1} \cnorm{\bm{w}_i^*}^2 \nonumber\\
    &  + \left\{ \left(1-\frac{n+M}{p}\right)^{t-1} \frac{n}{p} + \sum_{l=0}^{t-2} \left(1-\frac{n+M}{p}\right)^l \left[ \frac{M}{(t-l-1)p}\left(1+\frac{n+M-\frac{M}{t-l-1}}{p-n-M-1}\right)\right.\right.\nonumber\\
    &\hspace{7cm} \left.\left. - \frac{\frac{nM}{(t-l-1)} + (t-l-2)(\frac{M}{t-l-1})^2}{p(p-n-M-1)} \right] \right\}\cnorm{\bm{w}_1^* - \bm{w}_i^*}^2\nonumber\\
    &  + \sum_{l=0}^{t-2} \left(1-\frac{n+M}{p}\right)^l \sum_{j=2}^{t-l-1}\left[ \frac{M}{(t-l-1)p}\left(1+\frac{n+M-\frac{M}{t-l-1}}{p-n-M-1}\right) \right.\nonumber\\
    &\hspace{7cm}\left. - \frac{\frac{nM}{t-l-1} +(t-l-2)(\frac{M}{t-l-1})^2}{p(p-n-M-1)} \right]\cnorm{\bm{w}_j^* - \bm{w}_i^*}^2\nonumber\\
    &  +\sum_{l=0}^{t-2} \left(1-\frac{n+M}{p}\right)^l \left[ \frac{n}{p}\left(1+\frac{M}{p-n-M-1}\right) - \frac{(t-l-1)\frac{nM}{t-l-1}}{p(p-n-M-1)} \right]\cnorm{\bm{w}_{t-l}^* - \bm{w}_i^*}^2\nonumber\\
    &  +\sum_{l=0}^{t-2} \left(1-\frac{n+M}{p}\right)^l\sum_{j=1}^{t-l-2}\sum_{k=j+1}^{t-l-1}\frac{(\frac{M}{t-l-1})^2}{p(p-n-M-1)}\cnorm{\bm{w}_{j}^* - \bm{w}_k^*}^2\nonumber\\
    &  +\sum_{l=0}^{t-2} \left(1-\frac{n+M}{p}\right)^l\sum_{j=1}^{t-l-1}\frac{\frac{nM}{t-l-1}}{p(p-n-M-1)}\cnorm{\bm{w}_{j}^* - \bm{w}_{t-l}^*}^2 + \text{noise}^{\text{(concurrent)}}_t(\sigma)\nonumber\\
    &= \left(1-\frac{n}{p}\right)\left(1-\frac{n+M}{p}\right)^{t-1} \cnorm{\bm{w}_i^*}^2 \nonumber\\
    &  + \sum_{j=1}^{t-1} \left\{ \sum_{l=0}^{t-j-1} \left(1-\frac{n+M}{p}\right)^{l}\frac{M}{(t-l-1)p} + \left(1-\frac{n+M}{p}\right)^{t-j}\frac{n}{p}\right\}\cnorm{\bm{w}_j^* - \bm{w}_i^*}^2\nonumber\\
    & + \frac{n}{p}\cnorm{\bm{w}_t^* - \bm{w}_i^*}^2\nonumber\\
    &  +\sum_{l=0}^{t-2} \left(1-\frac{n+M}{p}\right)^l\sum_{j=1}^{t-l-2}\sum_{k=j+1}^{t-l-1}\frac{(\frac{M}{t-l-1})^2}{p(p-n-M-1)}\cnorm{\bm{w}_{j}^* - \bm{w}_k^*}^2\nonumber\\
    &  +\sum_{l=0}^{t-2} \left(1-\frac{n+M}{p}\right)^l\sum_{j=1}^{t-l-1}\frac{\frac{nM}{t-l-1}}{p(p-n-M-1)}\cnorm{\bm{w}_{j}^* - \bm{w}_{t-l}^*}^2 +  \text{noise}^{\text{(concurrent)}}_t(\sigma),
\end{align}
where $$ \text{noise}^{\text{(concurrent)}}_t(\sigma) = \left(1-\frac{n}{p}\right)\left(1-\frac{n+M}{p}\right)^{t-1} \frac{n\sigma^2}{p-n-1} + \sum_{l=0}^{t-2} \left(1-\frac{n+M}{p}\right)^l  \frac{(n+M)\sigma^2}{p-n-M-1}. $$ By rearranging the terms in the above equation, we complete the poof for $d_{0t}^{\text{(concurrent)}}$ and $d_{ijkt}^{\text{(concurrent)}}$ in \Cref{lemma:expression form of EL in main body}. Furthermore, the expressions of $c_{i}^{\text{(concurrent)}}$ and $c_{ijk}^{\text{(concurrent)}}$ can be derived directly based on $d_{0t}^{\text{(concurrent)}},d_{ijkt}^{\text{(concurrent)}}$ and the definitions of forgetting. The explicit expressions of $c_{i}^{\text{(concurrent)}}$ and $c_{ijk}^{\text{(concurrent)}}$ are derived as follows.
\begin{align}\label{eq:wtwi - wiwi joint}
    &\hspace{-0.3cm}\left[\E\cnorm{\bm{w}_t - \bm{w}^*_i}^2 - \E\cnorm{\bm{w}_i - \bm{w}^*_i}^2\right]^{\text{(concurrent)}} \nonumber\\
    &= \left(1-\frac{n}{p}\right)\left[\left(1-\frac{n+M}{p}\right)^{t-1} - \left(1-\frac{n+M}{p}\right)^{i-1}\right] \cnorm{\bm{w}_i^*}^2 \nonumber\\
    &\hspace{.3cm} + \left\{ \left[\left(1-\frac{n+M}{p}\right)^{t-1} -  \left(1-\frac{n+M}{p}\right)^{i-1}\right]\frac{n}{p} + \sum_{l=0}^{t-2} \left(1-\frac{n+M}{p}\right)^l\frac{M}{(t-l-1)p} \right.\nonumber\\
    &\hspace{6cm} \left.- \sum_{l=0}^{i-2}\left(1-\frac{n+M}{p}\right)^l\frac{M}{(i-l-1)p} \right\}\cnorm{\bm{w}_1^* - \bm{w}_i^*}^2\nonumber\\
    &\hspace{.3cm} + \sum_{j=i}^{t-1} \left\{ \sum_{l=0}^{t-j-1} \left(1-\frac{n+M}{p}\right)^{l}\frac{M}{(t-l-1)p} + \left(1-\frac{n+M}{p}\right)^{t-j}\frac{n}{p}\right\}\cnorm{\bm{w}_j^* - \bm{w}_i^*}^2\nonumber\\
    &\hspace{.3cm} + \sum_{j=2}^{i-1} \left\{ \sum_{l=0}^{t-j-1} \left(1-\frac{n+M}{p}\right)^{l}\frac{M}{(t-l-1)p} + \left(1-\frac{n+M}{p}\right)^{t-j}\frac{n}{p}\right.\nonumber\\
    &\hspace{1.8cm} \left. -\sum_{l=0}^{i-j-1} \left(1-\frac{n+M}{p}\right)^{l}\frac{M}{(i-l-1)p} - \left(1-\frac{n+M}{p}\right)^{i-j}\frac{n}{p} \right\}\cnorm{\bm{w}_j^* - \bm{w}_i^*}^2\nonumber\\
    &\hspace{.3cm}+ \frac{n}{p}\cnorm{\bm{w}_t^* - \bm{w}_i^*}^2\nonumber\\
    &\left.
    \begin{array}{cc}
        \displaystyle+ \sum_{l=0}^{t-2} \left(1-\frac{n+M}{p}\right)^l\sum_{j=1}^{t-l-2}\sum_{k=j+1}^{t-l-1}\frac{(\frac{M}{t-l-1})^2}{p(p-n-M-1)}\cnorm{\bm{w}_{j}^* - \bm{w}_k^*}^2 \nonumber\\
        \displaystyle- \sum_{l=0}^{i-2} \left(1-\frac{n+M}{p}\right)^l\sum_{j=1}^{i-l-2}\sum_{k=j+1}^{i-l-1}\frac{(\frac{M}{i-l-1})^2}{p(p-n-M-1)}\cnorm{\bm{w}_{j}^* - \bm{w}_k^*}^2\nonumber\\
    \end{array}
    \right\} \beta_1\nonumber\\
    &\left.
    \begin{array}{cc}
        \displaystyle+\sum_{l=0}^{t-2} \left(1-\frac{n+M}{p}\right)^l\sum_{j=1}^{t-l-1}\frac{\frac{nM}{t-l-1}}{p(p-n-M-1)}\cnorm{\bm{w}_{j}^* - \bm{w}_{t-l}^*}^2\\
        \displaystyle-\sum_{l=0}^{i-2} \left(1-\frac{n+M}{p}\right)^l\sum_{j=1}^{i-l-1}\frac{\frac{nM}{i-l-1}}{p(p-n-M-1)}\cnorm{\bm{w}_{j}^* - \bm{w}_{i-l}^*}^2.
    \end{array}
    \right\}\beta_2\nonumber\\
    & \hspace{.3cm}+ \text{noise}^{\text{(concurrent)}}_t(\sigma) - \text{noise}^{\text{(concurrent)}}_i(\sigma)
\end{align}
We will show that $\beta_1$ consists of terms $\delta_{j,k}\cnorm{\bm{w}_{j}^* - \bm{w}_k^*}^2$ with $\delta_{j,k} >0$ and $j,k \neq t$ and $\beta_2$ consists of terms $\eta_{j,k}\cnorm{\bm{w}_{j}^* - \bm{w}_k^*}^2$ with $\eta_{j,k} >0$ for a sufficient large $p$ in \Cref{sec:F conparison general case}.
\subsection{Proof of Sequential Rehearsal in \texorpdfstring{\Cref{lemma:expression form of EL in main body}}{Lemma 1}}\label{sec:sep model error}
In this subsection, we prove \Cref{lemma:expression form of EL in main body} for sequential rehearsal method. To simplify, we apply the following notations to denote the current data in this subsection:  $\bm{X}_t \defeq \bm{X}_{t,0}$, $\bm{Y}_t \defeq \bm{Y}_{t,0}$ and $\bm{z}_t \defeq \bm{z}_{t,0}$. When $t\ge 2$, the sequence of SGD convergent points $\bm{w}_t^{(j)}$ is equivalent the sequential optimization problems:
    \begin{equation*}
        \hat{\bm{w}}_t^{(j)} = \min_{\bm{w}} \cnorm{\bm{w} - \hat{\bm{w}}_{t}^{(j-1)}}_2^2\ \ \ \ s.t. \ \ \bm{X}_{t,j}^\top \bm{w} = \bm{Y}_{t,j}, \ \ j=0,1,...,t-1,
    \end{equation*}
where $\hat{\bm{w}}_t^{(-1)} = \bm{w}_{t-1}$ and $\bm{w}_t = \hat{\bm{w}}_t^{(t-1)}$. Consider an arbitrary $i$ s.t. $i\le T$ and fix it. During the training process over each task in the  memory dataset, the expected of model error can be split into three parts as follows. 
\begin{align*}
    \E\cnorm{\hat{\bm{w}}_t^{(j)} - \bm{w}_i^*}^2 &= \E\cnorm{(\bm{I} - P_{\bm{X}_{t,j}})(\hat{\bm{w}}_t^{(j-1)} - \bm{w}_i^*) + P_{\bm{X}_{t,j}}(\bm{w}_j^* - \bm{w}_i^*) + \bm{X}_{t,j}^\dagger \bm{z}_{t,j}}^2\\
    &\stackrel{(i)}{=} \left(1-\frac{M}{(t-1)p}\right) \E\cnorm{\hat{\bm{w}}_t^{(j-1)} - \bm{w}_i^*}^2 + \frac{M}{(t-1)p}\cnorm{\bm{w}_j^* - \bm{w}_i^*}^2  + \frac{\frac{M}{(t-1)}\sigma^2}{p-\frac{M}{(t-1)}-1},
\end{align*}
for $j=1,2,...,t-1$, where (i) follows from \Cref{lemma:solution of optimization problem,lemma:orthogonal of I-P and pinvP,lemma:expectation of projection,lemma:expecation of noise}. Similarly, for the current dataset, we have:
\begin{align*}
    \E\cnorm{\hat{\bm{w}}_t^{(0)} - \bm{w}_i^*}^2 &= \E\cnorm{(\bm{I} - P_{\bm{X}_t})(\bm{w}_{t-1} - \bm{w}_i^*) + P_{\bm{X}_t}(\bm{w}_t^* - \bm{w}_i^*)}^2\\
    &= \left(1-\frac{n}{p}\right) \E\cnorm{\bm{w}_{t-1} - \bm{w}_i^*}^2 + \frac{n}{p}\cnorm{\bm{w}_t^* - \bm{w}_i^*}^2 + \frac{n\sigma^2}{p-n-1}.
\end{align*}
By combining the above two equations, we have:
\begin{align}\label{eq:split three parts sep}
     \E\cnorm{\bm{w}_t - \bm{w}_i^*}^2 = & \left(1-\frac{M}{(t-1)p}\right)^{t-1}\left(1-\frac{n}{p}\right)\E\cnorm{\bm{w}_{t-1} - \bm{w}_i^*}^2 \nonumber\\
     &+\underbrace{ \sum_{j=1}^{t-1}\left(1-\frac{M}{(t-1)p}\right)^{t-j-1}\frac{M}{(t-1)p} \E\cnorm{\bm{w}_{j}^* - \bm{w}_i^*}^2+ \left(1-\frac{M}{(t-1)p}\right)^{t-1}\frac{n}{p}\cnorm{\bm{w}_{t}^* - \bm{w}_i^*}^2 }_{\text{term}_2^{\text{(sequential)}}}\nonumber\\
     &+ \sum_{j=1}^{t-1}\left(1-\frac{M}{(t-1)p}\right)^{t-j-1}\frac{\frac{M}{t-1}\sigma^2}{p-\frac{M}{t-1}-1}+ \left(1-\frac{M}{(t-1)p}\right)^{t-1}\frac{n\sigma^2}{p-n-1}.
\end{align}
By applying this process recursively, we obtain the expression of the expected value of the model error $\E\mathcal{L}_i(\bm{w}_t)$ as follows.
\begin{align}\label{eq:wt-wi sep general case}
    &\E\cnorm{\bm{w}_t - \bm{w}_i^*}^2 \nonumber\\
     &= \prod_{l=0}^{t-2} \left[\left(1-\frac{M}{(t-l-1)p}\right)^{t-l-1}\left(1-\frac{n}{p}\right)\right]\E\cnorm{\bm{w}_{1} - \bm{w}_i^*}^2\nonumber\\
     &+ \sum_{j=1}^{t-1}\left\{\sum_{l=0}^{t-j-1} \prod_{k=0}^{l-1} \left[\left(1-\frac{M}{(t-k-1)p}\right)^{t-k-1} \left(1-\frac{n}{p}\right)\right] \left(1-\frac{M}{(t-l-1)p}\right)^{t-j-l-1}\frac{M}{(t-l-1)p}\right\}\cnorm{\bm{w}_j^* - \bm{w}_i^*}^2\nonumber\\
     &+\sum_{l=0}^{t-2} \prod_{k=0}^{l-1} \left[ \left(1-\frac{M}{(t-k-1)p}\right)^{t-k-1}\left(1-\frac{n}{p}\right) \right]\left(1-\frac{M}{(t-l-1)p}\right)^{t-l-1}\frac{n}{p}\cnorm{\bm{w}_{t-l}^* - \bm{w}_i^*}^2\nonumber\\
     &+\left(1-\frac{M}{(t-1)p}\right)^{t-1}\frac{n}{p}\cnorm{\bm{w}_{t}^* - \bm{w}_i^*}^2 + \text{noise}^{\text{(sequential)}}_t(\sigma)\nonumber\\
     &=\left(1-\frac{n}{p}\right)\prod_{l=0}^{t-2} \left[\left(1-\frac{M}{(t-l-1)p}\right)^{t-l-1}\left(1-\frac{n}{p}\right)\right]\cnorm{\bm{w}_i^*}^2\nonumber\\
     &+ \left\{ \sum_{l=0}^{t-2}\prod_{k=0}^{l-1} \left[ \left(1-\frac{M}{(t-k-1)p}\right)^{t-k-1} \left(1-\frac{n}{p}\right)\right] \left(1-\frac{M}{(t-l-1)p}\right)^{t-l-2}\frac{M}{(t-l-1)p} \right.\nonumber\\
     &\hspace{6.5cm} +\left. \prod_{l=0}^{t-2} \left[\left(1-\frac{M}{(t-l-1)p}\right)^{t-l-1}\left(1-\frac{n}{p}\right)\right]\frac{n}{p}  \right\}\cnorm{\bm{w}_1^* - \bm{w}_i^*}^2\nonumber\\
     &+\sum_{j=2}^{t-1}\left\{ \sum_{l=0}^{t-j-1}\prod_{k=0}^{l-1} \left[ \left(1-\frac{M}{(t-k-1)p}\right)^{t-k-1} \left(1-\frac{n}{p}\right)\right]\left(1-\frac{M}{(t-l-1)p}\right)^{t-j-l-1}\frac{M}{(t-l-1)p}\right.\nonumber\\
     &\hspace{1cm}+\left. \prod_{k=0}^{t-j-1} \left[\left(1-\frac{M}{(t-l-1)p}\right)^{t-k-1}\left(1-\frac{n}{p}\right)\right] \left(1-\frac{M}{(j-1)p}\right)^{j-1} \frac{n}{p} \right\}\cnorm{\bm{w}_j^* - \bm{w}_i^*}^2\nonumber\\
     &+  \left(1-\frac{M}{(t-1)p}\right)^{t-1} \frac{n}{p}\cnorm{\bm{w}_t^* - \bm{w}_i^*}^2 + \text{noise}^{\text{(sequential)}}_t(\sigma),
\end{align}
where
\begin{align*}
    &\text{noise}^{\text{(sequential)}}_t(\sigma) = \sum_{l=0}^{t-2} \prod_{k=0}^{l-1} \left[ \left(1-\frac{M}{(t-k-1)p}\right)^{t-k-1}\left(1-\frac{n}{p}\right) \right]\nonumber\\
    &\hspace{4cm}\cdot \left[\sum_{j=1}^{t-1}\left(1-\frac{M}{(t-1)p}\right)^{t-j-1}\frac{\frac{M}{t-1}\sigma^2}{p-\frac{M}{t-1}-1}+ \left(1-\frac{M}{(t-1)p}\right)^{t-1}\frac{n\sigma^2}{p-n-1}\right].
\end{align*}
By rearranging the terms, we complete the poof for $d_{0t}^{\text{(sequential)}}$ and $d_{ijkt}^{\text{(sequential)}}$ in \Cref{lemma:expression form of EL in main body}. Furthermore, the expressions of $c_{i}^{\text{(sequential)}}$ and $c_{ijk}^{\text{(sequential)}}$ can be derived directly based on $d_{0t}^{\text{(sequential)}}, d_{ijkt}^{\text{(sequential)}}$ and the definition of forgetting. The explicit expressions of $c_{i}^{\text{(concurrent)}}$ and $c_{ijk}^{\text{(sequential)}}$ are derived as follows.
\begin{align}\label{eq:wtwi - wiwi separate}
    &\hspace{-0.3cm}\left[\E\cnorm{\bm{w}_t - \bm{w}^*_i}_2^2 - \E\cnorm{\bm{w}_i - \bm{w}^*_i}_2^2\right]^{\text{(sequential)}} \nonumber\\
    &=  \left(1-\frac{n}{p}\right)\left\{\prod_{l=0}^{t-2}\left[\left(1-\frac{M}{(t-l-1)p}\right)^{t-l-1}\left(1-\frac{n}{p}\right)\right] - \prod_{l=0}^{i-2}\left[\left(1-\frac{M}{(i-l-1)p}\right)^{i-l-1}\left(1-\frac{n}{p}\right)\right]\right\}\cnorm{\bm{w}_i^*}^2\nonumber\\
    & \hspace{.3cm}+\left\{ \frac{n}{p} \left\{ \prod_{l=0}^{t-2}\left[\left(1-\frac{M}{(t-l-1)p}\right)^{t-l-1}\left(1-\frac{n}{p}\right)\right] - \prod_{l=0}^{i-2}\left[\left(1-\frac{M}{(i-l-1)p}\right)^{i-l-1}\left(1-\frac{n}{p}\right)\right]\right\}\right.\nonumber\\
    &  \hspace{.6cm} + \sum_{l=0}^{t-2}\prod_{k=0}^{l-1} \left[ \left(1-\frac{M}{(t-k-1)p}\right)^{t-k-1} \left(1-\frac{n}{p}\right)\right]\left(1-\frac{M}{(t-l-1)p}\right)^{t-l-2}\frac{M}{(t-l-1)p} \nonumber\\
    &  \hspace{.6cm} - \left.\sum_{l=0}^{i-2}\prod_{k=0}^{l-1} \left[ \left(1-\frac{M}{(i-k-1)p}\right)^{i-k-1} \left(1-\frac{n}{p}\right)\right]\left(1-\frac{M}{(i-l-1)p}\right)^{i-l-2}\frac{M}{(i-l-1)p} \right\}\cnorm{\bm{w}_1^* - \bm{w}_i^*}^2\nonumber\\
    &  \hspace{.3cm}+\sum_{j=i}^{t-1}\left\{ \sum_{l=0}^{t-j-1}\prod_{k=0}^{l-1} \left[ \left(1-\frac{M}{(t-k-1)p}\right)^{t-k-1} \left(1-\frac{n}{p}\right)\right]\left(1-\frac{M}{(t-l-1)p}\right)^{t-j-l-1}\frac{M}{(t-l-1)p}\right.\nonumber\\
    & \hspace{.6cm} +\left. \prod_{k=0}^{t-j-1} \left[\left(1-\frac{M}{(t-k-1)p}\right)^{t-k-1}\left(1-\frac{n}{p}\right)\right] \left(1-\frac{M}{(j-1)p}\right)^{j-1} \frac{n}{p} \right\}\cnorm{\bm{w}_j^* - \bm{w}_i^*}^2\nonumber\\
    &   \hspace{.3cm}+\sum_{j=2}^{i-1}\left\{ \sum_{l=0}^{t-j-1}\prod_{k=0}^{l-1} \left[ \left(1-\frac{M}{(t-k-1)p}\right)^{t-k-1} \left(1-\frac{n}{p}\right)\right]\left(1-\frac{M}{(t-l-1)p}\right)^{t-j-l-1}\frac{M}{(t-l-1)p}\right.\nonumber\\
    & \hspace{.6cm} - \sum_{l=0}^{i-j-1}\prod_{k=0}^{l-1} \left[ \left(1-\frac{M}{(i-k-1)p}\right)^{i-k-1} \left(1-\frac{n}{p}\right)\right]\left(1-\frac{M}{(i-l-1)p}\right)^{i-j-l-1}\frac{M}{(i-l-1)p}\nonumber\\
    & \hspace{.6cm} +\prod_{k=0}^{t-j-1} \left[\left(1-\frac{M}{(t-k-1)p}\right)^{t-k-1}\left(1-\frac{n}{p}\right)\right] \left(1-\frac{M}{(j-1)p}\right)^{j-1} \frac{n}{p} \nonumber\\
    & \hspace{.6cm} \left. -\prod_{k=0}^{i-j-1} \left[\left(1-\frac{M}{(i-k-1)p}\right)^{i-k-1}\left(1-\frac{n}{p}\right)\right] \left(1-\frac{M}{(j-1)p}\right)^{j-1} \frac{n}{p} \right\}\cnorm{\bm{w}_j^* - \bm{w}_i^*}^2\nonumber\\
    & \hspace{.3cm} +  \left(1-\frac{M}{(t-1)p}\right)^{t-1} \frac{n}{p}\cnorm{\bm{w}_t^* - \bm{w}_i^*}^2 +\text{noise}^{\text{(sequential)}}_t(\sigma) - \text{noise}^{\text{(sequential)}}_i(\sigma)
\end{align}
 
\subsection{Proof of \texorpdfstring{\Cref{thm: general form of FG}}{Theorem 1}}
\Cref{thm: general form of FG} follows directly from \Cref{lemma:expression form of EL in main body} and the definitions of $F_T$ and $G_T$.

\section{Proof of \texorpdfstring{\Cref{lemma:coefficients of F2G2,sec:thm_twotask}}{lemma 1 and theorem 2}}\label{sec:comparison T=2}
In this section, we will prove \Cref{lemma:coefficients of F2G2,sec:thm_twotask}, and provide details about constants $\xi_1,\xi_2,\mu_1,\mu_2$.
According to \cref{eq:wt-wi jnt general case,eq:wt-wi sep general case,eq:wtwi - wiwi joint,eq:wtwi - wiwi separate}, the forgetting and generalization error for $T=2$ is as follows. For concurrent rehearsal method, the forgetting is provided as follows.
\begin{align}\label{eq:F2 jnt}
    F_2^{\text{(concurrent)}} &= \E\cnorm{\bm{w}_2 - \bm{w}_1^*}^2 - \E\cnorm{\bm{w}_1 - \bm{w}_1^*}^2 \nonumber\\
    &= \left(-\frac{n+M}{p}\right)\left(1-\frac{n}{p}\right)\cnorm{\bm{w}_1^*}^2 + \frac{n}{p}\left(1+\frac{M}{p-n-M-1}\right)\cnorm{\bm{w}_1^*-\bm{w}_2^*}^2 \nonumber\\
    &\hspace{5cm}+ \frac{(n+M)\sigma^2}{p-(n+M)-1} -\frac{n+M}{p}\cdot \frac{n\sigma^2}{p-n-1}.
\end{align}
And also, we provide the generalization error as follows.
\begin{align}\label{eq:G2 jnt}
    G_2^{\text{(concurrent)}}  &= \frac{1}{2} \left(\E\cnorm{\bm{w}_2 - \bm{w}_1^*}^2 + \E\cnorm{\bm{w}_2 - \bm{w}_2^*}^2 \right)\nonumber\\
    &= \frac{1}{2}\left(1-\frac{n+M}{p}\right)\left(1-\frac{n}{p}\right)(\cnorm{\bm{w}_1^*}^2 + \cnorm{\bm{w}_2^*}^2) \nonumber\\
    &\hspace{1cm}+ \frac{1}{2}\left( \frac{2n+M}{p} + \frac{2nM}{p(p-n-M-1)} - \frac{n(n+M)}{p^2}\right)\cnorm{\bm{w}_1^*-\bm{w}_2^*}^2 \nonumber\\
    &\hspace{1cm}+ \frac{(n+M)\sigma^2}{p-(n+M)-1} + \left(1-\frac{n+M}{p}\right)\frac{n\sigma^2}{p-n-1}.
\end{align}
For sequential rehearsal method, the forgetting is provided as follows.
\begin{align}\label{eq:F2 sep}
    F_2^{\text{sequential}}&= \E\cnorm{\bm{w}_2 - \bm{w}_1^*}^2 - \E\cnorm{\bm{w}_1 - \bm{w}_1^*}^2 \nonumber\\
    &= \left(-\frac{n+M}{p} + \frac{nM}{p^2}\right)\left(1-\frac{n}{p}\right)\cnorm{\bm{w}_1^*}^2 + \left(1-\frac{M}{p}\right)\frac{n}{p}\cnorm{\bm{w}_1^*-\bm{w}_2^*}^2 \nonumber\\
    &\hspace{3cm} +\left(1-\frac{n+2M}{p} + \frac{nM}{p^2}\right)\frac{n\sigma^2}{p-n-1} + \frac{M\sigma^2}{p-M-1}.
\end{align}
And also, we provide the generalization error as follows.
\begin{align}\label{eq:G2 sep}
    G_2^{\text{sequential}} &= \frac{1}{2}(\E\cnorm{\bm{w}_2 - \bm{w}_1^*}^2 + \E\cnorm{\bm{w}_2 - \bm{w}_2^*}^2) \nonumber\\
    &= \frac{1}{2}\left(1-\frac{M}{p}\right)\left(1-\frac{n}{p}\right)^2(\cnorm{\bm{w}_1^*}^2 + \cnorm{\bm{w}_2^*}^2) + \frac{1}{2}\left(\frac{2n+M}{p} - \frac{n(n+2M)}{p^2} + \frac{n^2M}{p^3}\right)\cnorm{\bm{w}_1^*-\bm{w}_2^*}^2 \nonumber\\
    &\hspace{1cm}+ \left(1-\frac{M}{p}\right)\left(2-\frac{n}{p}\right)\frac{n\sigma^2}{p-n-1} + \frac{M\sigma^2}{p-M-1}.
\end{align}

\subsection{Proof of Coefficients $\hat{c}_1,\hat{c}_1$ in \texorpdfstring{\Cref{lemma:coefficients of F2G2}}{Lemma 1} and Forgetting in \texorpdfstring{\Cref{sec:thm_twotask}}{theorem 2}}\label{sec:comparison F T=2}
By observing \cref{eq:F2 jnt} and \cref{eq:F2 sep}, the expressions of forgetting for both rehearsal methods share the same structure:
\begin{equation*}
    F_2 = \hat{c}_1\cnorm{\bm{w}_1^*}^2 + \hat{c}_2\cnorm{\bm{w}_1^*-\bm{w}_2^*}^2 + \hat{\mathrm{noise}}_F(\sigma).
\end{equation*}
We first compare the coefficients $\hat{c}_1,\hat{c}_2$ and the noise term $\hat{\text{noise}}_F(\sigma)$. In the below inequalities, the expressions for concurrent rehearsal are provided on the left while the expressions for sequential rehearsal are on the right.
\begin{align*}
     &\hat{c}_1: &\left(-\frac{n+M}{p}\right)\left(1-\frac{n}{p}\right) & < \left(-\frac{n+M}{p} + \frac{nM}{p^2}\right)\left(1-\frac{n}{p}\right)\\
     &\hat{c}_2: &\frac{n}{p}\left(1+\frac{M}{p-n-M-1}\right) & > \left(1-\frac{M}{p}\right)\frac{n}{p},\\
     &\hat{\text{noise}}_F(\sigma): &\frac{(n+M)\sigma^2}{p-(n+M)-1} -\frac{n+M}{p}\cdot \frac{n\sigma^2}{p-n-1} & > \left(1-\frac{n+2M}{p} + \frac{nM}{p^2}\right)\frac{n\sigma^2}{p-n-1} + \frac{M\sigma^2}{p-M-1}.
\end{align*}
The comparison implies that $\hat{c}_1^{\text{(concurrent)}} < \hat{c}_1^{\text{(sequential)}}$, $\hat{c}_2^{\text{(concurrent)}} > \hat{c}_2^{\text{(sequential)}}$ and $\hat{\mathrm{noise}}_F^{\text{(concurrent)}}(\sigma) > \hat{\mathrm{noise}}_F^{\text{(sequential)}}(\sigma)$. By further calculation, we obtain the following conclusion:
\begin{equation*}
        F_2^{\text{(concurrent)}} > F_2^{\text{(sequential)}} \ \ \ \  \textbf{if and only if} \ \ \ \ \xi_1\cnorm{\bm{w}_1^*-\bm{w}_2^*}^2 + \xi_2 \sigma^2 > \cnorm{\bm{w}_1^*}^2,
\end{equation*}
where $\xi_1 = \frac{\frac{nM}{p}\left(\frac{1}{p-n-M-1} + \frac{1}{p}\right)}{\frac{nM}{p^2}\left(1-\frac{n}{p}\right)}$ and $\xi_2 = \frac{\left(\frac{n+M}{p-n-M-1} - \left(1-\frac{M}{p} + \frac{nM}{p^2}\right)\frac{n}{p-n-1} - \frac{M}{p-M-1}\right)}{\frac{nM}{p^2}\left(1-\frac{n}{p}\right)}.$ To illustrate this conclusion better, we provide the following two special cases.
\begin{itemize}[left = 0.3cm]
    \item If the noise $\sigma$ is $0$, and the task similarity is low enough (i.e., $\cnorm{\bm{w}_1^*-\bm{w}_2^*}^2$ is large enough), sequential rehearsal achieves a lower forgetting. More specifically, $F_2^{\text{(concurrent)}} \ge F_2^{\text{(sequential)}}$ \textbf{if and only if}  $\cnorm{\bm{w}_1^*-\bm{w}_2^*}^2 \ge\frac{(p-n)(p-n-M-1)}{p^2+p(p-n-M-1)}\cnorm{\bm{w}_1^*}^2$, 
    \item If task difference $\cnorm{\bm{w}_1^*-\bm{w}_2^*}^2 = 0$ and the noise $\sigma$ is large enough, sequential rehearsal achieves a lower forgetting. More specifically, $F_2^{\text{(concurrent)}} \ge F_2^{\text{(sequential)}}$  \textbf{if and only if} 
    \begin{equation*}
        \sigma\ge \frac{\frac{nM}{p^2}\left(1-\frac{n}{p}\right)}{\frac{n+M}{p-n-M-1} - \left(1-\frac{M}{p} + \frac{nM}{p^2}\right)\frac{n}{p-n-1} - \frac{M}{p-M-1}}\cnorm{\bm{w}_1^*}^2.
    \end{equation*}
\end{itemize}

\subsection{Proof of Coefficients $\hat{d}_1,\hat{d}_2$ in \texorpdfstring{\Cref{lemma:coefficients of F2G2}}{Lemma 1} and Generalization error in \texorpdfstring{\Cref{sec:thm_twotask}}{theorem 2}}\label{sec:comparison G T=2}
By observing \cref{eq:G2 jnt} and \cref{eq:G2 sep}, the expressions of generalization error for both rehearsal methods share the same structure:
\begin{equation*}
    G_2 = \hat{d}_1(\cnorm{\bm{w}_1^*}^2 + \cnorm{\bm{w}_2^*}^2) + \hat{d}_2\cnorm{\bm{w}_1^*-\bm{w}_2^*}^2 + \hat{\mathrm{noise}}_G(\sigma).
\end{equation*}
We first compare the coefficients $\hat{d}_1,\hat{d}_2$ and the noise term $\hat{\text{noise}}_G(\sigma)$. In the below inequalities, the expressions for concurrent rehearsal are provided on the left while the expressions for sequential rehearsal are on the right.
\begin{align*}
    &\hat{d}_1: &\left(1-\frac{n+M}{p}\right)\left(1-\frac{n}{p}\right) & < \left(1-\frac{M}{p}\right)\left(1-\frac{n}{p}\right)^2\\
    &\hat{d}_2: &\frac{2n+M}{p} + \frac{2nM}{p(p-n-M-1)} - \frac{n(n+M)}{p^2}&> \frac{2n+M}{p} - \frac{n(n+2M)}{p^2} + \frac{n^2M}{p^3},\\
    &\hat{\text{noise}}_G: &\frac{(n+M)\sigma^2}{p-(n+M)-1} + \left(1-\frac{n+M}{p}\right)\frac{n\sigma^2}{p-n-1}&>\left(1-\frac{M}{p}\right)\left(2-\frac{n}{p}\right)\frac{n\sigma^2}{p-n-1} + \frac{M\sigma^2}{p-M-1},
\end{align*}
which implies that $\hat{d}_1^{\text{(concurrent)}} < \hat{d}_1^{\text{(sequential)}}$ and $\hat{d}_2^{\text{(concurrent)}} > \hat{d}_2^{\text{(sequential)}}$, $\hat{\mathrm{noise}}^{(\text{concurrent})}_G(\sigma) > \hat{\mathrm{noise}}_G^{\text{(sequential)}}(\sigma)$. By further calculation, we obtain the following conclusion:
\begin{equation*}
    G_2^{\text{(concurrent)}} \ge G_2^{\text{(sequential)}} \ \ \ \  \text{if and only if} \ \ \ \ \mu_1\cnorm{\bm{w}_1^*-\bm{w}_2^*}^2 + \mu_2 \sigma^2 > \cnorm{\bm{w}_1^*}^2,
\end{equation*}
where $\mu_1 = \frac{\frac{nM}{p}\left(\frac{2}{p-n-M-1} + \frac{1}{p} - \frac{n}{p^2}\right)}{\frac{nM}{p^2}\left(1-\frac{n}{p}\right)}$ and $\mu_2 = \frac{\frac{n+M}{p-n-M-1} - \left(1-\frac{M}{p} + \frac{nM}{p^2}\right)\frac{n}{p-n-1} - \frac{M}{p-M-1}}{\frac{nM}{p^2}\left(1-\frac{n}{p}\right)}$. To illustrate this conclusion better, we provide the following two special cases.
\begin{itemize}[left=0.3cm]
    \item If the noise $\sigma$ is $0$, and the task similarity is small enough (i.e., $\cnorm{\bm{w}_1^*-\bm{w}_2^*}^2$ is big enough), sequential rehearsal has a smaller generalization error. More specifically, $G_2^{\text{(concurrent)}} \ge G_2^{\text{(sequential)}}$ \textbf{if and only if}  $\cnorm{\bm{w}_1^*-\bm{w}_2^*}^2 \ge \frac{(p-n)(p-n-M-1)}{2p^2 + (p-n)(p-n-M-1)}\left(\cnorm{\bm{w}_1^*}^2 + \cnorm{\bm{w}_2^*}^2\right)$.
    \item If the task difference $\cnorm{\bm{w}_1^*-\bm{w}_2^*}^2 = 0$  and the noise $\sigma$ is big, sequential rehearsal has a smaller generalization error. More specifically, $G_2^{\text{(concurrent)}} \ge G_2^{\text{(sequential)}}$  \textbf{if and only if}
    \begin{equation*}
        \sigma^2 \ge \frac{\frac{nM}{p^2}\left(1-\frac{n}{p}\right)}{\frac{n+M}{p-n-M-1} - \left(1-\frac{M}{p} + \frac{nM}{p^2}\right)\frac{n}{p-n-1} - \frac{M}{p-M-1}}\left(\cnorm{\bm{w}_1^*}^2 + \cnorm{\bm{w}_2^*}^2\right)
    \end{equation*}
\end{itemize}
\section{Comparison between Concurrent and Sequential Rehearsal Methods When \texorpdfstring{$T=3$}{T=3}}\label{sec:comparison T=3}
Recall that $M_{2,1} = M$ and $M_{3,1} = M_{3,2} = \frac{M}{2}$ under our equal memory allocation assumption.  In this section, we assume $\sigma=0$. According to \cref{eq:wt-wi jnt general case,eq:wtwi - wiwi joint}, we write out the performance of the concurrent rehearsal method when $T=3$ as follows.
\begin{align}\label{eq:forgetting T=3 jnt}
    &F_3^{(\text{concurrent})} \nonumber\\
    &= \frac{1}{2}(\E\cnorm{\bm{w}_3 - \bm{w}_1^*}^2 - \E\cnorm{\bm{w}_1 - \bm{w}_1^*}^2 + \E\cnorm{\bm{w}_3 - \bm{w}_2^*}^2 - \E\cnorm{\bm{w}_2 - \bm{w}_2^*}^2)\nonumber\\
    &= \frac{1}{2}\left( -\frac{2(n+M)}{p} + \frac{(n+M)^2}{p^2}\right)\left(1-\frac{n}{p}\right)\cnorm{\bm{w}_1^*}^2 + \frac{1}{2}\left( -\frac{n+M}{p}\right)\left(1-\frac{n+M}{p}\right)\left(1-\frac{n}{p}\right)\cnorm{\bm{w}_2^*}^2\nonumber\\
    &\hspace{.3cm} + \frac{1}{2}\left[ \left(1-\frac{2(n+M)}{p}\right)\frac{nM}{p(p-n-M-1)} + \frac{M^2}{2p(p-n-M-1)} + \frac{n+M}{p}\left(1-\frac{n}{p}\right)\left(1-\frac{n+M}{p}\right)\right]\cnorm{\bm{w}_1^* - \bm{w}_2^*}^2\nonumber\\
    &\hspace{0.3cm} + \frac{1}{2}\left[\frac{n}{p} + \frac{nM}{p(p-n-M-1)}\right]\cnorm{\bm{w}_1^* - \bm{w}_3^*}^2  + \frac{1}{2}\left[\frac{n}{p} + \frac{nM}{p(p-n-M-1)}\right]\cnorm{\bm{w}_2^* - \bm{w}_3^*}^2,
\end{align}
and
\begin{align}\label{eq:generalization T=3 jnt}
    &G_3^{(\text{concurrent})}\nonumber\\
    &= \frac{1}{3}(\E\cnorm{\bm{w}_3 - \bm{w}_1^*}^2 + \E\cnorm{\bm{w}_3 - \bm{w}_2^*}^2 + \E\cnorm{\bm{w}_3 - \bm{w}_3^*}^2 )\nonumber\\
    &= \frac{1}{3}\left(1-\frac{n+M}{p}\right)^2\left(1-\frac{n}{p}\right)(\cnorm{\bm{w}_1^*}^2 + \cnorm{\bm{w}_2^*}^2 + \cnorm{\bm{w}_3^*}^2)\nonumber\\
    &\hspace{0.5cm} +\frac{1}{3}\left[ \left(3-\frac{3(n+M)}{p}\right)\frac{nM}{p(p-n-M-1)} + \frac{3M^2}{4p(p-n-M-1)} \right.\nonumber\\
    &\hspace{5cm} \left. + \frac{n+M}{p}\left(2-\frac{3n}{p} -\frac{M}{p} + \frac{n(n+M)}{p^2}\right)\right]\cnorm{\bm{w}_1^* - \bm{w}_2^*}^2\nonumber\\
    &\hspace{0.5cm} +\frac{1}{3}\left[\frac{n}{p}\left(2-\frac{2(n+M)}{p} + \frac{(n+M)^2}{p^2}\right) + \frac{M}{p}\left(1-\frac{n+M}{p}\right) + \frac{M}{2p} + \frac{3nM}{2p(p-n-M-1)}\right]\cnorm{\bm{w}_1^* - \bm{w}_3^*}^2\nonumber\\
    &\hspace{0.5cm} +\frac{1}{3}\left[\frac{n}{p}\left(2-\frac{n+M}{p}\right) + \frac{M}{2p} + \frac{3nM}{2p(p-n-M-1)}\right]\cnorm{\bm{w}_2^* - \bm{w}_3^*}^2.
\end{align}
According to \cref{eq:wt-wi sep general case,eq:wtwi - wiwi separate}, we write out the performance of sequential rehearsal when $T=3$ as follows.
\begin{align}\label{eq:forgetting T=3 sep}
    &\hspace{-.3cm}F_3^{\text{(sequential)}} = \frac{1}{2}(\E\cnorm{\bm{w}_3 - \bm{w}_1^*}^2 - \E\cnorm{\bm{w}_1 - \bm{w}_1^*}^2 + \E\cnorm{\bm{w}_3 - \bm{w}_2^*}^2 - \E\cnorm{\bm{w}_2 - \bm{w}_2^*}^2)\nonumber\\
    &=\frac{1}{2}\left[\left(1-\frac{n}{p}\right)^3\left(1-\frac{M}{p}\right)\left(1-\frac{M}{2p}\right)^2 - \left(1-\frac{n}{p}\right)\right]\cnorm{\bm{w}_1^*}^2 \nonumber\\
    &\hspace{.3cm}+ \frac{1}{2}\left[\left(1-\frac{n}{p}\right)^3\left(1-\frac{M}{p}\right)\left(1-\frac{M}{2p}\right)^2 - \left(1-\frac{n}{p}\right)^2\left(1-\frac{M}{p}\right)\right]\cnorm{\bm{w}_2^*}^2 \nonumber\\
    &\hspace{.3cm}+ \frac{1}{2}\left[ \left(1-\frac{n}{p}\right) \left(1-\frac{M}{p}\right) \frac{n}{p} \left( \left(1-\frac{M}{2p}\right)^2\left(2-\frac{n}{p}\right) -1 \right) + \left(1-\frac{M}{2p}\right)^2\left(1-\frac{n}{p}\right)\frac{M}{p} - \frac{M^2}{4p^2}\right]\cnorm{\bm{w}_1^* - \bm{w}_2^*}^2\nonumber\\
    &\hspace{.3cm}+ \frac{1}{2}\left(1-\frac{M}{2p}\right)^2\frac{n}{p} \cnorm{\bm{w}_1^* - \bm{w}_3^*}^2 +  \left(1-\frac{M}{2p}\right)^2\frac{n}{p}\cnorm{\bm{w}_2^* - \bm{w}_3^*}^2.
\end{align}
And also, we have
\begin{align}\label{eq:generalization T=3 sep}
    &G_3^{(\text{concurrent})} = \frac{1}{3}(\E\cnorm{\bm{w}_3 - \bm{w}_1^*}^2 + \E\cnorm{\bm{w}_3 - \bm{w}_2^*}^2 + \E\cnorm{\bm{w}_3 - \bm{w}_3^*}^2 )\nonumber\\
    &= \frac{1}{3}\left(1-\frac{n}{p}\right)^3\left(1-\frac{M}{p}\right)\left(1-\frac{M}{2p}\right)^2(\cnorm{\bm{w}_1^*}^2 + \cnorm{\bm{w}_2^*}^2+\cnorm{\bm{w}_3^*}^2)\nonumber\\
    &+ \frac{1}{3}\left\{ \left(1-\frac{n}{p}\right)\left(1-\frac{M}{2p}\right)^2 \left[ \left(1-\frac{M}{p}\right)\left(2-\frac{n}{p}\right)\frac{n}{p} + \frac{M}{p} \right] +\frac{M}{p} -\frac{M^2}{4p^2} \right\}\cnorm{\bm{w}_1^*-\bm{w}_2^*}^2\nonumber\\
    &+ \frac{1}{3}\left[ \left(1-\frac{M}{2p}\right)^2\frac{n}{p} + \left(1-\frac{M}{2p}\right)^2\left(1-\frac{n}{p}\right)\frac{M}{p} + \left(1-\frac{n}{p}\right)^2\left(1-\frac{M}{p}\right)\left(1-\frac{M}{2p}\right)^2\frac{n}{p}\right.\nonumber\\
    &\hspace{8.5cm} \left.+ \left(1-\frac{M}{2p}\right)\frac{M}{2p}\right]\cnorm{\bm{w}_1^*-\bm{w}_3^*}^2\nonumber\\
    &+ \frac{1}{3}\left\{\left(1-\frac{M}{2p}\right)^2\frac{n}{p} \left[ \left(1-\frac{M}{p}\right)\left(1-\frac{n}{p}\right) + 1 \right] + \frac{M}{2p}\right\} \cnorm{\bm{w}_2^*-\bm{w}_3^*}^2.
\end{align}

\subsection{Comparison of Forgetting When \texorpdfstring{$T=3$}{T=3}}
By observing \cref{eq:forgetting T=3 jnt} and \cref{eq:forgetting T=3 sep}, the expressions of forgetting for both rehearsal methods share the same structure:
\begin{equation*}
    F_3 = \frac{1}{2}\hat{c}_1\cnorm{\bm{w}_1^*}^2 + \frac{1}{2}\hat{c}_2\cnorm{\bm{w}_2^*}^2 + \frac{1}{2}\hat{c}_3\cnorm{\bm{w}_1^*-\bm{w}_2^*}^2 + \frac{1}{2}\hat{c}_4\cnorm{\bm{w}_1^*-\bm{w}_3^*}^2 + \frac{1}{2}\hat{c}_5\cnorm{\bm{w}_2^*-\bm{w}_3^*}^2.
\end{equation*}
By comparing \cref{eq:forgetting T=3 jnt} and \cref{eq:forgetting T=3 sep}, we have the following conclusions: 1.$\hat{c}_1^{\text{(concurrent)}} < \hat{c}_1^{\text{(sequential)}}$; 2.$\hat{c}_2^{\text{(concurrent)}} < \hat{c}_2^{\text{(sequential)}}$; 3.$\hat{c}_3^{\text{(concurrent)}} > \hat{c}_3^{\text{(sequential)}}$, when $p>\frac{5n+4M}{2}$; 4.$\hat{c}_4^{\text{(concurrent)}} > \hat{c}_4^{\text{(sequential)}}$; 5.$\hat{c}_5^{\text{(concurrent)}} > \hat{c}_5^{\text{(sequential)}}$. The proof of these conclusions is provided as follows.
\proof
1. $\hat{c}_1^{\text{(concurrent)}} < \hat{c}_1^{\text{(sequential)}}$. We have:
\begin{align*} 
    \hat{c}_1^{\text{(sequential)}} &= \left[\left(1-\frac{n}{p}\right)^3\left(1-\frac{M}{p}\right)\left(1-\frac{M}{2p}\right)^2 - \left(1-\frac{n}{p}\right)\right]\\
    &=  \left[\left(1-\frac{n}{p}\right)^2\left(1-\frac{M}{p}\right)\left(1-\frac{M}{2p}\right)^2 -1\right] \left(1-\frac{n}{p}\right)\\
    &>  \left[\left(1-\frac{n}{p}\right)^2\left(1-\frac{M}{p}\right)^2 -1\right] \left(1-\frac{n}{p}\right)\\
    &>  \left[\left(1-\frac{n+M}{p}\right)^2-1\right] \left(1-\frac{n}{p}\right) \\
    &= \hat{c}_1^{\text{(concurrent)}}.
\end{align*}
2. $\hat{c}_2^{\text{(concurrent)}} < \hat{c}_2^{\text{(sequential)}}$. Consider:
\begin{align*}
    \hat{c}_2^{\text{(sequential)}} &= \left[\left(1-\frac{n}{p}\right)^3\left(1-\frac{M}{p}\right)\left(1-\frac{M}{2p}\right)^2 - \left(1-\frac{n}{p}\right)^2\left(1-\frac{M}{p}\right)\right]\\
    &> \left[\left(1-\frac{n}{p}\right)^3\left(1-\frac{M}{p}\right)^2 - \left(1-\frac{n}{p}\right)^2\left(1-\frac{M}{p}\right)\right]\\
    &= \left(1-\frac{n}{p}\right)\left[\left(1-\frac{n}{p}\right)\left(1-\frac{M}{p}\right) - 1\right]\left(1-\frac{n}{p}\right)\left(1-\frac{M}{p}\right)\\
    &= \left(1-\frac{n}{p}\right)\left[\frac{nM}{p^2} - \frac{n+M}{p}\right]\left(1-\frac{n+M}{p} + \frac{nM}{p^2}\right)\\
    &= \left(1-\frac{n}{p}\right)\left[\frac{nM}{p^2} - \frac{n+M}{p}\right]\left(1-\frac{n+M}{p} + \frac{nM}{p^2}\right)\\
    &= \left(1-\frac{n}{p}\right)\left[- \frac{n+M}{p}\right]\left(1-\frac{n+M}{p}\right)  + \left(1-\frac{n}{p}\right)\frac{nM}{p^2}\left(1-\frac{2(n+M)}{p}+\frac{nM}{p^2}\right)\\
    &> \left(1-\frac{n}{p}\right)\left[- \frac{n+M}{p}\right]\left(1-\frac{n+M}{p}\right)\\
    &= \hat{c}_2^{\text{(concurrent)}}.
\end{align*}
3. $\hat{c}_3^{\text{(concurrent)}} > \hat{c}_3^{\text{(sequential)}}$ when $p>\frac{5n+4M}{2}$.
We first lower bound $\hat{c}_3^{\text{(concurrent)}}$ as follows.
\begin{align*}
    \hat{c}_3^{\text{(concurrent)}} &= \left(1-\frac{2(n+M)}{p}\right)\frac{nM}{p(p-n-M-1)} + \frac{M^2}{2p(p-n-M-1)} + \frac{n+M}{p}\left(1-\frac{n}{p}\right)\left(1-\frac{n+M}{p}\right)\\
    &> \left(1-\frac{2(n+M)}{p}\right)\frac{nM}{p^2} + \frac{M^2}{2p^2} + \frac{n+M}{p}\left(1-\frac{n}{p}\right)\left(1-\frac{n+M}{p}\right)\nonumber\\
    &= \frac{n+M}{p}\left(1-\frac{n}{p}\right)\left(1-\frac{M}{p}\right) - \frac{n^2}{p^2} + \frac{n^3-n^2M-2nM^2}{p^3}.
\end{align*}
On the other hand, we upper bound $\hat{c}_3^{\text{(sequential)}} $ as follows.
\begin{align*}
    &\hspace{-.5cm}\hat{c}_3^{\text{(sequential)}} = \left(1-\frac{n}{p}\right) \left(1-\frac{M}{p}\right) \frac{n}{p} \left( \left(1-\frac{M}{2p}\right)^2\left(2-\frac{n}{p}\right) -1 \right)  + \left(1-\frac{M}{2p}\right)^2\left(1-\frac{n}{p}\right)\frac{M}{p} - \frac{M^2}{4p^2}\\
    & = \left(1-\frac{n}{p}\right) \left(1-\frac{M}{p}\right) \frac{n}{p} \left( \left(1-\frac{M}{p}\right)\left(2-\frac{n}{p}\right) -1 \right) + \left(1-\frac{M}{p}\right)\left(1-\frac{n}{p}\right)\frac{M}{p} \\
    &\hspace{1cm}+ \frac{M^2}{4p^2}\left[ \left(2-\frac{n}{p}\right)\left(1-\frac{M}{p}\right)\left(1-\frac{n}{p}\right)\frac{n}{p} + \left(1-\frac{n}{p}\right)\frac{M}{p} -1 \right]\\
    & < \left(1-\frac{n}{p}\right) \left(1-\frac{M}{p}\right) \frac{n}{p} \left( \left(1-\frac{M}{p}\right)\left(2-\frac{n}{p}\right) -1 \right) + \left(1-\frac{M}{p}\right)\left(1-\frac{n}{p}\right)\frac{M}{p} + \frac{M^2}{4p^2}\left[ \frac{2n}{p} + \frac{M}{p} - 1\right]\\
    &\stackrel{(i)}{<} \left(1-\frac{n}{p}\right) \left(1-\frac{M}{p}\right) \frac{n}{p} \left( \left(1-\frac{M}{p}\right)\left(2-\frac{n}{p}\right) -1 \right) + \left(1-\frac{M}{p}\right)\left(1-\frac{n}{p}\right)\frac{M}{p}\\
    &=  \left(1-\frac{n}{p}\right) \left(1-\frac{M}{p}\right) \frac{n}{p} \left( 1- \frac{2M}{p} - \frac{n}{p} + \frac{nM}{p^2}\right) + \left(1-\frac{M}{p}\right)\left(1-\frac{n}{p}\right)\frac{M}{p}\\
    &= \left(1-\frac{n}{p}\right) \left(1-\frac{M}{p}\right)\frac{n+M}{p} + \left(1-\frac{n}{p}\right) \left(1-\frac{M}{p}\right)\frac{n}{p}\left(-\frac{n+2M}{p} + \frac{nM}{p^2}\right)\\
    &\stackrel{(ii)}{<}\left(1-\frac{n}{p}\right) \left(1-\frac{M}{p}\right)\frac{n+M}{p} + \left(1-\frac{n+M}{p}\right)\frac{n}{p}\left(-\frac{n+2M}{p} + \frac{nM}{p^2}\right)\\
    &< \left(1-\frac{n}{p}\right) \left(1-\frac{M}{p}\right)\frac{n+M}{p} -\frac{n^2+2nM}{p^2} +\frac{n^3 + 4n^2M + 2nM^2}{P^3},
\end{align*}
where $(i)$ follows from the face that $p>\frac{5n+4M}{2}$ and $(ii)$ follows from the fact that $-\frac{n+2M}{p} + \frac{nM}{p^2}<0$. Since $p>\frac{5n+4M}{2}$, we have:
\begin{equation*}
     - \frac{n^2}{p^2} + \frac{n^3-n^2M-2nM^2}{p^3} >  -\frac{n^2+2nM}{p^2} +\frac{n^3 + 4n^2M + 2nM^2}{P^3}
\end{equation*}
which implies $\hat{c}_3^{\text{(concurrent)}} > \hat{c}_3^{\text{(sequential)}}$ and completes the proof.

4.$\hat{c}_4^{\text{(concurrent)}} > \hat{c}_4^{\text{(sequential)}}$. Consider:
\begin{align*}
    \hat{c}_4^{\text{(concurrent)}} &= \frac{n}{p} + \frac{nM}{p(p-n-M-1)}>\frac{n}{p}> \left(1-\frac{M}{2p}\right)^2\frac{n}{p}= \hat{c}_4^{\text{(sequential)}}.
\end{align*}
5. The proof of $\hat{c}_5^{\text{(concurrent)}} > \hat{c}_5^{\text{(sequential)}}$ is the same as $\hat{c}_4^{\text{(concurrent)}} > \hat{c}_4^{\text{(sequential)}}$.

\subsection{Comparison of Generalization Error When \texorpdfstring{$T=3$}{T=3}}
By observing \cref{eq:generalization T=3 jnt} and \cref{eq:generalization T=3 sep}, the expressions of generalization error for both rehearsal methods share the same structure:
\begin{equation*}
    G_3 = \frac{1}{3}\hat{d}_1(\cnorm{\bm{w}_1^*}^2 + \cnorm{\bm{w}_2^*}^2 + \cnorm{\bm{w}_3^*}^2) + \frac{1}{3}\hat{d}_2\cnorm{\bm{w}_1^*-\bm{w}_2^*}^2 + \frac{1}{3}\hat{d}_3\cnorm{\bm{w}_1^*-\bm{w}_3^*}^2 + \frac{1}{3}\hat{d}_4\cnorm{\bm{w}_2^*-\bm{w}_3^*}^2.
\end{equation*}
By comparing \cref{eq:generalization T=3 jnt} and \cref{eq:generalization T=3 sep}, we have the following conclusions: 1.$\hat{d}_1^{\text{(concurrent)}} < \hat{d}_1^{\text{(sequential)}}$; 2.$\hat{d}_2^{\text{(concurrent)}} > \hat{d}_2^{\text{(sequential)}}$ when $p>\frac{4n+3M}{2}$; 3.$\hat{d}_3^{\text{(concurrent)}} > \hat{d}_3^{\text{(sequential)}}$; 4.$\hat{d}_4^{\text{(concurrent)}} > \hat{d}_4^{\text{(sequential)}}$. The proof of these relationships is provided as follows.
\begin{enumerate}
    \item $\hat{d}_1^{\text{(concurrent)}} < \hat{d}_1^{\text{(sequential)}}$:
    \begin{align*}
        \hat{d}_1^{\text{(sequential)}} &= \left(1-\frac{n}{p}\right)^3\left(1-\frac{M}{p}\right)\left(1-\frac{M}{2p}\right)^2\\
        &>\left(1-\frac{n}{p}\right)^3\left(1-\frac{M}{p}\right)^2\\
        &>\left(1-\frac{n+M}{p}\right)^2\left(1-\frac{M}{p}\right)\\
        &=\hat{d}_1^{\text{(concurrent)}}.
    \end{align*}
    \item $\hat{d}_2^{\text{(concurrent)}} > \hat{d}_2^{\text{(sequential)}}$ when $p>\frac{4n+3M}{2}$. We first lower bound $\hat{d}_2^{\text{(concurrent)}}$ as follows.
    \begin{align*}
        &\hat{d}_2^{\text{(concurrent)}} \\
        &= \left(3-\frac{3(n+M)}{p}\right)\frac{nM}{p(p-n-M-1)} + \frac{3M^2}{4p(p-n-M-1)}\nonumber + \frac{n+M}{p}\left(2-\frac{3n}{p} -\frac{M}{p} + \frac{n(n+M)}{p^2}\right)\\
        &>  \left(3-\frac{3(n+M)}{p}\right)\frac{nM}{p^2} + \frac{3M^2}{4p^2} + \frac{n+M}{p}\left(2-\frac{3n}{p} -\frac{M}{p} + \frac{n(n+M)}{p^2}\right)\\
        &> 3\left(1-\frac{n+M}{p}\right)\frac{nM}{p^2} + \frac{2(n+M)}{p} + \frac{n+M}{p}\left(-\frac{3n}{p} - \frac{n}{p} + \frac{n(n+M)}{p^2}\right)\\
        &= \frac{2(n+M)}{p} - \frac{3n^2+nM+M^2}{p^2} + \frac{n^3-n^2M-2nM^2}{p^3}.
    \end{align*}
    On the other hand, we upper bound $\hat{d}_2^{\text{(sequential)}}$ as follows.
    \begin{align*}
        &\hat{d}_2^{\text{(sequential)}} \\
        &=  \left(1-\frac{n}{p}\right)\left(1-\frac{M}{2p}\right)^2 \left[ \left(1-\frac{M}{p}\right)\left(2-\frac{n}{p}\right)\frac{n}{p} + \frac{M}{p} \right] +\frac{M}{p} -\frac{M^2}{4p^2}\\
        &=  \left(1-\frac{n}{p}\right)\left(1-\frac{M}{p} + \frac{M^2}{4p^2}\right) \left[ \left(1-\frac{M}{p}\right)\left(2-\frac{n}{p}\right)\frac{n}{p} + \frac{M}{p} \right] +\frac{M}{p} -\frac{M^2}{4p^2}\\
        &= \left(1-\frac{n}{p}\right)\left(1-\frac{M}{p}\right) \left[ \left(1-\frac{M}{p}\right)\left(2-\frac{n}{p}\right)\frac{n}{p} + \frac{M}{p} \right] +\frac{M}{p}\nonumber + \frac{M^2}{4p^2}\left[ \left(1-\frac{M}{p}\right)\left(2-\frac{n}{p}\right)\frac{n}{p} + \frac{M}{p} \right] -\frac{M^2}{4p^2}\\
        &< \left(1-\frac{n}{p}\right)\left(1-\frac{M}{p}\right) \left[ \left(1-\frac{M}{p}\right)\left(2-\frac{n}{p}\right)\frac{n}{p} + \frac{M}{p} \right] +\frac{M}{p} + \frac{M^2}{4p^2}\left[\frac{2n}{p} + \frac{M}{p}-1 \right]\\
        &< \left(1-\frac{n}{p}\right)\left(1-\frac{M}{p}\right) \left[ \left(2-\frac{n}{p}\right)\frac{n}{p} + \frac{M}{p} \right] +\frac{M}{p}\\
        &= \left(1-\frac{n}{p}\right)\left(1-\frac{M}{p}\right) \frac{2n}{p} - \left(1-\frac{n}{p}\right)\left(1-\frac{M}{p}\right) \frac{n^2}{p^2} + \frac{2M}{p}\nonumber+ \left(-\frac{n+M}{p} + \frac{nM}{p^2}\right)\frac{M}{p}\\
        &= \frac{2(n+M)}{p} - \frac{3n^2+3nM+M^2}{p^2} + \frac{n^3+3n^2M+nM^2}{p^3} - \frac{n^3M}{p^4}\\
        &< \frac{2(n+M)}{p} - \frac{3n^2+3nM+M^2}{p^2} + \frac{n^3+3n^2M+nM^2}{p^3}.
    \end{align*}
    Since $p>\frac{4n+3M}{2}$, we have:
    \begin{equation*}
        - \frac{3n^2+nM+M^2}{p^2} + \frac{n^3-n^2M-2nM^2}{p^3} > - \frac{3n^2+3nM+M^2}{p^2} + \frac{n^3+3n^2M+nM^2}{p^3} 
    \end{equation*}
    which implies $\hat{d}_2^{\text{(concurrent)}} > \hat{d}_2^{\text{(sequential)}}$ and completes the proof.
    \item $\hat{d}_3^{\text{(concurrent)}} > \hat{d}_3^{\text{(sequential)}}$. We first lower bound $\hat{d}_3^{\text{(concurrent)}}$ as follows.
    \begin{align*}
        \hat{d}_3^{\text{(concurrent)}} &=\frac{n}{p}\left(2-\frac{2(n+M)}{p} + \frac{(n+M)^2}{p^2}\right) + \frac{M}{p}\left(1-\frac{n+M}{p}\right) + \frac{M}{2p}\nonumber + \frac{3nM}{2p(p-n-M-1)}\\
        &> \frac{n}{p}\left(2-\frac{2(n+M)}{p} + \frac{(n+M)^2}{p^2}\right) + \frac{M}{p}\left(1-\frac{n+M}{p}\right) + \frac{M}{2p} + \frac{3nM}{2p^2}.
    \end{align*}
    On the other hand, we upper bound $\hat{d}_3^{\text{(sequential)}}$ as follows.
    \begin{align*}
        &\hat{d}_3^{\text{(sequential)}} \\
        &= \left(1-\frac{M}{2p}\right)^2\frac{n}{p} + \left(1-\frac{M}{2p}\right)^2\left(1-\frac{n}{p}\right)\frac{M}{p}\nonumber+ \left(1-\frac{n}{p}\right)^2\left(1-\frac{M}{p}\right)\left(1-\frac{M}{2p}\right)^2\frac{n}{p} + \left(1-\frac{M}{2p}\right)\frac{M}{2p}\\
        &< \left(1-\frac{M}{2p}\right)^2\frac{n}{p}\left[ 1+ \left(1-\frac{n}{p}\right)^2\left(1-\frac{M}{p}\right)\right] \nonumber + \left(1-\frac{n}{p}\right)\left(1-\frac{M}{p}\right)\frac{M}{p} + \frac{M^3}{4p^3} + \frac{M}{2p}\\
        &< \frac{n}{p}\left( 2-\frac{2n+M}{p} + \frac{n^2+2nM}{p^2} \right) + \frac{n}{p}\left( -\frac{M}{p} + \frac{M^2}{4p^2}\right)\nonumber+ \frac{M}{p}\left(1-\frac{n+M}{p}\right) + \frac{nM^2}{p} + \frac{M^3}{4p^3} + \frac{M}{2p}\\
        &= \frac{n}{p}\left( 2-\frac{2n+2M}{p} + \frac{n^2+2nM+M^2}{p^2} \right) + \frac{M}{p}\left(1-\frac{n+M}{p}\right) \nonumber + \frac{nM^2+M^3}{4p^3} + \frac{M}{2p}\\
        &< \frac{n}{p}\left( 2-\frac{2(n+M)}{p} + \frac{(n+M)^2}{p^2} \right) + \frac{M}{p}\left(1-\frac{n+M}{p}\right) + \frac{M}{2p} + \frac{3nM}{2p^2}.
    \end{align*}
    By combining the above equations, we complete the proof.
    \item $\hat{d}_4^{\text{(concurrent)}} > \hat{d}_4^{\text{(sequential)}}$. Consider:
    \begin{align*}
        \hat{d}_4^{\text{(sequential)}} &= \left(1-\frac{M}{2p}\right)^2\frac{n}{p} \left[ \left(1-\frac{M}{p}\right)\left(1-\frac{n}{p}\right) + 1 \right] + \frac{M}{2p}\\
        &<\frac{n}{p} \left[ \left(1-\frac{M}{p}\right)\left(1-\frac{n}{p}\right) + 1 \right] + \frac{M}{2p}\\
        &= \frac{n}{p} \left[ 2-\frac{n+M}{p} \right] + \frac{M}{2p} + \frac{n^2M}{p^3}\\
        &< \frac{n}{p} \left[ 2-\frac{n+M}{p} \right] + \frac{M}{2p} + \frac{3nM}{2p(p-n-M-1)}\\
        &< \hat{d}_4^{\text{(concurrent)}}.
    \end{align*}
\end{enumerate}

\section{Proof of \texorpdfstring{\Cref{lemma:coefficients of FTGT}}{Lemma 1}  
}\label{sec:comparison for general case}

In this section, we prove \Cref{lemma:coefficients of FTGT}, which helps to further compare the performance between concurrent and sequential rehearsal methods for general $T$. We assume that $M\ge 2$. We first prove coefficients $d_{0T},d_{ijkT}$ in \Cref{sec:G comparison general case} , and then prove the coefficients $c_{i},c_{ijk}$ in \Cref{sec:F conparison general case}.

\subsection{Proof of Coefficients $d_{0T},d_{ijkT}$ in \texorpdfstring{\Cref{lemma:coefficients of FTGT}}{Lemma 1}}\label{sec:G comparison general case}
In this subsection, we will compare the coefficients $d_{0T},d_{ijkT}$ under different rehearsal methods. We first fix the index $i$, which implies that we consider the generalization error on the previous task $i$.

\noindent 1. We first prove $d_{0T}^{\text{(concurrent)}} < d_{0T}^{\text{(sequential)}}$. According to \Cref{lemma:lower bound of prod}, we have:
    \begin{align*}
        d_{0T}^{\text{(concurrent)}}&= \left(1-\frac{n}{p}\right)\left(1-\frac{n+M}{p}\right)^{T-1} \\
        &<  \left(1-\frac{n}{p}\right)\prod_{l=0}^{T-2} \left[\left(1-\frac{M}{(T-l-1)p}\right)^{T-l-1}\left(1-\frac{n}{p}\right)\right] \\
        &= d_{0T}^{\text{(sequential)}}
    \end{align*}
\noindent 2. Now, we prove $d_{i1iT}^{\text{(concurrent)}} >d_{i1iT}^{\text{(sequential)}}$ if $p>2T^4(n+M)nM$. We first consider:
    \begin{align}\label{eq:proof of d1 eq1}
        \frac{n}{p}\prod_{l=0}^{T-2} \left[\left(1-\frac{M}{(T-l-1)p}\right)^{T-l-1}\left(1-\frac{n}{p}\right)\right]         &\stackrel{(i)}{<} \frac{n}{p} \left(1 - \frac{n+M}{p} + \frac{(n+M)M}{p^2}\right)^{T-1}\nonumber\\
        &\stackrel{(ii)}{<} \frac{n}{p} \left(1 - \frac{n+M}{p}\right)^{T-1} + \frac{T^2(n+M)nM}{p^3},
        \end{align}
    where $(i)$ follows from \Cref{lemma:upper bound of prod} and $(ii)$ follows from \Cref{lemma:upper bound of (++) power}.  We also notice that:
    \begin{align}\label{eq:proof of d1 eq4}
        &\sum_{l=0}^{T-2}\prod_{k=0}^{l-1} \left[ \left(1-\frac{M}{(T-k-1)p}\right)^{T-k-1} \left(1-\frac{n}{p}\right)\right]\left(1-\frac{M}{(T-l-1)p}\right)^{T-l-2}\frac{M}{(T-l-1)p}\nonumber\\
        &= \sum_{l=0}^{T-3}\prod_{k=0}^{l-1} \left[ \left(1-\frac{M}{(T-k-1)p}\right)^{T-k-1} \left(1-\frac{n}{p}\right)\right]\left(1-\frac{M}{(T-l-1)p}\right)^{T-l-2}\frac{M}{(T-l-1)p} \nonumber\\
        &\hspace{3cm} + \prod_{k=0}^{T-3} \left[ \left(1-\frac{M}{(T-k-1)p}\right)^{T-k-1} \left(1-\frac{n}{p}\right)\right]\left(1-\frac{M}{p}\right)\frac{M}{p}\nonumber\\
        &\stackrel{(i)}{<} \left(1-\frac{1}{Tp}\right)\sum_{l=0}^{T-3} \left(1-\frac{n+M}{p}\right)^l\frac{M}{(T-l-1)p}  + \left(1-\frac{n+M}{p} + \frac{(n+M)M}{p^2}\right)^{T-2}\left(1-\frac{M}{p}\right)\frac{M}{p}\nonumber\\
        &\stackrel{(ii)}{<} \left(1-\frac{1}{Tp}\right)\sum_{l=0}^{T-3} \left(1-\frac{n+M}{p}\right)^l\frac{M}{(T-l-1)p}  + \left[ \left(1 - \frac{n+M}{p}\right)^{T-2} + \frac{T^2(n+M)M}{p^2}\right]\left(1-\frac{M}{p}\right)\frac{M}{p}\nonumber\\
        &< \sum_{l=0}^{T-2} \left(1-\frac{n+M}{p}\right)^l\frac{M}{(T-l-1)p}-\frac{M}{T^2p^2} + \frac{T^2(n+M)M^2}{p^3},
    \end{align}
    where $(i)$ follows from \Cref{lemma:upper bound of prod,lemma:corollary of prod upper bound} and $(ii)$ follows from \Cref{lemma:upper bound of (++) power}. By combining \cref{eq:proof of d1 eq1,eq:proof of d1 eq4}, we can conclude:
    \begin{align}\label{eq: d1 separate < d1 joint}
        d_{i1iT}^{\text{(sequential)}} &< \frac{n}{p} \left(1 - \frac{n+M}{p}\right)^{T-1} + \sum_{l=0}^{T-2} \left(1-\frac{n+M}{p}\right)^l\frac{M}{(T-l-1)p} + \frac{T^2(n+M)nM}{p^3} -\frac{M}{T^2p^2} + \frac{T^2(n+M)M^2}{p^3}\nonumber\\
        &\stackrel{(i)}{<} \frac{n+M}{p} \left(1 - \frac{n+M}{p}\right)^{T-1} + \sum_{l=0}^{T-2} \left(1-\frac{n+M}{p}\right)^l\frac{M}{(T-l-1)p}\nonumber\\
        &= d_{i1iT}^{\text{(concurrent)}},
    \end{align}
    where $(i)$ follows from the fact that $p>2T^4(n+M)nM$.
    
\noindent 3. Next, we prove $d_{ijiT}^{\text{(concurrent)}} >d_{ijiT}^{\text{(sequential)}}$ if $p>T^4(n+M)M$, for $j=2,3,...,T-1$. We first have:
\begin{align}\label{eq:proof of dj eq1}
    &\sum_{l=0}^{T-j-1}\prod_{k=0}^{l-1} \left[ \left(1-\frac{M}{(T-k-1)p}\right)^{T-k-1} \left(1-\frac{n}{p}\right)\right]\left(1-\frac{M}{(T-l-1)p}\right)^{T-j-l-1}\frac{M}{(T-l-1)p}\nonumber\\
    &= \sum_{l=0}^{T-j-2}\prod_{k=0}^{l-1} \left[ \left(1-\frac{M}{(T-k-1)p}\right)^{T-k-1} \left(1-\frac{n}{p}\right)\right]\left(1-\frac{M}{(T-l-1)p}\right)^{T-j-l-1}\frac{M}{(T-l-1)p}\nonumber\\
    &\hspace{1cm}+ \prod_{k=0}^{T-j-2} \left[ \left(1-\frac{M}{(T-k-1)p}\right)^{T-k-1} \left(1-\frac{n}{p}\right)\right]\frac{M}{jp}\nonumber\\
    &\stackrel{(i)}{<} \left(1-\frac{1}{Tp}\right)\sum_{l=0}^{T-j-2}\left(1-\frac{n+M}{p}\right)^l\frac{M}{(T-l-1)p} + \left( 1-\frac{n+M}{p} + \frac{(n+M)M}{p^2} \right)^{T-j-1}\frac{M}{jp}\nonumber\\
    &\stackrel{(ii)}{<} \left(1-\frac{1}{Tp}\right)\sum_{l=0}^{T-j-2}\left(1-\frac{n+M}{p}\right)^l\frac{M}{(T-l-1)p} + \left( 1-\frac{n+M}{p}\right)^{T-j-1}\frac{M}{jp} + \frac{T^2(n+M)M^2}{jp^3}\nonumber\\
    &<\sum_{l=0}^{T-j-1} \left(1-\frac{n+M}{p}\right)^l\frac{M}{(T-l-1)p} - \frac{M}{T^2p^2} + \frac{T^2(n+M)M^2}{p^3}
\end{align}
where $(i)$ follows from \Cref{lemma:upper bound of prod,lemma:corollary of prod upper bound}, $(ii)$ follows \Cref{lemma:upper bound of (++) power}. Therefore, if $p>T^4(n+M)M$, we have:
\begin{align}\label{eq:proof of dj eq1 after}
    \sum_{l=0}^{T-j-1}&\prod_{k=0}^{l-1} \left[ \left(1-\frac{M}{(T-k-1)p}\right)^{T-k-1} \left(1-\frac{n}{p}\right)\right]\left(1-\frac{M}{(T-l-1)p}\right)^{T-j-l-1}\frac{M}{(T-l-1)p}\nonumber\\
    &<\sum_{l=0}^{T-j-1} \left(1-\frac{n+M}{p}\right)^l\frac{M}{(T-l-1)p}.
\end{align}
Furthermore, we have:
\begin{align}\label{eq:proof of dj eq2}
    \prod_{k=0}^{T-j-1} \left[\left(1-\frac{M}{(T-l-1)p}\right)^{T-k-1}\left(1-\frac{n}{p}\right)\right] &\left(1-\frac{M}{(j-1)p}\right)^{j-1} \frac{n}{p}\stackrel{(i)}{<} \left( 1-\frac{n+M}{p} \right)^{T-j}\frac{n}{p}
\end{align}
where $(i)$ follows from \Cref{lemma:upper bound of prod,lemma:corollary of prod upper bound}. Therefore, by combining \cref{eq:proof of dj eq1 after,eq:proof of dj eq2}, we have:
\begin{equation}
    \label{eq:proof of dj eq3} d_{ijiT}^{\text{(sequential)}}<\sum_{l=0}^{T-j-1} \left(1-\frac{n+M}{p}\right)^l\frac{M}{(T-l-1)p} + \left( 1-\frac{n+M}{p} \right)^{T-j}\frac{n}{p} \le d_{ijiT}^{\text{(concurrent)}}.
\end{equation}
4. Last, we prove $d_{iTiT}^{\text{(concurrent)}} >d_{iTiT}^{\text{(sequential)}}$. The proof is straightforward:
\begin{align*}
    d_{iTiT}^{\text{(sequential)}} &= \left(1-\frac{M}{(T-1)p}\right)^{T-1} \frac{n}{p}<\frac{n}{p} \le d_{iTiT}^{\text{(concurrent)}}.
\end{align*}

5. Moreover, for the other choices of $j,k$ we have $d_{ijkT}^{\text{(concurrent)}}\ge0$ and $d_{ijkT}^{\text{(sequential)}}=0$.

\subsection{Proof of Coefficients $c_{i},c_{ijk}$ in \texorpdfstring{\Cref{lemma:coefficients of FTGT}}{Lemma 1}}\label{sec:F conparison general case}
In this subsection, we will compare the coefficients $c_{i},c_{ijk}$ under different rehearsal methods. Before we start, we first provide some important observation about the terms $\beta_1$ and $\beta_2$ in \cref{eq:wtwi - wiwi joint}. We split the term $\beta_1$ into two parts.
\begin{align}\label{eq:beta 1 into beta 1+ and beta 1-}
    \beta_1 =& \left.\sum_{l=0}^{T-i-1} \left(1-\frac{n+M}{p}\right)^l\sum_{j=1}^{T-l-2}\sum_{k=j+1}^{T-l-1}\frac{(\frac{M}{T-l-1})^2}{p(p-n-M-1)}\cnorm{\bm{w}_{j}^* - \bm{w}_k^*}^2\right\} \beta_{1}^+\nonumber\\
    & \left.
    \begin{array}{cc}
    \displaystyle+\sum_{l=T-i}^{T-2} \left(1-\frac{n+M}{p}\right)^l\sum_{j=1}^{T-l-2}\sum_{k=j+1}^{T-l-1}\frac{(\frac{M}{T-l-1})^2}{p(p-n-M-1)}\cnorm{\bm{w}_{j}^* - \bm{w}_k^*}^2\\
    \displaystyle- \sum_{l=0}^{i-2} \left(1-\frac{n+M}{p}\right)^l\sum_{j=1}^{i-l-2}\sum_{k=j+1}^{i-l-1}\frac{(\frac{M}{i-l-1})^2}{p(p-n-M-1)}\cnorm{\bm{w}_{j}^* - \bm{w}_k^*}^2\\
    \end{array}
    \right\} \beta_{1}^-.
\end{align}
Since $T-i-1 \ge 0$, we notice that $\beta_{1}^+$ consists of terms $\delta_{j,k}^+\cnorm{\bm{w}_{j}^* - \bm{w}_k^*}^2$ where $\delta_{j,k}^+ \ge \left( 1-\frac{n+M}{p} \right)\frac{M^2}{T^2p^2}$ for any $j,k\in [T-1]$ and $j< k$. For the term $\beta_{1}^-$, we have:
\begin{align}\label{eq:beta 1}
    \beta_1^-\ =&\sum_{l=0}^{i-2} \left(1-\frac{n+M}{p}\right)^{T-i+l}\sum_{j=1}^{i-l-2}\sum_{k=j+1}^{i-l-      1}\frac{(\frac{M}{i-l-1})^2}{p(p-n-M-1)}\cnorm{\bm{w}_{j}^* - \bm{w}_k^*}^2 \nonumber\\
        &- \sum_{l=0}^{i-2} \left(1-\frac{n+M}{p}\right)^l\sum_{j=1}^{i-l-2}\sum_{k=j+1}^{i-l-1}\frac{(\frac{M}{i-l-1})^2}{p(p-n-M-1)}\cnorm{\bm{w}_{j}^* - \bm{w}_k^*}^2\nonumber\\
        =& \sum_{l=0}^{i-2} \left[\left(1-\frac{n+M}{p}\right)^{T-i} - 1\right]\left(1-\frac{n+M}{p}\right)^{l}\sum_{j=1}^{i-l-2}\sum_{k=j+1}^{i-l-1}\frac{(\frac{M}{i-l-1})^2}{p(p-n-M-1)}\cnorm{\bm{w}_{j}^* - \bm{w}_k^*}^2 \nonumber\\
        \ge& -\frac{T(n+M)}{p} \sum_{l=0}^{i-2} \sum_{j=1}^{i-l-2}\sum_{k=j+1}^{i-l-1}\frac{(\frac{M}{i-l-1})^2}{p(p-n-M-1)}\cnorm{\bm{w}_{j}^* - \bm{w}_k^*}^2.
\end{align}
The above argument shows that $\beta_1^-$ consists of terms $\delta_{j,k}^-\cnorm{\bm{w}_{j}^* - \bm{w}_k^*}^2$ where $\delta_{j,k}^- \ge -\frac{T^2(n+M)M^2}{p^3}$ for $j,k \in [i-1]$ and $j < k$. Therefore, if $p > (T^4 + 1)(n+M)$, we can conclude that $\beta_1$ consists of terms $\delta_{j,k}\cnorm{\bm{w}_{j}^* - \bm{w}_k^*}^2$ for $j,k \in [T-1]$ and $ j < k$, where
\begin{equation}\label{eq:delta jk}
    \delta_{j,k} \ge \left( 1-\frac{n+M}{p} \right)\frac{M^2}{T^2p^2} -\frac{T^2(n+M)M^2}{p^3} >0, 
\end{equation} By the same argument, we have: 
\begin{align}\label{eq:beta 2 into beta 2+ and beta 2-}
    \beta_2 =& \left.\sum_{l=0}^{T-i-1} \left(1-\frac{n+M}{p}\right)^l\sum_{j=1}^{T-l-1}\frac{\frac{nM}{T-l-1}}{p(p-n-M-1)}\cnorm{\bm{w}_{j}^* - \bm{w}_{T-l}^*}^2\right\} \beta_{2}^+\nonumber\\
    & \left.
    \begin{array}{cc}
    \displaystyle+\sum_{l=T-i}^{T-2} \left(1-\frac{n+M}{p}\right)^l\sum_{j=1}^{T-l-1}\frac{\frac{nM}{T-l-1}}{p(p-n-M-1)}\cnorm{\bm{w}_{j}^* - \bm{w}_{T-l}^*}^2\\
    \displaystyle- \sum_{l=0}^{i-2} \left(1-\frac{n+M}{p}\right)^l\sum_{j=1}^{i-l-1}\frac{\frac{nM}{i-l-1}}{p(p-n-M-1)}\cnorm{\bm{w}_{j}^* - \bm{w}_{i-l}^*}^2\\
    \end{array}
    \right\} \beta_{2}^-,
\end{align}
where $\beta_{2}^+$ consists of terms $\eta_{j,k}^+\cnorm{\bm{w}_{j}^* - \bm{w}_k^*}^2$ with $\eta_{j,k}^+ \ge \left(1-\frac{n+M}{p}\right)\frac{nM}{Tp^2}$ for $j,k \in [T-1]$ and $j < k$ and $\beta_2^-$ consists of terms $\eta_{j,k}^-\cnorm{\bm{w}_{j}^* - \bm{w}_k^*}^2$ with $\eta_{j,k}^- \ge  -\frac{T^2(n+M)nM}{p^3}$ for $j,k \in [i-1]$ and $j < k$. Therefore, if $p > (T^4+1)(n+M)M$, we conclude that $\beta_2$ consists of terms $\eta_{j,k}\cnorm{\bm{w}_{j}^* - \bm{w}_k^*}^2$ for $j\in[k-1],k=2,3,..,i$ where
\begin{equation}\label{eq:eta jk}
    \eta_{j,k} \ge \left(1-\frac{n+M}{p}\right)\frac{nM}{Tp^2} -\frac{T^2(n+M)nM}{p^3} >0.
\end{equation}
Now, we start to coefficients $c_{i},c_{ijk}$ in \Cref{lemma:coefficients of FTGT}. We first fix the index $i$, which means that we consider the forgetting on the previous task $i$. The proof of
$c_i^{\text{(concurrent)}} < c_i^{\text{(sequential)}}$ if $p>2T^3(n+M)^2$ follows from \Cref{lemma:prod lower bound in forgetting} directly. 

The proof of $c_{ijk}^{\text{(concurrent)}} >c_{ijk}^{\text{(sequential)}}$ are as follows.

\noindent 1. We prove $c_{i1i}^{\text{(concurrent)}} >c_{i1i}^{\text{(sequential)}}$ if $p>3T^4(n+M)nM$. We first upper bound part of the coefficient $c_{i1i}^{\text{(sequential)}}$:
\begin{align}\label{eq:comparison of c1 general 1}
    &\frac{n}{p}\left\{\prod_{l=0}^{T-2}\left[\left(1-\frac{M}{(T-l-1)p}\right)^{T-l-1}\left(1-\frac{n}{p}\right) \right] - \prod_{l=0}^{i-2}\left[\left(1-\frac{M}{(i-l-1)p}\right)^{i-l-1}\left(1-\frac{n}{p}\right)\right]\right\}\nonumber\\
    &\hspace{1cm} \stackrel{(i)}{<} \frac{n}{p}  \left[ \left(1-\frac{n+M}{p}\right)^{T-1}-\left(1-\frac{n+M}{p}\right)^{i-1}   \right] + \frac{T^2(n+M)nM}{p^3}
\end{align}
where $(i)$ follows from \Cref{lemma:prod upper bound in forgetting}. We then upper bound the other part of $c_{i1i}^{\text{(sequential)}}$ as follows.
\begin{align}\label{eq:comparison of c1 general 2}
    & \sum_{l=0}^{T-2}\prod_{k=0}^{l-1} \left[ \left(1-\frac{M}{(T-k-1)p}\right)^{T-k-1} \left(1-\frac{n}{p}\right)\right]\left(1-\frac{M}{(T-l-1)p}\right)^{T-l-2}\frac{M}{(T-l-1)p} \nonumber\\
    & \hspace{.3cm} - \sum_{l=0}^{i-2}\prod_{k=0}^{l-1} \left[ \left(1-\frac{M}{(i-k-1)p}\right)^{i-k-1} \left(1-\frac{n}{p}\right)\right]\left(1-\frac{M}{(i-l-1)p}\right)^{i-l-2}\frac{M}{(i-l-1)p} \nonumber\\
    &=  \sum_{l=0}^{T-i-1}\prod_{k=0}^{l-1} \left[ \left(1-\frac{M}{(T-k-1)p}\right)^{T-k-1} \left(1-\frac{n}{p}\right)\right]\left(1-\frac{M}{(T-l-1)p}\right)^{T-l-2}\frac{M}{(T-l-1)p} \nonumber\\
    &\hspace{.3cm} + \sum_{l=T-i}^{T-2}\prod_{k=0}^{l-1} \left[ \left(1-\frac{M}{(T-k-1)p}\right)^{T-k-1} \left(1-\frac{n}{p}\right)\right]\left(1-\frac{M}{(T-l-1)p}\right)^{T-l-2}\frac{M}{(T-l-1)p} \nonumber\\
    & \hspace{.3cm} - \sum_{l=0}^{i-2}\prod_{k=0}^{l-1} \left[ \left(1-\frac{M}{(i-k-1)p}\right)^{i-k-1} \left(1-\frac{n}{p}\right)\right]\left(1-\frac{M}{(i-l-1)p}\right)^{i-l-2}\frac{M}{(i-l-1)p}\nonumber\\
    &=  \sum_{l=0}^{T-i-1}\prod_{k=0}^{l-1} \left[ \left(1-\frac{M}{(T-k-1)p}\right)^{T-k-1} \left(1-\frac{n}{p}\right)\right]\left(1-\frac{M}{(T-l-1)p}\right)^{T-l-2}\frac{M}{(T-l-1)p} \nonumber\\
    &\hspace{.3cm} + \sum_{l=0}^{i-2}\prod_{k=0}^{l-i+T-1} \left[ \left(1-\frac{M}{(T-k-1)p}\right)^{T-k-1} \left(1-\frac{n}{p}\right)\right]\left(1-\frac{M}{(i-l-1)p}\right)^{i-l-2}\frac{M}{(i-l-1)p} \nonumber\\
    & \hspace{.3cm} - \sum_{l=0}^{i-2}\prod_{k=0}^{l-1} \left[ \left(1-\frac{M}{(i-k-1)p}\right)^{i-k-1} \left(1-\frac{n}{p}\right)\right]\left(1-\frac{M}{(i-l-1)p}\right)^{i-l-2}\frac{M}{(i-l-1)p}\nonumber\\
    &\stackrel{(i)}{<}  \sum_{l=0}^{T-i-1} \left(1-\frac{n+M}{p}\right)^l\frac{M}{(T-l-1)p}-\frac{M}{T^2p^2} + \frac{T^2(n+M)M^2}{p^3}\nonumber\\
    &\hspace{.3cm} + \sum_{l=0}^{i-2}\left[\left( 1-\frac{n+M}{p} + \frac{(n+M)M}{p^2} \right)^{l-i+T} - \left( 1-\frac{n+M}{p} \right)^l \right]  \left(1-\frac{M}{(i-l-1)p}\right)^{i-l-2}\frac{M}{(i-l-1)p} \nonumber\\
    &\stackrel{(ii)}{<}  \sum_{l=0}^{T-i-1} \left(1-\frac{n+M}{p}\right)^l\frac{M}{(T-l-1)p}-\frac{M}{T^2p^2} + \frac{T^2(n+M)M^2}{p^3}\nonumber\\
    &\hspace{.3cm} + \sum_{l=0}^{i-2}\left[\left( 1-\frac{n+M}{p} \right)^{l-i+T} + \frac{T^2(n+M)M}{p^2} - \left( 1-\frac{n+M}{p} \right)^l \right] \frac{M}{(i-l-1)p} \nonumber\\
    &< \sum_{l=0}^{T-1} \left(1-\frac{n+M}{p}\right)^l\frac{M}{(T-l-1)p}  -  \sum_{l=0}^{i-1} \left(1-\frac{n+M}{p}\right)^l\frac{M}{(i-l-1)p} -\frac{M}{T^2p^2} + \frac{2T^2(n+M)M^2}{p^3},
\end{align}
where $(i)$ follows from \cref{eq:proof of d1 eq4,lemma:upper bound of prod,lemma:lower bound of prod}, $(ii)$ follows from \Cref{lemma:upper bound of (++) power}.
By combining \cref{eq:comparison of c1 general 1,eq:comparison of c1 general 2},
\begin{align}\label{eq:c1 separate < c1 joint}
    c_{i1i}^{\text{(sequential)}} &< \frac{n}{p}  \left[ \left(1-\frac{n+M}{p}\right)^{T-1}-\left(1-\frac{n+M}{p}\right)^{i-1}   \right] + \sum_{l=0}^{T-1} \left(1-\frac{n+M}{p}\right)^l\frac{M}{(T-l-1)p} \nonumber\\
    &\hspace{.3cm}  -  \sum_{l=0}^{i-1} \left(1-\frac{n+M}{p}\right)^l\frac{M}{(i-l-1)p}  + \frac{T^2(n+M)nM}{p^3}  -\frac{M}{T^2p^2} + \frac{2T^2(n+M)M^2}{p^3} \\
    &\stackrel{(i)}{<} \frac{n}{p}  \left[ \left(1-\frac{n+M}{p}\right)^{T-1}-\left(1-\frac{n+M}{p}\right)^{i-1}   \right] + \sum_{l=0}^{T-1} \left(1-\frac{n+M}{p}\right)^l\frac{M}{(T-l-1)p} \nonumber\\
    &\hspace{.3cm}  -  \sum_{l=0}^{i-1} \left(1-\frac{n+M}{p}\right)^l\frac{M}{(i-l-1)p} \nonumber\\
    &\stackrel{(ii)}{\le} c_{i1i}^{\text{(concurrent)}}
\end{align}
where $(i)$ follows from the fact that $p>3T^4(n+M)nM$, $(ii)$ follows from our observation in \cref{eq:beta 1 into beta 1+ and beta 1-,eq:beta 1,eq:delta jk,eq:eta jk,eq:beta 2 into beta 2+ and beta 2-}.

\noindent 2. Next, we prove $c_{iji}^{\text{(concurrent)}} >c_{iji}^{\text{(sequential)}}$ if $p>3T^4(n+M)nM$, for $j=2,3,...,i-1$. We observe that $c_{iji}^{\text{(sequential)}}$ consists of two parts, where the first part can be upper bounded by
\begin{align}\label{eq:comparison of cj general 1}
    & \hspace{-.1cm}\sum_{l=0}^{T-j-1}\prod_{k=0}^{l-1} \left[ \left(1-\frac{M}{(T-k-1)p}\right)^{T-k-1} \left(1-\frac{n}{p}\right)\right]\left(1-\frac{M}{(T-l-1)p}\right)^{T-j-l-1}\frac{M}{(T-l-1)p}\nonumber\\
    &\hspace{.3cm} - \sum_{l=0}^{i-j-1}\prod_{k=0}^{l-1} \left[ \left(1-\frac{M}{(i-k-1)p}\right)^{i-k-1} \left(1-\frac{n}{p}\right)\right]\left(1-\frac{M}{(i-l-1)p}\right)^{i-j-l-1}\frac{M}{(i-l-1)p}\nonumber\\
    & \stackrel{(i)}{<}\sum_{l=0}^{T-j-1} \left(1-\frac{n+M}{p}\right)^l\frac{M}{(T-l-1)p}  -  \sum_{l=0}^{i-j-1} \left(1-\frac{n+M}{p}\right)^l\frac{M}{(i-l-1)p} -\frac{M}{T^2p^2} + \frac{2T^2(n+M)M^2}{p^3},
\end{align}
The other part of $c_{iji}^{\text{(sequential)}}$ can be upper bounded by:
\begin{align}\label{eq:comparison of cj general 5}
    &\prod_{k=0}^{T-j-1} \left[\left(1-\frac{M}{(T-k-1)p}\right)^{T-k-1}\left(1-\frac{n}{p}\right)\right] \left(1-\frac{M}{(j-1)p}\right)^{j-1} \frac{n}{p} \nonumber\\
    &\hspace{.8cm}  -\prod_{k=0}^{i-j-1} \left[\left(1-\frac{M}{(i-k-1)p}\right)^{i-k-1}\left(1-\frac{n}{p}\right)\right] \left(1-\frac{M}{(j-1)p}\right)^{j-1} \frac{n}{p} \nonumber\\
    &\stackrel{(i)}{<} \left\{\left(1-\frac{n+M}{p}\right)^{i-j-1} \left[ \left(1-\frac{n+M}{p}\right)^{T-i}-1  \right] + \frac{T^2(n+M)M}{p^2}\right\}\left(1-\frac{M}{(j-1)p}\right)^{j-1} \frac{n}{p}\nonumber\\
    &< \left\{\left(1-\frac{n+M}{p}\right)^{i-j-1} \left[ \left(1-\frac{n+M}{p}\right)^{T-i}-1  \right] \right\} + \frac{T^2(n+M)nM}{p^3},
\end{align}
    where $(i)$ follows from \Cref{lemma:prod upper bound in forgetting}.
    By combining \cref{eq:comparison of cj general 1,eq:comparison of cj general 5}, we have
    \begin{align}\label{eq:upper bound of cj}
        &\hspace{-.2cm}c_j^{\text{(sequential)}} < \sum_{l=0}^{T-j-1} \left(1-\frac{n+M}{p}\right)^l\frac{M}{(T-l-1)p}  -  \sum_{l=0}^{i-j-1} \left(1-\frac{n+M}{p}\right)^l\frac{M}{(i-l-1)p} \nonumber\\
        &\hspace{1.5cm} +\left\{\left(1-\frac{n+M}{p}\right)^{i-1} \left[ \left(1-\frac{n+M}{p}\right)^{T-i}-1  \right] \right\}  + \frac{T^2(n+M)nM}{p^3} -\frac{M}{T^2p^2} + \frac{2T^2(n+M)M^2}{p^3}\nonumber \\
        &\hspace{1cm}\stackrel{(i)}{<} \sum_{l=0}^{T-j-1} \left(1-\frac{n+M}{p}\right)^l\frac{M}{(T-l-1)p}  -  \sum_{l=0}^{i-j-1} \left(1-\frac{n+M}{p}\right)^l\frac{M}{(i-l-1)p}\nonumber\\
        &\hspace{1.5cm} +\left\{\left(1-\frac{n+M}{p}\right)^{i-j-1} \left[ \left(1-\frac{n+M}{p}\right)^{T-i}-1  \right] \right\} \nonumber\\
        &\hspace{1cm}\stackrel{(ii)}{\le} c_{iji}^{\text{(concurrent)}},
    \end{align}
    where $(i)$ follows from the fact that $p>3T^4(n+M)nM$, $(ii)$ follows from our observation in \cref{eq:beta 1 into beta 1+ and beta 1-,eq:beta 1,eq:delta jk,eq:eta jk,eq:beta 2 into beta 2+ and beta 2-}.
    
     3. We prove $c_{iji}^{\text{(concurrent)}} >c_{iji}^{\text{(sequential)}}$ for $j=i,i+1,...,T-1$ if  $p>T^4(n+M)M$. According to the same derivation as \cref{eq:proof of dj eq1,eq:proof of dj eq2}, we have
    \begin{align*}
        c_{iji}^{\text{(sequential)}} &< \sum_{l=0}^{T-j-1} \left(1-\frac{n+M}{p}\right)^l\frac{M}{(T-l-1)p} \left( 1-\frac{n+M}{p} \right)^{T-j}\frac{n}{p} - \frac{M}{T^2p^2} + \frac{T^2(n+M)M^2}{p^3}\\
        &<\sum_{l=0}^{T-j-1} \left(1-\frac{n+M}{p}\right)^l\frac{M}{(T-l-1)p} \left( 1-\frac{n+M}{p} \right)^{T-j}\frac{n}{p}\nonumber\\
        &\stackrel{(i)}{\le} c_{iji}^{\text{(concurrent)}},
    \end{align*}
    where $(i)$ follows from our observation in \cref{eq:beta 1 into beta 1+ and beta 1-,eq:beta 1,eq:delta jk,eq:eta jk,eq:beta 2 into beta 2+ and beta 2-}.

    4. Last, we prove  $c_{iTi}^{\text{(concurrent)}} >c_{iTi}^{\text{(sequential)}}$ if $p>T^2(n+M)M$. Consider:
    \begin{align}\label{eq:ct separate < ct joint}
        c_{iTi}^{\text{(sequential)}} &= \left(1-\frac{M}{(T-1)p}\right)^{T-1} \frac{n}{p}<\left(1-\frac{M}{(T-1)p}\right) \frac{n}{p} < \frac{n}{p} - \frac{nM}{p^2} \nonumber\\
        &\stackrel{(i)}{<} \frac{n}{p} \nonumber\\
        &\stackrel{(ii)}{\le} c_{iTi}^{\text{(concurrent)}},
    \end{align}
    where $(i)$ follows from the fact that $p>T^2(n+M)M$, $(ii)$ follows from our observation in \cref{eq:beta 1 into beta 1+ and beta 1-,eq:beta 1,eq:delta jk,eq:eta jk,eq:beta 2 into beta 2+ and beta 2-}.

    5. As discussed in \cref{eq:beta 1 into beta 1+ and beta 1-,eq:beta 1,eq:delta jk,eq:eta jk,eq:beta 2 into beta 2+ and beta 2-}, we have

\section{Proof of \texorpdfstring{\Cref{th:example}}{Theorem 3}}
In this section, we prove \Cref{th:example} where we provide a particular example in which sequential rehearsal has lower forgetting and generalization than concurrent rehearsal. We first prove the forgetting part in \Cref{th:example}. Recall $F_T = \frac{1}{T-1}\sum_{i = 1}^{T-1}\E(\mathcal{L}_i(\bm{w}_T) - \mathcal{L}_i(\bm{w}_i))$. Therefore, it suffices to prove $$[\mathcal{L}_i(\bm{w}_T) - \mathcal{L}_i(\bm{w}_i)]^{\text{(concurrent)}} > [\mathcal{L}_i(\bm{w}_T) - \mathcal{L}_i(\bm{w}_i)]^{\text{(sequential)}}$$ if $p>2T^2(n+M)nM$ for each $i\in[T-1]$. Since $\bm{w}_i^*$ are orthonormal, we have $\cnorm{\bm{w}_i^*}^2 = 1$ and $\cnorm{\bm{w}_i^* - \bm{w}_j^*}^2 = 2$ for $i\neq j$. Recall the discussion about $\beta_2$ in \cref{eq:beta 2 into beta 2+ and beta 2-}. Then, we consider
    \begin{align}\label{eq:lower bound of beta 2+}
        2\beta_2^+ &= \sum_{l=0}^{T-i-1} \left(1-\frac{n+M}{p}\right)^l \frac{2nM}{p(p-n-M-1)}\nonumber\\
        &= \frac{2nM}{p(p-n-M-1)} \cdot \frac{[1-(1-\frac{n+M}{p})^{T-i}]}{1-(1-\frac{n+M}{p})}\nonumber\\
        &\stackrel{}{>} \frac{2nM}{p^2} \cdot \frac{-\sum_{k=1}^{T-i} \binom{T-i}{k}(-\frac{n+M}{p})^k}{\frac{n+M}{p}}
    \end{align}
    We note that for any $k\in[3,T-i-1]$ and $k$ is odd, we have
        \begin{align*}
        &\binom{T-i}{k} \left(- \frac{n+M}{p}\right)^{k} + \binom{T-i}{k+1} \left(- \frac{n+M}{p}\right)^{k+1}\nonumber\\
        &= \frac{(T-i)!}{k!(T-i-k-1)!}\left(- \frac{n+M}{p}\right)^{k}\left[ \frac{1}{T-i-k}  + \frac{1}{k+1}\left(- \frac{n+M}{p}\right)\right]\nonumber\\
        &< \frac{(T-i)!}{k!(T-i-k-1)!}\left(- \frac{n+M}{p}\right)^{k}\left[ \frac{1}{T}  - \frac{n+M}{p}\right]\nonumber\\
        &\stackrel{(i)}{<} 0,
    \end{align*}
    where $(i)$ follows from the fact that $p>T(n+M)$. By simply discussing when $T-i$ is odd or even, we can have
    \begin{align*}
        -\sum_{k=1}^{T-i} \binom{T-i}{k}\left(-\frac{n+M}{p}\right)^k &> -\binom{T-i}{1}\left(-\frac{n+M}{p}\right) - \binom{T-i}{2}\left(-\frac{n+M}{p}\right)^2\\
        &= \frac{(T-i)(n+M)}{p} - \frac{(T-i)(T-i-1)(n+M)^2}{2p^2}.
    \end{align*}
    By substituting the above equation into \cref{eq:lower bound of beta 2+}, we can have
    \begin{align}\label{eq:lower bound of beta 2+ 2}
        2\beta_2^+ &> \frac{2nM}{p(n+M)} \cdot \left[ \frac{(T-i)(n+M)}{p} - \frac{(T-i)(T-i-1)(n+M)^2}{2p^2} \right]\nonumber\\
        &= \frac{2(T-i)nM}{p^2} - \frac{(T-i)(T-i-1)(n+M)nM}{p^3}\nonumber\\
        &\stackrel{(i)}{\ge}  \frac{(T-i)(n+M)M}{p^2} + \frac{M}{p^2} - \frac{T^2(n+M)nM}{p^3}
    \end{align}
    where $(i)$ follows from the fact that $n\ge M+1$. Now, we can conclude:
\begin{align}\label{eq:lower bound of L-L joint}
    &\hspace{-.5cm}[\mathcal{L}_i(\bm{w}_T) - \mathcal{L}_i(\bm{w}_i)]^{\text{(concurrent)}}\nonumber\\
    &\hspace{-.5cm}=  c^{\text{(concurrent)}}_0 + 2\sum_{j=1}^T c^{\text{(concurrent)}}_j\nonumber\\
    &\hspace{-.5cm}\stackrel{(i)}{>} \left(1-\frac{n}{p}\right)\left[\left(1-\frac{n+M}{p}\right)^{T-1} - \left(1-\frac{n+M}{p}\right)^{i-1}\right] + 2\sum_{j=1}^T c^{\text{(sequential)}}_j + 2\beta_1^+ + 2\beta_2^+\nonumber\\
    &\hspace{-.5cm} \ge \left(1-\frac{n}{p}\right)\left[\left(1-\frac{n+M}{p}\right)^{T-1} - \left(1-\frac{n+M}{p}\right)^{i-1}\right] + 2\sum_{j=1}^T c^{\text{(sequential)}}_j + 2\beta_2^+
\end{align}
where $(i)$ follows from \cref{eq:c1 separate < c1 joint,eq:upper bound of cj,eq:ct separate < ct joint}.
On the other hand, we have:
\begin{align}\label{eq:upper bound of L-L separate}
    &\hspace{-1cm}[\mathcal{L}_i(\bm{w}_T) - \mathcal{L}_i(\bm{w}_i)]^{\text{(sequential)}} \nonumber\\
    &\hspace{-.5cm}\stackrel{(i)}{<} \left(1-\frac{n}{p}\right)\left[\left(1-\frac{n+M}{p}\right)^{T-1} - \left(1-\frac{n+M}{p}\right)^{i-1}\right] + 2\sum_{j=1}^T c^{\text{(sequential)}}_j  \nonumber\\
    & + \frac{(T-i)(n+M)M}{p^2} + \frac{T^3(n+M)^2M^2}{p^4},
\end{align}
where $(i)$ follows from \Cref{lemma:prod upper bound in forgetting tighter}. By combining \cref{eq:lower bound of L-L joint,eq:upper bound of L-L separate,eq:lower bound of beta 2+ 2} and the fact that $p>2T^2(n+M)nM$, we have
\begin{equation*}
    [\mathcal{L}_i(\bm{w}_T) - \mathcal{L}_i(\bm{w}_i)]^{\text{(concurrent)}}> [\mathcal{L}_i(\bm{w}_T) - \mathcal{L}_i(\bm{w}_i)]^{\text{(sequential)}},
\end{equation*}
which completes the proof.

Next, we prove the generalization error part in \Cref{th:example}. Recall $G_T = \frac{1}{T}\sum_{i = 1}^{T}\E\mathcal{L}_i(\bm{w}_T)$, it suffices to prove $$\mathcal{L}^{\text{(concurrent)}}_i(\bm{w}_T) > \mathcal{L}^{\text{(sequential)}}_i(\bm{w}_T)$$ if $p>2T^4(n+M+1)^2M$ for each $i\in[T]$. Since $\bm{w}_i^*$ are orthonormal, we have $\cnorm{\bm{w}_i^*}^2 = 1$ and $\cnorm{\bm{w}_i^* - \bm{w}_j^*}^2 = 2$ for $i\neq j$. Therefore, we have
    \begin{align}\label{eq:proof of General eq1 sub term}
        &\hspace{-.3cm}\sum_{l=0}^{t-2} \left(1-\frac{n+M}{p}\right)^l\sum_{j=1}^{t-l-1}\frac{\frac{nM}{t-l-1}}{p(p-n-M-1)}\cnorm{\bm{w}_{j}^* - \bm{w}_{t-l}^*}^2\nonumber\\
        &= \sum_{l=0}^{T-2} \left(1-\frac{n+M}{p}\right)^l\sum_{j=1}^{T-l-1}\frac{\frac{2nM}{T-l-1}}{p^2}\nonumber\\
        &> (T-1) \left(1-\frac{n+M}{p}\right)^{T}\frac{2nM}{p^2}\nonumber\\
        &>  \left(1-\frac{T(n+M)}{p}\right)\frac{2(T-1)nM}{p^2}\nonumber\\
        &\stackrel{(i)}{\ge} \left(1-\frac{T(n+M)}{p}\right)\frac{(T-1)(n+M+1)M}{p^2},
    \end{align}
    where $(i)$ follows from the fact that $n\ge M+1$. Therefore, by combining \cref{eq:proof of General eq1 sub term,eq:wt-wi jnt general case}, we have:
    \begin{align}\label{eq:lower bound of L jointly}
        \mathcal{L}^{\text{(concurrent)}}_i(\bm{w}_T)&\stackrel{}{>} \left(1-\frac{n}{p}\right)\left(1-\frac{n+M}{p}\right)^{T-1}\nonumber\\
        & + 2\left\{ \left(1-\frac{n+M}{p}\right)^{T-1} \frac{n}{p} + \sum_{l=0}^{T-2} \left(1-\frac{n+M}{p}\right)^l\frac{M}{(T-l-1)p} \right\}\nonumber\\
        & + 2\sum_{j=2}^{T-1} \left\{ \sum_{l=0}^{T-j-1} \left(1-\frac{n+M}{p}\right)^{l}\frac{M}{(T-l-1)p} + \left(1-\frac{n+M}{p}\right)^{T-j}\frac{n}{p}\right\} + \frac{2n}{p}\nonumber\\
        & +\left(1-\frac{T(n+M)}{p}\right)\frac{(T-1)(n+M+1)M}{p^2}.
    \end{align}
    On the other hand, we have:
\begin{align}\label{eq:proof of General eq2}
    \mathcal{L}^{\text{(sequential)}}_i(\bm{w}_T) &\stackrel{(i)}{<}\left(1-\frac{n}{p}\right)\left(1-\frac{n+M}{p} + \frac{(n+M)M}{p^2}\right)^{T-1} \nonumber\\
    & + 2\left\{ \left(1-\frac{n+M}{p}\right)^{T-1} \frac{n}{p} + \sum_{l=0}^{T-2} \left(1-\frac{n+M}{p}\right)^l\frac{M}{(T-l-1)p} \right\}\nonumber\\
    &+ 2\sum_{j=2}^{T-1} \left\{ \sum_{l=0}^{T-j-1} \left(1-\frac{n+M}{p}\right)^{l}\frac{M}{(T-l-1)p} + \left(1-\frac{n+M}{p}\right)^{T-j}\frac{n}{p}\right\} + \frac{2n}{p}\nonumber\\
    &\stackrel{(ii)}{<}\left(1-\frac{n}{p}\right)\left(1-\frac{n+M}{p} \right)^{T-1} \nonumber\\
    &+ 2\left\{ \left(1-\frac{n+M}{p}\right)^{T-1} \frac{n}{p} + \sum_{l=0}^{T-2} \left(1-\frac{n+M}{p}\right)^l\frac{M}{(T-l-1)p} \right\}\nonumber\\
    & + 2\sum_{j=2}^{T-1} \left\{ \sum_{l=0}^{T-j-1} \left(1-\frac{n+M}{p}\right)^{l}\frac{M}{(T-l-1)p} + \left(1-\frac{n+M}{p}\right)^{T-j}\frac{n}{p}\right\} + \frac{2n}{p}\nonumber\\
    &+ \left( \frac{(T-1)(n+M)M}{p^2} + \frac{T^3(n+M)^2M^2}{2p^4} \right)
    \end{align}
    where $(i)$ follows from \Cref{lemma:upper bound of prod,eq: d1 separate < d1 joint,eq:proof of dj eq3}, $(ii)$ follows from \Cref{lemma:upper bound of (++) power tighter} and the fact that $1-\frac{n}{p} < 1$. To build the relationship between \cref{eq:lower bound of L jointly,eq:proof of General eq2}, we have:
    \begin{align}\label{eq:proof of General eq3}
        &\hspace{-.3cm}\left(1-\frac{T(n+M)}{p}\right)\frac{(T-1)(n+M+1)M}{p^2} - \left( \frac{(T-1)(n+M)M}{p^2} + \frac{T^3(n+M)^2M^2}{2p^4} \right)\nonumber\\
        &= \frac{(T-1)M}{p^2} - \frac{T(T-1)(n+M)(n+M+1)M}{p^3} - \frac{T^3(n+M)^2M^2}{2p^4}\nonumber\\
        &\stackrel{(i)}{>}0
    \end{align}
    where $(i)$ follows from the fact that $p>2T^2(n+M+1)^2M$. By combining \cref{eq:lower bound of L jointly,eq:proof of General eq2,eq:proof of General eq3}, we can conclude: $\mathcal{L}^{\text{(concurrent)}}_i(\bm{w}_T) > \mathcal{L}^{\text{(sequential)}}_i(\bm{w}_T)$.

\section{Proof of Hybrid Rehearsal in \cref{prop:coefficients of hybrid}}\label{sec:hybrid rehearsal}
First of all, recall that the memory set $\mathcal{M}_t = \mathcal{M}_t^{\text{sim}} \bigcup \mathcal{M}_t^{\text{dis}}$, where $\mathcal{M}^{\text{sim}}_t = \bigcup_{h\in \mathcal{I}^{(\text{sim})}_t} \mathcal{M}_{t,h}$ and $\mathcal{M}^{\text{dis}}_t = \bigcup_{h\in \mathcal{I}^{(\text{dis})}_t} \mathcal{M}_{t,h}$. We present the explicit expressions of coefficients and the noise term in \Cref{thm: general form of FG} for hybrid rehearsal method in the following proposition.
\begin{proposition}\label{prop:coefficients of hybrid}
    Under the problem setups considered in this work, the coefficients that express the expected value of the forgetting $F_t$ and generalization error $G_t$ obtained by \cref{alg:exp_hybrid} take the following forms.
    \begin{align*}
        &d^{\text{(hybrid)}}_{0t} = r_0 \Gamma_t(t-1), \hspace{.5cm} c^{\text{(hybrid)}}_{i} = d^{\text{(hybrid)}}_{0T} - d^{\text{(hybrid)}}_{0i}\\
        &d^{\text{(hybrid)}}_{ijkt} = 
        \left\{
        \begin{array}{l@{\hspace{-1cm}}c@{\hspace{.5cm}}l}
            \displaystyle \sum_{l=0}^{t-j-1}(1-B_{l,t})^{|\mathcal{I}_{t-l}^{\text{(dis)}}|}\Gamma_t(l)B_{l,t}\mathbf{1}_{t-l,j}^{\text{(sim)}} +\sum_{l=0}^{t-2} (1-B_{l,t})^{|\mathcal{I}_{t-l}^{\text{(dis)}}|}\Gamma_t(l) pB_{l,t}K_{l,t} \mathbf{1}_{t-l,j}^{\text{(sim)}}\mathbf{1}_{t-l,k}^{\text{(sim)}}  \\
            \hspace{.1cm} +(1-B_{t-k,t})^{|\mathcal{I}_{k}^{\text{(dis)}}|}n\Gamma_t(t-k)K_{t-k,t} \mathbf{1}_{k,j}^{\text{(sim)}}  + (1-B_{t-j,t})^{|\mathcal{I}_{j}^{\text{(dis)}}|}\Gamma_t(t-j)(1-r_0) \\
            \hspace{.1cm} \displaystyle  +\sum_{l=0}^{t-j-1} \Gamma_t(l) (1-B_{l,t})^{|\mathcal{I}_{t-l}^{\text{(dis)}}|-f_{t-l}(j)} B_{l,t} \mathbf{1}_{\{j\in \mathcal{I}_{t-l}^{\text{(dis)}}\}} & \text{if}&  j\in[t-1],k=i\\
            (1-r_0)+ n\Gamma_t(t-k)K_{t-k,t}\mathbf{1}_{\{i\in \mathcal{I}_{t}^{\text{(sim)}}\}}  &\text{if}&  j=t,k=i\\
            \displaystyle \sum_{l=0}^{t-2} (1-B_{l,t})^{|\mathcal{I}_{t-l}^{\text{(dis)}}|}\Gamma_t(l) pB_{l,t}K_{l,t} \mathbf{1}_{\{j,k\in \mathcal{I}_{t}^{\text{(sim)}}\}} &\text{if} & j<k \text{ and } j,k\neq i,t\\
            (1-B_{t-k,t})^{|\mathcal{I}_{k}^{\text{(dis)}}|}n\Gamma_t(t-k)K_{t-k,t}\mathbf{1}_{\{j\in \mathcal{I}_{t}^{\text{(sim)}}\}} &\text{if} & j<k  \text{ and } j,k\neq i\\
        \end{array}
        \right.\\
        &c^{\text{(hybrid)}}_{ijk} = d^{\text{(hybrid)}}_{ijkT} - d^{\text{(hybrid)}}_{ijki}, \\
        & \text{noise}_t^{\text{(hybrid)}}(\sigma) = \sum_{l=0}^{t-2} \Gamma_t(l)\left( \sum_{j=1}^{|\mathcal{I}_{t-l}^{\text{(dis)}}|} (1-B_{l,t})^{|\mathcal{I}_{t-l}^{\text{(dis)}}| - f_t(j)} \Lambda_{\frac{M}{t-l-1},\sigma}+ (1-B_{l,t})^{|\mathcal{I}_{t-l}^{\text{(dis)}}|} \Lambda_{n+ p|\mathcal{I}_{t-l}^{\text{(sim)}}| B_{l,t}, \sigma} \right)
    \end{align*} 
    where $r_a\defeq 1-\frac{n+a}{p}$, $B_{l,t} \defeq \left\{\begin{array}{cc}
         \frac{M}{(t-l-1)p} & \text{if }l \neq t-1 \\
         0 & o.w.
    \end{array}\right.$, $K_{l,t} = \frac{B_{l,t}}{p-n-\frac{M}{t-l-1}|\mathcal{I}^{(\text{sim})}| - 1}$, $f_t(j)=s$ such that $($the $s^{th}$ element of $\mathcal{I}_t^{\text{(dis)}})=j$, $\mathbf{1}_{t,j}^{\text{(sim)}} \defeq \mathbf{1}_{\{j\in \mathcal{I}_{t}^{\text{(sim)}}\}}$, $\mathbf{1}_{t,j}^{\text{(dis)}} \defeq \mathbf{1}_{\{j\in \mathcal{I}_{t}^{\text{(dis)}}\}}$, $\displaystyle\Gamma_t(a) = \prod_{l=0}^{a-1} \left[\left(1-B_{l,t}\right)^{|\mathcal{I}_{t-l}^{(\text{dis})}|}r_{\frac{M}{t-l-1}|\mathcal{I}_{t-l}^{(\text{sim})}|}\right]$, $\Lambda_{a,\sigma} = \frac{a\sigma^2}{p-a-1}$.
\end{proposition}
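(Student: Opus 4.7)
}
The plan is to mimic the derivation in \Cref{sec:proof of model error}, breaking the training at each task into a two-stage pipeline and then iterating over tasks. More precisely, for each $t\ge 2$ I would decompose the update $\bm{w}_{t-1}\mapsto \bm{w}_t$ into exactly $1+|\mathcal{I}_t^{(\text{dis})}|$ elementary minimum-norm steps: a \emph{concurrent} step on the combined dataset $\mathcal{D}_t\cup \mathcal{M}_t^{(\text{sim})}$ of size $n+\frac{M}{t-1}|\mathcal{I}_t^{(\text{sim})}|$, followed by $|\mathcal{I}_t^{(\text{dis})}|$ \emph{sequential} steps on the individual memories $\mathcal{M}_{t,h}$, $h\in \mathcal{I}_t^{(\text{dis})}$, each of size $\frac{M}{t-1}$. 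This reduces the analysis of hybrid rehearsal to a direct combination of the two building blocks already worked out in \Cref{sec:joint model error} and \Cref{sec:sep model error}.

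The concurrent step is handled exactly as in \cref{eq:wt-wi* 1,eq:wt-wi* 2}, but with the memory matrix restricted to $\mathcal{M}_t^{(\text{sim})}$: by \Cref{lemma:solution of optimization problem,lemma:orthogonal of I-P and pinvP,lemma:expectation of projection,lemma:expecation of noise,lemma:expectation norm of zero block,corollary:expectation of inner product v1v2}, this contributes a contraction factor $r_{\frac{M}{t-1}|\mathcal{I}_t^{(\text{sim})}|}$ to $\E\|\bm{w}_{t-1}-\bm{w}_i^*\|^2$, cross terms of the form $pB_{0,t}K_{0,t}\|\bm{w}_j^*-\bm{w}_k^*\|^2$ for $j,k\in\mathcal{I}_t^{(\text{sim})}\cup\{t\}$, and a noise contribution $\Lambda_{n+p|\mathcal{I}_t^{(\text{sim})}|B_{0,t},\sigma}$; these account for the factors $r_{\frac{M}{t-1}|\mathcal{I}_{t-l}^{(\text{sim})}|}$ appearing in $\Gamma_t(a)$ and for the $\mathbf{1}^{(\text{sim})}$-gated terms in $d^{\text{(hybrid)}}_{ijkt}$. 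Each subsequent sequential step on a dissimilar memory is handled as in \cref{eq:split three parts sep}: it contributes a factor $(1-B_{0,t})$ on all existing model-error terms, plus an additive $B_{0,t}\|\bm{w}_h^*-\bm{w}_i^*\|^2$ for the specific dissimilar task $h$ being revisited and the noise term $\Lambda_{\frac{M}{t-1},\sigma}$, yielding the $|\mathcal{I}_{t-l}^{(\text{dis})}|$ powers in $(1-B_{l,t})^{|\mathcal{I}_{t-l}^{(\text{dis})}|}$ and the $f_t(j)$-dependent offsets (the index $f_t(j)$ simply records at which of the sequential sub-steps a given dissimilar task $j$ was revisited, which determines how many subsequent contraction factors act on its cross term).

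Composing the two stages gives a one-step recursion of the form
\begin{equation*}
\E\mathcal{L}_i(\bm{w}_t) = (1-B_{0,t})^{|\mathcal{I}_t^{(\text{dis})}|}r_{\frac{M}{t-1}|\mathcal{I}_t^{(\text{sim})}|}\,\E\mathcal{L}_i(\bm{w}_{t-1}) + \text{term}_2^{(\text{hybrid})} + \text{term}_{\text{noise}}^{(\text{hybrid})},
\end{equation*}
where $\text{term}_2^{(\text{hybrid})}$ and $\text{term}_{\text{noise}}^{(\text{hybrid})}$ collect the per-task contributions from both stages. I would then unroll this recursion from $t$ back to $t=1$, using \cref{eq:w1-wi jnt general case} as the base case; the accumulated contraction becomes $\Gamma_t(a)=\prod_{l=0}^{a-1}\bigl[(1-B_{l,t})^{|\mathcal{I}_{t-l}^{(\text{dis})}|} r_{\frac{M}{t-l-1}|\mathcal{I}_{t-l}^{(\text{sim})}|}\bigr]$, which matches the stated definition. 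Grouping the accumulated coefficients by the indices $(i,j,k)$ produces the case split in the statement of $d^{\text{(hybrid)}}_{ijkt}$, and the forgetting coefficients $c_i^{(\text{hybrid})}, c_{ijk}^{(\text{hybrid})}$ follow by subtracting the corresponding expressions at time $i$, exactly as in the derivation of $c_i^{(\text{concurrent})}$ from $d_{0t}^{(\text{concurrent})}$ at the end of \Cref{sec:joint model error}.

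The main obstacle is purely combinatorial bookkeeping: correctly tracking which of the $|\mathcal{I}_{t-l}^{(\text{dis})}|$ sequential sub-steps at each past time $t-l$ first introduces a given task $j$, and how many subsequent $(1-B_{l,t})$ factors then act on its $\|\bm{w}_j^*-\bm{w}_i^*\|^2$ term. The indicator functions $\mathbf{1}^{(\text{sim})}_{t,j}, \mathbf{1}^{(\text{dis})}_{t,j}$ and the auxiliary index $f_t(j)$ are exactly what encode this bookkeeping, so once the two-stage-per-task decomposition is set up and Lemmas in the supporting section are applied stage by stage, the computation is mechanical. I do not foresee any genuinely new technical difficulty beyond what was already resolved for the pure concurrent and pure sequential cases: the hybrid proof is their concatenation, with \Cref{lemma:independence,lemma:expectation of vv,lemma:expectation of inverse} once again ensuring that the cross-stage expectations factor properly due to the independence of fresh memory data across tasks.
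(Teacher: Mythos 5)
Your proposal is correct and follows essentially the same route as the paper: decompose each task update into one concurrent minimum-norm step on $\mathcal{D}_t\cup\mathcal{M}_t^{\text{sim}}$ (handled via the block-matrix lemmas as in \cref{eq:wt-wi* 1,eq:wt-wi* 2}) followed by $|\mathcal{I}_t^{\text{(dis)}}|$ sequential steps (handled as in \cref{eq:split three parts sep}), compose into the one-step recursion with contraction $(1-B_{0,t})^{|\mathcal{I}_t^{\text{(dis)}}|}r_{\frac{M}{t-1}|\mathcal{I}_t^{\text{(sim)}}|}$, and unroll from the base case \cref{eq:w1-wi jnt general case}. The identification of $f_t(j)$ as tracking the position of a dissimilar task within the sequential sub-steps, and of the $c$-coefficients as differences of $d$-coefficients at times $T$ and $i$, matches the paper's derivation exactly.
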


In \cref{alg:exp_hybrid}, the training process of \textsc{ConcurrentTrain}$(\mathcal{D}_t\bigcup \mathcal{M}_t^{\text{sim}})$ is equivalent to solve the following optimization problem:
\begin{align}\label{eq:hybrid optimization concurrent}
    \hat{\bm{w}}^{(0)}_t = \min_{\bm{w}} \cnorm{\bm{w} - \bm{w}_{t-1}}^2\ \ \ \ s.t. \ \ &\bm{X}_t^\top \bm{w} = \bm{Y}_t,\nonumber \\ &\bm{X}_{t,h}^\top \bm{w} = \bm{Y}_{t,h}, \ \ h\in \mathcal{I}_t^{\text{(sim)}}.
\end{align}
Then, the training process of \textsc{SequentialTrain}$(\mathcal{M}_{t,h})$ for $h$ : $\mathcal{M}_{t,h} \in \mathcal{M}^{\text{dis}}_t$  is equivalent to solve:
\begin{equation}\label{eq:hybrid optimization sequential}
    \hat{\bm{w}}_t^{(f_t(h))} = \min_{\bm{w}} \cnorm{\bm{w} - \hat{\bm{w}}_{t}^{(f_t(h)-1)}}_2^2\ \ \ \ s.t. \ \ \bm{X}_{t,h}^\top \bm{w} = \bm{Y}_{t,h}, \ \ h=1,2,...,|\mathcal{I}_t|^{\text{(dis)}}, 
\end{equation}
where the $f_t(h)^{th}$ element of $\mathcal{I}_t^{\text{(dis)}}$ is $h$. The final convergent point of task $t$ is denoted as $\bm{w}_t = \hat{\bm{w}}_t^{(|\mathcal{I}_t|^{\text{(dis)}})}$. By the same argument as concurrent rehearsal (or sequential rehearsal), it suffices to prove \Cref{lemma:expression form of EL in main body} to derive the explicit expressions for both forgetting and generalization error. Consider an arbitrary $i$ s.t. $i\le T$ and fix it. The expected value of model error $\E[\mathcal{L}_i(\bm{w}_t)]$ are derived as follows including concurrent part and sequential part. 

The first part is concurrent part. Define $\bm{V}_t^{\text{sim}}$ as the concatenation of $\bm{X}_t$ and $\bm{X}_{t,h}$ for all $h\in \mathcal{I}_t^{\text{(sim)}}$. Similarly, define $\vec{z}^{\text{sim}}_t$, as the concatenation of $\bm{z}_t$ and $\bm{z}_{t,h}$ for all $h\in \mathcal{I}_t^{\text{(sim)}}$. By following the same argument as concurrent rehearsal method, we have:
\begin{align}\label{eq:wt-wi* hybrid 1}
    &\E\cnorm{\hat{\bm{w}}^{(0)}_t - \bm{w}_i^*}^2 \nonumber\\
    &\stackrel{(i)}{=} \left(1-\frac{n + |\mathcal{I}_t^{\text{(sim)}}|\frac{M}{t-1}}{p}\right)\E\cnorm{\bm{w}_{t-1} - \bm{w}_i^*}^2 +  \E\cnorm{{\bm{V}_t^{\text{sim}} }^\dagger
    \begin{bmatrix}
        ...\\
        \bm{X}_{t,h}^\top (\bm{w}_h^* - \bm{w}_i^*)\\
        ...\\
        \bm{X}_{t}^\top (\bm{w}_{t}^* - \bm{w}_i^*)
    \end{bmatrix}}^2 + \frac{(n+|\mathcal{I}_t^{\text{(sim)}}|\frac{M}{t-1})\sigma^2}{p-n-|\mathcal{I}_t^{\text{(sim)}}|\frac{M}{t-1}-1},\nonumber\\
    &\stackrel{(ii)}{=} \left(1-\frac{n + |\mathcal{I}_t^{\text{(sim)}}|\frac{M}{t-1}}{p}\right)\E\cnorm{\bm{w}_{t-1} - \bm{w}_i^*}^2\nonumber\\
    &\hspace{0.5cm} + \sum_{j\in \mathcal{I}_t^{\text{(sim)}}} \frac{M}{(t-1)p}
    \left(1+\frac{n+ \frac{M}{t-1}(|\mathcal{I}_t^{\text{(sim)}}|-1) }{p-n-|\mathcal{I}_t^{\text{(sim)}}|\frac{M}{t-1}-1}\right)\cnorm{\bm{w}_j^* - \bm{w}_i^*}^2 \nonumber\\ 
    &\hspace{0.5cm}+ \frac{n}{p}\left(1+\frac{|\mathcal{I}_t^{\text{(sim)}}|\frac{M}{t-1}}{p-n-|\mathcal{I}_t^{\text{(sim)}}|\frac{M}{t-1}-1}\right)\cnorm{\bm{w}_t^* - \bm{w}_i^*}^2\nonumber\\
    &\hspace{0.5cm}+ \sum_{j,k\in \mathcal{I}_t^{\text{(sim)}},\ k>j} \frac{(\frac{M}{t-1})^2}{p(p-n-|\mathcal{I}_t^{\text{(sim)}}|\frac{M}{t-1}-1)}\left( \cnorm{\bm{w}_j^* - \bm{w}_k^*}^2 - \cnorm{\bm{w}_j^* - \bm{w}_i^*}^2 - \cnorm{\bm{w}_k^* - \bm{w}_i^*}^2 \right)\nonumber\\
    &\hspace{0.5cm}+  \sum_{j \in \mathcal{I}_t^{\text{(sim)}}} \frac{\frac{nM}{t-1}}{p(p-n-|\mathcal{I}_t^{\text{(sim)}}|\frac{M}{t-1}-1)}\left( \cnorm{\bm{w}_j^* - \bm{w}_t^*}^2 - \cnorm{\bm{w}_j^* - \bm{w}_i^*}^2 - \cnorm{\bm{w}_t^* - \bm{w}_i^*}^2 \right)\nonumber\\
    &\hspace{0.5cm} + \frac{(n+|\mathcal{I}_t^{\text{(sim)}}|\frac{M}{t-1})\sigma^2}{p-n-|\mathcal{I}_t^{\text{(sim)}}|\frac{M}{t-1}-1},
\end{align} 
where $(i)$ follows from the same argument of \cref{eq:wt-wi* 1} and $(ii)$ follows from the same argument of \cref{eq:wt-wi* 2}. 
The second part is sequential part. According to \Cref{lemma:solution of optimization problem,lemma:orthogonal of I-P and pinvP,lemma:expectation of projection,lemma:expecation of noise}, we have:
\begin{align*}
    \E\cnorm{\hat{\bm{w}}_t^{(f_t(h))} - \bm{w}_i^*}^2  & = \left(1-\frac{M}{(t-1)p}\right) \E\cnorm{\hat{\bm{w}}_t^{(f_t(h)-1)} - \bm{w}_i^*}^2 + \frac{M}{(t-1)p}\cnorm{\bm{w}_j^* - \bm{w}_i^*}^2  + \frac{\frac{M}{(t-1)}\sigma^2}{p-\frac{M}{(t-1)}-1}.
\end{align*}
By iterating the above equation, we have:
\begin{align}\label{eq:wt-wi* hybrid 2}
     \E\cnorm{\bm{w}_t - \bm{w}_i^*}^2 = & \left(1-\frac{M}{(t-1)p}\right)^{|\mathcal{I}_t^{\text{(dis)}}|}\E\cnorm{\hat{\bm{w}}_{t}^{(0)} - \bm{w}_i^*}^2 \nonumber\\
     &+ \sum_{j \in \mathcal{I}_t^{\text{(dis)}}}\left(1-\frac{M}{(t-1)p}\right)^{|\mathcal{I}_t^{\text{(dis)}}| - f_t(j)}\frac{M}{(t-1)p} \E\cnorm{\bm{w}_{j}^* - \bm{w}_i^*}^2 \nonumber\\
     &+ \sum_{j=1}^{|\mathcal{I}_t^{\text{(dis)}}|}\left(1-\frac{M}{(t-1)p}\right)^{|\mathcal{I}_t^{\text{(dis)}}|-f_t(j)}\frac{\frac{M}{t-1}\sigma^2}{p-\frac{M}{t-1}-1}.
\end{align}
By combining \cref{eq:wt-wi* hybrid 1,eq:wt-wi* hybrid 2} and repeating the process, we derive the expected value of model error $\mathcal{L}_i(\bm{w}_t)$ with hybrid rehearsal method as follows.
\begin{align*}
    &\E\cnorm{\bm{w}_t - \bm{w}_i^*}^2 \\
    & = \left(1-\frac{M}{(t-1)p}\right)^{|\mathcal{I}_t^{\text{(dis)}}|}\left(1-\frac{n+|\mathcal{I}_t^{\text{(sim)}}|\frac{M}{t-1}}{p}\right) \cnorm{\bm{w}_{t-1}-\bm{w}_i^*}^2 \nonumber\\
    &\hspace{.4cm} + \left(1-\frac{M}{(t-1)p}\right)^{|\mathcal{I}_t^{\text{(dis)}}|}\sum_{j=1}^{t-1}\mathbf{1}_{t,j}^{\text{(sim)}}\frac{M}{(t-1)p}\cnorm{\bm{w}_j^* - \bm{w}_i^*}^2  \nonumber\\
    &\hspace{.4cm} + \left(1-\frac{M}{(t-1)p}\right)^{|\mathcal{I}_t^{\text{(dis)}}|}\frac{n}{p} \cnorm{\bm{w}_t^* - \bm{w}_i^*}^2 \nonumber\\
    &\hspace{.4cm} + \left(1-\frac{M}{(t-1)p}\right)^{|\mathcal{I}_t^{\text{(dis)}}|}\sum_{1=j<k\le t-1} \frac{\mathbf{1}_{t,j}^{\text{(sim)}} \mathbf{1}_{t,k}^{\text{(sim)}}(\frac{M}{t-1})^2}{p(p-n-|\mathcal{I}_t^{\text{(sim)}}|\frac{M}{t-1}-1)} \cnorm{\bm{w}_j^* - \bm{w}_k^*}^2 \nonumber\\
    &\hspace{.4cm}+  \left(1-\frac{M}{(t-1)p}\right)^{|\mathcal{I}_t^{\text{(dis)}}|}\sum_{j=1}^{t-1} \frac{\mathbf{1}_{t,j}^{\text{(sim)}}\frac{nM}{t-1}}{p(p-n-|\mathcal{I}_t^{\text{(sim)}}|\frac{M}{t-1}-1)} \cnorm{\bm{w}_j^* - \bm{w}_t^*}^2\nonumber\\
    &\hspace{.4cm}+ \sum_{j =1}^{t-1} \mathbf{1}_{t,j}^{\text{(dis)}} \left(1-\frac{M}{(t-1)p}\right)^{|\mathcal{I}_t^{\text{(dis)}}| - f_t(j)}\frac{M}{(t-1)p} \cnorm{\bm{w}_{j}^* - \bm{w}_i^*}^2  \nonumber\\
    &\hspace{.4cm} + \left(1-\frac{M}{(t-1)p}\right)^{|\mathcal{I}_t^{\text{(dis)}}|} \frac{(n+|\mathcal{I}_t^{\text{(sim)}}|\frac{M}{t-1})\sigma^2}{p-n-|\mathcal{I}_t^{\text{(sim)}}|\frac{M}{t-1}-1} + \sum_{j=1}^{|\mathcal{I}_t^{\text{(dis)}}|}\left(1-\frac{M}{(t-1)p}\right)^{|\mathcal{I}_t^{\text{(dis)}}|-f_t(j)}\frac{\frac{M}{t-1}\sigma^2}{p-\frac{M}{t-1}-1}\\
    &\stackrel{(i)}{=}\left(\prod_{h=0}^{t-2}\left(1-\frac{M}{(t-h-1)p}\right)^{|\mathcal{I}_{t-h}^{\text{(dis)}}|}\left(1-\frac{n+|\mathcal{I}_{t-h}^{\text{(sim)}}|\frac{M}{t-h-1}}{p}\right) \right)\left(1-\frac{n}{p}\right)\cnorm{\bm{w}_i^*}^2\nonumber\\
    &\hspace{.1cm} + \sum_{j=1}^{t-1} \left[ \sum_{l=0}^{t-j-1} \left(1-\frac{M}{(t-l-1)p}\right)^{|\mathcal{I}_{t-l}^{\text{(dis)}}|} \prod_{h=0}^{l-1}\left(1-\frac{M}{(t-h-1)p}\right)^{|\mathcal{I}_{t-h}^{\text{(dis)}}|}\left(1-\frac{n+|\mathcal{I}_{t-h}^{\text{(sim)}}|\frac{M}{t-h-1}}{p}\right)\frac{\mathbf{1}_{t-l,j}^{\text{(sim)}}M}{(t-l-1)p} \right.\\
    &\hspace{.1cm} \left. + \left(1-\frac{M}{(j + \mathbf{1}_{\{j=1\}}-1)p}\right)^{|\mathcal{I}_{j}^{\text{(dis)}}|} \prod_{h=0}^{t-j-1}\left(1-\frac{M}{(t-h-1)p}\right)^{|\mathcal{I}_{t-h}^{\text{(dis)}}|}\left(1-\frac{n+|\mathcal{I}_{t-h}^{\text{(sim)}}|\frac{M}{t-h-1}}{p}\right) \frac{n}{p} \right. \\
    &\hspace{.1cm} \left. +\sum_{j =1}^{t-1} \sum_{l=0}^{t-j-1} \prod_{h=0}^{l-1}\left(1-\frac{M}{(t-h-1)p}\right)^{|\mathcal{I}_{t-h}^{\text{(dis)}}|}\left(1-\frac{n+|\mathcal{I}_{t-h}^{\text{(sim)}}|\frac{M}{t-h-1}}{p}\right) \left(1-\frac{M}{(t-l-1)p}\right)^{|\mathcal{I}_{t-l}^{\text{(dis)}}| - f_{t-l}(j)}\frac{\mathbf{1}_{t-l,j}^{\text{(dis)}}M}{(t-l-1)p} \right] \\
    &\hspace{15cm} \cdot \cnorm{\bm{w}_{j}^* - \bm{w}_i^*}^2 \\
    &\hspace{.1cm} + \sum_{l=0}^{t-2} \prod_{h=0}^{l-1}\left(1-\frac{M}{(t-h-1)p}\right)^{|\mathcal{I}_{t-h}^{\text{(dis)}}|}\left(1-\frac{n+|\mathcal{I}_{t-h}^{\text{(sim)}}|\frac{M}{t-h-1}}{p}\right) \left\{\sum_{\substack{j=1 \\ k>j}}^{t-l-1} \frac{\mathbf{1}_{t-l,j}^{\text{(sim)}} \mathbf{1}_{t-l,k}^{\text{(sim)}}(\frac{M}{t-l-1})^2}{p(p-n-|\mathcal{I}_{t-l}^{\text{(sim)}}|\frac{M}{t-l-1}-1)} \cnorm{\bm{w}_j^* - \bm{w}_k^*}^2\right.\\
    &\hspace{5cm}\left. +  \left(1-\frac{M}{(t-l-1)p}\right)^{|\mathcal{I}_t^{\text{(dis)}}|}\sum_{j=1}^{t-l-1} \frac{\mathbf{1}_{t-l,j}^{\text{(sim)}}\frac{nM}{t-l-1}}{p(p-n-|\mathcal{I}_{t-l}^{\text{(sim)}}|\frac{M}{t-l-1}-1)} \cnorm{\bm{w}_j^* - \bm{w}_{t-l}^*}^2 \right\}\\
    &\hspace{.1cm} + \text{noise}^{\text{(hybrid)}}_t(\sigma)
\end{align*}
where $(i)$ follows from the iteration and \cref{eq:w1-wi jnt general case} and
\begin{align*}
     &\text{noise}^{\text{(hybrid)}}_t(\sigma)= \sum_{l=0}^{t-2} \prod_{h=0}^{l-1}\left(1-\frac{M}{(t-h-1)p}\right)^{|\mathcal{I}_{t-h}^{\text{(dis)}}|}\left(1-\frac{n+|\mathcal{I}_{t-h}^{\text{(sim)}}|\frac{M}{t-h-1}}{p}\right)\\
     &\cdot \left[ \left(1-\frac{M}{(t-l-1)p}\right)^{|\mathcal{I}_{t-l}^{\text{(dis)}}|} \frac{(n+|\mathcal{I}_{t-l}^{\text{(sim)}}|\frac{M}{t-l-1})\sigma^2}{p-n-|\mathcal{I}_{t-l}^{\text{(sim)}}|\frac{M}{t-l-1}-1} + \sum_{j=1}^{|\mathcal{I}_{t-l}^{\text{(dis)}}|}\left(1-\frac{M}{(t-l-1)p}\right)^{|\mathcal{I}_{t-l}^{\text{(dis)}}|-f_{t-l}(j)}\frac{\frac{M}{t-l-1}\sigma^2}{p-\frac{M}{t-l-1}-1} \right]\\
     &+ \prod_{h=0}^{t-2}\left(1-\frac{M}{(t-h-1)p}\right)^{|\mathcal{I}_{t-h}^{\text{(dis)}}|}\left(1-\frac{n+|\mathcal{I}_{t-h}^{\text{(sim)}}|\frac{M}{t-h-1}}{p}\right) \frac{n\sigma^2}{p-n-1}.
\end{align*}
By rearranging the terms and substituting $t=T$, we complete the poof for $d_{0T}^{\text{(hybrid)}}$ and $d_{ijkT}^{\text{(hybrid)}}$. Furthermore, the expressions of $c_{i}^{\text{(hybrid)}}$ and $c_{ijk}^{\text{(hybrid)}}$ in \Cref{prop:coefficients of hybrid} can be derived directly based on $d_{0T}^{\text{(hybrid)}}$ and $d_{ijkT}^{\text{(hybrid)}}$ and the definition of forgetting.

\end{document}